\documentclass{article}

\input{macro.sty}

\hypersetup{
  colorlinks,
  linkcolor={red!60!black},
  citecolor={blue!60!black},
  urlcolor={blue!80!black}
}

\title{Full-Batch Gradient Descent Outperforms One-Pass SGD:\\ Sample Complexity Separation in Single-Index Learning}

\author{
    Filip Kova\v{c}evi\'c\thanks{Institute of Science and Technology Austria. Email: \href{mailto:filip.kovacevic@ist.ac.at}{\texttt{filip.kovacevic@ist.ac.at}}.} 
    \and
    Hong Chang Ji\thanks{Sung Kyun Kwan University. Email: \href{hcji@skku.edu}{\texttt{hcji@skku.edu}}.}
    \and
    Denny Wu\thanks{New York University and Flatiron Institute. Email: \href{dennywu@nyu.edu}{\texttt{dennywu@nyu.edu}}.} $^{,}$\footnotemark[6]
    \and
    Mahdi Soltanolkotabi\thanks{University of Southern California. Email: \href{soltanol@usc.edu}{\texttt{soltanol@usc.edu}}.}\ $^{,}$\footnotemark[6]
    \and
    Marco Mondelli\thanks{Institute of Science and Technology Austria. Email: \href{mailto:marco.mondelli@ist.ac.at}{\texttt{marco.mondelli@ist.ac.at}}.}\,  $^{,}$
    \thanks{Equal contribution.} 
}

\begin{document}

\maketitle

\begin{abstract}
It is folklore that reusing training data more than once can improve the statistical efficiency of gradient-based learning. While this phenomenon has been extensively studied in linear regression, the benefit of multi-pass gradient descent (GD, which reuses all the data) over one-pass stochastic gradient descent (online SGD, which uses each data point only once) is not well-understood in nonlinear and non-convex settings, except for a loss modification mechanism achieved by the first two passes on the data. In this work, we consider learning a $d$-dimensional single-index model with a quadratic activation, for which it is known that one-pass SGD requires $n\gtrsim d\log d$ samples to achieve weak recovery. We first show that this $\log d$ factor in the sample complexity persists for full-batch spherical GD on the correlation loss; however, by simply truncating the activation, full-batch GD exhibits a favorable optimization landscape at $n \simeq d$ samples, thereby outperforming one-pass SGD (with the same activation) in statistical efficiency. We complement this result with a trajectory analysis of full-batch GD on the squared loss from small initialization, showing that $n \gtrsim d$ samples and $T \gtrsim\log d$ gradient steps suffice to achieve strong (exact) recovery. 
\end{abstract}


\section{Introduction}
\looseness=-1

A fundamental question in machine learning is when and how reusing training data improves the statistical efficiency of gradient descent (GD). On the empirical side, recent works compare the scaling laws of single-epoch versus multi-epoch pretraining of large language models as training enters the “data-constrained” regime \citep{muennighoff2023scaling,yan2025larger}. On the theoretical side, existing results have mostly focused on linear regression, where multi-pass stochastic gradient descent (SGD) can achieve better loss scaling than its one-pass counterpart under “hard” source conditions \citep{pillaud2018statistical,lin2025improved}. 
{For \textit{nonlinear} models and non-convex settings, however, the presence and mechanisms of analogous multi-pass speedups remain poorly understood. 
One notable exception is the study of the SQ-CSQ gap in single-index learning, where reusing each data point twice is shown to induce a form of label preprocessing that can improve the sample complexity of SGD \citep{dandi2024benefits,lee2024neural,arnaboldi2024repetita}. 
That said, this explanation is insufficient for two reasons: (i) these works only consider repeating the batch exactly twice, and moreover, the same statistical gain is already present in one-pass SGD with a modified loss function; (ii) this mechanism does not apply to learning ``hard'' target functions for which one must analyze GD training \textit{beyond finitely many passes} over the data to establish a statistical separation --- examples include all \textit{even} single-index models, such as phase retrieval \citep{fienup1982phase}. Understanding the benefit of data reuse in this setting requires a refined analysis of the \textit{multi-pass} learning dynamics over a diverging time horizon. }

In this work, we study the optimization and sample complexity of gradient-based learning in a \textit{Gaussian single-index model}, a standard sandbox for understanding non-convex feature learning in shallow neural networks \citep{ba2022high,bietti2022learning,berthier2025learning}. Specifically, we assume access to $n$ i.i.d.\ samples $(x_i, y_i)_{i=1}^n$, where
\begin{equation}\label{eq:model}
x_i\sim \cN(0,I_d),\quad y_i = \sigma(\inprod{x_i}{\thetastar}),\quad \norm{\thetastar}=1. 
\end{equation}
Equation \eqref{eq:model} defines a single-index model, where the label $y_i$ depends on a one-dimensional projection (specified by the target direction $\thetastar$) of the $d$-dimensional input $x_i$. We consider an activation (or, as commonly referred to in the statistical literature, a \textit{link function}) $\sigma:\R\to\R$ which is either quadratic or truncated quadratic; {for these even link functions, the aforementioned two-pass loss-modification analysis cannot establish a sample complexity separation --- see \citet[Section 3]{dandi2024benefits}.} 
We note that this learning problem is closely connected to \textit{phase retrieval} \citep{fienup1982phase,candes2015phase,mondelli2018fundamental,fannjiang2020numerics}. Moreover, when $\sigma(z)=z^2$, it is a special case of a quadratic neural network, whose optimization behavior has been extensively studied \citep{soltanolkotabi2018theoretical,sarao2020optimization,martin2024impact,arous2025learning}.

Phase retrieval provides a natural nonlinear setting to compare the sample efficiency of one-pass versus multi-pass (or full-batch) updates. First, we know that information theoretically, $n \gtrsim d$ samples are necessary and sufficient to learn this single-index target function \citep{mondelli2018fundamental,barbier2019optimal}. However, moving to gradient-based algorithms, prior works have proved that one-pass (online) SGD succeeds with $n \gtrsim d\log d$ samples \citep{arous2021online,tan2023online}. In fact, the same sample complexity $n \gtrsim d\log d$ is required for all activations having information exponent $2$, which includes any even $\sigma$. While establishing a hardness result that holds for all batch sizes and step-size schedules is challenging, it has been argued that the $\log d$ factor is \textit{necessary} for weak recovery under an “optimal” stable step size $\eta\simeq 1/d$ --- see the lower bound in \citet[Theorem 1.4]{arous2021online}. This presents a logarithmic gap from the information theoretic limit. In contrast, in the full-batch setting where all data points are reused at each step, iterative algorithms beyond gradient descent (i.e., spectral estimators \citep{lu2020phase,mondelli2018fundamental} then refined via approximate message passing \citep{donoho2009message,mondelli-2021-amp-spec-glm,feng2022unifying}) have been shown to succeed with the better sample complexity $n \gtrsim d$, and they can even achieve \textit{strong recovery} (see \eqref{eq:def-recovery} for a definition) at finite $n/d$ \citep{maillard2020phase,mondelli-2021-amp-spec-glm}. These findings motivate the following questions:
\begin{enumerate}
    \item \textit{Can standard full-batch GD learn \eqref{eq:model}  with linear-in-$d$ sample complexity? }
    \item \textit{If so, what iteration complexity is required to achieve weak and strong recovery?}
\end{enumerate}
Notably, most existing analyses of full-batch GD (and its variants) still require $n \gtrsim d\,\mathrm{polylog}\, d$, and therefore do not imply a statistical separation from one-pass SGD \citep{netrapalli2013phase,candes2015phase,sun2018geometric,chen2019gradient}. If we focus on the squared loss and quadratic activation, there is evidence that the optimization \textit{landscape} can be benign at finite $n/d$ in certain regions \citep{sarao2020optimization,sarao2020complex,cai2023nearly}. However, this does not cover the commonly-studied setting of spherical gradient updates on the correlation loss \citep{arous2021online,abbe2023sgd,damian2023smoothing}, where the $\log d$ barrier is expected to be unavoidable for one-pass SGD due to the lower bound in \citet{arous2021online}. Furthermore, these landscape analyses alone $(i)$ cannot sharply characterize the optimization time of GD, and $(ii)$ do not address whether strong recovery is attainable. The goal of this work is to rigorously show when full-batch GD can achieve weak and strong recovery with linear-in-$d$ sample complexity, and to characterize the corresponding iteration complexity.

\subsection{Our contributions}

We aim to develop a refined understanding of when full-batch GD achieves linear-in-$d$ sample complexity for learning the single-index model in \eqref{eq:model}. We first consider the same algorithmic setup as \citet{arous2021online} — minimizing the correlation loss over the sphere — and present both a negative and a positive result in Section \ref{sec:corr}:
\begin{itemize}
\item For the quadratic activation $\sigma(z)=z^2$, we show that when $n \ll d\log d$, spherical gradient flow on the full dataset achieves only trivial performance, indicating that full-batch updates offer no statistical advantage over their one-pass counterpart. 
\item Next, we show that the sample complexity can be improved via a simple modification: by truncating the quadratic nonlinearity, full-batch spherical gradient flow achieves weak recovery with $n \gtrsim d$ samples, whereas one-pass SGD still requires $n \gtrsim d\log d$ for the same truncated link function.
\end{itemize}
To prove the positive result, we establish a uniform BBP phase transition \citep{BBAP} at finite $n/d$ for the Hessian along the optimization trajectory. This implies a benign empirical loss landscape and enables us to invoke the stable manifold theorem (see, e.g., \citet{lee2016gradient}) to show convergence to an informative direction with non-trivial overlap. Altogether, this provides theoretical evidence that full-batch updates with data reuse outperform online updates in a non-convex feature learning problem.

The above spherical gradient analysis based on landscape properties does not characterize the convergence rate of the learning algorithm. To sharply quantify the iteration complexity, in Section \ref{sec:strong} we study the full-batch (Euclidean) GD dynamics on the squared loss from small initialization — a setting also inspired by recent work on gradient-based feature learning \citep{stoger2021small,boursier2022gradient,ren2025emergence}. We prove that, with a truncated quadratic activation, \textit{strong recovery} can be achieved with $n \gtrsim d$ samples and $T \gtrsim\log d$ gradient steps. This stands in sharp contrast to the unbounded quadratic setting, where the local landscape at finite $n/d$ remains highly non-convex and thus resists a proof of exact recovery \citep{cai2023nearly,liu2024local}. 
To our knowledge, this is the first strong recovery and convergence rate guarantee for full-batch GD (without algorithmic or loss modifications) in the information theoretically optimal proportional $n,d$ regime.

\subsection{Related work}

\paragraph{Learning single- and multi-index models.}
The sample complexity of learning Gaussian single-index models has been widely studied in the feature learning literature. A common conclusion is that gradient-based training requires polynomial-in-$d$ many samples \citep{soltanolkotabi2017learning,dudeja2018learning,arous2021online,bietti2022learning,damian2023smoothing,glasgow2025propagation}, where the exponent depends on $k\in\N$ which is the \textit{information exponent} (or \textit{generative exponent}) of the activation function $\sigma$ \citep{damian2024computational,joshi2025learning}. The quadratic $\sigma$ we study, both with and without truncation, corresponds to the $k=2$ setting: here, the population gradient flow starting from random initialization takes $T \gtrsim \log d$ time to “escape mediocrity” \citep{arous2021online,braun2025fast}, translating to a sample complexity of $n \gtrsim d\log d$ for online SGD. Beyond GD, the performance of spectral methods and approximate message passing in the proportional $n,d$ limit has been sharply characterized \citep{ma2019optimization,lu2020phase,maillard2020phase,mondelli-2021-amp-spec-glm}, along with optimal error rates and information-theoretic lower bounds \citep{barbier2019optimal,mondelli2018fundamental}. We note that the
model \eqref{eq:model} is a special case of a \textit{multi-index model}; for this broader function class, the sample complexity of gradient-based training has also been investigated \citep{damian2022neural,abbe2022merged,dandi2023two,collins2024hitting,oko2024learning,zhang2025neural}, as well as other algorithms for weak recovery \citep{zhang2022precise,defilippis2025optimal,kovavcevic2025spectral,damian2025generative,diakonikolas2025robust,diakonikolas2026algorithms}. Finally, the concurrent work of \citet{montanari2026phase} employs dynamical mean-field theory to derive sharp thresholds for full-batch gradient descent to recover the first ``nontrivial'' subspace of the multi-index target function.

\vspace{-1mm}
\paragraph{Benefit of data reuse.}
Prior work shows that in linear regression, multi-pass SGD can improve statistical efficiency when learning ``hard'' instances \citep{pillaud2018statistical,carratino2018learning,lin2025improved}. The benefits of data repetition have also been investigated empirically in large language model pretraining \citep{muennighoff2023scaling,yan2025larger}. In contrast, in gradient-based learning of single-index models, most existing works focus on one-pass (online) SGD \citep{arous2021online,abbe2023sgd,damian2023smoothing}, while analyses of full-batch GD often yield sample complexity bounds worse than those of one-pass SGD, due to the need for uniform concentration of the empirical gradient \citep{bietti2022learning,mousavi2023gradient}. 
{As previously mentioned, \citet{dandi2024benefits,lee2024neural,arnaboldi2024repetita} show that repeating each data point twice introduces a label preprocessing that lowers the information exponent and thereby improves the sample efficiency of SGD.
However, the algorithms studied in these works can be viewed as online updates with a modified loss. Furthermore, in our setting of even target functions, the information exponent after arbitrary preprocessing remains at least $2$ (in the language of \citet{damian2024computational}, the activation function we consider has \textit{generative exponent} $2$), and hence this loss-modification mechanism does not remove the $\log d$ factor in the sample complexity. Finally, \citet{sarao2020optimization} study the quadratic-link setting and show that the empirical risk landscape is benign at finite $n/d$ when the student model is \textit{overparameterized}. While this suggests that GD on the empirical loss can succeed with linear-in-$d$ samples, it is unclear whether this advantage stems from data reuse or overparameterization; moreover, the landscape analysis does not characterize the iteration complexity of gradient-based learning.}

\section{Problem setup}

\paragraph{Notation.} We denote by $\cS^{d-1}$ the unit sphere in $d$ dimensions. Given a vector $v$, $\norm{v}$ denotes its $\ell_2$ norm. Given a matrix $A$, $\norm{A}$ denotes its operator norm. Given a symmetric matrix $A$, $\lambda_i(A)$ denotes its $i$-th eigenvalue (in decreasing order) and $v_i(A)$ the corresponding eigenvector. 
We use $C, c>0$ to denote numerical constants whose value may change from line to line.

\paragraph{Weak and strong recovery.} Given $n$ samples $(x_i,y_i)_{i=1}^n$ drawn i.i.d.\ according to \eqref{eq:model}, the learning problem consists in recovering the unknown direction $\thetastar$. Given an estimator $\hat\theta$ of $\thetastar$, its performance is typically assessed either via the overlap (normalized correlation) between $\hat\theta$ and $\thetastar$ or via their $\ell_2$ distance. In particular, $\hat\theta$ achieves weak/strong recovery if the following conditions hold:
\begin{equation}
\begin{split}
\mbox{Weak recovery: }&    \lim_{n\to\infty}\frac{|\langle \hat\theta, \thetastar\rangle|}{\norm{\hat\theta}\cdot \norm{\thetastar}}=\epsilon>0,\\\mbox{Strong recovery: }&
    \lim_{n\to\infty}\min_{s\in\{\pm 1\}}\norm{\hat\theta-s\thetastar}=0,
    \label{eq:def-recovery}
    \end{split}
\end{equation}
where the minimization over $s$ accounts for the sign ambiguity in the reconstruction of $\thetastar$.
Note that these performance measures can be readily related to the generalization error \citep{barbier2019optimal,maillard2020phase}.

\paragraph{Optimization algorithms.} In this paper, we study estimators $\hat\theta$ obtained via full-batch gradient descent methods. We will consider two choices of loss function: we start with the correlation loss $\ell(x, y; \theta)=-y\sigma(\langle x, \theta\rangle)$ in Section \ref{sec:corr}, and then move to the squared loss $\ell(x, y; \theta)=(y-\sigma(\langle x, \theta\rangle))^2$ in Section \ref{sec:strong}. Given a loss function $\ell:\mathbb R^d\times\mathbb R\times \mathbb R^d\to\mathbb R$, full-batch gradient methods compute the empirical gradient $\widehat{\mathcal G}:\mathbb R^d\to\mathbb R^d$ given by the following average over the data:
\begin{equation}\label{eq:empiricalq}
    \widehat{\mathcal G}(\theta):=\nabla\widehat{\mathcal L}(\theta),\quad\qquad \widehat{\mathcal L}(\theta):=\frac{1}{n}\sum_{i=1}^n\ell(x_i, y_i; \theta), 
\end{equation}
where the gradient is taken with respect to $\theta$ either on the sphere $\cS^{d-1}$ (spherical gradient) or in $\mathbb R^d$ (Euclidean gradient). Then, gradient descent/flow is given by the following discrete/continuous iteration:
\begin{equation}
\begin{split}
\mbox{Gradient descent: }&    \theta_{t+1}=\theta_t-\eta\widehat{\mathcal G}(\theta_t), \,\, t\in\mathbb N;\\
\mbox{Gradient flow: }&\frac{d\theta(t)}{dt} = -\widehat{\mathcal G}(\theta(t)),\,\, t\ge 0.
\end{split}
\end{equation}
Here, $\eta$ denotes the step size of gradient descent and gradient flow is obtained from gradient descent by taking the limit $\eta\to 0$. We remove the $\widehat{(\cdot)}$ when referring to the population counterparts of the quantities in \eqref{eq:empiricalq}:
\begin{equation}
    \mathcal G(\theta):=\nabla\mathcal L(\theta),\qquad \mathcal L(\theta):=\E[\ell(x, y; \theta)], 
\end{equation}
where the expectation is over the input $x\sim \mathcal N(0, I_d)$. 

We note that full-batch gradient methods reuse the same data in every iteration. In contrast, one-pass SGD uses each data point only once, i.e., it corresponds to the iteration $\theta_{t+1}=\theta_t-\eta\nabla\ell(x_t, y_t;\theta_t)$, with $t$ being limited to $t\in\{1, \ldots, n\}$. For this online update, \citet[Theorem 1.4]{arous2021online} provides a sample complexity lower bound for the spherical dynamics on the correlation loss: when the activation function has information exponent $2$ (which includes our problem setting), it is necessary to have $n \gtrsim d\log d$ samples for one-pass SGD to achieve weak recovery with any learning rate up to $\eta\lesssim 1/d$, which is the largest \textit{stable} learning rate under the drift-plus-martingale decomposition in \citet{arous2021online}. 
In the ensuing section, we investigate whether full-batch GD can remove the $\log d$ factor in sample complexity and match the information theoretic limit in terms of dimension dependence. 

\section{Sample complexity separation in spherical gradient flow}\label{sec:corr}

In this section we consider the same algorithmic setup as \citet{arous2021online,damian2023smoothing} where the correlation loss is minimized on the sphere. 
Define the correlation loss for any $\theta\in \cS^{d-1}$ as
\begin{equation}\label{eq:corrloss}    
    \cLhat(\theta) \coloneqq -\frac{1}{n}\sum_{i=1}^n y_i \sigma(\inprod{x_i}{\theta}).
\end{equation}
The Euclidean gradient can be written as
\begin{equation*}
    \begin{split}
\nabla_{\bbR^d} \cLhat(\theta) &= -\frac{1}{n}\sum_{i=1}^n y_i \sigma'(\inprod{x_i}{\theta})x_i = -A(\theta)\theta,\\
\mbox{where }
 A(\theta) &:= \frac{1}{n}\sum_{i=1}^n \displaystyle\frac{y_i \sigma'(\inprod{x_i}{\theta})}{\langle x_,\theta\rangle} x_i x_i^\top.
\end{split}
\end{equation*}
On unit sphere $\cS^{d-1}$, the spherical gradient is the orthogonal projection of Euclidean gradient onto the tangent space at $\theta$: 
\begin{equation}\label{eq:sphergradient}
    \nabla_{\cS^{d-1}}\cLhat(\theta) = (I - \theta \theta^\top)\nabla_{\bbR^d} \cLhat(\theta),
\end{equation} 
which gives the following gradient flow ODE, 
\begin{equation}\label{eq:spherical-flow}
    \frac{d\theta(t)}{dt} = -\nabla_{\cS^{d-1}}\cLhat(\theta(t)) = (I - \theta(t) \theta(t)^\top)A(\theta(t))\theta(t).
\end{equation}
Our goal is to characterize the necessary and sufficient sample size under which the spherical gradient flow \eqref{eq:spherical-flow} converges to a direction with nontrivial overlap with $\thetastar$. 

\subsection{Quadratic activation requires $\Omega(d\log d)$ samples}\label{sec:unbdd}


We start by considering the quadratic activation $\sigma(z)=z^2$, which has been studied in \citet{sarao2020complex,cai2023nearly} in the squared loss setting. 
We first calculate the Euclidean gradient of the correlation loss in $\theta$ as
\begin{equation}\label{eq:Astar}
\begin{aligned}
    \nabla_{\bbR^d} \cLhat(\theta) &= -\frac{2}{n}\sum_{i=1}^n y_i x_i \inprod{x_i}{\theta} = -A^\star \theta,\\ 
    \text{where }A^\star&\coloneqq \frac{2}{n}\sum_{i=1}^n y_i x_i x_i^\top.
\end{aligned}
\end{equation}
Hence the spherical gradient flow is implementing a power method on $A^\star$: 
\begin{equation}\label{eq:flowdef}
    \frac{d\theta(t)}{dt} = (I - \theta(t) \theta(t)^\top)A^\star\theta(t).
\end{equation}

Our first contribution is to show that, as long as the sample size $n\ll d\log d$, the performance of the estimator obtained from the spherical gradient flow in \eqref{eq:flowdef} is trivial, i.e., even weak recovery cannot be achieved. This implies that, for the quadratic activation $\sigma(z)=z^2$, full-batch spherical gradient flow with respect to  the correlation loss is unable to provide a sample complexity improvement over one-pass SGD.

\begin{thm}\label{thm:imp}
 Consider the quadratic activation $\sigma(z)=z^2$. Let $\theta(t)$ be the solution at time $t$ of the spherical gradient flow ODE in \eqref{eq:flowdef} with initialization sampled uniformly from the sphere, i.e., $\theta(0)\sim {\rm Unif}(\cS^{d-1})$. 
 Assume that $n=o(d\log d)$, i.e., $\lim_{n\to\infty} n/(d\log d)=0$.
 Then, it holds that, almost surely,
  \vspace{-.4em}
  \[
    \lim_{n\to\infty}\lim_{t\to\infty} \abs{\inprod{\theta(t)}{\thetastar}} = 0.
    \]
\end{thm}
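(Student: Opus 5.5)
The flow \eqref{eq:flowdef} is the continuous-time power method on the fixed matrix $A^\star$. For generic initialization it therefore converges to the top eigenvector of $A^\star$. Concretely, the unnormalized linear ODE $\dot u = A^\star u$ has solution $u(t)=e^{tA^\star}\theta(0)$, and $\theta(t)=u(t)/\norm{u(t)}$ solves \eqref{eq:flowdef}. Writing $\theta(0)=\sum_j c_j v_j(A^\star)$, the top eigenvalue of $A^\star$ is simple almost surely (the law of $A^\star$ is absolutely continuous on the relevant set), and $c_1\neq 0$ almost surely because $\theta(0)$ is uniform and independent of the data. Hence $\lim_{t\to\infty}\theta(t)=\pm v_1(A^\star)$. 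The theorem then reduces to the spectral statement
\[
|\langle v_1(A^\star),\thetastar\rangle|\to 0 \quad \text{a.s. when } n=o(d\log d),
\]
where $A^\star=\frac{2}{n}\sum_i \langle x_i,\thetastar\rangle^2 x_ix_i^\top$.

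To prove the spectral statement, I would decompose $x_i = g_i\thetastar + z_i$ with $g_i=\langle x_i,\thetastar\rangle$ and $z_i\perp\thetastar$ independent of $g_i$. The main competing effect is the heavy tail of the weights $g_i^2$. The largest single rank-one term has size about
\[
\frac{2}{n}\,\max_i g_i^2\,\norm{z_i}^2\approx \frac{2}{n}\,(2\log n)\,d = \frac{4d\log n}{n},
\]
which diverges when $n=o(d\log d)$, since $\log n\asymp\log d$ in the relevant regime. By contrast, the signal direction gives $\thetastar^\top A^\star\thetastar=\frac{2}{n}\sum_i g_i^4\to 6$, and the bulk of $A^\star$ restricted to $\thetastar^\perp$ has operator norm $O(1+d/n)$ up to the outliers. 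So the top eigenvector is localized near $z_{i^\ast}/\norm{z_{i^\ast}}$, where $i^\ast$ maximizes $g_i^2\norm{z_i}^2$, and $z_{i^\ast}$ is nearly orthogonal to $\thetastar$.

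The quantitative steps are as follows.
\begin{enumerate}
\item Lower bound $\lambda_1(A^\star)\ge w^\top A^\star w$ using $w=x_{i^\ast}/\norm{x_{i^\ast}}$. This gives at least $\frac{2}{n}g_{i^\ast}^2\norm{x_{i^\ast}}^2=:M$, with $M\gtrsim d\log n/n$.
\item Write $A^\star = M ww^\top + B$, where $B$ collects the other terms. Bound $\norm{B}$, or at least the quadratic forms of $B$ on relevant vectors, by something that is $o(M)$ or separated from $M$ by a gap.
\item Apply Davis--Kahan to get $|\langle v_1,w\rangle|\to1$.
\item Since $\langle w,\thetastar\rangle = g_{i^\ast}/\norm{x_{i^\ast}}\approx\sqrt{2\log n/d}\to0$, conclude $|\langle v_1,\thetastar\rangle|\le|\langle w,\thetastar\rangle|+\norm{v_1\mp w}\to0$.
\end{enumerate}

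The main obstacle is the second step, in the regime where $n/(d\log d)\to0$ only slowly. There the maximum $\max_i g_i^2$ is $2\log n(1+o(1))$, but many near-maximal terms exist. The second-largest weight is of the same order, so a single-spike Davis--Kahan argument with gap $M-\norm{B}$ fails, because $B$ also contains terms of size $\approx M$. I would circumvent this by not isolating one spike. Instead, I would let $S=\{i: g_i^2\ge (2-\epsilon)\log n\}$ (or a threshold $\tau$ chosen so that $|S|\ll d$) and split $A^\star=A_S+A_{S^c}$.

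For the part $A_{S^c}$, matrix Bernstein or known bounds on sample covariance with bounded weights give
\[
\norm{A_{S^c}} \le 6+C\sqrt{d\tau^2/n}+Cd\tau/n,
\]
which is at most $(1-c)M$ when the threshold is chosen appropriately.

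For the part $A_S$, its range is spanned by $\{x_i\}_{i\in S}$, whose component along $\thetastar$ is small because $|S|\ll d$ and the $z_i$ are nearly orthogonal. Since $\lambda_1$ exceeds $\norm{A_{S^c}}$ by a constant factor, the top eigenvector must put mass $1-o(1)$ on $\mathrm{span}\{x_i:i\in S\}$. Any unit vector in that span has overlap with $\thetastar$ at most about $\max_{i\in S}|g_i|\sqrt{|S|}/\sqrt d\to0$, which needs to be checked via the near-orthogonality of the $z_i/\sqrt d$. The delicate part is tuning the threshold $\tau$ and $|S|$ against $n/(d\log d)\to0$. Finally, I would upgrade to almost sure convergence along $n\to\infty$ via Borel--Cantelli, using the exponential tail bounds in each step.
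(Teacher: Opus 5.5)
The gap is in the splitting step that replaces your Step 2: the comparison of $A_{S^c}$ with $\lambda_1$, and what you infer from it.

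First, with the threshold you suggest, $\tau=(2-\epsilon)\log n$, your bound $\|A_{S^c}\|\le 6+C\tau\sqrt{d/n}+C\tau d/n$ does not even give $\|A_{S^c}\|<\lambda_1$. The term $C\tau d/n$ has the same order $d\log n/n$ as your lower bound $\lambda_1\ge(4-o(1))\,d\log n/n$, with an unspecified constant in front. When $d\ll n\ll d\log d$, the term $\tau\sqrt{d/n}\asymp\log n\sqrt{d/n}$ exceeds $d\log n/n$ by a factor $\sqrt{n/d}\to\infty$.

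Second, and more fundamentally, a genuine constant-factor separation $\lambda_1\ge(1+c)\|A_{S^c}\|$ would still not put mass $1-o(1)$ of $v_1$ on $V_S:=\mathrm{span}\{x_i:i\in S\}$. Let $P_S$ be the orthogonal projection onto $V_S$. Since $A_S$ has range in $V_S$, projecting the eigenvalue equation onto $V_S^\perp$ gives $\lambda_1(I-P_S)v_1=(I-P_S)A_{S^c}v_1$. This yields only $\|(I-P_S)v_1\|\le\|A_{S^c}\|/\lambda_1$, which is a constant. The resulting bound $|\langle v_1,\theta^\star\rangle|\le\sup_{w\in V_S,\,\|w\|=1}|\langle w,\theta^\star\rangle|+\|(I-P_S)v_1\|$ then does not tend to zero.

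The missing idea is to put the threshold at a much lower, sub-logarithmic scale, so that $\|A_{S^c}\|=o(\lambda_1)$ rather than $(1-c)\lambda_1$. Choose $\tau\to\infty$ satisfying three conditions: $\tau=o(\log n)$, $\tau\sqrt{n/d}=o(\log n)$, and $n\sqrt{\tau}\,e^{-\tau/2}=o(d)$. In the main regime $d\lesssim n=o(d\log d)$, $\tau=\sqrt{\log d}$ meets all three, precisely because $n/d=o(\log d)$. For $n=o(d)$ you can simply take $S=[n]$, since then $A_{S^c}=0$.

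With this choice, use the sub-Gaussian covariance bound rather than matrix Bernstein, whose extra $\log d$ factor would be fatal here. It gives $\|A_{S^c}\|\le C(1+\tau\sqrt{d/n}+\tau d/n)=o(d\log n/n)=o(\lambda_1)$, so $\|(I-P_S)v_1\|\to0$. At the same time $|S|=o(d)$ and $\|g_S\|^2=\sum_{i\in S}g_i^2=o(d)$. Conditionally on $S$, the Gaussian matrix $(z_i)_{i\in S}$ therefore has smallest singular value $\sigma_{\min}\gtrsim\sqrt d$. Every unit vector $w=\sum_{i\in S}c_ix_i$ then satisfies $|\langle w,\theta^\star\rangle|=|\sum_i c_ig_i|\le\|g_S\|/\sigma_{\min}\to0$. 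This closes the argument without Davis--Kahan.

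It also makes simplicity of $\lambda_1$ unnecessary. The bound holds for every unit vector in the top eigenspace, which is where $e^{tA^\star}\theta(0)/\|e^{tA^\star}\theta(0)\|$ converges. This helps, because your absolute-continuity remark is not a proof when $n<d$; the paper uses a rank-one interlacing argument instead.

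By contrast, the paper never controls the bulk. It uses the Lu--Li secular-equation formula, which gives $\langle v_1,\theta^\star\rangle^2\le\partial_+L(\mu^*)\,(\mu^*)^2$. Here $1/\mu^*=L(\mu^*)-a\ge\lambda_1(P)-a\gtrsim d\log n$, for which a single outlier suffices, and $\partial_+L\le\|q\|^2$. The many near-maximal weights that forced your splitting therefore never have to be handled.
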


Theorem \ref{thm:imp} suggests that, for the quadratic activation, minimizing the correlation loss on the sphere fails to weakly recovery $\thetastar$ at finite $n/d$. This claim is empirically supported by Figure~\ref{fig:unbounded}, where we observe that as $d$ increases, achieving constant overlap requires larger $\delta = n/d$ and hence the weak recovery threshold is not finite as $d\to\infty$. Moreover, 
in Figure~\ref{fig:logd_delta} we quantitatively verify the predicted $n/d\simeq \log d$ threshold.


\vspace{.3em}

\noindent \emph{Proof sketch.}
The ODE in \eqref{eq:flowdef} is a power iteration on the matrix $A^\star$. Thus, as the top eigenvalue of $A^\star$ can be shown to be simple with probability $1$, $\theta(t)$ for large $t$ approaches the principal eigenvector $v_1(A^\star)$. The spectrum of random matrices of the form in \eqref{eq:Astar}, as well as the overlap between $v_1(A^\star)$ and $\thetastar$, has been computed precisely in \citep{lu2020phase,mondelli2018fundamental} when \emph{(i)} the activation $\sigma$ is bounded, and \emph{(ii)} $n, d\to\infty$ with their ratio $n/d$ held fixed. Remarkably, following a similar strategy, we are able to show that $v_1(A^\star)$ is asymptotically uncorrelated with $\thetastar$ when \emph{(i)} the activation $\sigma(z)=z^2$ is unbounded, and \emph{(ii)} $n, d\to\infty$ with $n=o(d\log d)$. We write $\sum_{i=1}^n y_i x_i x_i^\top$ into the block form $\begin{pmatrix}
    a    &  q^\top     \\
    q  &   P
\end{pmatrix}$. Then, $|\langle v_1(A^\star), \thetastar\rangle|$ can be expressed as   $L'(\mu^*)/(L'(\mu^*)+1/(\mu^*)^2)$, where $L(\mu)$ is the largest eigenvalue of $P+\mu q q^\top$, $L'(\mu)$ is its derivative (more precisely, \eqref{eq:ovlp} gives an upper/lower bound in terms of the right/left derivative, in case the derivative does not exist), and $\mu^*$ solves the fixed point equation $\mu^*=(L(\mu^*)-a)^{-1}$. Now, the largest eigenvalue of $P$ (and therefore of the rank-$1$ update $P+\mu q q^\top$) is at least $d\log n$, which implies $1/\mu^*=d\log n(1+o(1))$. Further, $L'(\cdot)$ is upper bounded by a constant times $nd$. Plugging these bounds into the above expression for $|\langle v_1(A^\star), \thetastar\rangle|$ gives the desired claim. The full proof is deferred to Appendix \ref{app:pfmainneg}.
$\hfill\qed$

\begin{figure*}[t]
    \centering
    \begin{subfigure}[t]{0.335\linewidth}
        \centering
        \includegraphics[width=0.98\linewidth]{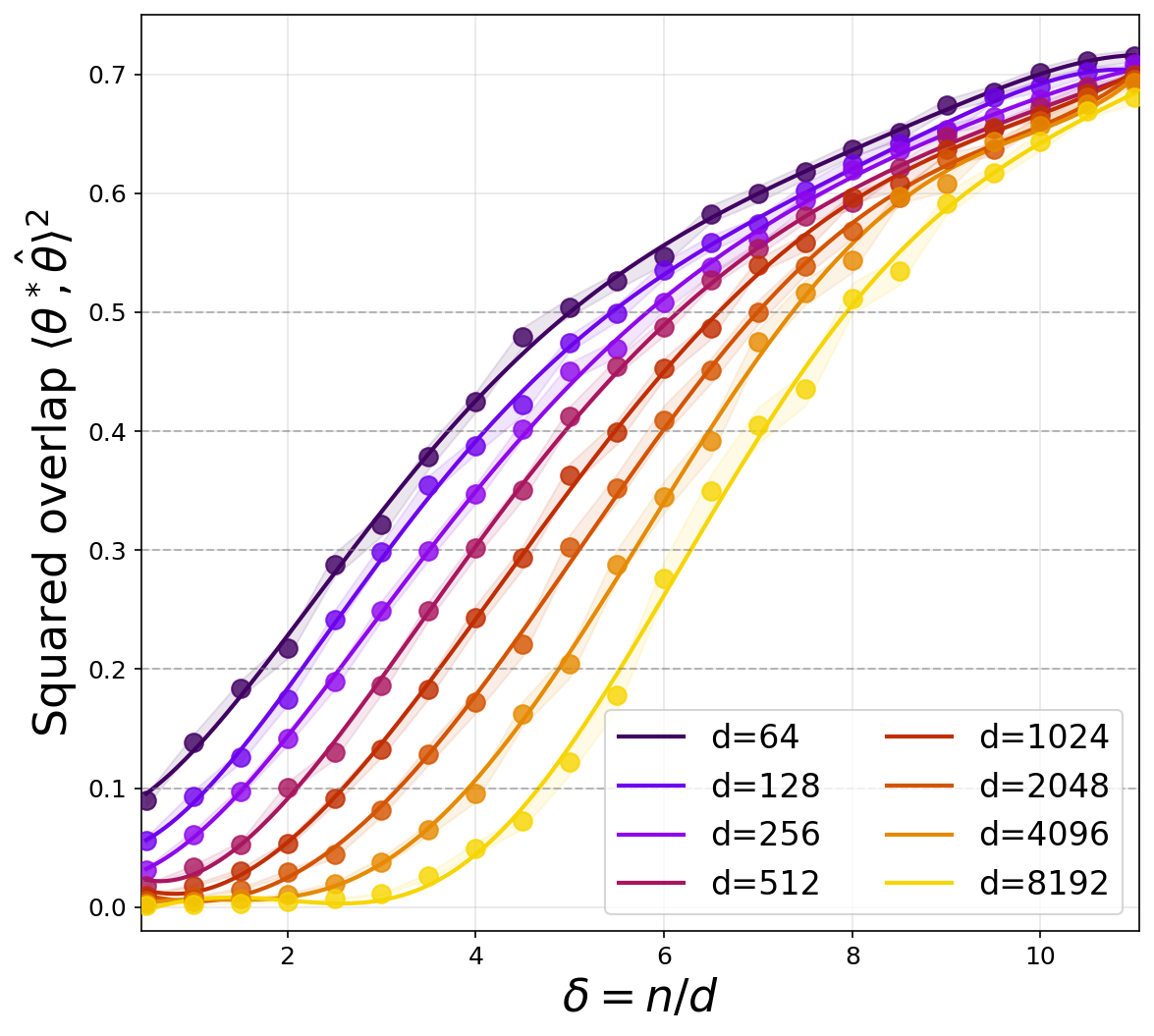}
        \vspace{-1mm} 
        \caption{Quadratic act. $\sigma(z) = z^2$.}
        \label{fig:unbounded}
    \end{subfigure}
     \begin{subfigure}[t]{0.335\linewidth}
        \centering
        \includegraphics[width=0.98\linewidth]{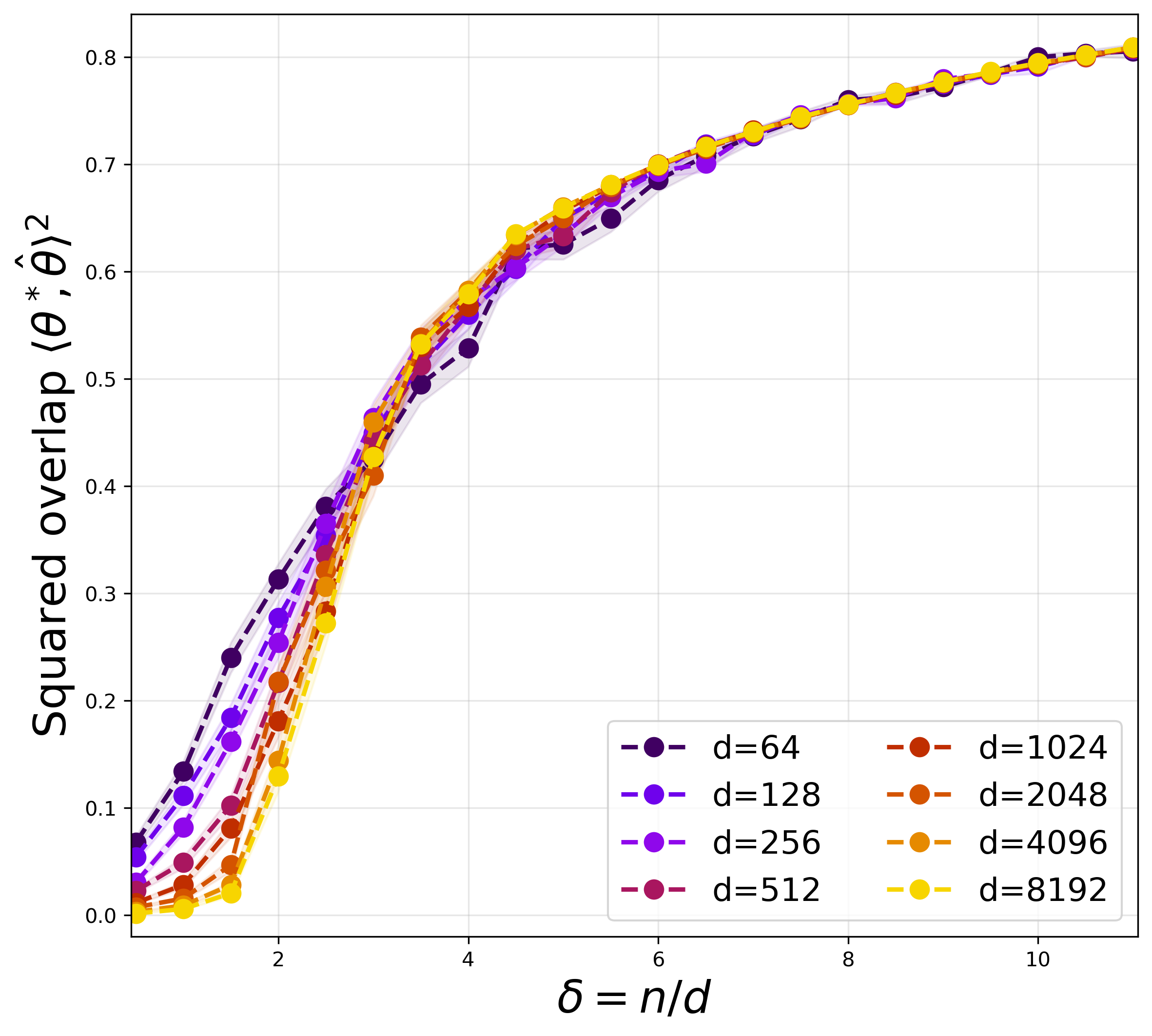}
        \vspace{-1mm}
        \caption{Truncated act. $\sigma(z) = \min\{z^2,M\}$.} 
        \label{fig:truncated}
    \end{subfigure}
         \begin{subfigure}[t]{0.315\linewidth}
        \centering
        \includegraphics[width=0.98\linewidth]{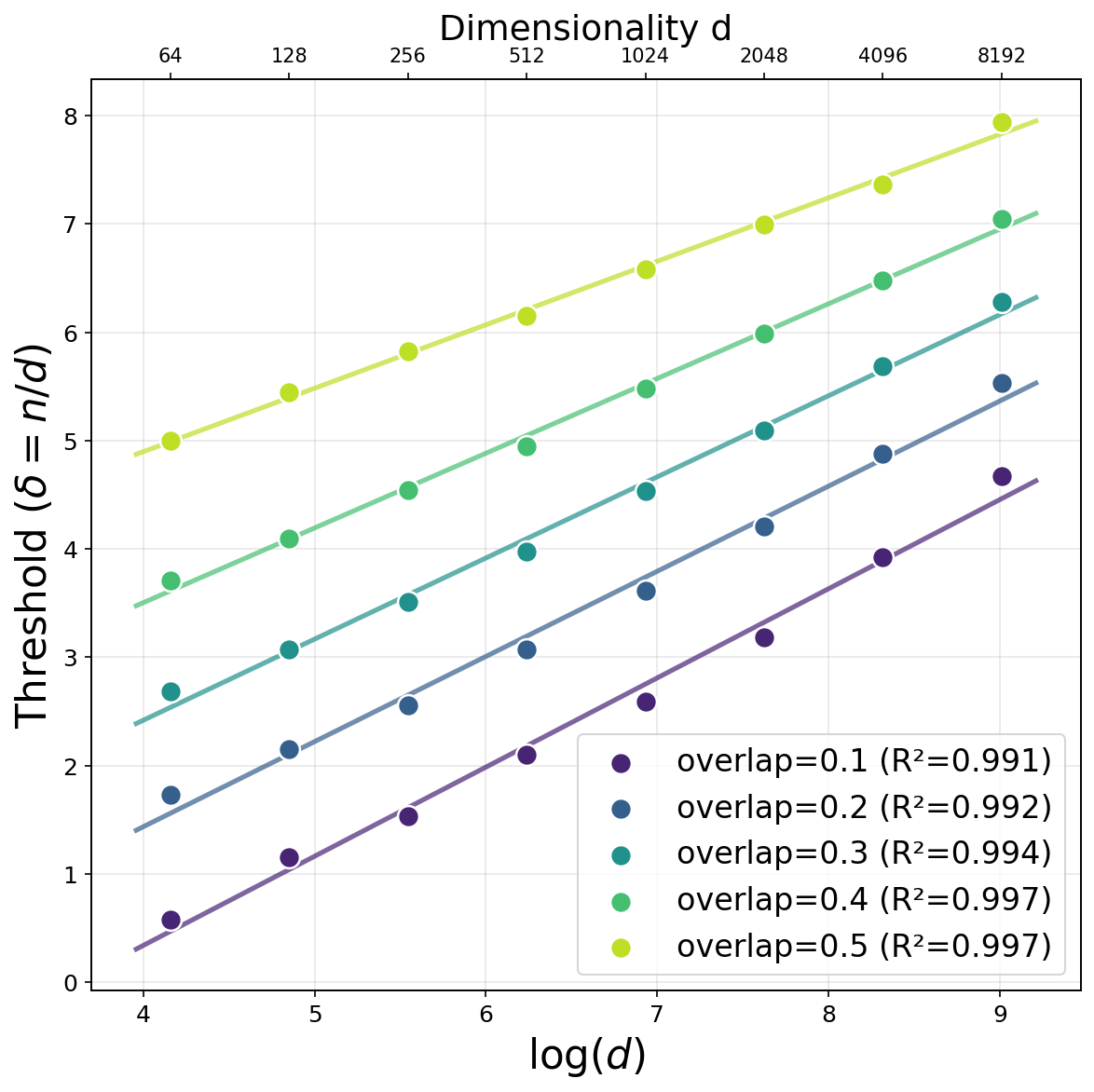}
        \vspace{-1mm}
        \caption{$\log d$ fit of sample complexity.} 
        \label{fig:logd_delta}
    \end{subfigure}
    \caption{\small Overlap achieved by minimizing the empirical correlation loss on the sphere as a function of $\delta=n/d$. We run spherical GD with learning rate $\eta=0.1$ for $T=1000\log^2 d$ steps. 
    \textbf{Left:} for the unbounded quadratic activation, increasing $d$ yields a larger threshold $\delta$ for weak recovery; we include a spline fit (solid lines) to smooth out the fluctuations. 
    \textbf{Middle:} for the truncated activation ($M=8$) the overlap at fixed $\delta$ is almost $d$-independent. 
    \textbf{Right:} threshold $\delta=n/d$ required to achieve target squared overlap values $\{0.1, 0.2, 0.3, 0.4, 0.5\}$ for $\sigma(z) = z^2$ (extracted from Figure~\ref{fig:unbounded}), where we observe a clear $\delta\simeq\log d$ fit. 
    }
    \label{fig:spherical-gradient}
\end{figure*}


 
\subsection{$\Theta(d)$ samples suffice for truncated activation}\label{sec:bdlink}


As mentioned in the proof sketch of Theorem \ref{thm:imp}, for the quadratic activation, the spherical gradient flow in \eqref{eq:flowdef} is equivalent to a power iteration on the matrix $A^\star$, and its poor performance is due to the fact that the quadratic activation is unbounded. In fact, for bounded $\sigma$, existing results\footnote{This can be seen from the sharp characterization of \citet{lu2020phase,mondelli2018fundamental} via random matrix theory or the concentration results in \citet{chen2017solving}, and we will use similar ideas in our analysis (see Propositions \ref{prop:matrixconc} and \ref{prop:topspecconv}).} give that $|\langle v_1(A^\star), \thetastar\rangle|$ is bounded away from $0$ when $n/d$ is sufficiently large. Thus, it is natural to study the performance of the spherical gradient flow in \eqref{eq:flowdef} with a \emph{truncated} activation $\sigma$. For technical reasons (namely, to guarantee well-posedness and adequate regularity of the solution to the gradient flow), we opt for a smooth truncation as defined below. Let us fix constants $M, c>0$, and introduce a $C^\infty$ cutoff function $\varphi : \bbR \to [0,1]$ such that
   $\varphi (u) = 1$ for $\abs{u}\leq M$, $\varphi (u) = 0$ for $\abs{u}\geq 2M$ and $-c/M\leq\varphi'(u)\leq0$ for $u\in\bbR$.
This function can be constructed e.g.\ by \emph{(i)} considering the standard bump function $\gamma(u)=e^{-1/u}$ for $u> 0$ and $\gamma(u)=0$ for $u\le 0$, \emph{(ii)} 
defining $S(u) \coloneqq \frac{\gamma(u)}{\gamma(u)+\gamma(1-u)}$, and \emph{(iii)} 
setting
$    \varphi (u) = 1 - S\rbr{\frac{\abs{u}-M}{M}}$.
Next, 
we define
\begin{equation}\label{eq:sigmadef}
    \sigma(z) := \int_{0}^{z^2} \varphi (u) du.
\end{equation}
This construction can be interpreted as a smooth truncation of the quadratic activation, since $\sigma(z)=z^2$ for $|z|\le \sqrt{M}$, $\sigma(z)=h(z)$ for $|z|\in [\sqrt{M}, \sqrt{2M}]$ and $\sigma(z)=\bar{M}$ for $|z|\ge \sqrt{2M}$, 
where $\bar{M} \coloneqq \int_{0}^{2M} \varphi (u) du$ denotes the maximum value attained by $\sigma$, and $h$ is a smooth interpolation between the quadratic and constant part of $\sigma$. Then, $\sigma'(z) = 2 z\, \varphi(z^2)$ and the Euclidean gradient of the correlation loss $\cLhat(\theta)$ in \eqref{eq:corrloss} is given by
\begin{equation}\label{eq:Adef}
\begin{split}
    \nabla_{\bbR^d} \cLhat(\theta) &= -\frac{2}{n}\sum_{i=1}^n y_i x_i \inprod{x_i}{\theta} \varphi(\inprod{x_i}{\theta}^2)=-A(\theta)\theta, \\
    \mbox{where } 
    A(\theta)&\coloneqq \frac{2}{n}\sum_{i=1}^n y_i x_i x_i^\top \varphi(\inprod{x_i}{\theta}^2).
    \end{split}
\end{equation}
Thus, 
the ODE defining the spherical gradient flow is 
\begin{equation}\label{eq:flowdef2}
    \frac{d\theta(t)}{dt} = (I - \theta(t) \theta(t)^\top)A(\theta(t))\theta(t).
\end{equation}
In contrast with the ODE in \eqref{eq:flowdef} which corresponds to a power iteration over the \emph{fixed} matrix $A^\star$, the ODE in \eqref{eq:flowdef2} corresponds to a power iteration over the \emph{time-varying} matrix $A(\theta(t))$.

Our second contribution is to show that, when $n \gtrsim d$, the estimator obtained from the spherical gradient flow in \eqref{eq:flowdef2} has overlap with $\thetastar$ that approaches $1$ as $M$ and the ratio $n/d$ grow and, therefore, it achieves weak recovery at finite $n/d$. As an immediate consequence, this result shows that full-batch GD is indeed able to achieve linear-in-$d$ sample complexity in exactly the same setting in which one-pass SGD requires $\gtrsim d\log d$ samples (recall that the truncated activation also has information exponent $2$, so the lower bound for online SGD in \citet{arous2021online} still applies).  

\begin{thm}\label{thm:mainresult}
 Consider the smooth truncated activation $\sigma$ defined in \eqref{eq:sigmadef}, where M is a large enough constant (independent of $n$, $d$). Let $\theta(t)$ be the solution at time $t$ of the spherical gradient flow ODE in \eqref{eq:flowdef2}, with initialization sampled uniformly from the sphere, i.e., $\theta(0)\sim {\rm Unif}(\cS^{d-1})$.  Let $d$ be large enough and  $n\ge CM^4 d$. 
Then, it holds that, with probability at least $1-\exp(-cd^{1/5})$,
    \begin{equation}\label{eq:mainresflow}
    \lim_{t\to\infty} \abs{\inprod{\theta(t)}{\thetastar}} \ge 1 -C(e^{-M/2}+ (d/n)^{1/5}),
    \end{equation}
    where $C, c>0$ are constants (independent of $n, d, M$). 
\end{thm}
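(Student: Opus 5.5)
The approach is a landscape argument: first locate all critical points of $\cLhat$ on $\cS^{d-1}$, then use the stable manifold theorem to exclude convergence to the uninformative ones.

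\textbf{Structure of the flow.} Since $\sigma$ is smooth and bounded, $\cLhat$ is a smooth function on the compact manifold $\cS^{d-1}$, so the flow \eqref{eq:flowdef2} is globally well posed and $\cLhat(\theta(t))$ is nonincreasing. By analyticity-type arguments (Łojasiewicz, or a generic nondegeneracy argument holding almost surely over the data), $\theta(t)$ converges to a critical point. Critical points are characterized by $A(\theta)\theta=\lambda\theta$ with $\lambda=\theta^\top A(\theta)\theta$; that is, $\theta$ is an eigenvector of its own matrix $A(\theta)$.

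\textbf{Uniform spectral control.} I would establish, uniformly over $\theta\in\cS^{d-1}$ via an $\epsilon$-net and Lipschitz bounds (using $|\varphi'|\le c/M$ and boundedness $y_i\le\bar M$), that $A(\theta)$ is close in operator norm to its population counterpart $\bar A(\theta)=\E A(\theta)$. Concretely, I would aim for $\sup_\theta\|A(\theta)-\bar A(\theta)\|\lesssim M^2\sqrt{d/n}$ with probability $1-e^{-cd^{1/5}}$; this is where $n\ge CM^4d$ enters (in the spirit of Propositions \ref{prop:matrixconc} and \ref{prop:topspecconv}). The population matrix $\bar A(\theta)$ depends only on $\theta$ through the two-dimensional span of $\theta,\thetastar$. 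Its spectrum has a bulk eigenvalue $\approx \E[y]\cdot\E[2\varphi]$ along $\{\theta,\thetastar\}^\perp$, plus a spike in the $\thetastar$ direction of size roughly $2\,\E[y(g^2-1)]$, which is positive and of order $1$ for large $M$. The truncation error is $O(e^{-M/2})$.

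\textbf{Classifying critical points (BBP-type).} Write $m=\langle\theta,\thetastar\rangle$. At a critical point, use the eigen-equation together with the uniform spectral gap, which is the uniform BBP transition along all $\theta$, to show the following.
\begin{itemize}
\item If $|m|$ is small, the Riemannian Hessian $\nabla^2_{\cS}\cLhat(\theta)=-(P_\theta A(\theta)P_\theta+\text{terms from }\varphi')+\lambda P_\theta$ has a negative direction close to $\thetastar$. The spike of $A(\theta)$ exceeds the Rayleigh quotient $\lambda$, which sits at the bulk level, by a constant margin. The $\varphi'$ terms are controlled by $c/M$ factors.
\item Otherwise $\theta$ must nearly align with the top eigenvector of $\bar A(\theta)$. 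A Davis–Kahan bound then gives $1-|m|\lesssim e^{-M/2}+(d/n)^{1/5}$, where the exponent $1/5$ arises from balancing the net/concentration tradeoff.
\end{itemize}
Thus every critical point with non-negligible distance from the claimed overlap is a strict saddle.

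\textbf{Conclusion and main obstacle.} The stable manifold theorem (as in \citet{lee2016gradient}) shows that the set of initializations converging to strict saddles has measure zero. Hence almost surely over $\theta(0)\sim{\rm Unif}$, the limit is a critical point with overlap at least $1-C(e^{-M/2}+(d/n)^{1/5})$, proving \eqref{eq:mainresflow}. One technical point requires care: the strict-saddle set may be uncountable, so I would need either a countable cover via local stable manifolds or isolation of critical points.

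I expect the main obstacle to be the uniform-in-$\theta$ concentration of the Hessian, including the $\varphi'$ term, which involves $\langle x_i,\theta\rangle^2 x_ix_i^\top$ on the shell $M\le\langle x_i,\theta\rangle^2\le 2M$. This must hold at linear sample size, where a naive union bound loses logs. I would first try chaining with the bounded, Lipschitz structure in $\theta$, and handle the heavy $\|x_i\|^2$ factors by using the fraction of samples landing in a shell.
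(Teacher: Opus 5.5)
Your architecture is the paper's. Critical points are the self-consistent eigenvectors $A(\theta)\theta=\lambda\theta$, you need a uniform-in-$\theta$ BBP-type gap and alignment, every non-principal critical point has an unstable direction near $\thetastar$, and the center-stable manifold theorem is applied with a countable (Lindel\"of) subcover. The gap is in the uniform spectral control. You propose proving $\sup_\theta\norm{A(\theta)-\E A(\theta)}\lesssim M^2\sqrt{d/n}$ with an $\epsilon$-net and Lipschitz bounds from $|\varphi'|\le c/M$. Such a Lipschitz bound is driven by $\max_i|\inprod{x_i}{\theta-\theta'}|$, which can be as large as $\max_i\norm{x_i}\,\norm{\theta-\theta'}\approx\sqrt{d}\,\norm{\theta-\theta'}$. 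So the mesh must be polynomially small in $d$, and the net then has log-cardinality of order $d\log d$. The union bound therefore forces $n\gtrsim d\log d$, which is exactly the factor the theorem removes. You anticipate this loss for the Hessian, but it already hits $A(\theta)$. Saying you would ``try chaining'' leaves the central estimate unproved. It can plausibly be repaired with VC-subgraph entropy plus Dudley/Bousquet, as done for $\widehat H$ in Lemma~\ref{thm:cnch}, but that is a substantial argument you have not supplied.

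The missing idea is a one-sided comparison that makes uniform matrix concentration unnecessary. Because $y_i\ge 0$ and $0\le\varphi\le 1$, we have $A(\theta)=A^\star-B(\theta)$ with $B(\theta)\succeq 0$, where $A^\star$ is the fixed matrix in \eqref{eq:Astar}. Weyl monotonicity then gives $\lambda_2(A(\theta))\le\lambda_2(A^\star)\approx 2$ for all $\theta$ simultaneously. So only $A^\star$ has to concentrate, which is a single sub-gaussian covariance estimate (Proposition~\ref{prop:matrixconc}). For the spike you only need the scalar bound
\[
\thetastar^\top B(\theta)\thetastar\le\sqrt{\frac{2}{n}\sum_{i=1}^n y_i^2\inprod{x_i}{\thetastar}^4}\,\sqrt{\frac{2}{n}\sum_{i=1}^n\bbone\{\inprod{x_i}{\theta}^2>M\}}.
\]
The first factor does not depend on $\theta$. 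The second is uniformly small at $n\asymp d$ by a Sauer--Shelah bound for slabs (Lemma~\ref{lem:indicatorbound}), and this square root is where the exponent $1/5$ comes from. Let $w=|\inprod{v_1(A(\theta))}{\thetastar}|$. Then $\lambda_1(A(\theta))\ge\thetastar^\top A(\theta)\thetastar$ and $w^2(\lambda_1-\lambda_2)\ge\thetastar^\top A(\theta)\thetastar-\lambda_2$ give the uniform gap and alignment without $\E A(\theta)$ or Davis--Kahan (Theorem~\ref{thm:mainspecresult}).

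The same device removes your anticipated main obstacle. At a non-principal critical point, $\lambda_s\le\lambda_2(A(\theta_s))\le\lambda_2(A^\star)$ directly, with no case split on $m$. To exhibit one unstable direction you do not need uniform concentration of the Hessian. You only need an absolute-value bound on the $\varphi'$-remainder in the single direction $(I-\theta_s\theta_s^\top)\thetastar$, and this reduces to the same Cauchy--Schwarz/shell-fraction estimate (Proposition~\ref{prop:stableprincpoints}). This remainder is small because the shell $\{M<\inprod{x_i}{\theta}^2<2M\}$ is rare, not because $|\varphi'|\le c/M$. That factor is cancelled by $\inprod{x_i}{\theta}^2\le 2M$ on the support of $\varphi'$.
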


We remark that the requirement $n\ge CM^4 d$ can be relaxed to $n\ge CM^{2.01} d$ using a more refined spectral analysis of the matrix $A^\star$, see Appendix \ref{sec:relax} for details. 

The result of Theorem \ref{thm:mainresult} is illustrated in Figure \ref{fig:truncated}. We observe that, by simply truncating the quadratic link function, the learning curves at different $d$ mostly collapse (except at small values of $\delta=n/d$ which is likely due to non-asymptotic fluctuations), suggesting that the weak recovery threshold is constant: $\delta=\Theta(1)$. This contrasts the un-truncated setting (Figure~\ref{fig:unbounded}) where the required $\delta$ clearly increases with $d$. 


\vspace{.5em}

\noindent \emph{Proof sketch.}
The proof has two main parts: \emph{(i)} uniform spectral control of $A(\theta)$, and \emph{(ii)} convergence of the gradient flow $\theta(t)$ to the principal eigenvector of $A(\theta)$.

As for part \emph{(i)}, 
the key difficulty is that $A(\theta)$ changes with $\theta$. To remove this dependence, we express $A(\theta) = A^\star-B(\theta)$, where $A^\star$ is defined in \eqref{eq:Astar} and
$B(\theta) \succeq 0$ captures the truncation error. Note that the measurements $y_i$ are now bounded, so we can use the concentration of the empirical covariance matrix of sub-gaussian vectors to establish the spectral behavior of $A^\star$. More precisely, Proposition \ref{prop:matrixconc} gives that, with probability $1-e^{-d}$, the top eigenvector of $A^\star$ is almost aligned with the signal $\thetastar$, the top-two eigenvalues of $A^\star$ are of constant order and there is a constant-order gap between them, i.e., 
\begin{equation}\label{eq:properties}
\begin{split}
\abs{\inprod{v_1(A^\star)}{\thetastar}} &\leq C M \sqrt{\frac{d}{n}},\\ 
         \abs{\lambda_1(A^\star)-6}+\abs{\lambda_2(A^\star)-2}&\leq C\rbr{e^{-M/3} +M \sqrt{\frac{d}{n}}}.
         \end{split}
\end{equation}
We then show that $B(\theta)$ is small uniformly, since it is supported on the rare events that $\cbr{\inprod{x_i}{\theta}^2>M}$. More precisely, a VC dimension argument in Lemma \ref{lem:indicatorbound} gives uniform control of the empirical indicator mass, i.e.,
\[
    \frac{1}{n}\sum_{i=1}^n \bbone\sbr{\inprod{x_i}{\theta}^2>M} \leq C\rbr{e^{-M/2}+\sqrt{\frac{d}{n}}\log\rbr{\frac{n}{d}}}
\]
holds with probability at least $1-e^{-d}$ uniformly over $\theta\in\cS^{d-1}$.
This implies that the perturbation caused by $B(\theta)$ does not change much the spectral properties of $A(\theta)$, which are therefore similar to those of $A^\star$ in \eqref{eq:properties} for any $\theta$ (see Theorem \ref{thm:mainspecresult}).

As for part \emph{(ii)}, Proposition \ref{prop:convofflow} 
shows that the empirical loss $\hat{\cL}$ is a Lyapunov function along the flow, i.e.,
$\frac{d}{dt}\hat{\cL}(\theta(t)) = -\norm{\nabla_{\cS^{d-1}} \hat{\cL}(\theta(t))}\leq 0$. Thus, as $\hat {\cL}$ is bounded from below, $\theta(t)$ converges to some stationary point $\thetainfty$, which from \eqref{eq:flowdef2} needs to be an eigenvector of $A(\theta)$. It remains to exclude convergence to the non-principal eigenvectors of $A(\theta)$. To do so, Proposition \ref{prop:stableprincpoints} 
follows the approach developed in \citep{panageas2017gradient} for standard gradient descent in the plane, with suitable adjustments stemming from working with gradient flow on the sphere. Denoting by $S$ the set of stationary points which are not principal eigenvectors, the center-stable manifold theorem implies that the basin of attraction of any $\theta_s\in S$ lies in a lower dimensional center-stable manifold, provided the linearization of the flow has an unstable direction. The linearization is governed by the spherical Hessian, expressed via $A(\theta) - \lambda_{\theta_s}I$ plus a remainder term due to the smoothening of the truncation. We use the uniform spectral gap and alignment of $A(\theta)$ from part \emph{(i)} to get that unstable directions always exist at any non-principal stationary point, with the remainder terms being negligible for large enough $M$. Hence, the union of these basins has measure $0$, and with random initialization the flow almost surely converges to the principal eigenvector of $A(\theta)$ (for some $\theta$). The latter, regardless of the value of $\theta$, is almost aligned with $\thetastar$ due to the analysis of part \emph{(i)}, which concludes the argument. The complete proof is contained in Appendix \ref{app:pfmainpos}. 
$\hfill\qed$

We highlight that the argument of Theorem \ref{thm:mainresult} crucially relies on showing a uniform-in-$\theta$ BBP transition for the matrix $A(\theta)$: with high probability, for all $\theta$ there is a spectral gap, in the sense that the top two eigenvalues of $A(\theta)$ are separated by a constant (the largest is close to $6$ and the second largest to $2$, as in \eqref{eq:properties}), and the top eigenvector has an overlap with $\thetastar$ which approaches $1$ as the ratio $n/d$ grows. However, this type of landscape argument alone is unable to \emph{(i)} bound the time it takes for the gradient flow to achieve weak recovery, since the matrix $A(\theta(t))$ is time-varying and the error in controlling the spectrum of $A(\theta(t))$ is an order $\sqrt{d}$ larger than the initial overlap $\abs{\inprod{\theta(0)}{\thetastar}}$, and \emph{(ii)} give strong recovery guarantees (see \eqref{eq:def-recovery}), namely achieve arbitrarily small loss for a fixed $n/d$. 

These two desiderata motivate a more refined understanding of the gradient descent dynamics, which we develop in Section \ref{sec:strong} in the setting of minimizing the squared loss from small initialization.

\section{Strong recovery and iteration complexity}\label{sec:strong}

We have previously shown that for a truncated quadratic activation, spherical gradient flow achieves weak recovery in the proportional $n,d$ regime. We now study the time complexity of full-batch GD and establish strong recovery guarantees for a truncated activation. 
To go beyond the landscape analysis in Section~\ref{sec:bdlink}, we leverage the observation of \citet{stoger2021small} that small initialization simplifies the analysis of the training dynamics: roughly speaking, with sufficiently small initialization, the “early phase” of GD is approximated by a power iteration on the matrix \eqref{eq:Astar}, which --- thanks to the truncation --- exhibits an informative BBP transition at finite $n/d$. 
On the other hand, because the parameters are initialized small (whereas $\|\theta^\star\|=1$), the algorithm must also learn the correct norm. This is something that the correlation loss cannot capture, as the whole dynamics occurs on a sphere where the norm of the estimator does not change. Hence, we move to Euclidean gradient descent for minimizing the squared loss:
\begin{equation}\label{eq:square}
\cLhat(\theta)=\frac{1}{2n}\sum_{i=1}^n \left(\sigma(\inprod{x_i}{\theta})-\sigma(\inprod{x_i}{\theta^\star})\right)^2. 
\end{equation}
The gradient descent iteration is given by
\begin{equation}\label{eq:gditer}
    \theta_{t+1}=\theta_t-\eta\widehat{\mathcal G}(\theta_t),
\end{equation}
where $\eta$ is the constant learning rate and $\widehat{\mathcal{G}}(\cdot):=\nabla \cLhat(\cdot)$
 denotes the empirical gradient given by
\begin{align}
\widehat{\mathcal{G}}(\theta) &= 
\frac1n\sum_{i=1}^n(\sigma(\inprod{x_i}{\theta})-\sigma(\inprod{x_i}{\theta^\star}))\,\sigma'(\inprod{x_i}{\theta})\,x_i.\label{eq:emp-grad}
\end{align}
We focus on the truncated quadratic activation $\sigma$ given by
\begin{equation}\label{eq:sigmadef-noc}
    \sigma(z) \coloneqq \begin{cases}
    z^2,\text{ if } z^2\leq M, \\
    M,\text{ if } z^2>M.
\end{cases}
\end{equation}
We note that choosing this truncation, as opposed to the smooth one in \eqref{eq:sigmadef}, is only for technical convenience, and we expect our results to hold more broadly. 

Our third contribution is to show that, when $n\gtrsim d$, the estimator obtained from the gradient descent iteration in \eqref{eq:gditer} achieves strong recovery in $\gtrsim \log d$ steps. More precisely, we prove that, after a ``search phase'' lasting at most an order of $\log d/\eta$ steps, the  error $\min_{s\in\cbr{\pm 1}}\norm{\theta_t-s\theta^\star}^2$ converges geometrically to $0$ with $t$. This is the case for any fixed $n/d$ larger than a threshold depending only on $M$. Without loss of generality we assume that $s=1$ and for notational convenience drop further references to $s$ accounting for the sign ambiguity in the reconstruction of $\thetastar$.

We contrast the behavior proved in Theorem \ref{thm:strong} below with the earlier result of Theorem \ref{thm:mainresult} which \emph{(i)} does not provide a time-complexity guarantee, and where \emph{(ii)} the residual error gets arbitrarily small only as $n/d$ grows.

\begin{thm}\label{thm:strong}
    Consider the truncated quadratic activation in \eqref{eq:sigmadef-noc}, where $M$ is a large enough constant (independent of $n$, $d$). Let $\theta_t$ be obtained after $t$ steps of the gradient descent iteration in \eqref{eq:gditer}, with learning rate $\eta\le c/M^2$. Assume that the initialization is sampled uniformly from the sphere of radius $r_0=d^{-15}$, that is, $\theta_0\sim {\rm Unif}(r_0\,\cS^{d-1})$. Let $d$ be large enough and $n\ge  CM^4 d$. Then, for all $t\ge \bar t$, it holds that, with probability at least $1-2/d^2$, 
    \begin{align}\label{eq:resstrong}
\norm{\theta_t-\theta^\star}^2\le C{\left(1-\eta\alpha\right)}^{t-\bar t},
\end{align}
where $\alpha, C$ are numerical constants (independent of $n, d, M$) and $\bar t\le C\log d/\eta$. 
\end{thm}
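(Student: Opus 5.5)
The proof splits the trajectory into three phases.
\begin{itemize}
\item \emph{Phase 1} is a linear power-iteration phase starting from the tiny initialization $r_0=d^{-15}$.
\item \emph{Phase 2} is a growth phase in which the norm of $\theta_t$ rises to order one while the direction stays aligned with $\thetastar$.
\item \emph{Phase 3} is a local phase in which we show restricted strong convexity of $\cLhat$ in a neighborhood of $\thetastar$. This gives geometric contraction at rate $1-\eta\alpha$.
\end{itemize}
Throughout, $\bar t$ is the sum of the lengths of Phases 1 and 2, and we show it is $O(\log d/\eta)$.

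\textbf{Phase 1.} For $\norm{\theta}\ll 1$, uniformly over $i$ with high probability we have $\inprod{x_i}{\theta}^2\le \norm{\theta}^2\max_i\norm{x_i}^2\le M$. On this event $\sigma(\inprod{x_i}{\theta})=\inprod{x_i}{\theta}^2$ and $\sigma'(z)=2z$, so
\[
\widehat{\mathcal G}(\theta)=-A^\star\theta+\frac{2}{n}\sum_{i=1}^n \inprod{x_i}{\theta}^3x_i .
\]
The first term is exactly the matrix in \eqref{eq:Astar}, now with bounded labels $y_i=\sigma(\inprod{x_i}{\thetastar})$. The cubic remainder has norm at most $C\norm{\theta}^3$, using uniform bounds on $\frac1n\sum_i\inprod{x_i}{u}^4$, which hold for $n\gtrsim d$ after a mild truncation. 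The iteration is therefore $\theta_{t+1}=(I+\eta A^\star)\theta_t+O(\eta\norm{\theta_t}^3)$.

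By Proposition~\ref{prop:matrixconc}, $\lambda_1(A^\star)\approx 6$, $\lambda_2(A^\star)\approx 2$, and $v_1(A^\star)$ is well aligned with $\thetastar$. Decompose $\theta_t$ along $v_1$ and its orthogonal complement. With probability $1-1/d^2$ the initial component along $v_1$ is at least $r_0 d^{-3}$. Each step multiplies the ratio of the $v_1$-component to the orthogonal component by $(1+6\eta)/(1+2\eta)$, so after $t_1=C\log d/\eta$ steps this ratio exceeds $d^{2}$. Meanwhile the norm is still at most $r_0(1+7\eta)^{t_1}$, which stays below a small constant by the choice $r_0=d^{-15}$ and the constant $C$. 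The cubic errors accumulate to $O(\norm{\theta}^3)$ and are negligible.

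At the end of Phase 1, $\theta_t$ has overlap $1-O(M\sqrt{d/n})$ with $\thetastar$ and small norm.

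\textbf{Phase 2.} Track the pair $(\rho_t,\beta_t)=(\norm{\theta_t},\inprod{\theta_t/\norm{\theta_t}}{\thetastar})$. The plan is to compare the empirical gradient with the population gradient uniformly over the relevant region $\{\norm{\theta}\le 2\}$. The error is $O(\mathrm{poly}(M)\sqrt{d/n})$ in operator-like norm, obtained from a covering/VC argument in the style of Lemma~\ref{lem:indicatorbound}, which controls the indicators $\bbone[\inprod{x_i}{\theta}^2>M]$ (the nonsmooth part).

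For the population squared loss we compute the dynamics of $\rho$ and $\beta$ explicitly, up to $e^{-M/2}$ truncation corrections. Near $\beta\approx1$ the radial flow behaves like $\dot\rho\approx c\rho(1-\rho^2)$, so $\rho$ reaches $1-\epsilon$ in $O(\log(1/r)/\eta)=O(\log d/\eta)$ steps. During this time $\beta$ is pushed toward $1$, and the empirical errors are absorbed because $n\ge CM^4d$.

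\textbf{Phase 3.} Show that in the ball $\norm{\theta-\thetastar}\le c_0$,
\[
\inprod{\widehat{\mathcal G}(\theta)}{\theta-\thetastar}\ge \alpha\norm{\theta-\thetastar}^2
\qquad\text{and}\qquad
\norm{\widehat{\mathcal G}(\theta)}\le CM^2\norm{\theta-\thetastar}.
\]
With $\eta\le c/M^2$, these two bounds give the standard contraction
\[
\norm{\theta_{t+1}-\thetastar}^2\le (1-\eta\alpha)\norm{\theta_t-\thetastar}^2 .
\]
Note that $\widehat{\mathcal G}(\thetastar)=0$ exactly, since the model is noiseless. This is why the contraction yields strong recovery and not just weak recovery.

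For the lower bound, restrict to samples with $\max(\inprod{x_i}{\theta}^2,\inprod{x_i}{\thetastar}^2)\le M$. On these samples the loss is the untruncated quartic $(\inprod{x_i}{\theta}^2-\inprod{x_i}{\thetastar}^2)^2$, whose local curvature is $\frac1n\sum_i\inprod{x_i}{\thetastar}^2\inprod{x_i}{u}^2$ restricted to the good event. This quantity is $\gtrsim 1$ uniformly in unit vectors $u$ once $n\gtrsim d$, since the restricted variables are bounded. The samples with a truncated coordinate contribute terms with a favorable sign or a uniformly small mass (Lemma~\ref{lem:indicatorbound}).

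\textbf{Main obstacle.} I expect the main obstacle to be this uniform local restricted strong convexity at $n\asymp d$ with the nonsmooth truncation. The handling of boundary samples where exactly one of $\inprod{x_i}{\theta}^2$, $\inprod{x_i}{\thetastar}^2$ exceeds $M$ is the delicate part. I would bound their contribution by $\norm{\theta-\thetastar}^2$ times the empirical mass of a thin slab $\{|\inprod{x_i}{\thetastar}|\approx\sqrt M\}$, again controlled uniformly via VC dimension. A secondary difficulty is the clean handoff between Phases 2 and 3 within $O(\log d/\eta)$ steps.
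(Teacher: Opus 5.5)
Your outline works, but only the first phase matches the paper. Phase~1 is the same: power iteration on $A^\star$ from $r_0=d^{-15}$, with no truncation active. After that the paper never drives $\norm{\theta_t}$ to $1$ before the local phase. Instead it proves three separate facts. First, the cap $\{\angle(\theta,\thetastar)\le\varphi\}$ is invariant, via the Stein decomposition $\mathcal G(\theta)=A\theta-B\thetastar$ with $B\ge 0$ (Lemma~\ref{thm:cap-invariance}). Second, $\norm{\theta_t}$ grows to $1/4$ and then stays in $[1/4,10]$ (Proposition~\ref{thm:norm}). Third, one-point strong convexity holds on the whole cone $\Theta_\varphi$ with $\alpha=cr^2$ (Proposition~\ref{thm:lastphase}), so the contraction itself corrects the norm.

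Your ball around $\thetastar$ gets invariance of the local region for free, since contraction keeps you inside. The price is a quantitative radial analysis in Phase~2 to reach the ball. For the empirical one-point convexity, the paper avoids sample-by-sample case analysis through the exact identity $\inprod{\widehat{\mathcal G}(\theta)}{\theta-\thetastar}=(\theta-\thetastar)^\top\int_0^1\widehat H(\thetastar+t(\theta-\thetastar),\theta)\,dt\,(\theta-\thetastar)$. It combines this with uniform operator-norm concentration of $\widehat H$ (Lemma~\ref{thm:cnch}) and a population lower bound split on sign agreement.

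Several steps need repair, none fatal.

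(i) At $n\asymp d$ the cubic remainder is only $O(d^2\norm{\theta}^3)$, because $\sup_u\frac1n\sum_i\inprod{x_i}{u}^4\asymp d^2/n$. This is harmless given $r_0=d^{-15}$. However, the linearisation needs $\norm{\theta_t}\le\sqrt{M/(2d)}$, not merely a small constant. Your exponent count does give roughly $\norm{\theta_{t_1}}\le d^{-6}$.

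(ii) Phase~2 needs relative concentration $\norm{\widehat{\mathcal G}(\theta)-\mathcal G(\theta)}\le\epsilon\norm{\theta}$ all the way down to radius $d^{-O(1)}$. An additive mass bound as in Lemma~\ref{lem:indicatorbound} cannot control the kink: that term carries a factor $1/\norm{\theta}$, so it blows up at small norms. This is why the paper uses a multiplicative annulus net and a slab estimate scaling like $\norm{\theta}^2$ (Lemmas~\ref{lem:annulus-net} and~\ref{lem:slab-emp-finite}). A VC-subgraph argument on the normalised gradient would also do.

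(iii) The Lipschitz constant of $\widehat{\mathcal G}$ is $O(M)$, since $\sigma$ is $2\sqrt M$-Lipschitz and $|\sigma'|\le 2\sqrt M$. With your $CM^2$, the step size $\eta\le c/M^2$ would not give a contraction.

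(iv) The curvature at $\thetastar$ alone does not give one-point convexity on a ball. Write $a_i=\inprod{x_i}{\theta}$, $b_i=\inprod{x_i}{\thetastar}$ and $h_i=a_i-b_i$. On untruncated samples the summand is $2a_i(a_i+b_i)h_i^2$. It is negative when $a_ib_i<0$ and $|a_i|<|b_i|$, and the cross term $3b_ih_i^3$ must be controlled uniformly over the ball. You can do this with a VC-subgraph bound, or by letting the radius shrink with $M$.

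(v) For the boundary samples, the slab in $\inprod{x_i}{\thetastar}$ is not enough on its own. Samples with $|a_i|\le\sqrt M<|b_i|$ are favourable when the signs agree; otherwise they force $|h_i|>\sqrt M$. Samples with $|b_i|\le\sqrt M<|a_i|$ drop out of the curvature sum. So you also need a uniform tail bound on $\inprod{x_i}{\theta-\thetastar}$, not just the slab mass.
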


To our knowledge, this is the first convergence rate and strong recovery guarantee for learning single-index models with information exponent $2$ using standard full-batch GD in the proportional regime (i.e., empirical loss with finite $n/d$). 
We remark that the requirement $r_0=d^{-15}$ can be relaxed to a lower power of $1/d$ at the cost of a more involved argument. 


In Figures \ref{fig:overlap} and \ref{fig:norm}, we track the growth of overlap and norm as a function of the number of GD steps (note that to guarantee small squared loss, we need to check the convergence of both quantities). We observe that, as the dimension gets larger (while keeping $n/d$ constant), the training dynamics takes longer to converge. 
In Figure~\ref{fig:logd_time} we quantitatively verify the required $T\simeq \log d$ runtime predicted by Theorem \ref{thm:strong}.



\begin{figure*}[t]
    \centering
    \begin{subfigure}[t]{0.3325\linewidth}
        \centering
        \includegraphics[width=0.98\linewidth]{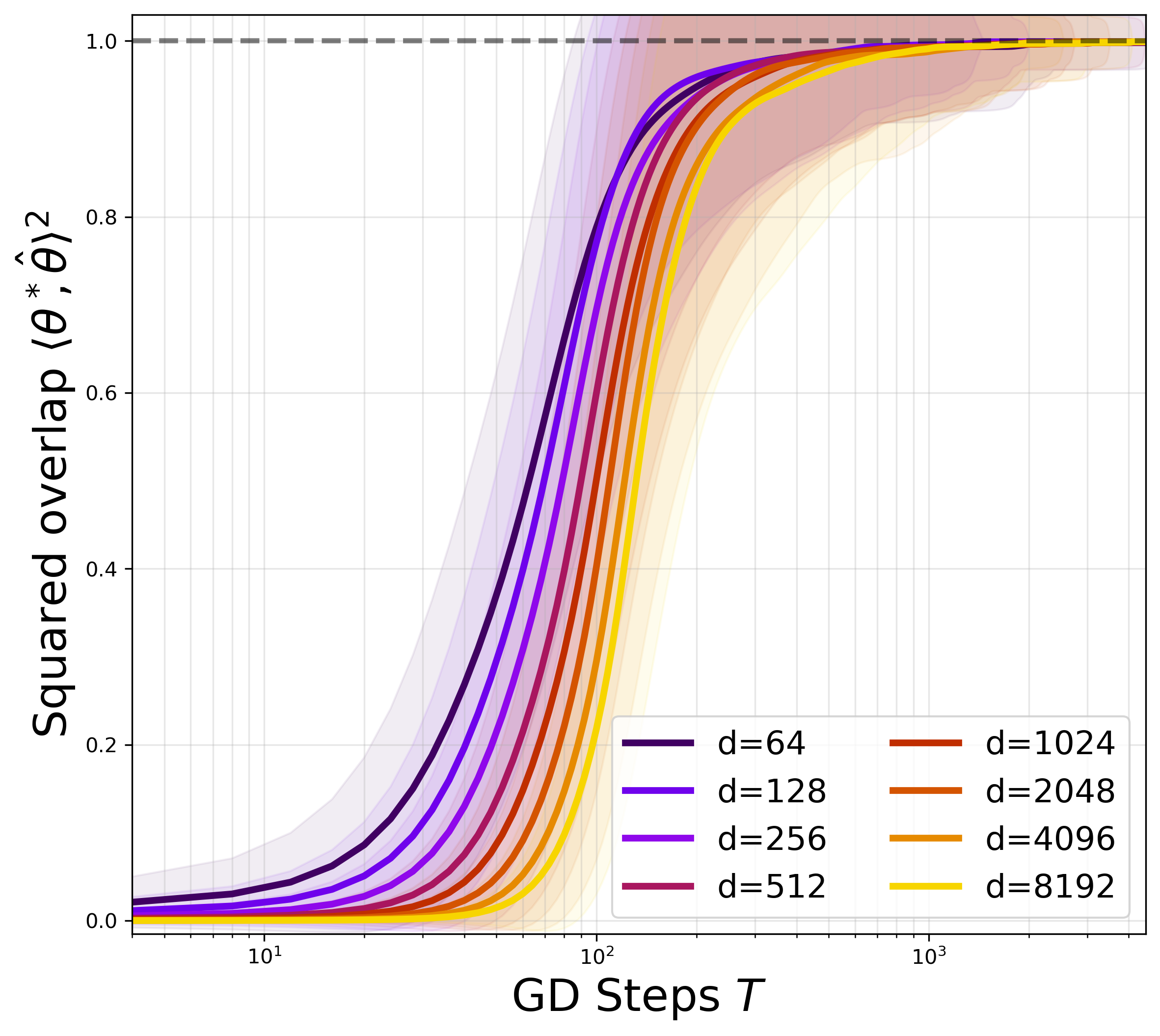}
        \vspace{-1mm} 
        \caption{Convergence of direction.}
        \label{fig:overlap}
    \end{subfigure}
     \begin{subfigure}[t]{0.3325\linewidth}
        \centering
        \includegraphics[width=0.98\linewidth]{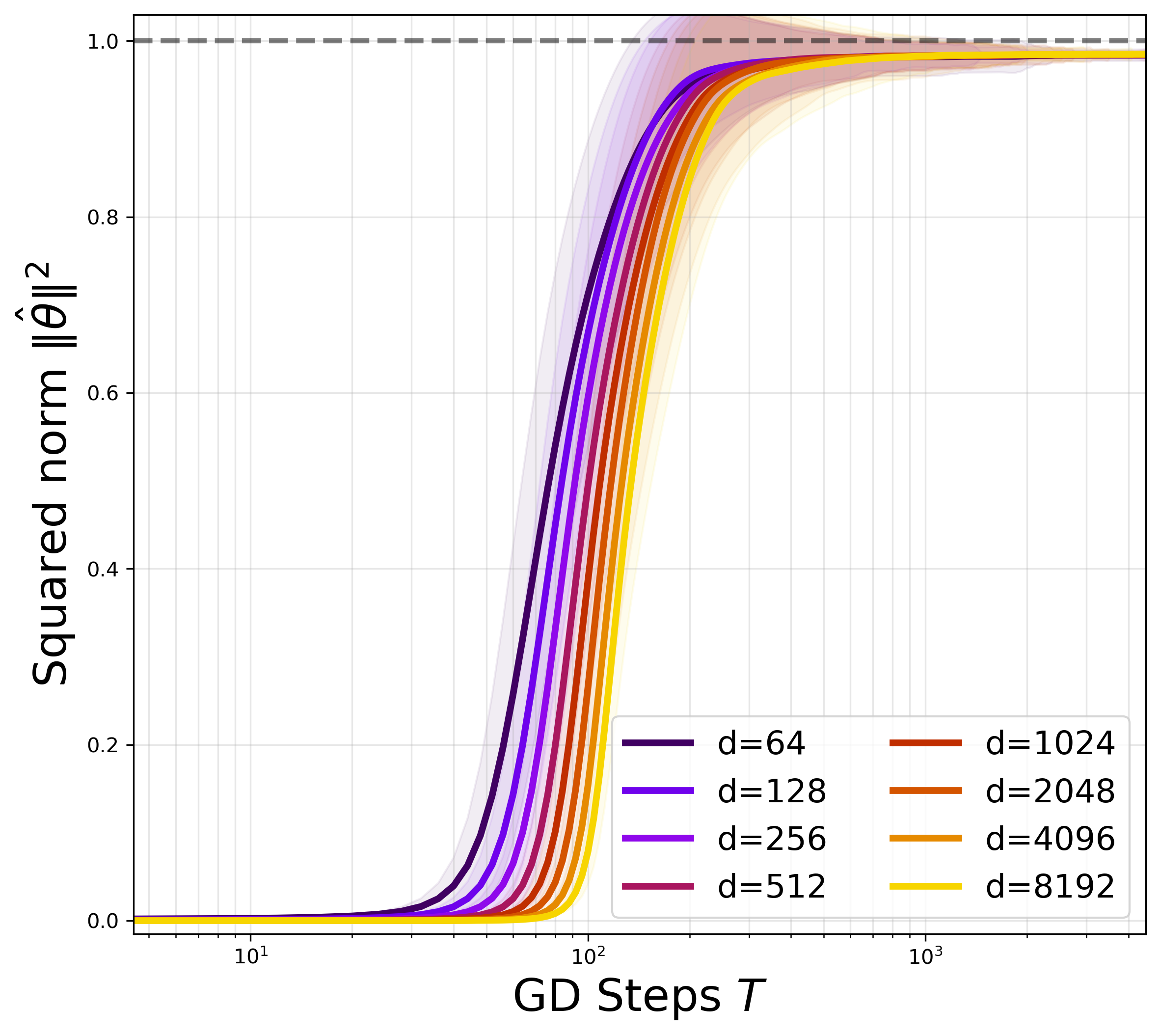}
        \vspace{-1mm}
        \caption{Convergence of norm. } 
        \label{fig:norm}
    \end{subfigure}
    \begin{subfigure}[t]{0.316\linewidth} 
        \centering
        \includegraphics[width=0.98\linewidth] {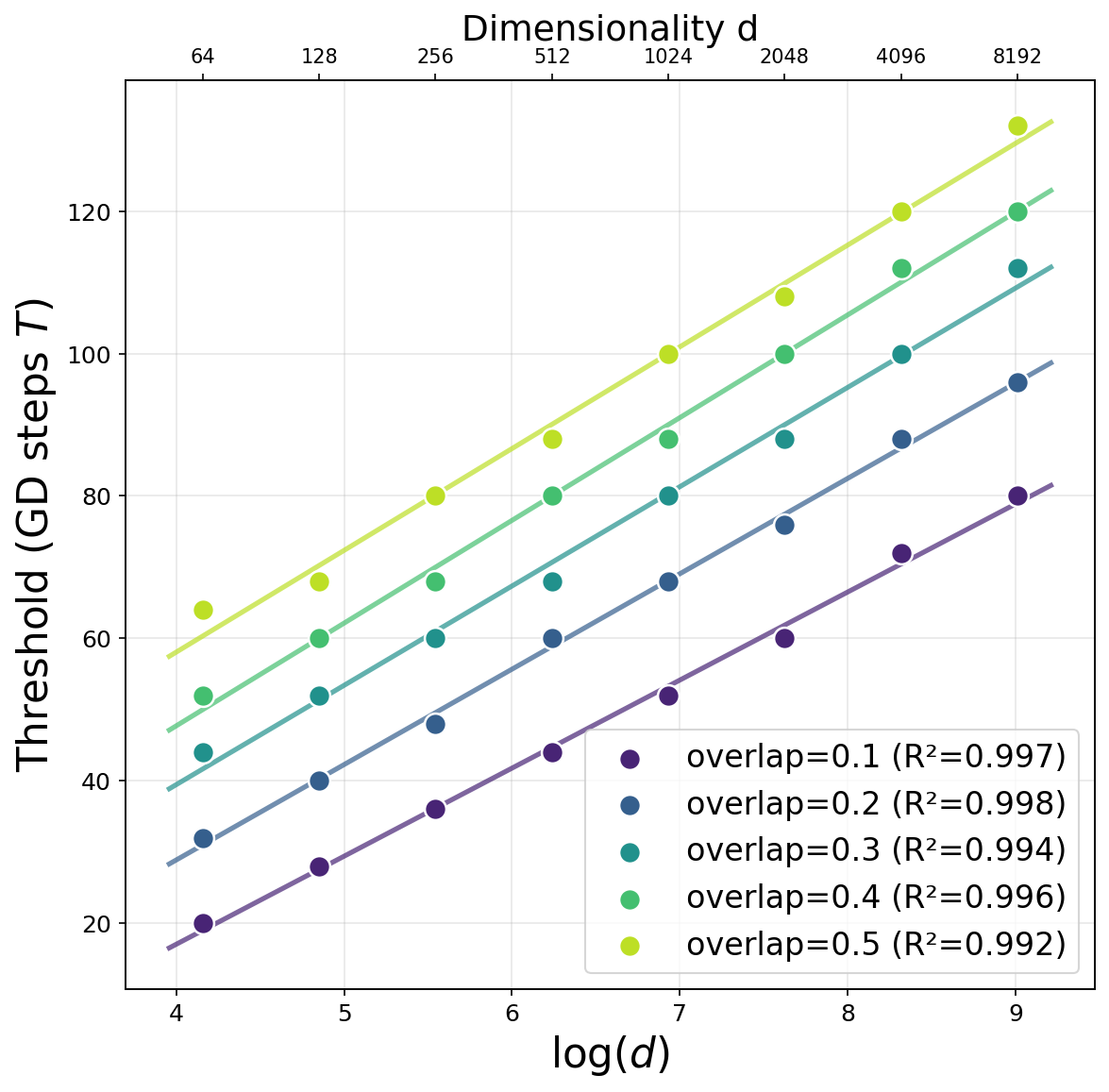}
        \vspace{-1mm}
        \caption{$\log d$ fit of time complexity. } 
        \label{fig:logd_time}
    \end{subfigure}
    \caption{\small Overlap and parameter norm vs.~number of GD steps. We use the truncated quadratic activation \eqref{eq:sigmadef-noc} with $M=8$, run Euclidean gradient descent with learning rate $\eta = 0.1 / M^2$ and initialization scale $1/d^2$, and fix $\delta=n/d=10$. \textbf{Left, Middle:} Observe that the time required for non-trivial overlap and norm growth increases with $d$. 
    \textbf{Right:} number of GD steps required to achieve target squared overlap values $\{0.1, 0.2, 0.3, 0.4, 0.5\}$ (extracted from Figure~\ref{fig:overlap}), where we observe a clear $T\simeq\log d$ fit. 
    }
    \label{fig:euclidean-gradient}
\end{figure*}

\noindent\emph{Proof sketch.} The argument distinguishes two main phases in the gradient dynamics: \emph{(i)} a \emph{first search} phase, where the angle $\angle(\theta_t,\theta^\star) $ between $\theta_t$ and $\thetastar$ shrinks to a small constant (independent of $n, d$) and the norm of $\theta_t$ grows up to a constant fraction (also independent of $n, d$) of the target norm $\norm{\thetastar}$, and \emph{(ii)} a \emph{second refinement} phase, which starts in the vicinity of the optimum and exhibits the \emph{geometric convergence} of the distance to the optimum in \eqref{eq:resstrong}.

We start by discussing the analysis of the first phase. Here, Proposition \ref{prop:phaseIpropangle} (see Appendix \ref{sec:redangle}) gives that
\begin{equation}\label{eq:angleformula-body}
\angle(\theta_{t^*_{\measuredangle}},\theta^\star)  \leq C(e^{-M/2}+M\delta^{-1/2}), 
    \end{equation}
    where $t^*_{\measuredangle}= \frac{3\log d}{\log(1+1.99\eta)}$.
The idea is that, for $t\le t^*_{\measuredangle}$,  $\norm{\theta_t}$ remains small and, therefore, $\langle x_i, \theta_t\rangle^2$ always stays below the threshold $M$. Thus, the dynamics is effectively a power iteration for the matrix $A^\star$, which exhibits the benign properties in \eqref{eq:properties}. In particular, the choice of $t^*_{\measuredangle}$ is such that, at $t=t^*_{\measuredangle}$, \emph{(i)} the norm $\norm{\theta_t}$ has not grown enough for the truncation to have an effect, and \emph{(ii)} the angle $\angle(\theta_t,\theta^\star) $ has become an arbitrarily small constant. Once the angle has reached a certain value, it can never exceed it again in the following iterations, see Proposition \ref{thm:angle} in Appendix \ref{sec:angle}. At the same time, during this first phase, the norm $\norm{\theta_t}$ grows geometrically in $t$, reaching the value of $1/4$ after an order of $\log d/\eta$ steps, see Proposition \ref{thm:norm} in Appendix \ref{sec:norm}. These last two results (improvement in the angle and growth of the norm) rely on the uniform concentration of the empirical gradient $\widehat{\mathcal G}(\cdot)$ to its population counterpart $\mathcal{G}(\cdot) := \nabla \mathcal{L}(\cdot)$ given by
\begin{align}
\mathcal{G}(\theta) &
= \E\Big[(\sigma(\inprod{x}{\theta})-\sigma(\inprod{x}{\theta^\star}))\,\sigma'(\inprod{x}{\theta})\,x\Big],\label{eq:pop-grad}
\end{align}
where the expectation is over $x\sim \mathcal N(0, I_d)$. The concentration is proved in Proposition \ref{thm:cnc} (see Appendix \ref{sec:ugrad}).

During the second phase, the iterate $\theta_t$ is already in a neighborhood of a global optimum: its norm is a constant factor away from $\norm{\thetastar}=1$, and the angle $\angle(\theta_t,\theta^\star) $ is small (see \eqref{eq:angleformula-body}). Then, Proposition \ref{thm:lastphase} (see Appendix \ref{sec:last}) shows that the loss converges geometrically by establishing the one-point strong convexity
\begin{align}\label{eq:PLfinal-body}
    \inprod{\widehat{\mathcal{G}}(\theta)}{\theta-\thetastar}\ge \alpha \|\theta-\thetastar\|^2,
\end{align}
for a numerical constant $\alpha>0$ (independent of $n, d$). To do so, we relate $\widehat{\mathcal{G}}(\theta)$ to the Gram matrix of the Jacobian
\begin{equation}\label{eq:defH-body}
\widehat H(\theta,\widetilde{\theta}):=\frac1n\sum_{i=1}^n \sigma'(\langle x_i, \theta\rangle)\,\sigma'(\langle x_i, \widetilde{\theta}\rangle)\,x_i x_i^\top,
\end{equation}
for a suitable choice of $\widetilde{\theta}$. Then, the key technical step consists in showing the uniform (in $\theta, \widetilde{\theta}$) concentration of $\widehat H(\theta,\widetilde{\theta})$ to  the corresponding population quantity $H(\theta,\widetilde{\theta}):=\E[\widehat H(\theta,\widetilde{\theta})]$, see Proposition \ref{thm:cnch} in Appendix \ref{sec:cnch}. By combining this concentration result with a lower bound on the population $H(\theta,\widetilde{\theta})$, we obtain \eqref{eq:PLfinal-body}, which leads to the desired geometric convergence of the error.  The complete proof is contained in Appendix \ref{app:proofthmstrong}.
$\hfill\qed$

\vspace{.5em}

\noindent 

Our two-phase analysis is loosely inspired by the low-rank recovery literature \citep{stoger2021small,soltanolkotabi2023implicit}, which often proceeds through \emph{(i)} subspace alignment, \emph{(ii)} escape from saddle regions, and \emph{(iii)} local refinement. This parallels our initial angle reduction and norm growth, followed by a local refinement phase. Beyond this high-level resemblance (and our use of a power-method–type initialization analysis, following that line of work), the approaches are fundamentally different. First, our non-linear forward model induces a different population loss and requires a specialized analysis. Second, and more importantly, each observation contains substantially less randomness (on the order of $d$ here, versus $d^2$ in low-rank recovery). Therefore, in contrast with this literature that utilizes well-established restricted isometry properties, we need to develop rather intricate and ad-hoc uniform concentration arguments. Finally, \citet{chen2019gradient} establish related convergence guarantees for quadratic activations, but still require $n \gtrsim d\,\mathrm{polylog}(d)$. Furthermore, their analysis relies on very different techniques based on leave-one-out arguments, which require the additional logarithmic factors in the sample complexity. 


\section{Concluding remarks}

In this work, we characterize the sample and iteration complexity of full-batch gradient descent in a canonical nonlinear single-index problem with information  exponent $2$, where one-pass SGD requires $n \gtrsim d\log d$ samples. In the spherical-gradient/correlation-loss setup, we prove a negative result for the unbounded quadratic activation: when $n\ll d\log d$, full-batch spherical gradient flow converges to an uninformative direction, implying that full-batch updates do not outperform one-pass SGD in sample complexity. At the same time, we establish a positive result via a minimal modification: with a truncated quadratic activation, full-batch spherical gradient flow achieves weak recovery with linear samples $n\gtrsim d$, matching the information-theoretically optimal dimension dependence (without the $\log d$ factor). 
Finally, moving beyond a landscape analysis, we study full-batch GD on the squared loss from small initialization, and we establish strong recovery together with a convergence rate guarantee: with $n\gtrsim d$ samples, GD reaches exact recovery after a logarithmic number of steps $T \gtrsim \log d$, yielding 
the first convergence rate results for plain full-batch GD in the proportional regime.

Several directions emerge naturally. First, our refined trajectory analysis is currently limited to squared loss dynamics from small initialization; an important next step is to characterize the runtime complexity of the spherical dynamics in Section~\ref{sec:corr}. Second, it would be interesting to compute sharp constants for the phase transition in the sample complexity of full-batch GD (e.g., via dynamical mean-field theory \citep{celentano2021high,gerbelot2024rigorous,han2025long,chen2025learning}). Finally, our current setting is restricted to single-index models with information exponent $2$, where one-pass SGD and the information-theoretic limit exhibit a $\log d$ gap; a natural direction is to consider link functions with higher information exponent, or more generally multi-index models, and examine whether full-batch updates yield a similar (or potentially larger) statistical gain.

\medskip

\section*{Acknowledgments}

F.\ K.\ and M.\ M.\ are funded by the European Union (ERC, INF$^2$, project number 101161364). Views and opinions expressed are however those of the author(s) only and do not necessarily reflect those of the European Union or the European Research Council Executive Agency. Neither the European Union nor the granting authority can be held responsible for them. M. S. is supported by the USC–Capital One Center for Responsible AI and Decision Making in Finance (CREDIF) Fellowship and an Amazon Research Award. M. S. is also supported by the Packard Fellowship in Science and Engineering, a Sloan Research Fellowship in Mathematics, an NSF-CAREER under award \#1846369, DARPA FastNICS program, and NSF-CIF awards \#1813877 and \#2008443. and NIH DP2LM014564-01. The authors thank Jason D.~Lee and Zihao Wang for helpful discussions and reference pointers. 


\medskip

\bibliographystyle{plainnat}
\bibliography{ref}

@article{soltanolkotabi2017learning,
  title={Learning relus via gradient descent},
  author={Soltanolkotabi, Mahdi},
  journal={Advances in neural information processing systems},
  volume={30},
  year={2017}
}

@book{vanderVaart1996,
  title={Weak Convergence and Empirical Processes: With Applications to Statistics},
  author={Van der Vaart, Aad W. and Wellner, Jon A.},
  series={Springer Series in Statistics},
  year={1996},
  publisher={Springer},
  address={New York, NY},
  isbn={978-0-387-94640-5}
}

@article{Bousquet2002,
  author  = {Bousquet, Olivier},
  title   = {A {Bennett} Concentration Inequality and Its Application to Suprema of Empirical Processes},
  journal = {C. R. Acad. Sci. Paris, Ser. I},
  volume  = {334},
  number  = {6},
  pages   = {495--500},
  year    = {2002}
}

@article{fannjiang2020numerics,
  title={The numerics of phase retrieval},
  author={Fannjiang, Albert and Strohmer, Thomas},
  journal={Acta Numerica},
  volume={29},
  pages={125--228},
  year={2020},
  publisher={Cambridge University Press}
}

@inproceedings{sarao2020complex,
  title={Complex dynamics in simple neural networks: Understanding gradient flow in phase retrieval},
  author={Sarao Mannelli, Stefano and Biroli, Giulio and Cammarota, Chiara and Krzakala, Florent and Urbani, Pierfrancesco and Zdeborov{\'a}, Lenka},
  booktitle={Advances in Neural Information Processing Systems},
  volume={33},
  pages={3265--3274},
  year={2020}
}

@inproceedings{lee2024neural,
  title={Neural network learns low-dimensional polynomials with sgd near the information-theoretic limit},
  author={Lee, Jason D and Oko, Kazusato and Suzuki, Taiji and Wu, Denny},
  booktitle={Advances in Neural Information Processing Systems},
  volume={37},
  pages={58716--58756},
  year={2024}
}

@article{barbier2019optimal,
  title={Optimal errors and phase transitions in high-dimensional generalized linear models},
  author={Barbier, Jean and Krzakala, Florent and Macris, Nicolas and Miolane, L{\'e}o and Zdeborov{\'a}, Lenka},
  journal={Proceedings of the National Academy of Sciences},
  volume={116},
  number={12},
  pages={5451--5460},
  year={2019},
  publisher={National Academy of Sciences}
}

@article{lu2020phase,
  title={Phase transitions of spectral initialization for high-dimensional non-convex estimation},
  author={Lu, Yue M and Li, Gen},
  journal={Information and Inference: A Journal of the IMA},
  volume={9},
  number={3},
  pages={507--541},
  year={2020},
  publisher={Oxford University Press}
}

@article{zhang2022precise,
  title={Precise asymptotics for spectral methods in mixed generalized linear models},
  author={Zhang, Yihan and Mondelli, Marco and Venkataramanan, Ramji},
  journal={arXiv preprint arXiv:2211.11368},
  year={2022}
}

@article{chen2017solving,
  title={Solving random quadratic systems of equations is nearly as easy as solving linear systems},
  author={Chen, Yuxin and Cand{\`e}s, Emmanuel J},
  journal={Communications on pure and applied mathematics},
  volume={70},
  number={5},
  pages={822--883},
  year={2017},
  publisher={Wiley Online Library}
}

@article{tan2023online,
  title={Online stochastic gradient descent with arbitrary initialization solves non-smooth, non-convex phase retrieval},
  author={Tan, Yan Shuo and Vershynin, Roman},
  journal={Journal of Machine Learning Research},
  volume={24},
  number={58},
  pages={1--47},
  year={2023}
}

@article{candes2015phase,
  title={Phase retrieval via Wirtinger flow: Theory and algorithms},
  author={Candes, Emmanuel J and Li, Xiaodong and Soltanolkotabi, Mahdi},
  journal={IEEE Transactions on Information Theory},
  volume={61},
  number={4},
  pages={1985--2007},
  year={2015},
  publisher={IEEE}
}

@article{ba2022high,
  title={High-dimensional asymptotics of feature learning: How one gradient step improves the representation},
  author={Ba, Jimmy and Erdogdu, Murat A and Suzuki, Taiji and Wang, Zhichao and Wu, Denny and Yang, Greg},
  journal={Advances in Neural Information Processing Systems},
  volume={35},
  pages={37932--37946},
  year={2022}
}

@article{berthier2025learning,
  title={Learning time-scales in two-layers neural networks},
  author={Berthier, Rapha{\"e}l and Montanari, Andrea and Zhou, Kangjie},
  journal={Foundations of Computational Mathematics},
  volume={25},
  number={5},
  pages={1627--1710},
  year={2025},
  publisher={Springer}
}

@inproceedings{dandi2024benefits,
  title={The Benefits of Reusing Batches for Gradient Descent in Two-Layer Networks: Breaking the Curse of Information and Leap Exponents},
  author={Dandi, Yatin and Troiani, Emanuele and Arnaboldi, Luca and Pesce, Luca and Zdeborova, Lenka and Krzakala, Florent},
  booktitle={International Conference on Machine Learning},
  pages={9991--10016},
  year={2024},
  organization={PMLR}
}

@inproceedings{lee2016gradient,
  title={Gradient descent only converges to minimizers},
  author={Lee, Jason D and Simchowitz, Max and Jordan, Michael I and Recht, Benjamin},
  booktitle={Conference on Learning Theory},
  pages={1246--1257},
  year={2016},
  organization={PMLR}
}

@article{arous2021online,
  title={Online stochastic gradient descent on non-convex losses from high-dimensional inference},
  author={Ben Arous, Gerard and Gheissari, Reza and Jagannath, Aukosh},
  journal={Journal of Machine Learning Research},
  volume={22},
  number={106},
  pages={1--51},
  year={2021}
}

@inproceedings{pillaud2018statistical,
  title={Statistical optimality of stochastic gradient descent on hard learning problems through multiple passes},
  author={Pillaud-Vivien, Loucas and Rudi, Alessandro and Bach, Francis},
  booktitle={Advances in Neural Information Processing Systems},
  volume={31},
  year={2018}
}

@article{feng2022unifying,
  title={A unifying tutorial on approximate message passing},
  author={Feng, Oliver Y and Venkataramanan, Ramji and Rush, Cynthia and Samworth, Richard J and others},
  journal={Foundations and Trends{\textregistered} in Machine Learning},
  volume={15},
  number={4},
  pages={335--536},
  year={2022},
  publisher={Now Publishers, Inc.}
}

@article{donoho2009message,
  title={Message-passing algorithms for compressed sensing},
  author={Donoho, David L and Maleki, Arian and Montanari, Andrea},
  journal={Proceedings of the National Academy of Sciences},
  volume={106},
  number={45},
  pages={18914--18919},
  year={2009},
  publisher={National Academy of Sciences}
}

@book {Wainwright_book,
    AUTHOR = {Wainwright, Martin J.},
     TITLE = {High-dimensional statistics},
    SERIES = {Cambridge Series in Statistical and Probabilistic Mathematics},
    VOLUME = {48},
      NOTE = {A non-asymptotic viewpoint},
 PUBLISHER = {Cambridge University Press, Cambridge},
      YEAR = {2019},
     PAGES = {xvii+552},
      ISBN = {978-1-108-49802-9},
   MRCLASS = {62-01 (60B20 60E15 60Fxx 62Gxx 62Hxx 62Jxx)},
  MRNUMBER = {3967104},
MRREVIEWER = {Pierre Alquier},
       DOI = {10.1017/9781108627771},
       URL = {https://doi.org/10.1017/9781108627771},
}

@article{fienup1982phase,
  title={Phase retrieval algorithms: a comparison},
  author={Fienup, James R},
  journal={Applied optics},
  volume={21},
  number={15},
  pages={2758--2769},
  year={1982},
  publisher={Optical Society of America}
}

@article{soltanolkotabi2018theoretical,
  title={Theoretical insights into the optimization landscape of over-parameterized shallow neural networks},
  author={Soltanolkotabi, Mahdi and Javanmard, Adel and Lee, Jason D},
  journal={IEEE Transactions on Information Theory},
  volume={65},
  number={2},
  pages={742--769},
  year={2018},
  publisher={IEEE}
}

@inproceedings{martin2024impact,
  title={On the impact of overparameterization on the training of a shallow neural network in high dimensions},
  author={Martin, Simon and Bach, Francis and Biroli, Giulio},
  booktitle={International Conference on Artificial Intelligence and Statistics},
  pages={3655--3663},
  year={2024},
  organization={PMLR}
}

@inproceedings{arous2025learning,
  title={Learning quadratic neural networks in high dimensions: SGD dynamics and scaling laws},
  author={Ben Arous, G{\'e}rard and Erdogdu, Murat A and Vural, N Mert and Wu, Denny},
  booktitle={Advances in Neural Information Processing Systems},
  year={2025}
}

@inproceedings{oko2024learning,
  title={Learning sum of diverse features: computational hardness and efficient gradient-based training for ridge combinations},
  author={Oko, Kazusato and Song, Yujin and Suzuki, Taiji and Wu, Denny},
  booktitle={Conference on Learning Theory},
  pages={4009--4081},
  year={2024},
  organization={PMLR}
}

@article{liu2024local,
  title={The local landscape of phase retrieval under limited samples},
  author={Liu, Kaizhao and Wang, Zihao and Wu, Lei},
  journal={IEEE Transactions on Information Theory},
  year={2024},
  publisher={IEEE}
}

@article{sarao2020optimization,
  title={Optimization and generalization of shallow neural networks with quadratic activation functions},
  author={Sarao Mannelli, Stefano and Vanden-Eijnden, Eric and Zdeborov{\'a}, Lenka},
  journal={Advances in Neural Information Processing Systems},
  volume={33},
  pages={13445--13455},
  year={2020}
}

@article{MSbothlayers,
  title={When Both Layers Learn: Training Dynamics of
Representing Linear Models via ReLU Networks},
  author={Xie, Changzhi and Tinaz, Berk and Soltanolkotabi, Mahdi},
  journal={submitted},
year={2026},
}

@article{chen2019gradient,
  title={Gradient descent with random initialization: Fast global convergence for nonconvex phase retrieval},
  author={Chen, Yuxin and Chi, Yuejie and Fan, Jianqing and Ma, Cong},
  journal={Mathematical Programming},
  volume={176},
  number={1},
  pages={5--37},
  year={2019},
  publisher={Springer}
}

@article{sun2018geometric,
  title={A geometric analysis of phase retrieval},
  author={Sun, Ju and Qu, Qing and Wright, John},
  journal={Foundations of Computational Mathematics},
  volume={18},
  number={5},
  pages={1131--1198},
  year={2018},
  publisher={Springer}
}

@inproceedings{netrapalli2013phase,
  title={Phase retrieval using alternating minimization},
  author={Netrapalli, Praneeth and Jain, Prateek and Sanghavi, Sujay},
  booktitle={Advances in Neural Information Processing Systems},
  volume={26},
  year={2013}
}

@article{cai2023nearly,
  title={Nearly optimal bounds for the global geometric landscape of phase retrieval},
  author={Cai, Jian-Feng and Huang, Meng and Li, Dong and Wang, Yang},
  journal={Inverse Problems},
  volume={39},
  number={7},
  pages={075011},
  year={2023},
  publisher={IOP Publishing}
}

@inproceedings{maillard2020phase,
  title={Phase retrieval in high dimensions: Statistical and computational phase transitions},
  author={Maillard, Antoine and Loureiro, Bruno and Krzakala, Florent and Zdeborov{\'a}, Lenka},
  booktitle={Advances in Neural Information Processing Systems},
  volume={33},
  pages={11071--11082},
  year={2020}
}

@inproceedings{mondelli-2021-amp-spec-glm,
  title={Approximate message passing with spectral initialization for generalized linear models},
  author={Mondelli, Marco and Venkataramanan, Ramji},
  booktitle={International Conference on Artificial Intelligence and Statistics},
  pages={397--405},
  year={2021},
  organization={PMLR}
}

@article{diakonikolas2026algorithms,
  title={Algorithms and SQ lower bounds for robustly learning real-valued multi-index models},
  author={Diakonikolas, Ilias and Iakovidis, Giannis and Kane, Daniel and Ren, Lisheng},
  journal={Advances in Neural Information Processing Systems},
  volume={38},
  pages={64933--65003},
  year={2025}
}

@book{vershynin2018hdp,
  title={High-dimensional probability: An introduction with applications in data science},
  author={Vershynin, Roman},
  volume={47},
  year={2018},
  publisher={Cambridge university press}
}

@article {BBAP,
    AUTHOR = {Baik, Jinho and Ben Arous, G\'{e}rard and P\'{e}ch\'{e}, Sandrine},
     TITLE = {Phase transition of the largest eigenvalue for nonnull complex
              sample covariance matrices},
  JOURNAL = {The Annals of Probability},
    VOLUME = {33},
      YEAR = {2005},
    NUMBER = {5},
     PAGES = {1643--1697},
   MRCLASS = {15A52 (60F99 62E20 62H20)},
  MRNUMBER = {2165575},
MRREVIEWER = {Florent Benaych-Georges}
  }

@inproceedings{dudeja2018learning,
  title={Learning single-index models in gaussian space},
  author={Dudeja, Rishabh and Hsu, Daniel},
  booktitle={Conference On Learning Theory},
  pages={1887--1930},
  year={2018},
  organization={PMLR}
}

@inproceedings{joshi2025learning,
  title={Learning single-index models via harmonic decomposition},
  author={Joshi, Nirmit and Koubbi, Hugo and Misiakiewicz, Theodor and Srebro, Nathan},
  booktitle={Advances in Neural Information Processing Systems},
  year={2025}
}

@inproceedings{glasgow2025propagation,
  title={Propagation of Chaos in One-hidden-layer Neural Networks beyond Logarithmic Time},
  author={Glasgow, Margalit and Wu, Denny and Bruna, Joan},
  booktitle={Conference on Learning Theory},
  year={2025}
}

@article{ma2019optimization,
  title={Optimization-Based AMP for Phase Retrieval: The Impact of Initialization and $\ell_2$ Regularization},
  author={Ma, Junjie and Xu, Ji and Maleki, Arian},
  journal={IEEE Transactions on Information Theory},
  volume={65},
  number={6},
  pages={3600--3629},
  year={2019},
  publisher={IEEE}
}

@inproceedings{damian2022neural,
  title={Neural networks can learn representations with gradient descent},
  author={Damian, Alex and Lee, Jason and Soltanolkotabi, Mahdi},
  booktitle={Conference on Learning Theory},
  pages={5413--5452},
  year={2022},
  organization={PMLR}
}

@article{dandi2023two,
  title={How two-layer neural networks learn, one (giant) step at a time},
  author={Dandi, Yatin and Krzakala, Florent and Loureiro, Bruno and Pesce, Luca and Stephan, Ludovic},
  journal={Journal of Machine Learning Research},
  volume={25},
  number={349},
  pages={1--65},
  year={2024}
}

@article{celentano2021high,
  title={The high-dimensional asymptotics of first order methods with random data},
  author={Celentano, Michael and Cheng, Chen and Montanari, Andrea},
  journal={arXiv preprint arXiv:2112.07572},
  year={2021}
}

@article{chen2025learning,
  title={Learning single index model with gradient descent: spectral initialization and precise asymptotics},
  author={Chen, Yuchen and Shen, Yandi},
  journal={arXiv preprint arXiv:2509.23527},
  year={2025}
}

@inproceedings{braun2025fast,
  title={Fast Escape, Slow Convergence: Learning Dynamics of Phase Retrieval under Power-Law Data},
  author={Braun, Guillaume and Loureiro, Bruno and Minh, Ha Quang and Imaizumi, Masaaki},
  booktitle={International Conference on Learning Representations},
  year={2026}
}

@article{han2025long,
  title={Long-time dynamics and universality of nonconvex gradient descent},
  author={Han, Qiyang},
  journal={arXiv preprint arXiv:2509.11426},
  year={2025}
}

@article{gerbelot2024rigorous,
  title={Rigorous dynamical mean-field theory for stochastic gradient descent methods},
  author={Gerbelot, Cedric and Troiani, Emanuele and Mignacco, Francesca and Krzakala, Florent and Zdeborova, Lenka},
  journal={SIAM Journal on Mathematics of Data Science},
  volume={6},
  number={2},
  pages={400--427},
  year={2024},
  publisher={SIAM}
}

@inproceedings{mondelli2018fundamental,
  title={Fundamental limits of weak recovery with applications to phase retrieval},
  author={Mondelli, Marco and Montanari, Andrea},
  booktitle={Conference On Learning Theory},
  pages={1445--1450},
  year={2018},
  organization={PMLR}
}

@article{collins2024hitting,
  title={Hitting the high-dimensional notes: An ode for sgd learning dynamics on glms and multi-index models},
  author={Collins-Woodfin, Elizabeth and Paquette, Courtney and Paquette, Elliot and Seroussi, Inbar},
  journal={Information and Inference: A Journal of the IMA},
  volume={13},
  number={4},
  pages={iaae028},
  year={2024},
  publisher={Oxford University Press}
}

@article{montanari2026phase,
  title={Phase transitions for feature learning in neural networks},
  author={Montanari, Andrea and Wang, Zihao},
  journal={arXiv preprint arXiv:2602.01434},
  year={2026}
}

@article{lin2025improved,
  title={Improved Scaling Laws in Linear Regression via Data Reuse},
  author={Lin, Licong and Wu, Jingfeng and Bartlett, Peter L},
  journal={arXiv preprint arXiv:2506.08415},
  year={2025}
}

@inproceedings{muennighoff2023scaling,
  title={Scaling data-constrained language models},
  author={Muennighoff, Niklas and Rush, Alexander and Barak, Boaz and Le Scao, Teven and Tazi, Nouamane and Piktus, Aleksandra and Pyysalo, Sampo and Wolf, Thomas and Raffel, Colin A},
  booktitle={Advances in Neural Information Processing Systems},
  volume={36},
  pages={50358--50376},
  year={2023}
}

@article{yan2025larger,
  title={Larger Datasets Can Be Repeated More: A Theoretical Analysis of Multi-Epoch Scaling in Linear Regression},
  author={Yan, Tingkai and Wen, Haodong and Li, Binghui and Luo, Kairong and Chen, Wenguang and Lyu, Kaifeng},
  journal={arXiv preprint arXiv:2511.13421},
  year={2025}
}

@article{arnaboldi2024repetita,
  title={Repetita iuvant: Data repetition allows sgd to learn high-dimensional multi-index functions},
  author={Arnaboldi, Luca and Dandi, Yatin and Krzakala, Florent and Pesce, Luca and Stephan, Ludovic},
  journal={arXiv preprint arXiv:2405.15459},
  year={2024}
}

@inproceedings{abbe2022merged,
  title={The merged-staircase property: a necessary and nearly sufficient condition for sgd learning of sparse functions on two-layer neural networks},
  author={Abbe, Emmanuel and Adsera, Enric Boix and Misiakiewicz, Theodor},
  booktitle={Conference on Learning Theory},
  pages={4782--4887},
  year={2022},
  organization={PMLR}
}

@article{bietti2022learning,
  title={Learning single-index models with shallow neural networks},
  author={Bietti, Alberto and Bruna, Joan and Sanford, Clayton and Song, Min Jae},
  journal={Advances in neural information processing systems},
  volume={35},
  pages={9768--9783},
  year={2022}
}

@inproceedings{mousavi2023gradient,
  title={Gradient-based feature learning under structured data},
  author={Mousavi-Hosseini, Alireza and Wu, Denny and Suzuki, Taiji and Erdogdu, Murat A},
  booktitle={Advances in Neural Information Processing Systems},
  volume={36},
  pages={71449--71485},
  year={2023}
}

@inproceedings{abbe2023sgd,
  title={Sgd learning on neural networks: leap complexity and saddle-to-saddle dynamics},
  author={Abbe, Emmanuel and Adsera, Enric Boix and Misiakiewicz, Theodor},
  booktitle={Conference on Learning Theory},
  pages={2552--2623},
  year={2023},
  organization={PMLR}
}

@article{damian2023smoothing,
  title={Smoothing the landscape boosts the signal for sgd: Optimal sample complexity for learning single index models},
  author={Damian, Alex and Nichani, Eshaan and Ge, Rong and Lee, Jason D},
  journal={Advances in Neural Information Processing Systems},
  volume={36},
  pages={752--784},
  year={2023}
}

@inproceedings{carratino2018learning,
  title={Learning with sgd and random features},
  author={Carratino, Luigi and Rudi, Alessandro and Rosasco, Lorenzo},
  booktitle={Advances in Neural Information Processing Systems},
  volume={31},
  year={2018}
}

@inproceedings{panageas2017gradient,
  title={Gradient Descent Only Converges to Minimizers: Non-Isolated Critical Points and Invariant Regions},
  author={Panageas, Ioannis and Piliouras, Georgios},
  booktitle={8th Innovations in Theoretical Computer Science Conference (ITCS 2017)},
  pages={2--1},
  year={2017},
  organization={Schloss Dagstuhl--Leibniz-Zentrum f{\"u}r Informatik}
}

@book{shub1987global,
  title={Global Stability of Dynamical Systems},
  author={Shub, Michael},
  year={1987},
  publisher={Springer Verlag}
}

@book{willard2012general,
  title={General topology},
  author={Willard, Stephen},
  year={2012},
  publisher={Courier Corporation}
}

@incollection{lee2003smooth,
  title={Smooth manifolds},
  author={Lee, John M},
  booktitle={Introduction to smooth manifolds},
  pages={1--29},
  year={2003},
  publisher={Springer}
}

@inproceedings{goldberg1993bounding,
  title={Bounding the Vapnik-Chervonenkis dimension of concept classes parameterized by real numbers},
  author={Goldberg, Paul and Jerrum, Mark},
  booktitle={Proceedings of the sixth annual conference on Computational learning theory},
  pages={361--369},
  year={1993}
}

@article{laurent2000adaptive,
  title={Adaptive estimation of a quadratic functional by model selection},
  author={Laurent, Beatrice and Massart, Pascal},
  journal={Annals of Statistics},
  pages={1302--1338},
  year={2000},
  publisher={JSTOR}
}

@inproceedings{soltanolkotabi2023implicit,
  title={Implicit balancing and regularization: Generalization and convergence guarantees for overparameterized asymmetric matrix sensing},
  author={Soltanolkotabi, Mahdi and St{\"o}ger, Dominik and Xie, Changzhi},
  booktitle={Conference on Learning Theory},
  pages={5140--5142},
  year={2023},
  organization={PMLR}
}

@article{stoger2021small,
  title={Small random initialization is akin to spectral learning: Optimization and generalization guarantees for overparameterized low-rank matrix reconstruction},
  author={St{\"o}ger, Dominik and Soltanolkotabi, Mahdi},
  journal={Advances in Neural Information Processing Systems},
  volume={34},
  pages={23831--23843},
  year={2021}
}

@inproceedings{ren2025emergence,
  title={Emergence and scaling laws in sgd learning of shallow neural networks},
  author={Ren, Yunwei and Nichani, Eshaan and Wu, Denny and Lee, Jason D},
  booktitle={Advances in Neural Information Processing Systems},
  year={2025}
}

@article{gautschi1959some,
  title={Some elementary inequalities relating to the gamma and incomplete gamma function},
  author={Gautschi, Walter},
  journal={J. Math. Phys},
  volume={38},
  number={1},
  pages={77--81},
  year={1959}
}

@article{boursier2022gradient,
  title={Gradient flow dynamics of shallow relu networks for square loss and orthogonal inputs},
  author={Boursier, Etienne and Pillaud-Vivien, Loucas and Flammarion, Nicolas},
  journal={Advances in Neural Information Processing Systems},
  volume={35},
  pages={20105--20118},
  year={2022}
}

@inproceedings{damian2025generative,
  title={The Generative Leap: Tight Sample Complexity for Efficiently Learning Gaussian Multi-Index Models},
  author={Damian, Alex and Lee, Jason D and Bruna, Joan},
  booktitle={The Thirty-ninth Annual Conference on Neural Information Processing Systems},
year={2025}
}

@inproceedings{damian2024computational,
  title={Computational-statistical gaps in gaussian single-index models},
  author={Damian, Alex and Pillaud-Vivien, Loucas and Lee, Jason D and Bruna, Joan},
  booktitle={The Thirty Seventh Annual Conference on Learning Theory},
  pages={1262--1262},
  year={2024},
  organization={PMLR}
}

@inproceedings{kovavcevic2025spectral,
  title={Spectral estimators for multi-index models: Precise asymptotics and optimal weak recovery},
  author={Kova{\v{c}}evi{\'c}, Filip and Zhang, Yihan and Mondelli, Marco},
  booktitle={Conference on Learning Theory},
  year={2025}
}

@inproceedings{defilippis2025optimal,
  title={Optimal spectral transitions in high-dimensional multi-index models},
  author={Defilippis, Leonardo and Dandi, Yatin and Mergny, Pierre and Krzakala, Florent and Loureiro, Bruno},
  booktitle={Advances in Neural Information Processing Systems},
  year={2025}
}

@article{zhang2025neural,
  title={Neural Networks Learn Generic Multi-Index Models Near Information-Theoretic Limit},
  author={Zhang, Bohan and Wang, Zihao and Fu, Hengyu and Lee, Jason D},
  journal={arXiv preprint arXiv:2511.15120},
  year={2025}
}

@book{vershynin2018hdp2edit,
  author    = {Vershynin, Roman},
  title     = {High-Dimensional Probability: An Introduction with Applications in Data Science},
  edition   = {2},
  year      = {2026},
  publisher = {Cambridge University Press}
}

@inproceedings{diakonikolas2025robust,
  title={Robust learning of multi-index models via iterative subspace approximation},
  author={Diakonikolas, Ilias and Iakovidis, Giannis and Kane, Daniel M and Zarifis, Nikos},
  booktitle={Proceedings of the 66th Annual Symposium on Foundations of Computer Science },
  year={2025}
}

\appendix

\newpage

\allowdisplaybreaks

\section{Proof of Theorem \ref{thm:imp}}\label{app:pfmainneg}

We start by showing a crucial intermediate result: the principal eigenvector of $A^\star$ is asymptotically uncorrelated with $\theta^\star$.

\begin{thm}\label{thm:main1}
Assume $n=o(d\log d)$ and let $A^\star$ be defined in \eqref{eq:Astar}. Then, it holds that, almost surely, 
    \begin{equation}
        \lim_{n\to\infty}\brkt{\theta^\star,v_1(A^\star)}^{2}=0,
    \end{equation}
    where $v_{1}(A^\star)$ is the principal eigenvector of $A^\star$.
\end{thm}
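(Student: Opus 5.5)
Throughout, I rotate coordinates so that $\thetastar=e_1$; this is legitimate because the law of $(x_i)$ is rotation invariant. Write $x_i=(g_i,z_i)$ with $g_i\sim\cN(0,1)$ and $z_i\sim\cN(0,I_{d-1})$ independent, so that $y_i=g_i^2$. Dropping the harmless factor $2/n$, the matrix $A^\star$ has the block form
\[
\begin{pmatrix} a & q^\top\\ q & P\end{pmatrix},\qquad a=\sum_i g_i^4,\quad q=\sum_i g_i^3 z_i,\quad P=\sum_i g_i^2 z_iz_i^\top .
\]
Let $(\lambda, v)$ be the top eigenpair, with $v=(\alpha,w)$ and $\norm{v}=1$. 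The eigenvalue equations read $a\alpha+q^\top w=\lambda\alpha$ and $q\alpha+Pw=\lambda w$. By interlacing, $\lambda\ge\lambda_1(P)$. Whenever $\lambda>\lambda_1(P)$ (which holds a.s., since the top eigenvalue is simple and generic), we obtain $w=\alpha(\lambda-P)^{-1}q$ and $\lambda-a=q^\top(\lambda-P)^{-1}q$. Hence
\[
\alpha^2=\frac{1}{1+\norm{(\lambda-P)^{-1}q}^2},
\qquad
\lambda-a=q^\top(\lambda-P)^{-1}q\le \norm{q}\,\norm{(\lambda-P)^{-1}q}.
\]
Therefore, as long as $\lambda>a$,
\[
\alpha^2\le \frac{\norm{q}^2}{(\lambda-a)^2}.
\]
This is an elementary version of the $L'(\mu^*)$ formula from the proof sketch. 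The task reduces to showing that $\norm{q}^2/(\lambda-a)^2\to0$ a.s.

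I will establish three estimates, each holding eventually almost surely.

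\emph{(i) Lower bound on $\lambda$.} I claim $\lambda\ge\lambda_1(P)\ge(2-\epsilon)\,d\log n$. Let $i^*=\arg\max_i g_i^2$ and take the Rayleigh quotient with $z_{i^*}/\norm{z_{i^*}}$. All terms are nonnegative, so
\[
\lambda_1(P)\ge g_{i^*}^2\norm{z_{i^*}}^2 .
\]
We have $\Pr(\max_i g_i^2<(2-\epsilon)\log n)\le(1-n^{-1+\epsilon/3})^n$, which is summable in $n$. Moreover, since $z_{i^*}$ is independent of $i^*$, $\norm{z_{i^*}}^2\ge(1-\epsilon)d$ except on a set of probability $e^{-cd}$.

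\emph{(ii) Upper bound on $a$.} I claim $a\le 4n$. This follows from a standard Borel–Cantelli argument using a high moment bound for $\sum_i(g_i^4-3)$. Since $n=o(d\log d)$ and $\log n\asymp\log d$ in this regime, we get $a=o(d\log n)$, and hence $\lambda-a\ge(2-2\epsilon)\,d\log n$.

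\emph{(iii) Upper bound on $\norm{q}$.} Conditionally on $(g_i)$, the vector $q$ is Gaussian with covariance $\bigl(\sum_i g_i^6\bigr)I_{d-1}$. Thus $\norm{q}^2\le 2d\sum_i g_i^6$ with probability $1-e^{-cd}$. Also $\sum_i g_i^6\le 16n$ eventually a.s., again by a moment bound plus Borel–Cantelli. Together these give $\norm{q}^2\le Cnd$.

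Combining (i)–(iii),
\[
\alpha^2\le\frac{Cnd}{d^2\log^2 n}=\frac{C\,n}{d\log^2 n}\to0 .
\]
This in fact only needs $n=o(d\log^2 d)$, so the stated assumption is more than enough. The case where $n$ is much smaller than $d$ is harmless, since the bound only improves.

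The main obstacle is making everything hold \emph{almost surely} along the sequence, rather than merely in probability. This requires tail bounds in (i)–(iii) that are summable in $n$. Extra care is needed because $d=d(n)$ may grow at varying rates, and because the heavy-tailed sums $\sum_i g_i^6$ concentrate only polynomially. I would first try Borel–Cantelli with sufficiently high moments, such as an $8$th moment of the centered sums. If that is insufficient, I would truncate the $g_i$ at level $\sqrt{C\log n}$, which fails with summable probability, and then apply Bernstein's inequality to the truncated sums. A minor additional point is that $\lambda>\lambda_1(P)$ a.s., so that $\lambda-P$ is invertible. This follows because $q$ a.s. has nonzero projection on the top eigenvector of $P$.
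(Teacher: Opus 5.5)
Your proof is correct and is essentially the paper's argument. It uses the same block decomposition, the same lower bound $\lambda_1(P)\ge g_{i^*}^2\|z_{i^*}\|^2\approx 2d\log n$ from the sample maximizing $g_i^2$, and the same bounds $a\le 4n$ and $\|q\|^2\lesssim nd$. Your inequality $\alpha^2\le\|q\|^2/(\lambda-a)^2$ is exactly what the paper extracts from Proposition~2 of \citet{lu2020phase} (via $\partial_\pm L(\mu^*)\le\|q\|^2$ and $1/\mu^*=\lambda-a$), here obtained by hand. In fact $(\lambda-a)\alpha=q^\top w$ gives $|\alpha|\le\|q\|/(\lambda-a)$ directly, so you never need $\lambda>\lambda_1(P)$. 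Your summable-tail bookkeeping is also more careful about the almost-sure claim than the paper's Chebyshev bounds, whose failure probability $O(1/n)$ is not summable.

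One correction: your aside that only $n=o(d\log^2 d)$ is needed is wrong, because step (ii) uses $a\approx 3n=o(d\log n)$. For $n\gg d\log d$ one has instead $a\approx 3n>\lambda_1(P)\approx n$, so $\lambda-a$ drops to order $d$ and the bound $\|q\|^2/(\lambda-a)^2$ becomes vacuous.
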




\begin{proof}
Since the claim is about the top eigenvector of $A^\star$, we can rescale $A^\star$ by a factor $n/2$. Furthermore, as $x_i\sim \cN(0,I_d)$ is rotationally invariant, we can take $\theta^\star=e_{1}$ also without loss of generality, $e_1$ being the first element of the canonical basis in $\R^d$. 
%
Next, we write $X=(x_{1},\ldots,x_{n})\in\R^{d\times n}$ as
\begin{equation}\label{eq:block}
    X=\begin{pmatrix}
        v^\top \\
        U
    \end{pmatrix},\qquad v\in\R^{n},\,\, U\in\R^{(d-1)\times n}.
\end{equation}
Let $Z=\diag (\absv{\brkt{x_{i},e_{1}}}^{2})_{1\leq i\leq n}\in\R^{n\times n}$ and use \eqref{eq:block} to write
\begin{equation}\label{eq:D_P}
A^\star=\begin{pmatrix}
    v^\top \\
    U
\end{pmatrix}    
Z
\begin{pmatrix}
    v & U^\top
\end{pmatrix}
=\begin{pmatrix}
    v^\top Zv  &   v^\top ZU^\top    \\
    U Zv    &   UZU^\top
\end{pmatrix}
=:\begin{pmatrix}
    a    &  q^\top     \\
    q  &   P
\end{pmatrix}.
\end{equation}
Define the map
\begin{equation}
    [0,\infty)\ni\mu\mapsto  L(\mu)= \nc\lambda_{1}(P+\mu q q^\top)>0,
\end{equation}
and let $\mu\adj>0$ solve
    \begin{equation}
    \mu=\frac{1}{L(\mu)-a}.
    \end{equation}
    Then, by Proposition 2 of \citep{lu2020phase}, such $\mu^*$ is unique,  $\lambda_{1}(A^\star)=L(\mu\adj)$ and
    \begin{equation}\label{eq:ovlp}
        \frac{\partial_{-}L(\mu\adj)}{\partial_{-}L(\mu\adj)+(1/\mu\adj)^{2}}\leq \absv{\brkt{e_{1},v_{1}(A^\star)}}^{2}\leq \frac{\partial_{+}L(\mu\adj)}{\partial_{+}L(\mu\adj)+(1/\mu\adj)^{2}},
        \end{equation}
    where $\partial_{-}$ and $\partial_{+}$ stand for the left- and right-derivatives.

Next, we claim that there exists a constant $c>0$ such that for all small enough $\epsilon>0$
    \begin{equation}\label{eq:claimP}
        \P[\lambda_{1}(P)<(2-\epsilon)d\log n]\leq e^{-cd\epsilon^{2}}+e^{-n^{\epsilon/3}/2}.
    \end{equation}

\paragraph{Proof of the claim in \eqref{eq:claimP}.}
    Let $\hat{\imath}=\argmax_{i}\absv{\brkt{x_{i},e_{1}}}^{2}$. Then,
    \begin{equation}
        \lambda_{1}(P)\geq \frac{e_{\hat{\imath}}^\top U^\top PUe_{\hat{\imath}}}{\norm{Ue_{\hat{\imath}}}^{2}}=\sum_{i=1}^{n}\absv{\brkt{x_{i},e_{1}}}^{2}\frac{\absv{\brkt{Ue_{i},Ue_{\hat{\imath}}}}^{2}}{\norm{Ue_{\hat{\imath}}}^{2}}
        \geq \absv{\brkt{x_{\hat{\imath}},e_{1}}}^{2}\norm{Ue_{\hat{\imath}}}^{2}.
    \end{equation}
    Since $U$ is independent of the first row of $X$, hence of $\hat{\imath}$ and $\brkt{x_{\hat{\imath}},e_{1}}$, we have the distributional identity
    \begin{equation}
        \absv{\brkt{x_{\hat{\imath}},e_{1}}}^{2}\norm{Ue_{\hat{\imath}}}^{2}
        \overset{d}{=} \left(\max_{1\leq i\leq n}T_{i}^{2}\right)\left(\sum_{j=1}^{d-1}Y_{j}^{2}\right)=:TY,
    \end{equation}
    where $T_{1},\ldots, T_{n},Y_{1},\ldots,Y_{n-1}$ are i.i.d. standard real Gaussian random variables. By a standard concentration estimate, e.g. Chernoff inequality, we have
    \begin{equation}
        \P[Y\leq d(1-\epsilon/3)]\leq e^{-cd\epsilon^{2}},
    \end{equation}
    for some constant $c>0$. We thus have
    \begin{equation}\begin{aligned}
        \P[\lambda_{1}(P)<(2-\epsilon)d\log n]\leq \P[TY\leq (2-\epsilon) d\log n]
        &\leq e^{-cd\epsilon^{2}}+\P\left[T\leq \frac{2-\epsilon}{1-\epsilon/3}\log n\right] \\
        &\leq \e{-cd\epsilon^{2}}+\P[T\leq 2(1-\epsilon/3)\log n].
    \end{aligned}
    \end{equation}
    We then conclude using that, for another constant $c>0$,
    \begin{equation}\begin{aligned}
        \P[T\leq 2(1-\epsilon/3)\log n]=\P[T_{1}^{2}\leq 2(1-\epsilon/3)\log n]^{n}
        &\le \left(1-e^{-(1-\epsilon/3)\log n}\right)^{n/2}  \\
        &=\left(1-n^{-(1-\epsilon/3)}\right)^{n/2}  
        \leq e^{-n^{\epsilon/3}/2}.
    \end{aligned}\end{equation}
\qed

The claim in \eqref{eq:claimP}
implies that $L(\mu)\geq L(0)=\lambda_{1}(P)\geq d\log n$ with high probability. Furthermore, by Chebyshev's inequality, with probability $1-O(1/n)$, we have that $a\le 4n$. 
We thus have that, with high probability,
\begin{equation}
    \frac{1}{\mu\adj}=L(\mu\adj)-a  \geq    d\log n-4n =d\log n (1+o(1)),
\end{equation}
by the assumption that $n=o(d\log d).$
Finally, notice that the map $L(\mu)$ is Lipschitz with the Lipschitz constant given by
\begin{equation}
    \norm{qq^\top}=v^\top ZU^\top UZv \leq \norm{U}^{2}\norm{Zv}^2.
\end{equation}
By Theorem 4.4.5 of \citep{vershynin2018hdp}, we have that 
\begin{equation}
    \P[\norm{U}^{2}\geq K(n+d)(1+\epsilon)]\leq e^{-(n+d)\epsilon^{2}},
\end{equation}
for some constant $K>0$. Furthermore, by Chebyshev's inequality, with probability $1-O(1/n)$, we have that $\norm{Zv}^2\le 16n$. Hence,
\begin{equation}
    \partial_{\pm}L(\mu\adj)\leq \norm{qq^\top}\leq 17 Knd
\end{equation}
with high probability. Therefore by \eqref{eq:ovlp} we have that, with high probability,
\begin{equation}
    \absv{\brkt{e_{1},v_{1}(A^\star)}}^{2}\leq \frac{17Knd}{ (d\log n)^{2}(1+o(1))}=o(1),
\end{equation}
which concludes the argument.
\end{proof}

We are now ready to prove Theorem \ref{thm:imp}.

\begin{proof}[Proof of Theorem \ref{thm:imp}]
    We first show that the top eigenvalue of $A^\star$ is simple with probability $1$. If $n=1$, $A^\star=2 y_1 x_1 x_1^\top$. This matrix has rank $1$ whenever the vector $\sqrt{y_1}x_1$ is not $0$, which happens with probability $1$. Thus, we can assume $n\ge 2$. Let
\[
A_{n-1} := \frac2n\sum_{i=1}^{n-1} y_i x_i x_i^\top,
\qquad\text{so that}\qquad
A^\star = A_{n-1} + \frac2n\,y_n x_nx_n^\top.
\]
We write $u_n:=x_n\sqrt{2y_n/n}$, so $A^\star=A_{n-1}+u_nu_n^\top$.

Let $A\in\mathbb R^{d\times d}$ be symmetric and let $E:=\ker(A-\lambda_1(A)I)$ be its top eigenspace.
If $u\not\perp E$, we claim that the largest eigenvalue of $B:=A+uu^\top$ is simple.

\paragraph{Proof of the claim.}
Choose a unit vector $v\in E$ with $u^\top v\neq 0$. Then
\[
v^\top B v \;=\; v^\top A v + (u^\top v)^2
\;=\; \lambda_1(A) + (u^\top v)^2
\;>\; \lambda_1(A),
\]
hence $\lambda_1(B)\ge v^\top B v>\lambda_1(A)$.
Next, by the Courant--Fischer characterization,
\[
\lambda_2(B)\;=\;\min_{\substack{T\subset\mathbb R^d\\ \dim T=d-1}}\ \max_{\substack{w\in T\\ \|w\|=1}} w^\top B w.
\]
Taking $T$ to be the orthogonal subspace to $u$, for any unit $w\in T$ we have
$w^\top B w = w^\top A w + (u^\top w)^2 = w^\top A w \le \lambda_1(A)$, so
\[
\lambda_2(B)\;\le\;\max_{\substack{w\in u^\perp\\ \|w\|=1}} w^\top B w
\;\le\;\lambda_1(A).
\]
Therefore $\lambda_1(B)>\lambda_1(A)\ge \lambda_2(B)$, which implies $\lambda_1(B)$ has multiplicity $1$.
\hfill$\square$

\medskip

Now, let $E_{n-1}$ be the top eigenspace of $A_{n-1}$. By the claim,
\[
\bbP(\lambda_{1}(A^\star)\ \text{is not simple})\leq \bbP(u_n\perp E_{n-1})\leq \bbP(y_{n}=0)+\bbP(x_{n}\perp E_{n-1}).
\]
On the other hand, notice that $y_{n}=\absv{\brkt{x_{n},\theta^{*}}}^{2}$ is $\chi^{2}$-distributed and that $x_{n}\in\R^{d}$ is rotationally invariant and independent of $x_{1},\ldots,x_{n-1}$ hence of the non-empty subspace $E_{n-1}$ of $\R^{d}$. Thus the last quantity is zero, proving that $\lambda_{1}(A^{*})$ is simple with probability $1$.
As an immediate consequence, with probability 1, as $t\to\infty$, $\theta(t)$ converges to the principal eigenvector of $A^\star$. Hence, the desired result follows from Theorem \ref{thm:main1} and an application of the Borel-Cantelli lemma.
\end{proof}

\section{Proof of Theorem \ref{thm:mainresult}}\label{app:pfmainpos}

We start by showing spectral properties of the matrix $A(\theta)$ defined in \eqref{eq:Adef}. Next, in Appendix \ref{subsec:convresult} we prove general convergence results for the spherical gradient flow, relying on parts of the analysis from Appendix \ref{subsec:specpropA}. Lastly, by combining these results, the desired convergence of gradient flow with random initialization follows and the argument is presented in Appendix \ref{subsec:mainresultproof}. 

\subsection{Spectral properties of $A(\theta)$}\label{subsec:specpropA}

\begin{proposition}\label{prop:matrixconc}
    Let $\cbr{x_i}_{i=1}^n \iid \cN(0,I_d)$ and $y_i = \sigma(\inprod{x_i}{\thetastar})$, where $\sigma$ is a truncation of the quadratic function, given by either  \eqref{eq:sigmadef} or \eqref{eq:sigmadef-noc}, with M large enough constant (independent of $n$, $d$). Let $\delta=n/d$ and assume that $\delta \ge CM^4$. 
    Let $D_n\in \bbR^{d\times d}$ and $E_d\in \bbR^{d\times d}$ be defined as
    \[
    D_n\coloneqq \frac{1}{n}\sum_{i=1}^ny_i x_ix_i^\top,\qquad E_d\coloneqq \begin{pmatrix}
        c_{\sigma}^1&0&0&\dots &0\\
        0&c_{\sigma}^2&0&\dots &0\\
        0&0&c_{\sigma}^2&\dots &0\\
        \vdots &\vdots&\vdots&\ddots&\vdots\\
        0&0& 0& 0 & c_{\sigma}^2
    \end{pmatrix}=\begin{pmatrix}
        c_{\sigma}^1&0\\
        0&c_{\sigma}^2\,I_{d-1}\\
    \end{pmatrix},
    \]
    where $c_{\sigma}^1, c_{\sigma}^2\in \bbR$ are constants dependent on $\sigma$, such that $\abs{c_{\sigma}^1-3}\le C e^{-M/3}$ and $\abs{c_{\sigma}^2-1}\le C e^{-M/3}$. Moreover, let $U$ be an orthogonal matrix such that $U^\top \thetastar = e_1$, where $e_1$ is the first element of the canonical basis of $\bbR^{d}$. Then it holds that
    \begin{equation}\label{eq:conclaim}
        \norm{D_n - UE_d U^\top } \leq C_1 M\delta^{-1/2}\leq C_2\delta^{-1/4},
    \end{equation}
    with probability at least $1-e^{-d}$. Here, $C_1, C_2>0$ are numerical constants independent of $n, d, M, \delta$.
\end{proposition}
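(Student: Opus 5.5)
By rotation invariance of the Gaussian design, we may take $\thetastar=e_1$ (i.e.\ $U=I$). We first identify the population matrix. Write $x=(g,w)$ with $g=\inprod{x}{e_1}\sim\cN(0,1)$ independent of $w\sim\cN(0,I_{d-1})$, so that $y=\sigma(g)$. Then
\[
\E[yxx^\top]=\begin{pmatrix}\E[\sigma(g)g^2]&0\\0&\E[\sigma(g)]\,I_{d-1}\end{pmatrix},
\]
since the off-diagonal entries vanish by independence and zero mean of $w$. We set $c_\sigma^1:=\E[\sigma(g)g^2]$ and $c_\sigma^2:=\E[\sigma(g)]$. For both truncations, $0\le g^2-\sigma(g)\le g^2\bbone[g^2>M]$. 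The Gaussian tail estimates $\E[g^4\bbone[g^2>M]]\lesssim M^{3/2}e^{-M/2}$ and $\E[g^2\bbone[g^2>M]]\lesssim M^{1/2}e^{-M/2}$ are both at most $Ce^{-M/3}$. Combined with $\E g^4=3$ and $\E g^2=1$, this gives the stated bounds on $|c_\sigma^1-3|$ and $|c_\sigma^2-1|$.

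For the concentration, the key point is that $0\le y_i\le \bar M\le 2M$. Hence $z_i:=\sqrt{y_i}\,x_i$ is a sub-gaussian random vector with $\|z_i\|_{\psi_2}\le C\sqrt{M}$, because $|\inprod{z_i}{u}|\le\sqrt{2M}\,|\inprod{x_i}{u}|$ for every unit vector $u$. Moreover $D_n=\frac1n\sum_i z_iz_i^\top$ with $\E z_iz_i^\top=E_d$, so this is a sample covariance of i.i.d.\ sub-gaussian vectors. We will apply the standard covariance estimation bound (Vershynin, HDP, Theorem 4.6.1 / Exercise 4.7.3): with probability at least $1-2e^{-s}$,
\[
\norm{D_n-E_d}\le CK^2\Big(\sqrt{\tfrac{d+s}{n}}+\tfrac{d+s}{n}\Big),
\]
where $K$ is the $\psi_2$ norm. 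Taking $s=d$ and $K^2\lesssim M$ gives $\norm{D_n-E_d}\le CM(\delta^{-1/2}+\delta^{-1})\le C_1M\delta^{-1/2}$, using $\delta\ge1$, with probability at least $1-2e^{-d}$. The constant $2$ is absorbed by taking $s=d+\log 2$, or equivalently by adjusting constants, which yields $1-e^{-d}$.

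The one step needing care is the normalization in that theorem. It is usually stated for isotropic vectors, or with $\|z\|_{\psi_2}$ measured relative to the covariance. To handle this, we whiten: $E_d\succeq (1-Ce^{-M/3})I\succeq \frac12 I$ for large $M$, so $E_d^{-1/2}z_i$ has $\psi_2$ norm $\lesssim\sqrt M$ and identity covariance. The isotropic bound then applies, and conjugating back multiplies the error by $\norm{E_d}\le 4$. Alternatively, we can use an $\epsilon$-net argument over $\cS^{d-1}$ with Bernstein's inequality for the sub-exponential variables $\inprod{z_i}{u}^2-\E\inprod{z_i}{u}^2$, which have $\psi_1$ norm $\lesssim M$.

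Finally, the second inequality in \eqref{eq:conclaim} follows from $\delta\ge CM^4$. This gives $M\le C^{-1/4}\delta^{1/4}$, and hence $M\delta^{-1/2}\le C^{-1/4}\delta^{-1/4}$.
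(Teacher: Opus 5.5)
Your proposal is correct and follows essentially the same route as the paper: you reduce to $\thetastar=e_1$ by rotation invariance, identify $\E[D_n]$ as diagonal with $c_\sigma^1=\E[\sigma(g)g^2]$ and $c_\sigma^2=\E[\sigma(g)]$, and apply the sub-gaussian covariance tail bound (Vershynin, Exercise 4.7.3) to $z_i=\sqrt{y_i}\,x_i$ with $\|z_i\|_{\psi_2}\lesssim\sqrt{M}$. The only differences are minor refinements: you handle both truncations at once via $0\le g^2-\sigma(g)\le g^2\mathbf{1}\{g^2>M\}$ instead of the paper's separate $\mathrm{erfc}$ computations, and you whiten by $E_d^{-1/2}$ to make explicit the relative $\psi_2$ normalization that the paper uses without comment.
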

\begin{proof}
    Let $z_i=\sqrt{y_i} x_i$, which is well defined as $y_i\ge 0$. We denote by $\norm{\cdot}_{\psi_2}$ the sub-gaussian norm of a random vector, defined as in \citep[Definition 3.4.1]{vershynin2018hdp}. Then, $z_i$ is a sub-gaussian vector with its sub-gaussian norm $\norm{z_i}_{\psi_2}$ bounded by $C_2\sqrt{M}$, as
    \[
        \norm{z_i}_{\psi_2} = \sup_{u\in\cS^{d-1}} \norm{\inprod{z_i}{u}}_{\psi_2} =\sup_{u\in\cS^{d-1}} \norm{\inprod{\sqrt{y_i} x_i}{u}}_{\psi_2} \leq C_1\sqrt{M}\sup_{u\in\cS^{d-1}} \norm{\inprod{x_i}{u}}_{\psi_2} = C_2 \sqrt{M},
    \]
    due to the fact that $\sqrt{\sigma(z)}\leq C_1\sqrt{M}$ for all $z$, and each $x_i$ has constant sub-gaussian norm. 
    Then, a direct application of \citep[Exercise 4.7.3]{vershynin2018hdp} gives that, with probability at least $1-e^{-d}$,
    \[
        \norm{D_n - \bbE[D_n]} \leq C_3 M\delta^{-1/2}\norm{\bbE[D_n]} \leq C_4 \delta^{-1/4}\norm{\bbE D_n},
    \]
    where the last inequality follows from the fact that $\delta \ge C M^4$. 
We now move to calculating $\bbE[D_n]$. To that aim, we write 
\[
        \bbE[D_n] = \bbE[\sigma(\inprod{x}{\thetastar}) x x^\top] = \bbE [\sigma(\inprod{Ux}{\thetastar}) Ux (Ux)^\top] = U\rbr{\bbE[ \sigma(\inprod{x}{e_1}) x x^\top]} U^\top,
    \]
where we have used the rotational invariance of the Gaussian distribution and that $U$ is an orthogonal matrix such that $U^\top \thetastar = e_1$.
    As $\sigma$ is even, all the off-diagonal entries of $\bbE [\sigma(\inprod{x}{e_1}) x x^\top]$ are $0$. Moreover, we can set
    \[
        c^1_\sigma\coloneqq\rbr{\bbE [\sigma(\inprod{x}{e_1}) x x^\top]}_{1,1} = \bbE [\sigma(z) z^2], \text{ and } c^2_\sigma\coloneqq \rbr{\bbE [\sigma(\inprod{x}{e_1}) x x^\top]}_{i,i} = \bbE [\sigma(z) w^2], \quad \forall i\in \{2, \ldots, d\},
    \]
    where $z\sim \cN(0,1)$ and $w\sim \cN(0,1)$ are independent random variables. Assuming that the bounds on $c^1_\sigma$ and $c^2_\sigma$ from the statement of this proposition hold, we get that $\norm{\bbE [D_n]} \leq c^1_\sigma = 3+ Ce^{-M/3}$, from which the claim in \eqref{eq:conclaim} follows.
    Thus, it is only left to prove the bounds on $c^1_\sigma$ and $c^2_\sigma$, and we will do these calculations for $\sigma$ defined in \eqref{eq:sigmadef} and $\sigma$ defined in \eqref{eq:sigmadef-noc} separately.
    \paragraph{Bounding $c^1_\sigma$ and $c^2_\sigma$, for $\sigma$ defined in \eqref{eq:sigmadef}.}
    We first bound $c^1_\sigma$. By definition it holds that
    \[
        c^1_\sigma=\bbE [\sigma(z) z^2] = \bbE\sbr{z^2 \int_0^{z^2} \varphi (u) du} = \int_0^\infty \varphi (u) \,\bbE\sbr{z^2 \bbone\cbr{u<z^2}}du = \int_0^{2M} \varphi (u) \,\bbE\sbr{z^2 \bbone\cbr{u<z^2}}du,
    \]
    where we have used the Fubini-Tonelli theorem and the fact that $\varphi(u) = 0$ for $u>2M$. We can express the $\bbE\sbr{z^2 \bbone\cbr{u<z^2}}$ term via the $\erfc$ function, as
    \[
        \bbE\sbr{z^2 \bbone\cbr{u<z^2}} = \erfc\rbr{\sqrt{u/2}}+\sqrt{2u/\pi}e^{-u/2}\eqqcolon g(u),
    \]
    which follows from an integration by parts. Also, a direct calculation gives
    \begin{equation}\label{eq:intcalcg}
        \int_{0}^a g(u) =3\erf\rbr{\sqrt{a/2}}+a\erfc\rbr{\sqrt{a/2}} - \frac{6}{\sqrt{\pi}}e^{-a/2}\sqrt{a/2}.
    \end{equation}
    Next, as $0\leq \varphi \leq 1$, $\varphi(u) \equiv 1$  on $[0,M]$, and $g(u)$ is positive for $u\geq 0$, we get the following bounds
    \begin{equation}\label{eq:boundsuplowgm}
        \int_0^M g(u) du \leq\int_0^{2M} \varphi (u)\, g(u) du \leq \int_0^{2M} g(u)du.
    \end{equation}
    By plugging this in \eqref{eq:intcalcg} and using that $\erfc(\sqrt{M/2}) \le e^{-M/2}$, we get 
    \[
    |c^1_\sigma - 3|\le Ce^{-M/3}.
    \]
    Turning to $c^2_\sigma$, by definition it holds
    \[
        c^2_\sigma = \bbE [\sigma(z) w^2] = \bbE\sbr{w^2 \int_0^{z^2} \varphi (u) du} = \int_0^\infty \phi (u) \,\bbE\sbr{w^2 \bbone\cbr{u<z^2}}du = \int_0^{2M} \varphi (u) \,\bbE\sbr{ \bbone\cbr{u<z^2}}du,
    \]
    where we have used the Fubini-Tonelli theorem, the fact that $\varphi(u) = 0$ for $u>2M$, and that $z$ and $w$ are independent random variables with $\bbE [w^2 ]=1$. As before, we relate $\bbE\sbr{z^2 \bbone\cbr{u<z^2}}$ to the $\erfc$ function:
    \[
    \bbE\sbr{ \bbone\cbr{u<z^2}} = \erfc\rbr{\sqrt{u/2}}.
    \]
    Plugging this back into the integral, doing the bounds as in \eqref{eq:boundsuplowgm} and using that $\erfc(\sqrt{M/2}) \le e^{-M/2}$, we conclude that
    \[
       | c^2_\sigma - 1|\le C e^{-M/3}.
    \]
    \paragraph{Bounding $c^1_\sigma$ and $c^2_\sigma$, for $\sigma$ defined in \eqref{eq:sigmadef-noc}.}
    We first calculate $c^1_\sigma$. A direct calculation gives
    \begin{align*}
         |c^1_\sigma-3|=\left|\bbE [\sigma(z) z^2]-3\right| &= \left|\bbE\sbr{z^4 \bbone\cbr{z^2\leq M}} + M\bbE\sbr{z^2 \bbone\cbr{z^2> M}}-3\right|\\
         &= \left|-\bbE\sbr{z^4 \bbone\cbr{z^2> M}} + M\bbE\sbr{z^2 \bbone\cbr{z^2> M}}\right|\\
         &=\left| (M-3)\erfc\rbr{\sqrt{M/2}}-\frac{6}{\sqrt{\pi}}\sqrt{\frac{M}{2}}e^{-M/2}\right|\\
         &\le Ce^{-M/3}.
    \end{align*}
    Similarly, for $c^2_\sigma$, we have that
    \begin{align*}
         |c^2_\sigma-1|=|\bbE [\sigma(z) w^2]-1| &= \left|\bbE\sbr{z^2 \bbone\cbr{z^2\leq M}} + M\bbE\sbr{\bbone\cbr{z^2> M}}-1\right|\\
         &= \left|-\bbE\sbr{z^2 \bbone\cbr{z^2> M}} + M\bbE\sbr{\bbone\cbr{z^2> M}}\right|\\
         &=\left| (M-1)\erfc\rbr{\sqrt{M/2}}-\sqrt{\frac{2M}{\pi}}e^{-M/2}\right|\\
         &\le Ce^{-M/3}.
    \end{align*}
\end{proof}

\begin{lemma}\label{lem:indicatorbound}
 It holds that
    \[
    \sup_{\theta\in\cS^{d-1}}\frac{1}{n}\sum_{i=1}^n \bbone\{ \inprod{x_i}{\theta}^2> M\} \le C(e^{-M/2}+\delta^{-1/2}\log\delta),
    \]
    with probability at least $1-e^{-d}$, where $C$ is a numerical constant independent of $n, d, M, \delta$.
\end{lemma}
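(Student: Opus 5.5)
The plan is to treat the quantity as a supremum of an empirical process over a VC class and combine a population bound with a uniform deviation bound. For fixed $\theta\in\cS^{d-1}$, $\inprod{x_i}{\theta}\sim\cN(0,1)$, so
\[
p_M:=\P[\inprod{x}{\theta}^2>M]=\erfc\rbr{\sqrt{M/2}}\le e^{-M/2},
\]
independently of $\theta$. It therefore suffices to show that, with probability at least $1-e^{-d}$,
\[
\sup_{\theta\in\cS^{d-1}}\abs{\frac1n\sum_{i=1}^n \bbone\{\inprod{x_i}{\theta}^2>M\}-p_M}\le C\delta^{-1/2}\log\delta .
\]

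For the uniform deviation, I would consider the class of sets $\cF=\{x:\ \abs{\inprod{x}{\theta}}>\sqrt M\}$, $\theta\in\cS^{d-1}$. Each such set is the union of two halfspaces, $\{\inprod{x}{\theta}>\sqrt M\}\cup\{\inprod{x}{-\theta}>\sqrt M\}$. Affine halfspaces in $\R^d$ have VC dimension $d+1$, and unions of two sets from a VC class have VC dimension $O(d)$ by the standard Sauer--Shelah composition bound (alternatively, use \citet{goldberg1993bounding}, since membership is decided by one polynomial sign test of degree $2$ in $\theta$). So $\mathrm{VC}(\cF)\le Cd$.

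The standard VC uniform-convergence bound (symmetrization plus chaining, e.g.\ \citet{vershynin2018hdp}, Thm.~8.3.23) then gives
\[
\E\sup_\theta\abs{\frac1n\sum_i \bbone_{\cF_\theta}(x_i)-p_M}\le C\sqrt{d/n}=C\delta^{-1/2}.
\]
Bounded differences (each indicator changes the average by at most $1/n$) upgrade this to a high-probability statement: the deviation is at most $C\delta^{-1/2}+t$ with probability at least $1-e^{-2nt^2}$. Choosing $t=\sqrt{d/n}$ gives failure probability $e^{-2d}\le e^{-d}$, and the total is at most $C\delta^{-1/2}\le C\delta^{-1/2}\log\delta$ for $\delta\ge e$. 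If one prefers to avoid the chaining version, the cruder Sauer--Shelah union bound gives $\sqrt{(d/n)\log(en/d)}$, which is still dominated by $\delta^{-1/2}\log\delta$; this is presumably where the $\log\delta$ factor in the statement comes from.

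The main point requiring care is the VC-dimension bound for the two-sided (symmetric) slab complements, together with the measurability and uniformity over the sphere. Both are routine once each set is written as a union of two halfspaces. Everything is non-asymptotic, and the constants do not depend on $M$ because the indicator functions are bounded by $1$ regardless of $M$.
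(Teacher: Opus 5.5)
Your proof is correct and has the same overall structure as the paper's. You bound the population mass by $e^{-M/2}$ uniformly in $\theta$. You then control the uniform deviation using that the sets $\{x:\langle x,\theta\rangle^2>M\}$ are unions of two halfspaces, a class of VC dimension $O(d)$; the paper also uses Goldberg--Jerrum for this.

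The difference is the concentration step. The paper applies Hoeffding to each fixed indicator and union-bounds over the at most $(e\delta)^{cd}$ labelings realized on the sample. That gives a deviation of order $\sqrt{\log(e\delta)/\delta}+\sqrt{d/n}$, which the paper then loosens to $\delta^{-1/2}\log\delta$, so your guess about where the logarithm comes from is right.

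You instead combine the chaining bound $\mathbb{E}\sup_\theta|\cdot|\le C\sqrt{d/n}$ with McDiarmid's inequality. This buys two things:
\begin{itemize}
\item It removes the logarithm, so you actually prove the stronger bound $e^{-M/2}+C\delta^{-1/2}$.
\item It is rigorous exactly as written. A union bound over labelings that depend on the data, taken literally, needs the usual ghost-sample symmetrization, though this only changes constants, giving a bound of the form $4\Pi_{\mathcal F}(2n)e^{-cn\varepsilon^2}$.
\end{itemize}
The paper's route is more elementary, since it needs only Hoeffding and Sauer--Shelah rather than a Dudley-type VC theorem.

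Both arguments need $\delta$ bounded away from $1$ (e.g.\ $\delta\ge e$) for the stated form with $\log\delta$ to make sense. This holds in the regime $\delta\ge CM^4$ where the lemma is used.
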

\begin{proof}
Given $\theta\in\cS^{d-1}$, let us introduce the labeling function $f_\theta:\bbR^d\to \cbr{0,1}$ as
\[
    f_\theta(x) \coloneqq \bbone\{ \inprod{x}{\theta}^2> M\},
\]
and let $\cF$ be the set of all possible labeling functions, i.e., $\cF\coloneqq \cbr{f_\theta: \theta\in \cS^{d-1}}$. We denote by $P$ the expectation under $\mathcal N(0,I_d)$ and by $P_n$ the empirical measure, that is,
\[
    Pf_\theta\coloneqq \expt{\bbone\{ \inprod{x_i}{\theta}^2>M\}},\qquad P_nf_\theta\coloneqq \frac{1}{n}\sum_{i=1}^n \bbone\{ \inprod{x_i}{\theta}^2> M\}.
\]
For any fixed $\theta\in S^{d-1}$, it holds that $\inprod{x}{\theta}\sim\cN(0,1)$, hence
\begin{equation}
\label{eq:pop-mean}
Pf_\theta=\mathbb P\sbr{\abs{\inprod{x}{\theta}}>\sqrt M}\le C e^{-M/2}.
\end{equation}
Therefore,
\begin{equation}\label{eq:empiricalbound}
    \sup_{\theta} P_n f_\theta
\;\le\; \sup_{\theta} \rbr{P f_\theta + \abs{P_n f_\theta - P f_\theta}}\le  Ce^{-M/2} + \sup_{\theta} \abs{P_n f_\theta - P f_\theta}.
\end{equation}
We denote the uniform deviation as
\[
\Delta_n \;:=\; \sup_{\theta} \abs{P_n f_\theta - P f_\theta} = \sup_{f\in\mathcal F} |P_n f - P f|.
\]
For each $f\in\mathcal F$, the random variables $f(x_i)\in\{0,1\}$ are i.i.d.,
so Hoeffding's inequality gives, for any $\varepsilon>0$,
\begin{equation}
\label{eq:hoeffding}
\mathbb P[\abs{P_n f - P f}>\varepsilon]\leq 2 e^{-2n\varepsilon^2}.
\end{equation}
Let $\Pi_{\mathcal F}(n)$ be the number of distinct labelings $\{(f(x_1),\ldots,f(x_n)): f\in\mathcal F\}$ realized on the sample.
By the Sauer--Shelah lemma \citep[Theorem 8.3.16]{vershynin2018hdp}, if $v:=\VC(\mathcal F)$,
\begin{equation}\label{eq:sizebound}
\Pi_{\mathcal F}(n) \;\le\; \rbr{\frac{e n}{v}}^v,
\end{equation}
where $\VC(\cF)$ denotes the $VC$ dimension of the set $\cF$.

We proceed to give a lower and upper bound on $v=\VC(\mathcal F)$. 
First, note that the set $\cbr{\sqrt{Md} e_i}_{i=1}^d$, where $e_i$ is the $i$-th element of the canonical orthonormal basis of $\bbR^d$, is shattered by $\cF$. Thus, a direct lower bound is $d$.
For the upper bound, observe that each set $\{x:f_\theta(x)=1\}$ can be expressed as the union of two parallel halfspaces orthogonal to $\theta$. This is a subset of the set of unions of any two halfspaces in $\bbR^d$, which we denote by $\cH_2$. Then, by monotonicity of the $VC$ dimension, we have
\[
    \VC(\cF)\leq \VC(\cH_2).
\]
If we denote by $\cH$ the set of all halfspaces in $\bbR^d$, then by \citep[Theorem 2.2]{goldberg1993bounding} it holds for some constant $c>1$
\[
    \VC(\cH_2) \leq c\cdot d,
\]
since each element of $\cH_2$ can be expressed via two polynomial inequalities of degree 1. Thus, $d\leq \VC(\cF)\leq c\cdot d$. By plugging this bound into \eqref{eq:sizebound}, we get
\[
    \Pi_{\mathcal F}(n) \;\le\; \Big(\frac{e n}{d}\Big)^{cd} = (e\delta)^{cd}.
\]
Taking a union bound over these labelings into \eqref{eq:hoeffding} implies
\[
\mathbb P[\Delta_n > \varepsilon]
\;\le\;
2\,\Pi_{\mathcal F}(n)\,e^{-2n\varepsilon^2}
\;\le\;
2(e\delta)^{cd} e^{-2n\varepsilon^2}.
\]
Solving for $\varepsilon$ at confidence level $1-\eta$ yields
\[
2(e\delta)^{cd} e^{-2n\varepsilon^2}\le \eta,
\quad\text{then}\quad
\varepsilon
\;\ge\;
\sqrt{\frac{c\log\delta}{2\delta}+\frac{\log(2/\eta)}{2n}}.
\]
Thus, with probability at least $1-\eta$,
\begin{equation*}
\Delta_n
\;\le\;
\sqrt{\frac{c\log\delta}{2\delta}+\frac{\log(2/\eta)}{2n}}.
\end{equation*}
Combining with (\ref{eq:empiricalbound}) yields
\[
\sup_{\theta\in S^{d-1}} P_n f_\theta
\;\le\;
Ce^{-M/2}
\;+\;
\sqrt{\frac{c\log\delta}{2\delta}+\frac{\log(2/\eta)}{2n}}
\quad\text{w.p.\ }\ge 1-\eta.
\]
Taking $\eta = e^{-d}$, we get that
\[
    \sup_{\theta\in S^{d-1}} P_n f_\theta \le C(e^{-M/2}+\delta^{-1/2}\log\delta),
\]
with probability $1-e^{-d}$.
\end{proof}

We can now state the main spectral convergence result of this section.

\begin{thm}\label{thm:mainspecresult}
    For M large enough constant (independent of $n$, $d$) and $\delta\geq CM^4$, it holds with probability at least $1-\exp(-cd^{1/5})$ that
    \begin{equation}\label{eq:rectopeigen}
        \inf_{\theta}\abs{\inprod{v_1(A(\theta))}{\thetastar}}\geq 1 - C(e^{-M/2}+\delta^{-1/5}),
    \end{equation}
    where $c,C\geq0$ are numerical constant independent of $n, d, M, \delta$. Moreover,
    \[
     \inf_{\theta\in\cS^{d-1}} \lambda_1(A(\theta)) \geq 2c_\sigma^1 -C(e^{-M/2}+\delta^{-1/5}),\qquad \sup_{\theta\in\cS^{d-1}} \lambda_2(A(\theta)) \leq 2c^2_\sigma + C\delta^{-1/4},
    \]
    for $c_\sigma^1$ and $c_\sigma^2$ constants from Proposition \ref{prop:matrixconc}.
\end{thm}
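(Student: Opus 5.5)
We write $A(\theta)=2D_n-B(\theta)$, where
\[
B(\theta):=\frac{2}{n}\sum_{i=1}^n y_i\big(1-\varphi(\inprod{x_i}{\theta}^2)\big)x_ix_i^\top\succeq 0 .
\]
This holds because $0\le\varphi\le1$ and $y_i\ge0$. The plan is to treat $B(\theta)$ as a perturbation of the fixed matrix $2D_n$, whose spectrum is controlled by Proposition \ref{prop:matrixconc}, and to bound $B(\theta)$ uniformly in $\theta$ using Lemma \ref{lem:indicatorbound}. The eigenvalue bounds then follow from Weyl's inequality. The eigenvector bound follows from Davis--Kahan, using the constant gap $2(c_\sigma^1-c_\sigma^2)\approx 4$ between the top two eigenvalues.

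\textbf{The upper bound on $\lambda_2$.} This part needs no control of $B$. Since $B(\theta)\succeq0$, Courant--Fischer gives $\lambda_2(A(\theta))\le\lambda_2(2D_n)$ for every $\theta$. Proposition \ref{prop:matrixconc} gives $\lambda_2(2D_n)\le 2c_\sigma^2+C\delta^{-1/4}$, which is the claim.

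\textbf{Uniform bound on $\norm{B(\theta)}$.} We only need a one-sided bound, and in fact only the quadratic form $u^\top B(\theta)u$ for the relevant directions $u$. We have $y_i\le \bar M\le 2M$ and $1-\varphi(\inprod{x_i}{\theta}^2)\le \bbone\{\inprod{x_i}{\theta}^2>M\}$. Hence for any unit $u$,
\[
u^\top B(\theta)u\le \frac{4M}{n}\sum_{i\in S_\theta}\inprod{x_i}{u}^2,\qquad S_\theta:=\{i:\inprod{x_i}{\theta}^2>M\}.
\]
By Lemma \ref{lem:indicatorbound}, $|S_\theta|\le \epsilon n$ uniformly in $\theta$, with $\epsilon:=C(e^{-M/2}+\delta^{-1/2}\log\delta)$. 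We therefore need a bound on $\sup_{|S|\le\epsilon n}\frac1n\norm{\sum_{i\in S}x_ix_i^\top}$. The standard approach is a union bound over the $\binom{n}{\epsilon n}\le e^{\epsilon n\log(e/\epsilon)}$ subsets, combined with a covering argument for the operator norm of a Gaussian matrix with $\epsilon n$ columns. This gives
\[
\frac1n\Big\|\sum_{i\in S}x_ix_i^\top\Big\|\le C\Big(\epsilon\log(e/\epsilon)+\frac{d}{n}\Big).
\]

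\textbf{The main obstacle.} The factor $M$ in front of this bound makes it useless unless $M\epsilon\log(1/\epsilon)$ is small. The term $Me^{-M/2}$ is fine, but the term $M\delta^{-1/2}\log^2\delta$ is small only because $\delta\ge CM^4$. I expect this to yield an error of order $\delta^{-1/5}$ after absorbing the $M$ and log factors, consistent with the statement. To get the failure probability $\exp(-cd^{1/5})$ rather than $e^{-d}$, I would choose the deviation levels at scale $\delta^{-1/5}$ in the subset union bound. If the crude factor $M$ proves too lossy, I would instead use $y_i\le \inprod{x_i}{\thetastar}^2$ and Cauchy--Schwarz, so that only fourth moments over sparse subsets enter.

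\textbf{Conclusion.} The top eigenvalue bound follows from $\lambda_1(A(\theta))\ge \thetastar^\top A(\theta)\thetastar$. Proposition \ref{prop:matrixconc} gives $2\thetastar^\top D_n\thetastar\ge 2c_\sigma^1-C\delta^{-1/4}$, and subtracting the uniform bound $\thetastar^\top B(\theta)\thetastar\le C(e^{-M/2}+\delta^{-1/5})$ gives the claim. For the eigenvector bound, $\norm{A(\theta)-2UE_dU^\top}\le C\delta^{-1/4}+\sup_\theta\norm{B(\theta)}\le C(e^{-M/2}+\delta^{-1/5})$, using the full operator-norm version of the sparse bound. Davis--Kahan with gap $\approx 4$ then gives $\sin\angle(v_1(A(\theta)),\thetastar)\le C(e^{-M/2}+\delta^{-1/5})$, and hence $|\inprod{v_1}{\thetastar}|\ge 1-C(e^{-M/2}+\delta^{-1/5})$ uniformly in $\theta$.
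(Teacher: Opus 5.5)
Your proposal is correct, but your eigenvector argument differs from the paper's. The $\lambda_2$ bound (monotonicity from $B(\theta)\succeq 0$) and the $\lambda_1$ bound via $\lambda_1(A(\theta))\ge {\thetastar}^{\top}A(\theta)\thetastar$ are exactly as in the paper.

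The paper never controls $\|B(\theta)\|$. It bounds only the single quadratic form ${\thetastar}^{\top}B(\theta)\thetastar$. For this it uses Cauchy--Schwarz with $y_i\le\inprod{x_i}{\thetastar}^2$, a truncation argument for $\frac1n\sum_i\inprod{x_i}{\thetastar}^8$, and Lemma~\ref{lem:indicatorbound}. It then gets the overlap $w=\abs{\inprod{v_1(A(\theta))}{\thetastar}}$ from the Rayleigh-quotient inequality $w^2(\lambda_1-\lambda_2)\ge {\thetastar}^{\top}A(\theta)\thetastar-\lambda_2$, together with $\lambda_1(A(\theta))\le\lambda_1(A^\star)$.

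The paper's route is more economical: it needs no sparse-subset union bound and no Davis--Kahan. It pays in two ways. The square root turns the indicator mass $e^{-M/2}+\delta^{-1/2}\log\delta$ into roughly $e^{-M/4}+\delta^{-1/4}\sqrt{\log\delta}$. And the heavy-tailed eighth moment is the only source of the weaker probability $1-\exp(-cd^{1/5})$.

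Your route needs the extra lemma $\sup_{|S|\le\epsilon n}\frac1n\|\sum_{i\in S}x_ix_i^\top\|\le C(\epsilon\log(e/\epsilon)+d/n)$. Your sketch of it is correct: the supremum is attained at $|S|=\lceil\epsilon n\rceil$ by monotonicity of PSD sums, and $\binom{n}{k}\le e^{k\log(en/k)}$. It also has to absorb the factor $M$, which you correctly do via $\delta\ge CM^4$. In return you get a uniform operator-norm bound $\sup_\theta\|A(\theta)-A^\star\|\lesssim M\epsilon\log(e/\epsilon)$. This is linear rather than square-root in the indicator mass, and it holds with probability $1-Ce^{-cd}$.

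Two minor remarks:
\begin{itemize}
\item Probability $1-Ce^{-cd}$ already implies the stated $1-\exp(-cd^{1/5})$, so the retuning of deviation levels you mention is unnecessary.
\item The $e^{-M/2}$ part of $\epsilon$ contributes $M^2e^{-M/2}$ after the $\log(e/\epsilon)$ factor, not $Me^{-M/2}$. This is harmless, but like the paper's effective $e^{-M/4}$, it does not literally match the $Ce^{-M/2}$ in the statement.
\end{itemize}
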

\begin{proof}

By definition we have 
\[
    A(\theta)\coloneqq \frac{2}{n}\sum_{i=1}^n y_i x_i x_i^\top \varphi(\inprod{x_i}{\theta}^2).
\]
Let us now introduce matrices $A^\star$ and $B(\theta)$ as
\[
 A^\star \coloneqq \frac{2}{n}\sum_{i=1}^n y_i x_i x_i^\top,\qquad B(\theta) \coloneqq \frac{2}{n}\sum_{i=1}^n y_i x_i x_i^\top \rbr{1-\varphi(\inprod{x_i}{\theta}^2)}.
\]
Then, it holds that
\begin{equation}\label{eq:spectralsandwich}
    A^\star-B(\theta) = A(\theta) \preceq A^\star.
\end{equation}
By definition, we have
\[
{\thetastar}^{\top}A(\theta)\thetastar = {\thetastar}^{\top} A^\star \thetastar - {\thetastar}^{\top} B(\theta) \thetastar.
\]
From Proposition \ref{prop:matrixconc}, we have that 
\[
    |{\thetastar}^{\top} A^\star \thetastar - 2c_\sigma^1| \le C\delta^{-1/4}.
\]
On the other hand, from Cauchy-Schwartz, it follows that
\begin{equation} \label{eq:cauchy-schwartz}
    \begin{aligned}
        {\thetastar}^{\top} B(\theta) \thetastar &= \frac{2}{n}\sum_{i=1}^n y_i \inprod{x_i}{\thetastar}^2 \rbr{1-\varphi(\inprod{x_i}{\theta}^2)}\\
        &\leq \sqrt{\frac{2}{n}\sum_{i=1}^ny_i^2 \inprod{x_i}{\thetastar}^4 } \cdot \sqrt{\frac{2}{n}\sum_{i=1}^n\rbr{1-\varphi(\inprod{x_i}{\theta}^2)}}\\
        &\leq \sqrt{\frac{2}{n}\sum_{i=1}^ny_i^2 \inprod{x_i}{\thetastar}^4 } \cdot \sqrt{\frac{2}{n}\sum_{i=1}^n\bbone\{ \inprod{x_i}{\theta}^2> M\}},\\
    \end{aligned}
\end{equation}
The final inequality follows from the fact that $\varphi(u)\in[0, 1]$ for any $u\in \bbR$, and specifically $\varphi(u)=1$ for $\abs{u}\leq M$. 
Let us now give bounds on both of these terms. 
First, note that by definition of $y_i = \sigma(\inprod{x_i}{\thetastar})$ it holds
\[
    \frac{2}{n}\sum_{i=1}^ny_i^2 \inprod{x_i}{\thetastar}^4\leq \frac{2}{n}\sum_{i=1}^n\inprod{x_i}{\thetastar}^8.
\]
Moreover, since $\thetastar$ is sampled from a unit sphere independent from all $x_i$, $\inprod{x_i}{\thetastar}$ can be viewed as a random variable $q_i\sim \cN(0,1)$. Even though each $q_i^8$ is not sub-gaussian, we will manage to use known concentration results by splitting it into a bounded part and tail part. Namely, for all $i\in[n]$, let
\[
    w_i\coloneqq q_i^8 \bbone\cbr{q_i^8\leq n^{4/5}},\ \text{ and }\ z_i\coloneqq q_i^8\bbone\cbr{q_i^8> n^{4/5}}.
\]
Then, by triangle inequality and union bound, it holds
\begin{equation}\label{eq:unionqibound}
    \bbP\rbr{\abs{\frac{1}{n}\sum_{i=1}^n(q_i^8 - \expt{q_1^8})}\geq 106}\leq \bbP\rbr{\abs{\frac{1}{n}\sum_{i=1}^n(w_i - \expt{w_i})}\geq 106}+\bbP\rbr{\abs{\frac{1}{n}\sum_{i=1}^n(z_i - \expt{z_i})}\geq 106}.
\end{equation}
Since $w_i$ is bounded, it is subgaussian with the sub-gaussian norm $\norm{w_i}_{\psi_2} \leq cn^{4/5}$, as per \citep[Example 2.5.8]{vershynin2018hdp}. Then by Hoeffding's inequality \citep[Theorem 2.6.2]{vershynin2018hdp}, we have
\begin{equation}\label{eq:boundonwi}
    \bbP\rbr{\abs{\frac{1}{n}\sum_{i=1}^nw_i - \expt{w_i}}\geq 106}\leq \exp\rbr{-cn^{1/5}}.
\end{equation}
Turning to the term $z_i$, by triangle inequality and union bound, for arbitrary $j\in[n]$, it holds that
\begin{equation}\label{eq:boundonzj}
    \bbP\rbr{\abs{\frac{1}{n}\sum_{i=1}^nz_i - \expt{z_i}}\geq 106}\leq \sum_{i=1}^n\bbP\rbr{\abs{z_i - \expt{z_i}}\geq 106} = n\, \bbP\rbr{\abs{z_j - \expt{z_j}}\geq 106}\leq \exp\rbr{-cn^{1/5}}
\end{equation}
The last bound follows from the chain of inequalities below, 
\[
    \bbP\rbr{\abs{z_j - \expt{z_j}}\geq 106} \:  \leq \:  \bbP\rbr{\abs{z_j}\geq 106-\expt{z_j}} \: \leq \: \bbP\rbr{\abs{z_j}\geq 1} \:  \leq \:  \bbP\rbr{q_j\geq n^{1/10}}\:  \leq\:  \exp\rbr{-cn^{1/5}},
\]
where we have used that $\expt{z_j}\leq \expt{q_j^8} = 105$ and the bound on the tail of a normal distribution \citep[Proposition 2.1.2]{vershynin2018hdp}. Finally, by plugging in \eqref{eq:boundonwi} and \eqref{eq:boundonzj}  into \eqref{eq:unionqibound}, for some constant $c$, we have
\[
    \frac{2}{n}\sum_{i=1}^ny_i^2 \inprod{x_i}{\thetastar}^4 \leq 106,
\]
with probability $1-\exp(-cn^{1/5})$. Note that for $\delta>1$, $1-\exp(-cn^{1/5})\leq 1-\exp(-cd^{1/5})$\\
Let us turn our attention to the term \(\frac{2}{n}\sum_{i=1}^n\bbone\{ \inprod{x_i}{\theta}^2> M\}\). An application of Lemma \ref{lem:indicatorbound} gives that, uniformly over $\theta$,
\[
    \frac{2}{n}\sum_{i=1}^n\bbone\{ \inprod{x_i}{\theta}^2> M\} \le  C(e^{-M/2}+\delta^{-1/2}\log\delta),
\]
with probability at least $1-e^{-d}$. Using the overwhelming probability bounds on these two terms, alongside the union bound, we get that for any $\epsilon>0$, uniformly over $\theta$,
\[
    {\thetastar}^{\top} B(\theta) \thetastar \le C(e^{-M/2}+\delta^{-1/4+\epsilon}),
\]
with probability $1-\exp(-cd^{1/5})$.
This means that 
\begin{equation}\label{eq:tstarattstarbound}
    {\thetastar}^{\top}A(\theta)\thetastar = {\thetastar}^{\top}(A^\star-B(\theta))\thetastar \geq 2c_\sigma^1 - C(e^{-M/2}+\delta^{-1/5}),
\end{equation}
with probability $1-\exp(-cd^{1/5})$. Let us now denote $w = \abs{\inprod{\thetastar}{v_1(A(\theta))}}$. Then, it holds

\[
    \lambda_1(A(\theta)) w^2 +\lambda_2(A(\theta)) (1-w^2) \geq {\thetastar}^{\top}(A(\theta))\thetastar .
\]
Thus,
\begin{equation}\label{eq:finaloverlapaux}
    w^2(\lambda_1(A(\theta)) - \lambda_2(A(\theta)))\geq  {\thetastar}^{\top}(A(\theta))\thetastar - \lambda_2(A(\theta)).
\end{equation}
Due to (\ref{eq:spectralsandwich}) and Proposition \ref{prop:matrixconc}, it holds uniformly over $\theta$
\[
    \lambda_2(A(\theta))\leq \lambda_2(A^\star) = 2c^2_\sigma + C\delta^{-1/4}.
\]
Taking $\sup_{\theta\in\cS^{d-1}}$ on both sides gives that, with probability $1-\exp(-cd^{1/5})$,
\[
    \sup_{\theta\in\cS^{d-1}} \lambda_2(A(\theta)) \leq 2c^2_\sigma + C\delta^{-1/4}.
\]
From Proposition \ref{prop:matrixconc},\eqref{eq:spectralsandwich} and \eqref{eq:tstarattstarbound}, with probability at least  $1-\exp(-cd^{1/5})$, we have that, for large enough $n$ and $d$,
\begin{equation}\label{eq:l1Athetasandwich}
    2c_\sigma^1+C(e^{-M/2}+\delta^{-1/4})  = \lambda_1(A^\star)\geq\lambda_1(A(\theta)) \geq {\thetastar}^{\top}A(\theta)\thetastar = 2c_\sigma^1 -C(e^{-M/2}+\delta^{-1/5}),
\end{equation}
which implies that $\lambda_1(A(\theta))\geq 2c_\sigma^1 -C(e^{-M/2}+\delta^{-1/5})$. Taking $\inf_{\theta\in\cS^{d-1}}$ on both sides gives that, with probability $1-\exp(-cd^{1/5})$,
\[
    \inf_{\theta\in\cS^{d-1}} \lambda_1(A(\theta)) \geq 2c_\sigma^1 -C(e^{-M/2}+\delta^{-1/5}).
\]
Putting all of this back into (\ref{eq:finaloverlapaux}), it follows that, for $n$ and $d$ large enough,
\begin{equation}\label{eq:wlowerbound}
    w\geq \frac{2(c^1_{\sigma} -c^2_{\sigma})-C(e^{-M/2}+\delta^{-1/5})}{2(c^1_{\sigma} - c^2_{\sigma})+C(e^{-M/2}+\delta^{-1/5})} = 1-C(e^{-M/2}+\delta^{-1/5}),
\end{equation}
where the last equality follows from the bounds $\abs{c_{\sigma}^1-3}\le C e^{-M/3}$ and $\abs{c_{\sigma}^2-1}\le C e^{-M/3}$ from Proposition \ref{prop:matrixconc}.
As bound in \eqref{eq:wlowerbound} holds uniformly for all $\theta$, we can take the infimum to conclude that, for large enough $n$ and $d$,
\[
\inf_{\theta}\abs{\inprod{\thetastar}{v_1(A(\theta))}} \geq 1-C(e^{-M/2}+\delta^{-1/5}),
\]
with probability at least $1-\exp(-cd^{1/5})$.
\end{proof}

\subsection{Convergence results}\label{subsec:convresult}

Firstly, for any initial condition $\theta(0)=\theta_0\in \cS^{d-1}$ and any $t\in \bbR$, the point $\theta(t)\in \cS^{d-1}$ indeed exists and is unique. This is due to the fact that the RHS of \eqref{eq:flowdef2} is a smooth $C^\infty$ vector field, so uniqueness and existence follows from \citep{lee2003smooth}[Chapter 9]. Furthermore, there exists a smooth $C^\infty$ flow map $\Phi:\bbR\times\cS^{d-1}\to\cS^{d-1}$ such that, for any $\theta(0)=\theta_0\in \cS^{d-1}$,
    \begin{equation}\label{eq:flowmap2def}
        \Phi(t,\theta_0) = \theta(t).
    \end{equation}
Due to uniqueness of the solution of the ODE in \eqref{eq:flowdef2}, it holds that $\Phi(t_2, \theta(t_1)) = \theta(t_1+t_2)$, for arbitrary $t_1,t_2\in \bbR$ and initial condition $\theta(0)\in \cS^{d-1}$.
For an arbitrary $t\in \bbR$ and $\theta\in\cS^{d-1}$ we will also use the notation $\Phi_t(\theta)$, interchangeably with $ \Phi(t,\theta)$.

Note that the stationary points of the flow will be exactly the ones for which $\nabla_{\cS^{d-1}}\cLhat(\theta(t)) =0$. Writing this out, we get that $\theta_{s}$ is a stationary point of the flow if and only if
\[
    (I - \theta_s \theta_s^\top)A(\theta_s)\theta_s = 0,
\]
which happens if and only if $\theta_s$ is an eigenvector of $A(\theta_s)$. Furthermore, the flow converges to a unique stationary point $\thetainfty$ that, due to the fact that it is a local minimizer, must also be the principal eigenvector of $A(\theta_\infty)$. We show this result in the next two propositions.

\begin{proposition}[Convergence of the spherical gradient flow]\label{prop:convofflow}
Consider the flow as defined in \eqref{eq:flowdef2}, i.e.,
\[
    \frac{d\theta(t)}{dt} = (I - \theta(t)\theta(t)^\top)A(\theta(t))\theta(t),
    \qquad 
    A(\theta) = \frac{2}{n}\sum_{i=1}^n y_i x_i x_i^\top 
       \varphi(\inprod{x_i}{\theta}^2),
\]
initialized at $\theta_0\in\cS^{d-1}$. Then:
\begin{enumerate}[a)]
\item The loss $\widehat{\cL}(\theta) = -\frac{1}{n}\sum_i y_i\sigma(\inprod{x_i}{\theta})$ 
    is a Lyapunov function, i.e.
    \[
        \frac{d}{dt}\widehat{\cL}(\theta(t))
        = -\bigl\|(I-\theta\theta^\top)A(\theta)\theta\bigr\|^2 \le 0.
    \]
\item The trajectory converges to a stationary point $\theta_\infty$, satisfying $\nabla_{\cS^{d-1}}\cLhat(\theta_\infty) = 0$, that is,
    \[
        (I - \thetainfty \thetainfty^\top)A(\thetainfty)\thetainfty = 0,
        \quad\text{ or equivalently, } A(\thetainfty)\thetainfty 
        = \lambda_\infty \thetainfty, \text{ for some } \lambda_\infty.
    \]
\end{enumerate}
\end{proposition}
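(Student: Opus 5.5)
Part a) is a chain-rule computation. Since $\sigma'(z)=2z\varphi(z^2)$, the Euclidean gradient of $\widehat{\cL}$ is $-A(\theta)\theta$, as in \eqref{eq:Adef}. Along the flow we then get
\[
\frac{d}{dt}\widehat{\cL}(\theta(t))=\inprod{\nabla_{\bbR^d}\widehat{\cL}(\theta)}{\dot\theta}=-\inprod{A(\theta)\theta}{(I-\theta\theta^\top)A(\theta)\theta}.
\]
Because $I-\theta\theta^\top$ is a symmetric idempotent projection, this equals $-\|(I-\theta\theta^\top)A(\theta)\theta\|^2$. We also need that the trajectory stays on $\cS^{d-1}$. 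This follows from $\frac{d}{dt}\|\theta\|^2=2\theta^\top(I-\theta\theta^\top)A\theta=0$ when $\|\theta\|=1$, or directly from the well-posedness discussion before \eqref{eq:flowmap2def}.

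For part b), the plan is a Łojasiewicz-type argument.

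\emph{Step 1 (bounded dissipation).} The loss is bounded: $0\le y_i\le\bar M$ and $0\le\sigma\le\bar M$ give $|\widehat{\cL}|\le\bar M^2$. Integrating part a) over $[0,\infty)$ yields $\int_0^\infty\|\dot\theta(t)\|^2\,dt\le 2\bar M^2<\infty$.

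\emph{Step 2 (stationary limit points).} The $\omega$-limit set of $\theta(t)$ is nonempty, compact and connected, because $\cS^{d-1}$ is compact. The vector field $F(\theta)=(I-\theta\theta^\top)A(\theta)\theta$ is smooth on a compact set, so $\dot\theta$ is Lipschitz in $t$. A Lipschitz function with $\int_0^\infty\|\dot\theta\|^2<\infty$ must satisfy $\|\dot\theta(t)\|\to0$ (Barbalat's lemma). Hence every limit point $\bar\theta$ has $F(\bar\theta)=0$, i.e., $A(\bar\theta)\bar\theta=\lambda\bar\theta$ with $\lambda=\bar\theta^\top A(\bar\theta)\bar\theta$.

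\emph{Step 3 (a single limit).} This is the main obstacle, since $\int\|\dot\theta\|^2<\infty$ does not by itself give finite arc length. The first thing I would try is the Łojasiewicz gradient inequality on the analytic manifold $\cS^{d-1}$, which gives $\int_0^\infty\|\dot\theta\|\,dt<\infty$ and hence convergence. The difficulty is that the smooth cutoff built from $e^{-1/u}$ is $C^\infty$ but not analytic. In each region where no $\inprod{x_i}{\theta}^2$ lies in the transition band $[M,2M]$, $\widehat{\cL}$ is a polynomial in $\theta$, so Łojasiewicz applies locally there; the non-analytic points form only a finite union of thin shells.

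If this does not close, I would fall back on the structure used later in Proposition \ref{prop:stableprincpoints}. Under the event of Theorem \ref{thm:mainspecresult}, the critical points near the principal direction should be nondegenerate: the spherical Hessian is $A(\theta)-\lambda I$ plus an $O(1/M)$ remainder, and the spectral gap is of order $6-2$. Near a nondegenerate critical point the flow converges exponentially, so the $\omega$-limit set, once it meets such a point, is that single point. For the other limit points I would use the connectedness of the $\omega$-limit set together with isolatedness of critical values, which comes from Sard's theorem for the $C^\infty$ function $\widehat{\cL}$ on a compact manifold. If needed, part b) can be stated as convergence to the set of stationary points, which is all that the stable-manifold argument downstream actually requires.
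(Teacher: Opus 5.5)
Part a) and your Steps 1--2 are correct; the Barbalat argument cleanly shows that every $\omega$-limit point is stationary. You have also identified exactly the right difficulty: $\int_0^\infty\|\dot\theta\|^2\,dt<\infty$ does not give finite arc length. The paper's proof of b) fails at this very point. It writes $\|\theta(t_1)-\theta(t_2)\|^2\le\bigl(\int_{t_1}^{t_2}\|\dot\theta\|\,dt\bigr)^2\le\int_{t_1}^{t_2}\|\dot\theta\|^2\,dt$, but Cauchy--Schwarz only gives the last bound with an extra factor $t_2-t_1$. So the Cauchy property of $\theta(t)$ is never established; the rest of the paper's argument (stationarity of the limit by continuity) is fine once convergence is known. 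There is therefore nothing in the paper to lean on, and your Step 3 must carry the single-limit claim by itself. As written, it does not.

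Each of your four routes in Step 3 falls short of the single-limit claim:
(i) \emph{Local \L{}ojasiewicz.} An inequality near one point of the $\omega$-limit set $\Omega$ suffices. For the $e^{-1/u}$ construction, $\widehat{\mathcal L}$ is real-analytic off the hypersurfaces $\{\theta:\langle x_i,\theta\rangle^2\in\{M,2M\}\}$, including inside the band, not only where it is polynomial. So you do settle the case where $\Omega$ has a point off these hypersurfaces. But nothing excludes $\Omega$ being a nontrivial continuum of critical points lying on them, and the flat junction there is exactly the non-analyticity that can break \L{}ojasiewicz.
(ii) \emph{Sard.} Sard's theorem says critical values form a null set, not that they are isolated. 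In any case $\Omega$ already lies in the single level set $\{\widehat{\mathcal L}=\lim_t\widehat{\mathcal L}(\theta(t))\}$. The real obstruction is a continuum of critical points at one level (the ``Mexican hat'' phenomenon for $C^\infty$ functions), and connectedness of $\Omega$ does not rule it out.
(iii) \emph{Nondegeneracy.} This fallback works at principal points. Since $\varphi'\le0$ gives $R(\theta)\preceq0$, the spherical Hessian there is $\succeq\bigl(\lambda_1(A(\theta))-\lambda_2(A(\theta))\bigr)I$ on the tangent space, with no smallness of the remainder needed. Hence $\Omega$ is a singleton whenever it meets a principal point. But the case where $\Omega$ lies inside the set $S$ of non-principal stationary points is left open, and it is the one that matters.
(iv) \emph{Weakening b).} Convergence to the stationary set is not all that is used downstream. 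Proposition~\ref{prop:stableprincpoints} defines $C_S$ via $\lim_t\theta(t)=\theta_s\in S$, and it needs the tail of the orbit to stay in the neighbourhood $U_{\theta_s}$ of a single fixed point in order to apply the center-stable manifold theorem.

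One route that should close Step 3 globally, junctions included, is the Kurdyka--\L{}ojasiewicz inequality for definable functions. The cutoff is definable in $\mathbb{R}_{\exp}$, and its antiderivative $\sigma$ is definable in the Pfaffian closure, which is o-minimal by Speissegger's theorem. Apply Kurdyka's finite-length theorem to the $0$-homogeneous extension $\theta\mapsto\widehat{\mathcal L}(\theta/\|\theta\|)$; its Euclidean gradient flow started on the unit sphere coincides with the spherical flow \eqref{eq:flowdef2}. This gives convergence to a single stationary point.
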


\begin{proof}
We first compute the time derivative of the loss,
\[
\frac{d}{dt}\widehat{\cL}(\theta(t))
= \nabla_{\cS^{d-1}}\widehat{\cL}(\theta(t))^\top \frac{d\theta(t)}{dt}
= -\norm{\nabla_{\cS^{d-1}}\widehat{\cL}(\theta(t))}^2\le 0,
\]
which proves the statement of \textit{a)}. 

From \textit{a)}, it directly follows that $\widehat{\cL}(\theta(t))$ is non-increasing in $t$. Note that $\widehat{\cL}(\theta(t))$ is also bounded below, due to the fact that the function $\sigma(z)$, as defined in (\ref{eq:sigmadef}), is bounded. Thus,  $\widehat{\cL}(\theta(t))$ has a limit at infinity, which we define as
\begin{equation}\label{eq:clinftydef}
    \widehat{\cL}_\infty \coloneqq\lim_{t\to\infty}\widehat{\cL}(\theta(t)). 
\end{equation}
We now prove that $\theta(t)$ has a unique limit for $t \to \infty$, by proving that $\theta(t)$ is Cauchy at infinity. Towards that end, let $\epsilon>0$ be arbitrary. Then we choose $C>0$ such that for all $t\geq C$ it holds $\abs{\widehat{\cL}(\theta(t)) -  \widehat{\cL}_\infty} \leq \epsilon^2/2$. Note that this is possible due to (\ref{eq:clinftydef}). Let $\dot\theta(t)=\frac{d\theta(t)}{dt}$. Then, for any $t_1,t_2\geq C$, it holds
\begin{align*}
    \norm{\theta(t_1) - \theta(t_2)}^2 &= \left\|\int_{t_1}^{t_2}\dot\theta(t)dt\right\|^2 \\
    &\leq \rbr{\int_{t_1}^{t_2} \norm{\dot\theta(t)}dt}^2\\
    &\leq \int_{t_1}^{t_2} \norm{\dot\theta(t)}^2dt\\
    & = \int_{t_1}^{t_2} \norm{\nabla_{\cS^{d-1}}\widehat{\cL}(\theta(t))}^2\ dt\\
    &= \int_{t_1}^{t_2} -\frac{d}{dt}\widehat{\cL}(\theta(t))\ dt\\
    & = \widehat{\cL}(\theta(t_1)) - \widehat{\cL}(\theta(t_2)) = (\widehat{\cL}(\theta(t_1)) - \widehat{\cL}_\infty) + (\widehat{\cL}_\infty-\widehat{\cL}(\theta(t_2)) \leq \epsilon^2,\\
\end{align*}
proving that $\theta(t)$ is indeed Cauchy at infinity. Since the sphere is complete, as a closed subset in $\bbR^d$, every Cauchy sequence has a limit on the sphere. This implies that the function $\theta(t)$ has a limit at infinity that lies on the sphere, which we will denote by $\theta_\infty$. We now prove that $ \norm{\nabla_{\cS^{d-1}}\widehat{\cL}(\theta_\infty)} = 0$. Integrating $\norm{\nabla_{\cS^{d-1}}\widehat{\cL}(\theta(t))}$ over $[0,\infty)$ yields   
\[
\int_0^\infty \norm{\nabla_{\cS^{d-1}}\widehat{\cL}(\theta(t))}^2\ dt
= \int_0^\infty -\frac{d}{dt}\widehat{\cL}(\theta(t))\ dt=\widehat{\cL}(\theta(0))-\widehat{\cL}_\infty < \infty.
   \]
By continuity we have that $\lim_{t\to \infty}\nabla_{\cS^{d-1}}\widehat{\cL}(\theta(t)) = \nabla_{\cS^{d-1}}\widehat{\cL}(\theta_\infty)$, so it must be $\norm{\nabla_{\cS^{d-1}}\widehat{\cL}(\theta_\infty)} = 0$ or otherwise the integral would diverge. This concludes the proof of \textit{b)}.
\end{proof}

\begin{proposition}[Convergence to principal eigenvector]\label{prop:stableprincpoints}
    Let $\thetainfty$ be a stationary point from Proposition \ref{prop:convofflow} to which the flow from \eqref{eq:flowdef2} converges. Then, $\thetainfty$ is the principal eigenvector of $A(\thetainfty)$, with probability at least $1-\exp(-cd^{1/5})$ over sampling of $x_i$, and almost surely over the sampling of $\theta_0$.
\end{proposition}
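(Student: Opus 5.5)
We follow the center-stable manifold approach of \citet{lee2016gradient,panageas2017gradient}, adapted to the flow on $\cS^{d-1}$. Let $S$ be the set of stationary points $\theta_s$ (so $A(\theta_s)\theta_s=\lambda_s\theta_s$) that are \emph{not} the principal eigenvector of $A(\theta_s)$. We work on the event of Theorem~\ref{thm:mainspecresult}, which has probability at least $1-\exp(-cd^{1/5})$. The plan has four steps: (i) show every $\theta_s\in S$ is a strict saddle, i.e.\ the Riemannian Hessian of $\cLhat$ at $\theta_s$ has a negative eigenvalue; (ii) apply the center-stable manifold theorem to the time-one map $\Phi_1$, a $C^\infty$ diffeomorphism of $\cS^{d-1}$ with inverse $\Phi_{-1}$, to get a local center-stable manifold of dimension $<d-1$ around each $\theta_s$; (iii) conclude that the set of initializations converging into $S$ has measure zero; (iv) combine with Proposition~\ref{prop:convofflow}.

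For step (i), we differentiate the vector field $F(\theta)=(I-\theta\theta^\top)A(\theta)\theta$ along tangent directions $u\perp\theta_s$. This gives
\[
DF(\theta_s)u=(I-\theta_s\theta_s^\top)\bigl(A(\theta_s)-\lambda_s I\bigr)u+(I-\theta_s\theta_s^\top)R(\theta_s)u,
\]
where $R(\theta)u=\frac{4}{n}\sum_i y_i\varphi'(\inprod{x_i}{\theta}^2)\inprod{x_i}{\theta}^2\inprod{x_i}{u}\,x_i$ comes from differentiating $\varphi$. This matrix is symmetric, since it equals minus the Hessian.
\begin{itemize}
\item \emph{Main term.} Since $\theta_s$ is not principal, $\lambda_s\le\lambda_2(A(\theta_s))\le 2c_\sigma^2+C\delta^{-1/4}$. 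Take $u$ to be the component of $v_1(A(\theta_s))$ orthogonal to $\theta_s$; it is nonzero and orthogonal to $\theta_s$ as an eigenvector for a distinct eigenvalue. Then $u^\top(A(\theta_s)-\lambda_s)u\ge \lambda_1-\lambda_2\ge 2(c^1_\sigma-c^2_\sigma)-o(1)\approx 4$ by Theorem~\ref{thm:mainspecresult}.
\item \emph{Remainder.} We must show $|u^\top R(\theta_s)u|$ is small, uniformly in $\theta_s$. The factor $\varphi'$ is supported on $\{M\le\inprod{x_i}{\theta}^2\le 2M\}$ with $|\varphi'|\le c/M$, and there $\inprod{x_i}{\theta}^2\le 2M$. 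Hence $|u^\top Ru|\le \frac{8c}{n}\sum_i y_i\inprod{x_i}{u}^2\bbone\{\inprod{x_i}{\theta}^2> M\}$. Following the Cauchy--Schwarz bound in \eqref{eq:cauchy-schwartz}, with $y_i\le \bar M$, this is at most $C\bar M\bigl(\sup_u\frac1n\sum_i\inprod{x_i}{u}^4\bigr)^{1/2}\bigl(\frac1n\sum_i\bbone\{\inprod{x_i}{\theta}^2>M\}\bigr)^{1/2}$.
\end{itemize}

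The remainder is the main obstacle. Two difficulties arise: $\bar M$ multiplies the indicator mass, and we need a uniform fourth-moment bound $\sup_{u}\frac1n\sum_i\inprod{x_i}{u}^4$, which is not $O(1)$ at $n\asymp d$. A cleaner route is to bound $\frac1n\sum_i \inprod{x_i}{u}^2\bbone\{\inprod{x_i}{\theta}^2>M\}$ directly and uniformly, via a VC and uniform-concentration argument in the spirit of Lemma~\ref{lem:indicatorbound} combined with the sub-gaussian covariance bound of Proposition~\ref{prop:matrixconc}. The population value of this quantity is $O(e^{-M/3})$ uniformly, and the fluctuation is $O(\delta^{-1/4})$. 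This gives $|u^\top Ru|\le C(Me^{-M/3}+M\delta^{-1/4})$, which is below the gap of about $4$ for $M$ large and $\delta\ge CM^4$. Hence $u$ is an unstable direction, and $D\Phi_1(\theta_s)=e^{DF(\theta_s)}$ has an eigenvalue greater than $1$.

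For steps (ii)--(iii), $S$ may be non-isolated, so we follow \citet{panageas2017gradient}. Around each $\theta_s\in S$ take an open ball $B_{\theta_s}$ carrying a local center-stable manifold $W_{\theta_s}$ of dimension at most $d-2$. Since $\cS^{d-1}$ is second countable, a countable subcover $\{B_k\}$ exists (Lindel\"of). If $\theta(t)\to\theta_\infty\in S$, then for some $k$ and all large integers $m$ the iterates $\Phi_m(\theta_0)$ lie in $B_k$, hence in $W_k$. Therefore $\theta_0\in\bigcup_k\bigcup_m\Phi_{-m}(W_k)$, a countable union of measure-zero sets, because $\Phi_{-m}$ is a diffeomorphism and maps null sets to null sets. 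So for $\theta_0\sim{\rm Unif}(\cS^{d-1})$, almost surely $\theta_\infty\notin S$. By Proposition~\ref{prop:convofflow}, $\theta_\infty$ is an eigenvector of $A(\theta_\infty)$, so it must be the principal one.
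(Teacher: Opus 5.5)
Your overall architecture matches the paper's: linearize the flow at non-principal stationary points, exhibit an unstable direction, apply the center-stable manifold theorem to $\Phi_1$, extract a countable subcover by Lindel\"of, and pull null sets back under $\Phi_{-m}$. Your steps (ii)--(iv) and your formula for $DF(\theta_s)$ are fine. The real divergence is the test direction in step (i), and that choice is exactly what creates your ``main obstacle''.

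The paper evaluates the symmetric matrix $A(\theta_s)-\lambda_s I+P_{\theta_s}R(\theta_s)P_{\theta_s}$ on the \emph{fixed} vector $\thetastar$. This matrix annihilates $\theta_s$, so a positive quadratic form on any vector already gives a positive eigenvalue on $T_{\theta_s}\cS^{d-1}$. The main term is then at least $2c^1_\sigma-\lambda_2(A(\theta_s))-o(1)$ by \eqref{eq:l1Athetasandwich}. The remainder reduces to sums such as $\frac1n\sum_i y_i\inprod{x_i}{\thetastar}^2\bbone\{\inprod{x_i}{\theta_s}^2>M\}$. For these, Cauchy--Schwarz as in \eqref{eq:cauchy-schwartz} needs only Lemma~\ref{lem:indicatorbound} and the scalar average $\frac1n\sum_i y_i^2\inprod{x_i}{\thetastar}^4$ along one data-independent direction. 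The price is two cross terms from expanding $P_{\theta_s}\thetastar$, which are controlled through the smallness of $\inprod{\theta_s}{\thetastar}$.

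Your choice $u=v_1(A(\theta_s))\perp\theta_s$ buys the clean eigengap as main term, and $u^\top P_{\theta_s}RP_{\theta_s}u=u^\top Ru$ with no cross terms. But because $u$ depends on the data, you need $\sup_{\theta\in\cS^{d-1}}\norm{\frac1n\sum_i x_ix_i^\top\bbone\{\inprod{x_i}{\theta}^2>M\}}$ to be small, and this you only assert. You are right that the fourth-moment Cauchy--Schwarz route fails at $n\asymp d$.

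The assertion is true and short to prove, so this is a missing lemma rather than a broken step. One proof:
\begin{itemize}
\item By Lemma~\ref{lem:indicatorbound}, uniformly in $\theta$, at most $k:=\lceil\epsilon n\rceil$ indices are active, with $\epsilon=C(e^{-M/2}+\delta^{-1/2}\log\delta)$.
\item Hence the norm is at most $\frac1n\max_{|I|=k}\norm{X_I}^2$, where $X_I$ is the $d\times k$ Gaussian matrix with columns $(x_i)_{i\in I}$.
\item Take a union bound of the standard estimate $\norm{X_I}\le\sqrt d+\sqrt k+t$ (failure probability $e^{-t^2/2}$) over the at most $(e/\epsilon)^k$ subsets, with $t^2=2(k\log(e/\epsilon)+d)$.
\item This gives, with probability at least $1-2e^{-d}$, the bound $C(\delta^{-1}+\epsilon\log(e/\epsilon))\le C(Me^{-M/2}+\delta^{-1/2}\log^2\delta)$.
\item Multiplying by $8c\bar M\le 16cM$ yields $|u^\top R(\theta_s)u|\le C(M^2e^{-M/2}+M\delta^{-1/2}\log^2\delta)$.
\item Under $\delta\ge CM^4$ this is $O(M^{-1}\log^2 M)$, hence below the gap $\approx 4$ for $M$ large.
\end{itemize}
With this lemma added your proof is complete. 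Alternatively, switching the test vector to $\thetastar$ as the paper does removes the need for it.
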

\begin{proof}
    We follow the approach developed in \citep{panageas2017gradient} for standard gradient descent in the plane, with suitable adjustments stemming from our analysis of the gradient flow on the sphere. Towards that end, let us first define the set of all stationary points $\theta_s$ that are not principal eigenvectors of their corresponding matrices $A(\theta_s)$ as
    \begin{equation}\label{eq:defS}
        S \coloneqq \cbr{\theta_s\in \cS^{d-1} \,:\, \exists \lambda_s\ \text{s.t.}\ A(\theta_s) = \lambda_s \theta_s \text{ and } \lambda_s < \lambda_1(A(\theta_s))}.
    \end{equation}
    We will prove that the probability that appropriate $\thetainfty\in S$ is $0$, over the sampling of $\theta_0$. Thus, we introduce the set of all initial conditions whose corresponding trajectories converge to a point in $S$ as
    \[
        C_S \coloneqq \cbr{\hat{\theta}\in\cS^{d-1}\,:\, \lim_{t\to\infty}\theta(t) = \thetainfty \in S, \text{ for the flow \eqref{eq:flowdef2} with } \theta(0) = \hat{\theta}}.
    \]
    We will use the center and stable manifold theorem, which characterizes the dimension of the manifold of points that could eventually converge to the stationary point of interest by looking at the linearized flow around this point. For convenience we restate the theorem here (p. 65 of \citep{shub1987global}) adjusted for our use as in \citep{panageas2017gradient}.
    \begin{thm}[Center and Stable manifold, Theorem 9 of \citep{panageas2017gradient}]
        Let $\tilde{\theta}_s$ be a fixed point of the $C^r$ local diffeomorphism $\tilde{\Phi}_T:U\to \bbR^n$, where $U$ is an open neighborhood of the point $\tilde{\theta}_s$ in $\bbR^n$, and $r\geq 1$. Let $E^s\oplus E^c\oplus E^u$ be the invariant splitting of $\bbR^n$ into the generalized eigenspaces of the differential $d_{\theta}\tilde{\Phi}_T(\tilde{\theta}_s)$ corresponding to eigenvalues of absolute value less than one, equal to one, and larger than one, respectively. To the $d_{\theta}\tilde{\Phi}_T(\tilde{\theta}_s)$ invariant space $E^u \oplus E^c$ there is associated a local $\tilde{\Phi}_T$ invariant $C^r$ embedded disc $W^{sc}_{loc}$\footnote{$W^{sc}_{loc}$ denotes the local center stable manifold of the point $\tilde{\theta_s}$.} of dimension $\dim (E^u\oplus E^c)$, and a ball $B$ around $\tilde{\theta}_s$ such that
        \[
           \tilde{\Phi}_T(W^{sc}_{loc})\cap B \subseteq W^{sc}_{loc}.\ \text{If } {\tilde{\Phi}_T}^n (x)\in B \text{ for all } n\geq 0,\text{ then } x\in W^{sc}_{loc}.
        \]
    \end{thm}
    We now explain how this theorem applies to dynamical systems defined on the sphere. Let $\Phi_T$ be a local diffeomorphism on the sphere, and $\theta_s$ one of its fixed points. Let $U_{\theta_s}\subseteq \cS^{d-1}$ be an open neighborhood of $\theta_s$. Since $\cS^{d-1}$ is a smooth manifold, there exist local diffeomorphic charts 
    \[
        \phi:U\to U_{\theta_s},\qquad  \Psi:U_{\theta_s}\to \bbR^{d-1},
    \]
     where $U\subseteq \bbR^{d-1}$ is an open neighborhood of $\tilde{\theta}_s \coloneqq \phi^{-1}(\theta_s)$. Moreover, the charts may be chosen such that 
     \[
     D\phi(\theta_s) = I_{d-1},\qquad D\psi(\theta_s) = I_{d-1},
     \]
     after appropriate identification of $T_{\theta_s}$ with $\bbR^{d-1}$. Define the local representation of $\Phi_T$ in coordinates as
     \[
     \tilde{\Phi}_T(\tilde{\theta}) = \Psi \circ \Phi_T\circ \Psi (\tilde{\theta}_s).
     \]
     Then $\tilde{\theta}_s$ is the fixed point of $\tilde{\Phi}_T$, and under the above identification of tangent spaces,
     \[
        d_\theta\tilde{\Phi}_T(\tilde{\theta}_s) = d_\theta\Phi_T(\theta_s).
     \]
     Consequently, the hypothesis and conclusion of the Euclidean Center and Stable manifold theorem apply verbatim to sphere. We thus obtain the following result.
    \begin{thm}[Center and Stable manifold, adjusted to sphere]\label{thm:csmanifoldsphere}
        Let $\theta_s$ be a fixed point of the $C^r$ local diffeomorphism $\Phi_T:U\to V$, where $U$ and $V$ are open neighborhood of the point $\theta_s$ in $\cS^{d-1}$, and $r\geq 1$. Let $E^s\oplus E^c\oplus E^u$ be the invariant splitting of $\bbR^n$ into the generalized eigenspaces of the differential $d_{\theta}\Phi_T(\theta_s)$ corresponding to eigenvalues of absolute value less than one, equal to one, and larger than one, respectively. To the $d_{\theta}\Phi_T(\theta_s)$ invariant space $E^u \oplus E^c$ there is associated a local $\Phi_T$ invariant $C^r$ embedded disc $W^{sc}_{loc}\subseteq\cS^{d-1}$ of dimension $\dim (E^u\oplus E^c)$, and an open neighborhood ball $U_{\theta_s}$ around $\theta_s$ such that
        \begin{equation}\label{eq:csmanifoldsphere}
            \Phi_T(W^{sc}_{loc})\cap U_{\theta_s} \subseteq W^{sc}_{loc}.\ \text{If } \Phi_T^n (x)\in U_{\theta_s} \text{ for all } n\geq 0,\text{ then } x\in W^{sc}_{loc}.
        \end{equation}
    \end{thm}
    
    Let us fix $T=1$ (any positive constant would suffice), and consider the local diffeomorphism $\Phi_1(\theta)$. By definition, any $\theta_s \in S$ is a fixed point of $\Phi_1$. Thus, for each $\theta_s$ there exists an appropriate open neighborhood $U_{\theta_s}$ and a locally invariant center-stable manifold $W^{sc}_{loc}(\theta_s)$ of dimension $\dim (E^u (\theta_s) \oplus E^c(\theta_s))$, such that \eqref{eq:csmanifoldsphere} holds on $U_{\theta_s}$. We denote by $U_S$ the union of all open spaces
    \[
        U_S \coloneqq \bigcup_{\theta_s\in S} U_{\theta_s}.
    \]
    Take an arbitrary element $\hat{\theta}\in C_S$. By definition of $C_S$, there exists $\theta_s\in S$ such that
    \[
        \lim_{t\to\infty}\Phi(t,\hat{\theta})=\theta_s.
    \]
    It follows that for some $i\in \bbN$ it holds $\Phi_1^n(\Phi(i,\hat{\theta}))\in U_{\theta_s}$, for all $n\geq 0$. Then, applying \Cref{thm:csmanifoldsphere} we conclude that $\Phi(i,\hat{\theta}) \in W^{sc}_{loc}(\theta_s)$. Therefore,
    \[
        C_s \subset \bigcup_{\theta_s\in S}\bigcup_{t=1}^\infty  \Phi_i^{-1}(W^{sc}_{loc}(\theta_s)).
    \]
    At this stage we have not proved that $S$ is countable, therefore it is not immediate that $U_s$ is a countable union of open sets. However, the sphere $\cS^{d-1}$ is hereditarily Lindel\"of \citep{willard2012general}. This means that every open cover of an open subset has a countable subcover. It follows that the open cover $U_s$ admits a countable subcover, i.e., there exists a sequence $(\theta_m)_{m=1}^\infty\subset S$ such that
    \[
        U_s = \bigcup_{j=1}^\infty U_{\theta_j}.
    \]
    Since each $\theta_s\in S$ belongs to some $U_{\theta_j}$, $j\in \bbN$, the above argument applies with $\theta_j$ in place of $\theta_s$, yielding 
    \begin{equation}\label{eq:countableunion}
        C_s \subset \bigcup_{j=1}^\infty\bigcup_{i=1}^\infty  \Phi_i^{-1}(W^{sc}_{loc}(\theta_j)).
    \end{equation}
    It remains to show that each center-stable manifold $W^{sc}_{loc}(\theta_j)$ has measure $0$ on the sphere. This follows once we establish that for each $j\in \bbN$, 
    \[
    \dim (E^u (\theta_j) \oplus E^c(\theta_j))<d-1,
    \]
    which will be proved in the rest of the argument. Towards this end, we fix the coordinates to the canonical basis induced by the standard embedding $\cS^{d-1}$ into $\bbR^{d}$, and refer interchangeably to the differential and its Jacobian matrix representation in these coordinates as $d\Phi_1(\theta_j)$. We then state a claim relating the Jacobian of the flow to the Hessian of the loss function at stationary points, i.e., 
    \begin{equation}\label{eq:hessianflow}
        d\Phi_1(\theta_j) = e^{-H_{\hat{\cL}}^{\cS^{d-1}}(\theta_j)},
    \end{equation}
    where $H_{\hat{\cL}}^{\cS^{d-1}}(\theta_j)$ is the spherical Hessian of $\hat{\cL}$ evaluated at $\theta_j$. Note that the spherical Hessian  $H_{\hat{\cL}}^{\cS^{d-1}}(\theta)$ connects to the Euclidean Hessian  $H_{\hat{\cL}}^{\bbR^d}(\theta)$ via the formula
    \begin{equation}\label{eq:spherhess}
        H_{\hat{\cL}}^{\cS^{d-1}}(\theta) = P_{\theta} H_{\hat{\cL}}^{\bbR^d}(\theta)P_{\theta} - \inprod{\nabla_{\bbR^d}\hat{\cL} (\theta)}{\theta}\cdot P_{\theta},
    \end{equation}
    where $P_{\theta}\coloneqq(I-\theta\theta^\top)$ denotes the orthogonal projection onto $T_{\theta} \cS^{d-1}$. This identity follows from direct calculation, analogous to the derivation of the spherical gradient used to define the gradient flow on the sphere in \eqref{eq:sphergradient}.
    \paragraph{Proof of claim in \eqref{eq:hessianflow}.}
    By definition, for $\theta(0) =\theta_j$,
    \[
        \frac{d}{dt}\Phi_t(\theta_j)= \frac{d}{dt}\theta(t) = -\nabla_{\cS^{d-1}} \hat{\cL}(\theta(t)) = -\nabla_{\cS^{d-1}} \hat{\cL}(\Phi_t(\theta_j)).
    \]
    Since the flow map $\Phi_t(\theta)=\Phi(t,\theta)$ is smooth on its domain, as noted when defining it in \eqref{eq:flowmap2def}, differentiation with respect to parameters $t$ and $\theta$ commutes, yielding
    \[
        \frac{d}{dt}d_\theta\Phi_t(\theta_j) = d_\theta\sbr{-\nabla_{\cS^{d-1}} \hat{\cL}(\Phi_t(\theta_j))}.
    \]
    Applying the chain rule gives
    \[
        d_\theta\sbr{-\nabla_{\cS^{d-1}} \hat{\cL}(\Phi_t(\theta_j))} = -H_{\hat{\cL}}^{\cS^{d-1}}(\Phi_t(\theta_j)) d_\theta\Phi_t(\theta_j),
    \]
    where $H_{\hat{\cL}}^{\cS^{d-1}}(\Phi_t(\theta_j))$ is the mentioned spherical Hessian of $\hat{\cL}$ at the point $\Phi_t(\theta_j)$.
    Consequently,
    \begin{equation}\label{eq:ODEdifferential}
        \frac{d}{dt}d_\theta\Phi_t(\theta_j) = -H_{\hat{\cL}}^{\cS^{d-1}}(\Phi_t(\theta_j))d_\theta \Phi_t(\theta_j).
    \end{equation}
    As $\Phi_0$ is the identity map on the sphere, we have the initial condition $d_\theta\Phi_0(\theta_j)=I$. Moreover, because $\theta_j$ is a fixed point of $\Phi_t$ for any $t$, it holds that  $H_{\hat{\cL}}(\Phi_t(\theta_j)) = H_{\hat{\cL}}(\theta_j)$. Solving the ODE defined by \eqref{eq:ODEdifferential} yields the unique solution
    \[
        d_\theta\Phi_t(\theta_j) = e^{-tH_{\hat{\cL}}(\theta_j)},
    \]
    proving the claim. \hfill$\square$\\[2mm]
    By direct calculations, for any $\theta\in \cS^{d-1}$, it holds
    \begin{equation}\label{eq:Euclidhesscalc}
        H_{\hat{\cL}}^{\bbR^d}(\theta) = -A(\theta) - R(\theta),
    \end{equation}
    where $R(\theta) \coloneqq\frac{4}{n}\sum_{i=1}^n y_i \inprod{x_i}{\theta}^2 \varphi'(\inprod{x_i}{\theta}^2) x_ix_i^\top$. 
    Taking $\theta=\theta_j$ and substituting \eqref{eq:Euclidhesscalc} into \eqref{eq:spherhess} yields
    \begin{align*}
        H_{\hat{\cL}}^{\cS^{d-1}}(\theta_j) &= (I-\theta_j\theta_j^{\top})(-A(\theta_j) - R(\theta_j))(I-\theta_j\theta_j^{\top}) + \theta_j^\top A(\theta_j)\theta_j\cdot (I-\theta_j\theta_j^{\top})\\
        & = -A(\theta_j) + \lambda_{\theta_j}I- P_{\theta_j} R(\theta_j)P_{\theta_j},
    \end{align*}
    where we have used the fact that $\theta_j$ is an eigenvector of $A(\theta_j)$, i.e. $A(\theta_j) \theta_j= \lambda_{\theta_j} \theta_j$. Let us denote by $\tilde{R}(\theta_j) = P_{\theta_j} R(\theta_j)P_{\theta_j}$. Substituting the derived expression for $H_{\hat{\cL}}^{\cS^{d-1}}(\theta_j)$ into \eqref{eq:hessianflow} gives
    \[
        d\Phi_1(\theta_j) = \exp\rbr{A(\theta_j) - \lambda_{\theta_j}I + \tilde{R}(\theta_j)}.
    \]
    By definition from \Cref{thm:csmanifoldsphere}, the dimension of the center-stable manifold satisfies
    \[
        \dim (E^u (\theta_j) \oplus E^c(\theta_j)) = \#\cbr{\lambda_i\, :\,\lambda_i\rbr{d\Phi_1(\theta_j)}\leq 1} = \#\cbr{\lambda_i\, :\,\lambda_i\rbr{A(\theta_j) - \lambda_{\theta_j}I + \tilde{R}(\theta_j)}\leq 0}.
    \]
    Next, we show that this matrix has a strictly positive eigenvalue for every $\theta_j$. Using the lower bound on ${\thetastar}^\top A(\theta_j)\thetastar$ from \eqref{eq:l1Athetasandwich}, one gets, uniformly over $\theta_j\in S$, with probability $1-\exp(-cd^{1/5})$, for some constant $C$,
    \begin{align*}
        {\thetastar}^\top\left(A(\theta_j) - \lambda_{\theta_j}I + \tilde{R}(\theta_j)\right)\thetastar&\geq 2c_\sigma^1-C(e^{-M/2}+\delta^{-1/5}) - \lambda_{\theta_j} + {\thetastar}^\top\tilde{R}(\theta_j)\thetastar\\
        &\geq 2(c_\sigma^1-c_\sigma^2) -C(e^{-M/2}+\delta^{-1/5})+{\thetastar}^\top\tilde{R}(\theta_j)\thetastar,
    \end{align*}
    where the last inequality comes from \Cref{thm:mainspecresult}, since  $\theta_j\in S$ which by \eqref{eq:defS} directly implies $\lambda_{\theta_j} \leq \lambda_2(A(\theta_j))$.
    
    We claim that, for some constant $C$ independent of $n,d,M,\delta$, with probability $1-\exp(-cd^{1/5})$, it holds
    \begin{equation}\label{eq:Rissmall}
        \sup_{\theta_j\in S}\abs{{\thetastar}^\top\tilde{R}(\theta_j)\thetastar} \le C(\sqrt{M}e^{-M/2}+\delta^{-1/8}).
    \end{equation}
    \paragraph{Proof of claim in \eqref{eq:Rissmall}.}
    First, notice $R(\theta)$ is a negative semi-definite matrix, since $\varphi'(\cdot)\leq 0$ and $y_i\geq 0$. As it holds $\tilde{R}(\theta_j) = P_{\theta_j} R(\theta_j)P_{\theta_j}$ we have
    \begin{align*}
        \abs{{\thetastar}^\top\tilde{R}(\theta_j)\thetastar} &\leq \abs{ {\thetastar}^\top P_{\theta_j} R(\theta_j)P_{\theta_j}(\theta_j)\thetastar}\\
        &\leq \abs{{\thetastar}^\top(I-\theta_j\theta_j^\top)R(\theta_j)(I-\theta_j\theta_j^\top)\thetastar}\\
        &\leq \abs{{\thetastar}^\top R(\theta_j)\thetastar} + \abs{2R_1}+\abs{R_2},
    \end{align*}
    where $R_1 = {\thetastar}^\top R(\theta_j)\theta_j \inprod{\theta_j}{\thetastar}$ and $R_2 = \inprod{\theta_j}{\thetastar}^2{\theta_j}^\top R(\theta_j){\theta_j}$. We bound each term separately. Firstly, by definition
    \begin{align*}
        \abs{{\thetastar}^\top R(\theta_j)\thetastar} &= \frac{4}{n}\sum_{i=1}^n y_i \inprod{x_i}{\thetastar}^2 \inprod{x_i}{\theta_j}^2 \abs{\varphi'(\inprod{x_i}{\theta_j}^2)} \\
        &\leq \frac{c}{M}2M\, \frac{1}{n}\sum_{i=1}^n y_i \inprod{x_i}{\thetastar}^2 \bbone\{ M<\inprod{x_i}{\theta_j}^2< 2M\},
    \end{align*}
    where the last inequality follows from the properties of $\varphi$. 
    Proceeding as in the proof of \Cref{thm:mainspecresult}, we use Cauchy-Schwartz as in \eqref{eq:cauchy-schwartz}, followed by Chebyshev inequality and Lemma \ref{lem:indicatorbound} to get
    \begin{align}
        \sup_{\theta_j\in S}\frac{c}{n}\sum_{i=1}^n y_i \inprod{x_i}{\thetastar}^2 \bbone\{ M<\inprod{x_i}{\theta_j}^2< 2M\}&\leq \sqrt{\frac{c}{n}\sum_{i=1}^ny_i^2 \inprod{x_i}{\thetastar}^4 } \cdot \sup_{\theta_j\in S}\sqrt{\frac{c}{n}\sum_{i=1}^n\bbone\{ M<\inprod{x_i}{\theta_j}^2< 2M\}}\notag\\
        &\leq c \cdot \sup_{\theta\in\cS^{d-1}}\sqrt{\frac{c}{n}\sum_{i=1}^n\bbone\{ \inprod{x_i}{\theta}^2> M\}}\notag\\
        &\leq C(\delta^{-1/5}+e^{-M/2}),\label{eq:secondcauchyschwartz}
    \end{align}
    uniformly over $\theta_j\in S$, with probability  $1-\exp(-cd^{1/5})$. Hence,
    \[
        \sup_{\theta_j\in S}\abs{{\thetastar}^\top\tilde{R}(\theta_j)\thetastar} \leq C(\delta^{-1/5}+e^{-M/2}),
    \]
    uniformly over $\theta_j\in S$, with probability  $1-\exp(-cd^{1/5})$.
    
    Similarly, for the term $R_1$, it holds that
    \begin{align*}
        \sup_{\theta_j\in S}\abs{R_1} &=  \sup_{\theta_j\in S}\abs{\frac{4}{n}\sum_{i=1}^n y_i \inprod{x_i}{\thetastar} \inprod{x_i}{\theta_j}^3 \varphi'(\inprod{x_i}{\theta_j}^2)\inprod{\theta_j}{\thetastar}} \\
        &\leq C(e^{-M/2}+\delta^{-1/4})\frac{c}{M}(2M)^{3/2}\, \frac{1}{n} \sum_{i=1}^ny_i \abs{\inprod{x_i}{\thetastar}},
    \end{align*}
    where we have used Proposition \ref{prop:matrixconc} to obtain the bound $\abs{\inprod{\theta_j}{\thetastar}} = C(e^{-M/2}+\delta^{-1/4})$, due to the fact that $\theta_j$ is not the principal eigenvector. Thus, by the assumption that $\delta>CM^4$ made in the Theorem \ref{thm:mainresult}, we obtain that 
    \[
        \sup_{\theta_j\in S}\abs{R_1} \leq C(\sqrt{M}e^{-M/2}+\delta^{-1/8}).
    \]
    An analogous calculation gives
    \[
        \sup_{\theta_j\in S}\abs{R_2} \leq C(\sqrt{M}e^{-M/2}+\delta^{-1/8}).
    \]
    Finally, combining the above bounds, we conclude that for $M$ large enough
    \[
        \sup_{\theta_j\in S}\abs{{\thetastar}^\top\tilde{R}(\theta_j)\thetastar} \leq C(\sqrt{M}e^{-M/2}+\delta^{-1/8}),
    \]
    proving the claim.
    \hfill$\square$\\[2mm]
    As discussed above, a direct consequence of the bound in \eqref{eq:Rissmall} is that 
    \[
        {\thetastar}^\top(A(\theta_j) - \lambda_{\theta_j}I + {\thetastar}^\top\tilde{R}(\theta_j)\thetastar > 0, 
    \]
    with probability $1-\exp(-cd^{1/5})$. In particular, this implies that the matrix $A(\theta_j) - \lambda_{\theta_j}I + \tilde{R}(\theta_j)$ has at least one positive eigenvalue. Consequently, for every $\theta_j\in S$,
    \[
        \dim (E^u (\theta_j) \oplus E^c(\theta_j)) < d-1.
    \]
    It follows that the stable-center manifold $W^{sc}_{loc}(\theta_j)$ has codimension at least one on $\cS^{d-1}$, and hence
    \[
    \mu(W^{sc}_{loc}(\theta_j)) = 0,
    \]
    for $\mu$ the uniform probability measure on the sphere. Next, since $\Phi_t$ is smooth for any $t$ and so is locally Lipschitz, we have that, for any set $T\subseteq \cS^{d-1}$,
    \[
        \mu(T) = 0\quad  \implies \quad  \mu\rbr{\Phi^{-1}_t(T)} = 0.
    \]
    In particular, for $T=W^{sc}_{loc}(\theta_j)$, it holds that $\mu(\Phi^{-1}_1(W^{sc}_{loc}(\theta_j))) = \mu(W^{sc}_{loc}(\theta_j)) = 0$, uniformly over $\theta_j$, with probability $1-O(\frac{1}{n})$ over the sampling of $\sbr{x_i}_{i=1}^n$. Finally, by \eqref{eq:countableunion}, the set $C_S$ is contained in a countable union of sets that have measure $0$. Hence,
    \[
        \mu(C_s)= \bbP_{\theta_0} (\thetainfty \text{ is not a principal eigenvector of } A(\thetainfty))=0,
    \]
    where the probability is taken with respect to the uniform sampling of $\theta_0\in \cS^{d-1}$. Therefore, with probability $1-O(\frac{1}{n})$ over the sampling of the data $\sbr{x_i}_{i=1}^n$, and almost surely with respect to random initialization $\theta_0$, the limit point $\thetainfty$ is the principal eigenvector of $A(\thetainfty)$. This completes the proof of the proposition.
\end{proof}

\subsection{Concluding the argument}\label{subsec:mainresultproof}

\begin{proof}[Proof of Theorem \ref{thm:mainresult}]
    By Proposition \ref{prop:convofflow}, it holds that there exists a convergence point $\thetainfty$ such that
    \[
        \lim_{t\to\infty} \theta(t)=\thetainfty,
    \]
    which is a stationary point of $\widehat{\cL}$. By using Theorem \ref{thm:mainspecresult}, we obtain that, for any matrix $A(\theta)$, it holds that it has a simple principal eigenvalue and the  spectral gap satisfies $\lambda_1(A(\theta))- \lambda_2(A(\theta)) \geq 2(c_\sigma^1 - c_\sigma^2)-C(e^{-M/2}+\delta^{-1/5})$. As the assumptions of Proposition \ref{prop:stableprincpoints} are satisfied, we get $\thetainfty$ is a principal eigenvector of the matrix $A(\thetainfty)$ with probability at least $1-\exp(-cd^{1/5})$ over sampling of $x_i$, and almost surely over the sampling of $\theta_0$. Moreover, by \eqref{eq:rectopeigen} the corresponding top eigenvector achieves recovery, which implies that
    \[
        \abs{\inprod{\thetainfty}{\thetastar}}\geq 1 -C(e^{-M/2}+\delta^{-1/5}).
    \]
    Finally, this allows us to conclude that
    \[
    \lim_{t\to\infty}\abs{\inprod{\theta(t)}{\thetastar}}\geq 1 -C(e^{-M/2}+\delta^{-1/5}),
    \]
    with probability at least $1-\exp(-cd^{1/5})$
\end{proof}

\section{Relaxing the lower bound on $n/d$}\label{sec:relax}
Recall that Proposition \ref{prop:matrixconc} gives that, with  probability at least $1-e^{-d}$, the matrix $A^\star$ concentrates around a certain deterministic matrix. As a consequence, the leading eigenspace of $A^*$ is well-behaved, that is, the top eigenvector of $A^\star$ is almost aligned with the signal $\thetastar$, the top-two eigenvalues of $A^\star$ are of constant order and there is a constant-order gap between them. More precisely, it holds that
\begin{equation*}
\abs{\inprod{v_1(A^\star)}{\thetastar}} \leq C M \sqrt{\frac{d}{n}},\quad 
         \abs{\lambda_1(A^\star)-6}+\abs{\lambda_2(A^\star)-2}\leq C\rbr{e^{-M/3} +M \sqrt{\frac{d}{n}}}.
\end{equation*}
Thus, in order to get an approximation error of the top eigenvector alignment and the top two eigenvalues of order at most $(\frac{n}{d})^{1/4}$, used for the final bound in Theorem \ref{thm:mainresult}, the assumption $n \geq CM^4 d$ is needed. This assumption can be relaxed to $n \geq CM^{2+\epsilon} d$ for any $\epsilon>0$, by getting a sharper concentration rate on the top eigenvector alignment and top two eigenvalues. The trade-off is that the resulting concentration holds with probability at least $1-1/n$ instead of at least $1-e^{-d}$, and we additionally require an upper bound on the ratio $n/d$ of order $e^{M/30}$.

Below, we present such a result that relies on \citep{mondelli2018fundamental}, and can be used in place of Proposition \ref{prop:matrixconc} . 
\begin{proposition}[Spectral properties of the matrix $A^\star$]\label{prop:topspecconv}
    Let us be in the setting of \eqref{eq:model}, i.e., let $\thetastar\in\cS^{d-1}$, $\cbr{x_i}_{i=1}^n \iid \cN(0,I_d)$, and $y_i = \sigma(\inprod{x_i}{\thetastar})$, for $\sigma$ defined in \eqref{eq:sigmadef}, with $M$ a large enough constant (independent of $n$ and $d$). Let $\delta=n/d$ and assume $\delta=\Delta(M)$, where $\Delta:\bbR_+\to\bbR_+$ is any function such that 
    \begin{equation*}   
        \lim_{M\to \infty} \frac{\Delta(M)}{M^2} = \infty,\qquad \lim_{M\to \infty} \frac{\Delta(M)}{e^{M/30}} = 0.
    \end{equation*}
    Let $D_n\in \bbR^{d\times d}$ be a matrix defined as
    \[
    D_n\coloneqq \frac{1}{n}\sum_{i=1}^ny_i x_ix_i^\top.
    \]
Then, for $n$ and $d$ large enough, it holds with probability at least $1-\frac{1}{n}$ that
    \begin{align*}&\abs{\inprod{v_1(D_n)}{\thetastar}}\leq  1 - C\delta^{-1},\\
         &\abs{\lambda_1(D_n)-3}\leq C\delta^{-1}, \\
        &\abs{\lambda_2(D_n)-1}\leq  C\delta^{-1/2},
        \end{align*}
where $C$ is a numerical constants independent of $n, d, M, \delta$.
\end{proposition}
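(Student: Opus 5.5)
The plan is to use the exact proportional-limit characterization of the spectrum of $D_n=\frac1n\sum_i y_i x_ix_i^\top$ from \citep{mondelli2018fundamental} (equivalently \citep{lu2020phase}), which applies because the truncated $\sigma$ in \eqref{eq:sigmadef} is bounded, $y_i\ge0$, and $\delta=n/d$ is held fixed. Then we expand the resulting deterministic formulas for large $\delta$. Write $Z=\sigma(G)$ with $G\sim\cN(0,1)$, and for $\lambda>\tau:=\sup\operatorname{supp}(Z)=\bar M$ set
\[
\psi_\delta(\lambda)=\lambda\Big(\frac1\delta+\E\Big[\frac{Z}{\lambda-Z}\Big]\Big),\qquad \phi(\lambda)=\lambda\,\E\Big[\frac{Z G^2}{\lambda-Z}\Big].
\]
Let $\bar\lambda_\delta$ be the point where $\psi_\delta$ attains its minimum, and let $\lambda^*_\delta$ be the solution of $\zeta_\delta(\lambda):=\psi_\delta(\max(\lambda,\bar\lambda_\delta))=\phi(\lambda)$. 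The cited result gives, almost surely, the following limits:
\begin{itemize}
\item $\lambda_2(D_n)\to\zeta_\delta(\bar\lambda_\delta)$, the right edge of the bulk;
\item $\lambda_1(D_n)\to\zeta_\delta(\lambda^*_\delta)$;
\item $\langle v_1(D_n),\thetastar\rangle^2\to \dfrac{\psi_\delta'(\lambda^*_\delta)}{\psi_\delta'(\lambda^*_\delta)-\phi'(\lambda^*_\delta)}$ whenever $\lambda^*_\delta>\bar\lambda_\delta$.
\end{itemize}
To get the finite-$n$ statement with probability $1-1/n$, I would not rely on almost-sure limits. Instead I would track the quantitative version of the same argument: concentration of the resolvent quadratic forms $q^\top(\lambda I-P)^{-1}q$ and of the bulk edge at rate $O(n^{-c})$. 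For $n,d$ large these errors are absorbed into the $C\delta^{-1}$ and $C\delta^{-1/2}$ terms.

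\textbf{Step 1 (eigenvalues).} Expand $\E[Z/(\lambda-Z)]=\sum_{k\ge1}\E[Z^k]\lambda^{-k}$. This series converges for $\lambda>\bar M$. Since $\E[Z]=1+O(e^{-M/3})$ by the computation of $c^2_\sigma$ in Proposition \ref{prop:matrixconc}, the minimizer satisfies $\bar\lambda_\delta\asymp\sqrt{\delta}$. Evaluating there gives the Marchenko--Pastur-type edge $\zeta_\delta(\bar\lambda_\delta)=\E[Z]+O(\delta^{-1/2})$, which yields $|\lambda_2(D_n)-1|\le C\delta^{-1/2}$.

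For the spike, $\phi(\lambda)=\E[ZG^2]+\E[Z^2G^2]/\lambda+\dots$ with $\E[ZG^2]=c^1_\sigma=3+O(e^{-M/3})$. Solving $\psi_\delta(\lambda)=\phi(\lambda)$ gives $\lambda^*_\delta=3\delta+O(1)$, which lies well beyond $\bar\lambda_\delta$, so the spike has separated from the bulk. It also gives $\lambda_1(D_n)\to 3+O(\delta^{-1})$.

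This is where the constraint $\delta\ll e^{M/30}$ enters. Every expansion requires $\lambda\gg\bar M\approx 2M$, and the truncation errors $O(e^{-M/3})$ in $\E[Z]$ and $\E[ZG^2]$ must be $O(\delta^{-1})$. Both conditions hold because $\delta\le e^{M/30}$. The lower growth condition $\delta\gg M^2$ ensures that $\lambda^*_\delta\asymp\delta$ dominates the moment ratios $\E[Z^{k+1}G^2]/\E[Z^kG^2]\lesssim M$.

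\textbf{Step 2 (overlap).} This is the step I expect to be the main obstacle, because it needs a matching \emph{lower} bound on $1-\text{overlap}^2$, not just an upper bound. Set $\lambda=\lambda^*_\delta$. Then
\[
\psi'_\delta(\lambda)=\frac1\delta-\E\Big[\frac{Z^2}{(\lambda-Z)^2}\Big],\qquad \phi'(\lambda)=-\E\Big[\frac{Z^2G^2}{(\lambda-Z)^2}\Big],
\]
so
\[
1-\text{overlap}^2=\frac{-\phi'(\lambda)}{\psi'_\delta(\lambda)-\phi'(\lambda)}.
\]
With $\lambda\asymp3\delta$ we have $-\phi'(\lambda)\ge c\,\E[Z^2G^2]/\lambda^2\ge c'\delta^{-2}$; here I would bound $\E[Z^2G^2]\ge\E[G^6\bbone\{G^2\le M\}]\ge 1$ for $M$ large. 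On the other side, $\psi_\delta'(\lambda)\le\delta^{-1}$. Hence
\[
1-\text{overlap}^2\ge \frac{c'\delta^{-2}}{\delta^{-1}+C\delta^{-2}}\ge c\,\delta^{-1}.
\]
Since $|\langle v_1,\thetastar\rangle|\le1-\tfrac12(1-\langle v_1,\thetastar\rangle^2)$, this gives $|\langle v_1(D_n),\thetastar\rangle|\le 1-C\delta^{-1}$ in the limit. The finite-$n$ fluctuations are $o(\delta^{-1})$ for $n$ large, which yields the claim with probability $1-1/n$.

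The care needed is in verifying the sign of $\psi_\delta'(\lambda^*_\delta)>0$, which holds since $\lambda^*_\delta>\bar\lambda_\delta$, and in keeping every constant independent of $M$ along the allowed range of $\delta$.
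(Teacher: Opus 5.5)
Your route is the paper's: apply Lemma~2 of \citep{mondelli2018fundamental}, then expand $\psi_\delta$ and $\phi$ for $\lambda\gg M$. Your moment series $\sum_k \mathbb{E}[Z^k]\lambda^{-k}$ is just a tidier version of the paper's splitting of the integrals at $\sqrt M$. There are a few things to fix.

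\textbf{Step~1 slip.} The spike location is $\lambda^*_\delta=2\delta+O(1)$, not $3\delta+O(1)$. Since $\psi_\delta(\lambda)=\lambda/\delta+\mathbb{E}[Z]+O(1/\lambda)$ and $\phi(\lambda)=\mathbb{E}[ZG^2]+O(1/\lambda)$, the equation $\psi_\delta=\phi$ reads $\lambda/\delta=\mathbb{E}[ZG^2]-\mathbb{E}[Z]+O(1/\lambda)\approx 3-1=2$. With $3\delta$ you would get $\lambda_1(D_n)\to 4$, which contradicts your own next line. Step~2 only uses $\lambda^*_\delta\asymp\delta$, so nothing downstream breaks.

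\textbf{Two smaller points in Step~1.} It is $\bar\lambda_\delta\approx\sqrt{3\delta}\gg M$, not $\lambda^*_\delta$, that forces $\delta\gg M^2$. You also skip checking the hypotheses of that lemma, namely that $\mathbb{E}[Z/(\lambda-Z)^2]$ and $\mathbb{E}[ZG^2/(\lambda-Z)]$ diverge as $\lambda\downarrow\bar M$. The paper gets this from the atom of $Z$ at $\bar M$.

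\textbf{Overlap direction.} You and the paper prove opposite inequalities. The paper shows $-\phi'(\lambda^*_\delta)=O(\delta^{-2})$ and $\psi_\delta'(\lambda^*_\delta)=\delta^{-1}-O(\delta^{-2})$. That gives overlap $\ge 1-O(\delta^{-1})$, after which the paper simply writes the ``$\le$'' of the statement.

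Your lower bound $-\phi'(\lambda^*_\delta)\ge \mathbb{E}[Z^2G^2]/(\lambda^*_\delta)^2\gtrsim\delta^{-2}$ is what the literal ``$\le$'' requires, and it is correct. However, the direction actually used when this proposition replaces Proposition~\ref{prop:matrixconc} is the lower bound, so the ``$\le$'' is presumably a typo. You should add the matching one-line bounds $-\phi'(\lambda)\le \mathbb{E}[Z^2G^2]/(\lambda-\bar M)^2\le 15/(\lambda-2M)^2$ and $\mathbb{E}[Z^2/(\lambda-Z)^2]\le 3/(\lambda-2M)^2$, both using $0\le Z\le G^2$. Together with yours they give overlap $1-\Theta(\delta^{-1})$.

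\textbf{Probability $1-1/n$.} You are right that almost-sure limits do not by themselves give probability $1-1/n$. The paper's appeal to Borel--Cantelli does not supply a rate either. But your replacement, quantitative concentration of $q^\top(\lambda I-P)^{-1}q$ and of the bulk edge, is only asserted and is not routine. The weights $y_i$ enter $D_n$ in a way that is not Lipschitz with a small constant. As written, the finite-$n$ probability bound rests on a step you have not carried out.
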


Before turning to the proof of the result, we note that the arguments of Section \ref{app:pfmainpos} remain valid, if Proposition \ref{prop:matrixconc} is replaced with Proposition \ref{prop:topspecconv}. This substitution amounts to replacing the constants $c_\sigma^1$ by $3$, $c_\sigma^2$ by $1$, and adjusting the probability statements accordingly.

\begin{proof}
    Note that throughout the proof, to simplify exposition, we will rely on the big $O$ notation for $M$ and $\delta$. This is justified by both $M$ and $\delta$ being taken large enough (although still independent of $n$ and $d$).
    
    As we rely on \citep[Lemma 2]{mondelli2018fundamental}, we start by reintroducing some notation therein. First, since $x_i$ is isotropic Gaussian, without loss of generality we can assume that $\thetastar=e_1$, $e_1$ being the first element of the canonical basis. Then, we consider random variables that follow the same distribution as the data from (\ref{eq:model}) in the direction of $e_1$, that is,
    \[
        X\sim \cN(0,1),\quad Y \sim \sigma(X).
    \]
    Note that the pre-processing function $\cT$ introduced in \citep{mondelli2018fundamental}, is the identity in our setting. We therefore omit further explicit reference to it and work directly with $Y =\sigma(X)$. Let \( \tau \) be the supremum of the support of $Y$, i.e.,
    \begin{equation}
    \tau = \inf \{ y : \mathbb{P}(Y \le y) = 1 \}.
    \end{equation}
    For \( \lambda \in (\tau, \infty) \) and \( \delta \in (0, \infty) \), define
    \begin{equation}
    \phi(\lambda) = \lambda \cdot \mathbb{E} \left\{ \frac{Y \cdot X^2}{\lambda - Y} \right\},
    \end{equation}
    and
    \begin{equation}
    \psi_\delta(\lambda) = \lambda \left( \frac{1}{\delta} + \mathbb{E} \left\{ \frac{Y}{\lambda - Y} \right\} \right).
    \end{equation}
    Note that \( \phi(\lambda) \) is a monotone non-increasing function and that \( \psi_\delta(\lambda) \) is a convex function.
    Let \( \bar{\lambda}_\delta \) be the point at which \( \psi_\delta \) attains its minimum, i.e.,
    \begin{equation}
    \bar{\lambda}_\delta = \arg \min_{\lambda \ge \tau} \psi_\delta(\lambda).
    \end{equation}
    For \( \lambda \in (\tau, \infty) \), define also
    \begin{equation}
    \zeta_\delta(\lambda) = \psi_\delta(\max(\lambda, \bar{\lambda}_\delta)).
    \end{equation}
    Directly from the properties of $\varphi$, 
    one gets
    \[
        \tau = \sup_{z\in\bbR} \sigma(z) = \bar{M} =  \int_0^{2M} \varphi(u) du,
    \]
    from which it follows that $M<\tau<2M$.
    First, we  prove that the assumptions of \citep[Lemma 2]{mondelli2018fundamental} are satisfied, that is,
    \[
        \lim_{\lambda\to\tau^+} \expt{\frac{Y}{(\lambda-Y)^2}} =  \lim_{\lambda\to\tau^+} \expt{\frac{YX^2}{\lambda-Y}} = +\infty.
    \]
    Since $\sigma(x)\geq 0$, for $\lambda> \tau$, it holds that
    \begin{align*}
        \expt{\frac{YX^2}{\lambda-Y}} = \int_{\bbR} \frac{\sigma(x)x^2}{\lambda - \sigma(x)} \frac{e^{-x^2/2}}{\sqrt{2\pi}}dx&\geq  \int_{\sqrt{2M}}^{\infty} \frac{\sigma(x)x^2}{\lambda - \sigma(x)} \frac{e^{-x^2/2}}{\sqrt{2\pi}}dx\\
        &=\frac{\tau}{\lambda - \tau} \int_{\sqrt{2M}}^{\infty} x^2\frac{e^{-x^2/2}}{\sqrt{2\pi}}dx\\
        & =  \frac{\tau}{\lambda - \tau} \cdot c_+,
    \end{align*}
    where in this case $c_+ = \int_{\sqrt{2M}}^{\infty} x^2\frac{e^{-x^2/2}}{\sqrt{2\pi}}dx$ is a positive constant. 
    From this directly follows 
    \[
        \lim_{\lambda\to\tau^+} \expt{\frac{YX^2}{\lambda-Y}} \geq \lim_{\lambda\to\tau^+} \frac{\tau}{\lambda - \tau} \cdot c_+ = +\infty,
    \]
    as $\tau>M>0$. Similarly, one gets for some $c_+>0$
    \[
        \expt{\frac{Y}{(\lambda-Y)^2}} \geq \frac{\tau}{(\lambda-\tau)^2} \cdot c_+,
    \]
    and hence,
    \[
        \lim_{\lambda\to\tau^+} \expt{\frac{Y}{(\lambda-Y)^2}} \geq  \lim_{\lambda\to\tau^+}\frac{\tau}{(\lambda-\tau)^2} \cdot c_+  = +\infty.
    \]
    \paragraph{Calculating $\lambda^*_\delta$.}
    Let $\lambda^*_\delta$ be defined as in \citep[Lemma 2]{mondelli2018fundamental}, i.e., the unique solution $\lambda>\tau$ to
    \[
    \zeta_\delta(\lambda) = \phi(\lambda).
    \]
    We will then find it by looking at the solution to the equation 
    \begin{equation}\label{eq:fpeq}
    \phi(\lambda) = \psi_\delta(\lambda),
    \end{equation}
    for $\lambda\geq\bar{\lambda}_\delta\geq\tau$.
    This is the same as finding the largest $\lambda$ for which it holds
    \[
        \mathbb{E} \left\{ \frac{Y\cdot (X^2-1)}{\lambda - Y} \right\} = \frac{1}{\delta}.
    \]
    Let us denote the LHS of the previous equation as $f(\lambda)$. We have that
    \begin{align*}
        f(\lambda)&= \int_{\bbR} \frac{\sigma(x)(x^2-1)}{\lambda - \sigma(x)} \frac{e^{-x^2/2}}{\sqrt{2\pi}}dx\\
        & =\sqrt{\frac{2}{\pi}} \, \int_{0}^{\sqrt{M}} \frac{x^2(x^2-1)}{\lambda - x^2} e^{-x^2/2}dx + \sqrt{\frac{2}{\pi}} \, \int_{\sqrt{M}}^{\infty} \frac{\sigma(x)(x^2-1)}{\lambda - \sigma(x)} e^{-x^2/2}dx.
    \end{align*}
    We restrict the domain of search of $\lambda$ by assuming that $ e^{M/8}>\lambda>3M$, and we will verify that there is a solution to \eqref{eq:fpeq} in this interval later. 
    Then, for $x\geq \sqrt{M}$, the definition of $\sigma$ implies $M\leq \sigma(x)\leq 2M$, hence
    \begin{equation}\label{eq:sigmambound}
        \frac{M}{\lambda - M} \leq \frac{\sigma(x)}{\lambda - \sigma(x)}\leq \frac{2M}{\lambda-2M}.
    \end{equation}
    Using this bound, we obtain
    \[
        \frac{M}{\lambda - M}\cdot \sqrt{\frac{2}{\pi}} \,\int_{\sqrt{M}}^{\infty} (x^2-1) e^{-x^2/2}dx \leq \sqrt{\frac{2}{\pi}} \cdot \int_{\sqrt{M}}^{\infty} \frac{\sigma(x)(x^2-1)}{\lambda - \sigma(x)} e^{-x^2/2}dx\leq \frac{2M}{\lambda - 2M}\cdot \sqrt{\frac{2}{\pi}} \,\int_{\sqrt{M}}^{\infty} (x^2-1) e^{-x^2/2}dx.
    \]
    Consequently, we may write
    \[
        f(\lambda) = \sqrt{\frac{2}{\pi}} \, \int_{0}^{\sqrt{M}} \frac{x^2(x^2-1)}{\lambda - x^2} e^{-x^2/2}dx + r(\lambda,M) \cdot\sqrt{\frac{2}{\pi}} \,\int_{\sqrt{M}}^{\infty} (x^2-1) e^{-x^2/2}dx,
    \]
    where the coefficient $r(\lambda,M)$ satisfies $\frac{M}{\lambda - M}\leq r(\lambda,M)\leq \frac{2M}{\lambda - 2M}$. Next, note that
    \[
    \frac{x^2(x^2-1)}{\lambda-x^2} = (1-\lambda-x^2)+\frac{\lambda(\lambda-1)}{\lambda-x^2},
    \]
    so we can use this to simplify integrals showing up in $f(\lambda)$. Namely, we use the additivity of the integral to write
    \begin{equation*}
        f(\lambda) = \sqrt{\frac{2}{\pi}}\sbr{(1-\lambda)B_0(M) - B_2(M)+\lambda(\lambda-1)H(\lambda,M)} + r(\lambda,M)\cdot\sqrt{\frac{2}{\pi}}\sbr{T_2(M)-T_0(M)},
    \end{equation*}
    where the ``easy" integrals are
    \begin{align*}
        B_0(M) &= \int_0^{\sqrt{M}} e^{-x^2/2}dx = \sqrt{\frac{\pi}{2}} \erf(\sqrt{M/2}),\\
        B_2(M) &= \int_0^{\sqrt{M}} x^2e^{-x^2/2}dx = \sqrt{\frac{\pi}{2}} \erf(\sqrt{M/2}) - \sqrt{M} e^{-M/2},\\
        T_0(M) &= \int_{\sqrt{M}}^{\infty} e^{-x^2/2}dx = \sqrt{\frac{\pi}{2}} \erfc(\sqrt{M/2}),\\
        T_2(M) &= \int_{\sqrt{M}}^{\infty} x^2e^{-x^2/2}dx = \sqrt{M} e^{-M/2} + \sqrt{\frac{\pi}{2}} \erfc(\sqrt{M/2}),
    \end{align*}
    and the ``hard" one is
    \[
        H(\lambda,M) = \int_0^{\sqrt{M}}\frac{e^{-x^2/2}}{\lambda-x^2}dx.
    \]
    As we assume that $M$ is large enough, we can use the asymptotic notation $\Omega(1), O(1), \Theta(1)$ (intended for large $M$). Combining the above calculations, bounds, and the estimate $\mathrm{erfc}(\sqrt{M/2}) = \Theta\rbr{e^{-M/2}/\sqrt{ M}}$, we get
    \begin{equation}\label{eq:flaminter}
    \begin{aligned}
        f(\lambda) &= -\lambda + \sqrt{\frac{2}{\pi}}\lambda(\lambda-1) H(\lambda,M) + r(\lambda,M) \Theta\rbr{\sqrt{M} e^{-M/2}}+\Theta\rbr{e^{-M/2}/\sqrt{M}}\\
        & =-\lambda + \sqrt{\frac{2}{\pi}}\lambda(\lambda-1) H(\lambda,M) + O(e^{-M/4})
    \end{aligned}
    \end{equation}
    Let us turn our attention now to $H(\lambda,M)$. 
    Since $|x| \le \sqrt{M} < \sqrt{\lambda}$, we can uniformly expand
    \[
    \frac{1}{\lambda - x^{2}}
    = \frac{1}{\lambda} \sum_{k=0}^{\infty} \left(\frac{x^{2}}{\lambda}\right)^{k}.
    \]
    Integrating term-wise gives
    \[
    H(\lambda, M)
    = \sum_{k=0}^{\infty} \frac{1}{\lambda^{k+1}}
    \int_{0}^{\sqrt{M}} x^{2k} e^{-x^{2}/2}\,dx.
    \]
    Note that the function $x\to x^{2k}e^{-x^2/2}$ is increasing on $[0,\sqrt{2k}]$ and decreasing on $[\sqrt{2k},\infty]$. Thus, if $k\geq M/2$, it holds that
    \[
        \int_{0}^{\sqrt{M}} x^{2k} e^{-x^{2}/2}\,dx<\sqrt{M}e^{-M/2}M^k.
    \]
    From this, it follows that
    \[ 
        \sum_{k=M/2}^{\infty} \frac{1}{\lambda^{k+1}}
    \int_{0}^{\sqrt{M}} x^{2k} e^{-x^{2}/2}\,dx = O(e^{-M/2}).
    \]
    We treat the case $k< M/2$ by completing the integral to $\infty$, as it has a closed form that we can more easily bound. Namely, it holds
    \[
    \int_{0}^{\sqrt{M}} x^{2k} e^{-x^{2}/2}\,dx
    \leq \int_{0}^{\infty} x^{2k} e^{-x^{2}/2}\,dx ,
    \]
    and
    \begin{equation}\label{eq:gausmoments}
        \int_{0}^{\infty} x^{2k} e^{-x^{2}/2}\,dx
    = 2^{k-\frac{1}{2}}\Gamma\!\left(k+\tfrac{1}{2}\right)
    = (2k-1)!!\,\sqrt{\frac{\pi}{2}}.
    \end{equation}
    Next, using Stirling's upper bound, we have
    \[
        (2k-1)!!\,\sqrt{\frac{\pi}{2}} \leq 10 \rbr{\frac{2k}{e}}^k.
    \]
    Therefore, 
    \[
        \sum_{k=4}^{M/2-1} \frac{1}{\lambda^{k+1}}
    \int_{0}^{\sqrt{M}} x^{2k} e^{-x^{2}/2}\,dx  \leq c \sum_{k=4}^{M/2-1} \frac{1}{\lambda^{k+1}}\rbr{\frac{2k}{e}}^k = c \sum_{k=4}^{M/2-1} \frac{1}{\lambda}\rbr{\frac{2k}{e \lambda}}^k = O\rbr{\frac{1}{\lambda^4}}.
    \]
Lastly, via integration by parts, one can get that 
    \[
    \int_{\sqrt{M}}^{\infty} x^{2k} e^{-x^{2}/2}\,dx\leq c M^{k-1/2}e^{-M/2},
    \]
    from which it follows that, for $k\leq 3$,
    \[
    \frac{1}{\lambda^{k+1}}\int_{0}^{\sqrt{M}} x^{2k} e^{-x^{2}/2}\,dx = \frac{1}{\lambda^{k+1}}(2k-1)!!\,\sqrt{\frac{\pi}{2}} - O(e^{-M/2}).
    \]
    Thus, by putting all of this together and calculating $(2k-1)!!$ explicitly for $k\leq3$, we get
    \[
    H(\lambda, M)
    = \sqrt{\frac{\pi}{2}}\!\left(
    \frac{1}{\lambda}+ \frac{1}{\lambda^2} + \frac{3}{\lambda^3}
    \right)
    + O\!\left(\frac{1}{\lambda^{4}}\right)
    + O\!\rbr{e^{-M/2}}.
    \]
    Plugging this back into the expression (\ref{eq:flaminter}) for $f(\lambda)$, we have that
    \begin{align*}
        f(\lambda) &= -\lambda + \lambda(\lambda-1)\rbr{\frac{1}{\lambda}+\frac{1}{\lambda^2}+\frac{3}{\lambda^3}} + O\rbr{\frac{1}{\lambda^2}} + O(e^{-M/4})\\
        &=  \frac{2}{\lambda} + O\rbr{\frac{1}{\lambda^2}} + O(e^{-M/4}).
    \end{align*}
    As we restrict our domain of search to $\lambda <  e^{M/8}$, we get that 
    \[
        f(\lambda) = \frac{2}{\lambda} + O\rbr{\frac{1}{\lambda^2}}.
    \]
    Solving $f(\lambda) = \frac{1}{\delta}$ we get that there is a unique solution for $\lambda \in (3M, e^{M/8})$, which is
    \begin{equation}\label{eq:t1lamdelstar}
        \lambda^*_\delta = 2\delta + O_\delta(1).
    \end{equation}
    Note that this solution is in the interval $(3M, e^{M/8})$, as assumed during the argument above.
    \paragraph{Calculating $\bar{\lambda}_\delta$.} Let us turn our attention to $\bar{\lambda}_\delta$. Since $\psi_\delta(\lambda)$ is a convex function,  $\bar{\lambda}_\delta$ is the unique solution to
    \[
        \psi_\delta'(\lambda) = 0.
    \]
    Writing this condition out, we have that it is equivalent to solving
    \begin{equation}\label{eq:psiprimzero}
        \int_\bbR\frac{\rbr{\sigma(x)}^2}{(\lambda - \sigma(x))^2} \frac{e^{-x^2/2}}{\sqrt{2\pi}}dx = \frac{1}{\delta}.
    \end{equation}
    The LHS can be further expressed as
    \begin{align*}
        \int_\bbR\frac{\rbr{\sigma(x)}^2}{(\lambda - \sigma(x))^2} \frac{e^{-x^2/2}}{\sqrt{2\pi}}dx = \sqrt{\frac{2}{\pi}} \cdot \int_{0}^{\sqrt{M}} \frac{x^4}{(\lambda - x^2)^2} e^{-x^2/2}dx + \sqrt{\frac{2}{\pi}} \cdot \int_{\sqrt{M}}^{\infty} \frac{\sigma(x)^2}{(\lambda - \sigma(x))^2} e^{-x^2/2}dx.
    \end{align*}
    We assume $e^{M/10}>\lambda>3M$ and we will verify later that the desired solution lies in this interval. Using the bound \eqref{eq:sigmambound}, the tail integral satisfies
    \begin{align*}
        \sqrt{\frac{2}{\pi}}\int_{\sqrt{M}}^{\infty}
    \frac{\sigma(x)^{2}}{(\lambda - \sigma(x))^{2}} e^{-x^{2}/2}\,dx
    &= \tilde{r}(\lambda,M)\,
    \mathrm{erfc}\!\left( \sqrt{\frac{M}{2}} \right),
    \end{align*}
     where the coefficient $\tilde{r}(\lambda,M)$ satisfies $\frac{M^2}{(\lambda - M)^2}\leq \tilde{r}(\lambda,M)\leq \frac{4M^2}{(\lambda - 2M)^2}$.
    Turning to the other integral, as $|x| \le \sqrt{M} < \sqrt{\lambda}$, we can expand
    \[
    \frac{1}{(\lambda - x^{2})^{2}}
    = \frac{1}{\lambda^{2}} \frac{1}{(1 - x^{2}/\lambda)^{2}}
    = \frac{1}{\lambda^{2}} \sum_{k=0}^{\infty} (k+1)
    \left( \frac{x^{2}}{\lambda} \right)^{k}.
    \]
    Integrating term wise yields
    \[
    \sqrt{\frac{2}{\pi}} \int_{0}^{\sqrt{M}}
    \frac{x^{4}}{(\lambda - x^{2})^{2}} e^{-x^{2}/2}\,dx
    = \sqrt{\frac{2}{\pi}}\sum_{k=0}^{\infty} \frac{k+1}{\lambda^{k+2}}
    \int_{0}^{\sqrt{M}} x^{2k+4} e^{-x^{2}/2}\,dx.
    \]
    As before, the function $x\to x^{2k+4}e^{-x^2/2}$ is increasing on $[0,\sqrt{2k+4}]$, so if $k \ge M/2-2$, then it holds
    \[
        \int_{0}^{\sqrt{M}} x^{2k+4} e^{-x^{2}/2}\,dx<\sqrt{M}e^{-M/2}M^{k+2}.
    \]
    We assume $\lambda>3M$ and we will verify later that the desired solution lies in this interval. Then, 
    \[ 
        \sum_{k=M/2-2}^{\infty} \frac{k+1}{\lambda^{k+2}}
    \int_{0}^{\sqrt{M}} x^{2k+4} e^{-x^{2}/2}\,dx = O(e^{-M/2}).
    \]
For $k< M/2-2$, we write
    \[
    \sqrt{\frac{2}{\pi}} \int_{0}^{\sqrt{M}}
    x^{2k+4} e^{-x^{2}/2}\,dx \leq \sqrt{\frac{2}{\pi}}\int_{0}^{\infty} x^{2k+4} e^{-x^{2}/2}\,dx= (2k + 3)!!
    \]
    Then, by using the bound from Sterling formula, for $k \ge 2$ we have
    \[
        \sum_{k=2}^{M/2-3} \frac{k+1}{\lambda^{k+2}}
    \int_{0}^{\sqrt{M}} x^{2k+4} e^{-x^{2}/2}\,dx= O\rbr{\frac{1}{\lambda^4}}.
    \]
    Finally, for large $M$ and $k<2$, one can get
    \[ \frac{1}{\lambda^{k+2}}\int_{\sqrt{M}}^{\infty}
    x^{2k+4} e^{-x^{2}/2}\,dx = O(Me^{-M/2}),
    \]
    from which it follows
    \[
        \frac{1}{\lambda^{k+2}}\sqrt{\frac{2}{\pi}} \int_{0}^{\sqrt{M}}
    x^{2k+4} e^{-x^{2}/2}\,dx =\frac{1}{\lambda^{k+2}} (2k + 3)!! -  O(Me^{-M/2}).
    \]
    Calculating explicitly the double factorial for $k<2$ and combing the bounds, one gets
    \[
        \sqrt{\frac{2}{\pi}}\int_{0}^{\sqrt{M}} \frac{x^4}{(\lambda - x^2)^2} e^{-x^2/2}dx  = \rbr{
    \frac{3}{\lambda^{2}} + \frac{30}{\lambda^{3}}} + O\rbr{\frac{1}{\lambda^{4}}} 
    + O\rbr{M e^{-M/2}}.
    \]
    If we restrict our domain of search to $\lambda \leq  e^{M/10}$, we get that $\frac{1}{\lambda^4}\geq Me^{-M/2}$, so
    \[
        \sqrt{\frac{2}{\pi}}\int_{0}^{\sqrt{M}} \frac{x^4}{(\lambda - x^2)^2} e^{-x^2/2}dx  = \rbr{
        \frac{3}{\lambda^{2}} + \frac{30}{\lambda^{3}}} + O\rbr{\frac{1}{\lambda^{4}}} .
    \]
    Then, plugging all of this into (\ref{eq:psiprimzero}) gives
    \begin{equation}\label{eq:newpsiprimzero}
        \frac{3}{\lambda^{2}} + \frac{30}{\lambda^{3}} + O\rbr{\frac{1}{\lambda^4}}
        + \tilde{r}(\lambda,M) \,
        \mathrm{erfc}\!\left( \sqrt{\frac{M}{2}} \right)
        = \frac{1}{\delta}.
    \end{equation}
    Recall that $\mathrm{erfc}(\sqrt{M/2}) = \Theta \rbr{e^{-M/2}/\sqrt{ M}}$ and that $e^{-M/10}>\lambda>3M$, hence
    \[
        \tilde{r}(\lambda,M) \,
        \mathrm{erfc}\!\left( \sqrt{\frac{M}{2}} \right) = O\rbr{\frac{1}{\lambda^3}}.
    \]
Thus, the equation we want to solve is 
    \[
    \frac{3}{\lambda^{2}} + O\rbr{\frac{1}{\lambda^3}}=\frac{1}{\delta},
    \]
which gives
\begin{equation}\label{eq:t1lamdebardelta} \bar{\lambda}_\delta =
            \sqrt{3}\delta^{1/2}+O_{\delta}(1).
    \end{equation}
    Note that this solution is in the interval $(3M, e^{M/10})$, as assumed during the argument above.
    
\paragraph{Calculating overlap.}
From the formulas for $\lambda^*_\delta$ and $\bar{\lambda}_\delta$, i.e. \eqref{eq:t1lamdebardelta} and \eqref{eq:t1lamdelstar}, we can see that 
\[
    \lambda^*_\delta > \bar{\lambda}_\delta.
\]
Therefore, directly from \citep[Lemma 2]{mondelli2018fundamental} we get that there is weak recovery. Let us further explicitly calculate the overlap, which according to the previous Lemma is
\begin{equation}\label{eq:ovrelapconv}
    \abs{\inprod{v_1(D_n)}{\thetastar}}\asconv \frac{\psi_\delta'(\lambda_\delta^*)}{\psi_\delta'(\lambda_\delta^*) - \phi'(\lambda^*_\delta)}.
\end{equation}
Plugging in, we have
\[
    \psi_\delta'(\lambda_\delta^*)=\frac{1}{\delta}-\rbr{\sqrt{\frac{2}{\pi}} \cdot \int_{0}^{\sqrt{M}} \frac{x^4}{(\lambda_\delta^* - x^2)^2} e^{-x^2/2}dx + \sqrt{\frac{2}{\pi}}\cdot \int_{\sqrt{M}}^{\infty} \frac{\sigma(x)^2}{(\lambda_\delta^* - \sigma(x))^2} e^{-x^2/2}dx}.
\]
Recall that from (\ref{eq:t1lamdelstar}) it holds $\lambda_\delta^* = 2\delta+O_\delta(1)\gg 2M^2$. Thus, 
we can do an expansion for $x\in \sbr{0,\sqrt{M}}$ as
\[
    \frac{x^4}{(\lambda_\delta^* - x^2)^2} = \frac{1}{{\lambda_\delta^*}^2}\frac{x^4}{\rbr{1-x^2/\lambda_\delta^*}^2} = \frac{x^4}{{\lambda_\delta^*}^2}\Theta_{\lambda_\delta^*}(1).
\]
Plugging this in, using calculations of Gaussian moments in (\ref{eq:gausmoments}), and the bound in \eqref{eq:sigmambound} for $x\geq \sqrt{M}$, yields
\begin{align*}
    \psi_\delta'(\lambda_\delta^*)&= \frac{1}{\delta} - \rbr{\frac{1}{{\lambda_\delta^*}^2}\cdot\sqrt{\frac{2}{\pi}}  \int_{0}^{\sqrt{M}} \frac{x^4}{\rbr{1-x^2/\lambda_\delta^*}^2} e^{-x^2/2}dx + \sqrt{\frac{2}{\pi}} \cdot \int_{\sqrt{M}}^{\infty} \frac{\sigma(x)^2}{(\lambda_\delta^* - \sigma(x))^2} e^{-x^2/2}dx}\\
    &= \frac{1}{\delta} - \rbr{ \frac{1}{{\lambda_\delta^*}^2} \cdot O_{\lambda_\delta^*}(1)+ O(e^{-M/2})}\\
    &= \frac{1}{\delta} - \rbr{O(\delta^{-2})+O(e^{-M/2})}\\
    &= \frac{1}{\delta} - O(\delta^{-2}).
\end{align*}
Similarly,
\begin{align*}
    -\phi'(\lambda^*_\delta) &= \sqrt{\frac{2}{\pi}}  \cdot \int_{0}^{\sqrt{M}} \frac{x^6}{(\lambda_\delta^* - x^2)^2} e^{-x^2/2}dx + \sqrt{\frac{2}{\pi}}  \cdot \int_{\sqrt{M}}^{\infty} \frac{\sigma(x)^2x^2}{(\lambda_\delta^* - \sigma(x))^2} e^{-x^2/2}dx\\
    &= \rbr{\frac{1}{{\lambda_\delta^*}^2}\cdot\sqrt{\frac{2}{\pi}}  \int_{0}^{\sqrt{M}} \frac{x^6}{\rbr{1-x^2/\lambda_\delta^*}^2} e^{-x^2/2}dx + \sqrt{\frac{2}{\pi}} \cdot \int_{\sqrt{M}}^{\infty} \frac{\sigma(x)^2x^2}{(\lambda_\delta^* - \sigma(x))^2} e^{-x^2/2}dx}\\
    &= O(\delta^{-2}) + O(e^{-M/2})\\
    &=O(\delta^{-2}).
\end{align*}
Therefore, it holds that 
\begin{align*}
    \frac{\psi_\delta'(\lambda_\delta^*)}{\psi_\delta'(\lambda_\delta^*) - \phi'(\lambda^*_\delta)} &= \frac{\delta^{-1} - O(\delta^{-2})}{\delta^{-1}  + O(\delta^{-2})} \\
    &= 1 - O\rbr{\delta^{-1}}.
\end{align*}
Plugging this into (\ref{eq:ovrelapconv}) proves that
\[
    \abs{\inprod{v_1(D_n)}{\thetastar}}\asconv 1 - O(\delta^{-1}).
\]
By Borel-Cantelli lemma we get that for $n$ large enough
\[
    \abs{\inprod{v_1(D_n)}{\thetastar}}\leq 1 - C\delta^{-1},
\]
with probability at least $1-\frac{1}{n}$, and some constant $C$.
\paragraph{Calculating top eigenvalues.}
Let us turn our attention to the top eigenvalues of $D_n$. Again, by \citep[Lemma 2]{mondelli2018fundamental}, we have that
\begin{equation}\label{eq:eigenconvt1}
    \begin{aligned}
        \lambda_1(D_n) &\asconv \zeta_\delta(\lambda^*_\delta) = \psi_\delta(\lambda^*_\delta),\\
    \lambda_2(D_n) &\asconv \zeta_\delta(\bar{\lambda}_\delta)=\psi_\delta(\bar{\lambda}_\delta).
    \end{aligned}
\end{equation}
As before, recall it holds $\lambda_\delta^* = 2\delta+O_\delta(1)\gg 2M^2$. Thus, we can do an expansion argument similar to the one done following (\ref{eq:flaminter}), which yields
\begin{align*}
    \psi_\delta(\lambda^*_\delta) &= \lambda^*_\delta\rbr{\frac{1}{\delta}+\sqrt{\frac{2}{\pi}}  \cdot \int_{0}^{\sqrt{M}} \frac{x^2}{\lambda_\delta^* - x^2} e^{-x^2/2}dx + \sqrt{\frac{2}{\pi}}  \cdot \int_{\sqrt{M}}^{\infty} \frac{\sigma(x)}{\lambda_\delta^* - \sigma(x)} e^{-x^2/2}dx}\\
    &= \lambda^*_\delta\rbr{\frac{1}{\delta}+\frac{1}{\lambda^*_\delta}\cdot\sqrt{\frac{2}{\pi}}\int_{0}^{\sqrt{M}} \frac{x^2}{1 - x^2/\lambda^*_\delta} e^{-x^2/2}dx + \sqrt{\frac{2}{\pi}}  \cdot \int_{\sqrt{M}}^{\infty} \frac{\sigma(x)}{\lambda_\delta^* - \sigma(x)} e^{-x^2/2}dx}\\
    &= \lambda^*_\delta\rbr{\frac{1}{\delta}+\frac{1}{\lambda^*_\delta}+O\rbr{\frac{1}{{\lambda^*_\delta}^2}} + O(e^{-M/2})}\\
    &= 2\delta\rbr{\frac{1}{\delta}+\frac{1}{2\delta}+O(\delta^{-2})+O(e^{-M/2})}+O(\delta^{-1})\\
    &= 3 + O(\delta^{-1}),
\end{align*}
where we have again used \eqref{eq:sigmambound} to get a bound on the tail integral.
Similarly, recalling $\bar{\lambda}_\delta =
            \sqrt{3}\delta^{1/2}+O_{\delta}(1)$, yields
\begin{align*}
    \psi_\delta(\bar{\lambda}_\delta) &= \bar{\lambda}_\delta\rbr{\frac{1}{\delta}+\sqrt{\frac{2}{\pi}}  \cdot \int_{0}^{\sqrt{M}} \frac{x^2}{\bar{\lambda}_\delta - x^2} e^{-x^2/2}dx + \sqrt{\frac{2}{\pi}}  \cdot \int_{\sqrt{M}}^{\infty} \frac{\sigma(x)}{\bar{\lambda}_\delta - \sigma(x)} e^{-x^2/2}dx}\\
    &= \bar{\lambda}_\delta\rbr{\frac{1}{\delta}+\frac{1}{\bar{\lambda}_\delta}+O\rbr{\frac{1}{\bar{\lambda}_\delta^2}}+O\rbr{e^{-M/2}}}\\
    &= \sqrt{3}\delta^{1/2}\rbr{\frac{1}{\delta}+\frac{1}{\sqrt{3}\delta^{1/2}}+O(\delta^{-1})}+O(\delta^{-1/2})\\
    &= 1 + O(\delta^{-1/2}).
\end{align*}
Plugging this into (\ref{eq:eigenconvt1}) yields
\begin{align*}
        \lambda_1(D_n) &\asconv 3 + O(\delta^{-1}),\\
    \lambda_2(D_n) &\asconv 1 + O(\delta^{-1/2}).
\end{align*}
By Borel-Cantelli lemma we get that for $n$ large enough
\[
    \abs{\lambda_1(D_n)-3}\leq C\delta^{-1},\quad \abs{\lambda_2(D_n)}\leq  1+C\delta^{-1/2}.
\]
with probability at least $1-\frac{1}{n}$, and some constant $C$.
\end{proof}

\section{Proof of Theorem \ref{thm:strong}}\label{app:proofthmstrong}

\subsection{Angle reduction during the first phase}\label{sec:redangle}

In the first phase of gradient descent, we have that $\theta_t$ gets high angular overlap with $\thetastar$. We formalize the claim in the following main proposition of this subsection.
\begin{proposition}\label{prop:phaseIpropangle}
Consider the truncated quadratic activation in \eqref{eq:sigmadef-noc}, with large enough $M$. Let $\theta_t$ be obtained from the gradient descent iteration in \eqref{eq:gditer}, with learning rate $\eta<1/100$. Assume that the initialization is sampled uniformly from the sphere of radius $r_0=d^{-15}$, that is, $\theta_0\sim {\rm Unif}(r_0\,\cS^{d-1})$. 
Denote by $\phi_t\coloneqq\angle(\theta_t,\theta^\star) = \arccos\rbr{\frac{\abs{\inprod{\theta_t}{\thetastar}}}{\norm{\theta_t}_2\norm{\thetastar}_2}}$. Let $n, d$ be large enough, and fix $\delta=n/d$ such that $\delta\ge CM^2$. Then, with probability at least $1-\frac{1}{d^{2}}$, 
\begin{equation}\label{eq:phitbound}
\phi_{t^*_{\measuredangle}} \leq C(e^{-M/2}+M\delta^{-1/2}),
    \end{equation}
    where $t^*_{\measuredangle}=  \frac{3\log d}{\log(1+1.99\eta)}$ and $C, c>0$ are universal constants (independent of $n, d, M, \delta$).
\end{proposition}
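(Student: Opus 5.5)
The plan is to show that throughout $0\le t\le t^*_{\measuredangle}$ the iterate is so small that no sample hits the truncation. In that regime gradient descent is, up to a negligible cubic perturbation, a power iteration on $I+2\eta D_n$, where $D_n=\frac1n\sum_i y_ix_ix_i^\top$ is the matrix of Proposition \ref{prop:matrixconc}. The angle bound then follows from the spectral gap of $D_n$ together with a Davis--Kahan bound relating $v_1(D_n)$ to $\thetastar$.

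\emph{Step 1: linearization.} Suppose $\max_i\langle x_i,\theta\rangle^2\le M$. Then $\sigma(\langle x_i,\theta\rangle)=\langle x_i,\theta\rangle^2$ and \eqref{eq:emp-grad} becomes
\[
\widehat{\mathcal G}(\theta)=-2D_n\theta+\frac2n\sum_i\langle x_i,\theta\rangle^3x_i .
\]
Hence $\theta_{t+1}=(I+2\eta D_n)\theta_t+\eta e_t$, and on the event $\max_i\|x_i\|\le 2\sqrt d$ (probability $1-e^{-cd}$) the error satisfies $\|e_t\|\le Cd^2\|\theta_t\|^3$. By Proposition \ref{prop:matrixconc}, with probability $1-e^{-d}$ we have $\lambda_1(D_n)\in[3-\epsilon,3+\epsilon]$ and $\lambda_j(D_n)\le 1+\epsilon$ for $j\ge2$, with $\epsilon=C(e^{-M/3}+M\delta^{-1/2})$ small because $\delta\ge CM^2$ and $M$ is large. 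Write $\theta_t=a_tv_1+b_t$ with $b_t\perp v_1:=v_1(D_n)$.

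\emph{Step 2: bootstrap on the norm.} By induction on $t\le t^*_{\measuredangle}$ I would show $\|\theta_t\|\le 2r_0(1+2\eta(3+\epsilon))^t$. At $t=t^*_{\measuredangle}$ this is at most
\[
2r_0\exp\bigl(3\log d\cdot\log(1+6.01\eta)/\log(1+1.99\eta)\bigr)\le r_0d^{9.2}=d^{-5.8},
\]
using $\eta<1/100$. Consequently $|\langle x_i,\theta_t\rangle|\le 2\sqrt d\,d^{-5.8}\ll\sqrt M$, so the truncation is never active and the linearization of Step 1 is valid at every step. The relative size of the error is $\eta\|e_t\|/\|\theta_t\|\le C\eta d^2\|\theta_t\|^2\le C\eta d^{-9}$, and this summed over $t^*_{\measuredangle}=O(\log d/\eta)$ steps is $o(1)$, which closes the induction.

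\emph{Step 3: initial overlap and power iteration.} Since $\theta_0$ is uniform on the sphere of radius $r_0$ and independent of $D_n$, anticoncentration of $\langle\theta_0/r_0,v_1\rangle$ (which is of order $d^{-1/2}$) gives $|a_0|\ge r_0d^{-2}$ with probability $1-Cd^{-3/2}\cdot d^{-0}$. I would tune this threshold, for example $|a_0|\ge r_0d^{-2.6}$, so that the failure probability is at most $1/(2d^2)$. I then iterate the two components separately:
\[
|a_{t+1}|\ge(1+2\eta(3-\epsilon))|a_t|-\eta\|e_t\|,\qquad \|b_{t+1}\|\le(1+2\eta(1+\epsilon))\|b_t\|+\eta\|e_t\|.
\]
The errors stay relatively negligible, since they are controlled by the bound $\|e_t\|\le Cd^{-9}\|\theta_t\|$ from Step 2 and the top component $|a_t|$ dominates $d^{-3}\|\theta_t\|$ after the bootstrap. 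This yields
\[
\frac{\|b_{t^*}\|}{|a_{t^*}|}\le 2d^{2.6}\Bigl(\frac{1+2.01\eta}{1+5.99\eta}\Bigr)^{t^*_{\measuredangle}}\le 2d^{2.6}\,d^{-3(3.9)/1.99}\ll d^{-3},
\]
so $\angle(\theta_{t^*},v_1)=o(1)$.

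\emph{Step 4: conclusion.} Davis--Kahan applied to $D_n$ versus $UE_dU^\top$ (gap $\approx2$) gives $\angle(v_1,\thetastar)\le C(e^{-M/3}+M\delta^{-1/2})$, and the $e^{-M/3}$ term can be sharpened toward the stated form. The triangle inequality for angles then gives \eqref{eq:phitbound}, and a union bound over the events above gives probability at least $1-1/d^2$.

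I expect the main obstacle to be the bookkeeping in Steps 2--3. The exponents must be chosen so that the norm stays below $d^{-1/2}\sqrt M$ while the bulk-to-top ratio, starting from the poor initial overlap of order $d^{-1/2}$, has been driven to zero. This is precisely what $t^*_{\measuredangle}$ with the $1.99\eta$ and $r_0=d^{-15}$ are calibrated for, and the cubic error has to be shown never to seed the bulk component faster than the top one grows.
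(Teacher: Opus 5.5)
Your proposal is correct and follows essentially the same route as the paper. While $\|\theta_t\|$ stays polynomially small no sample is truncated, so gradient descent is a power iteration on $I+\eta A^\star=I+2\eta D_n$ perturbed by an $O(d^2\|\theta_t\|^3)$ cubic term; the initial overlap $|\langle\theta_0,v_1\rangle|\gtrsim r_0d^{-2.6}$ comes from anticoncentration, the bulk-to-top ratio is driven down over $t^*_{\measuredangle}$ steps, and $v_1$ is transferred to $\thetastar$ via Proposition~\ref{prop:matrixconc}. Two points differ, both in your favour: your single norm bootstrap is a tidier substitute for the paper's $t^*_{r(d)}$/$t^*_\xi$ bookkeeping, and Davis--Kahan gives $\sin\angle(v_1,\thetastar)\le CM\delta^{-1/2}$ with no $e^{-M/3}$ term at all (since $\thetastar$ is exactly the top eigenvector of $UE_dU^\top$), which bounds the angle directly rather than through $1-\cos\phi_t$.
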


In this first phase of gradient descent, the norm will be small enough and, therefore, $\inprod{x_i}{\theta_t}^2$ stays below the threshold $M$ across all iterates. We start by showing that this indeed is the case for any $\theta_t$ whose norm is bounded by $r(d):=\sqrt{M/2d}$. 
\begin{lemma}\label{lem:supsupboundM}
    For $i\in[n]$, let $x_i\iid \cN(0,I_d)$, where $n/d=\delta$ is a constant (independent of $n, d$). Furthermore, we  denote by $B_{r(d)}$ the Euclidean ball around $0$ of radius $r(d)$. Then, for large enough $d$ and $n$, it holds
    \[
        \sup_{\theta\in B_{r(d)}}\,\sup_{i\in[n]} \, \inprod{x_i}{\theta}^2 \leq M,
    \]
    with probability at least $1-\exp(-n^{1/2})$.
\end{lemma}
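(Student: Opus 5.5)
The plan is to reduce the double supremum to a bound on the norms of the data points. By Cauchy--Schwarz, for every $\theta\in B_{r(d)}$ and every $i\in[n]$,
\[
\inprod{x_i}{\theta}^2\le \norm{\theta}^2\norm{x_i}^2\le \frac{M}{2d}\norm{x_i}^2 .
\]
Hence
\[
\sup_{\theta\in B_{r(d)}}\sup_{i\in[n]}\inprod{x_i}{\theta}^2\le \frac{M}{2d}\max_{i\in[n]}\norm{x_i}^2 .
\]
It therefore suffices to show that $\max_{i}\norm{x_i}^2\le 2d$ with probability at least $1-\exp(-n^{1/2})$. The supremum over $\theta$ disappears completely, so no covering or VC argument is needed.

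Each $\norm{x_i}^2$ is $\chi^2_d$-distributed. The Laurent--Massart inequality \citep{laurent2000adaptive} gives
\[
\bbP\bigl(\norm{x_i}^2\ge d+2\sqrt{dx}+2x\bigr)\le e^{-x}.
\]
Choosing $x=d/8$ gives $d+2\sqrt{d\cdot d/8}+d/4\le d(1+0.71+0.25)<2d$. Thus $\bbP(\norm{x_i}^2>2d)\le e^{-d/8}$ for each $i$.

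A union bound over the $n$ samples then yields
\[
\bbP\Bigl(\max_i\norm{x_i}^2>2d\Bigr)\le n\,e^{-d/8}=\delta d\,e^{-d/8}.
\]
It remains to compare this with the target probability. Since $\delta$ is a constant, $n^{1/2}=\sqrt{\delta d}=o(d)$. So for $d$ large enough, $\log(\delta d)-d/8\le -\sqrt{\delta d}$, that is, $\delta d\,e^{-d/8}\le \exp(-n^{1/2})$.

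On the complementary event, every $\theta$ with $\norm{\theta}\le\sqrt{M/2d}$ satisfies $\inprod{x_i}{\theta}^2\le \frac{M}{2d}\cdot 2d=M$ for all $i\in[n]$, which is the claim of the lemma.

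None of these steps is a real obstacle. The only points needing care are the constant in the $\chi^2$ tail, which requires the threshold $2d$ to absorb the factor $1/2$ in $r(d)^2=M/(2d)$, and the fact that $d$ must be large enough, depending on $\delta$, for the $e^{-d/8}$ tail to beat both the union-bound factor $n$ and $e^{-\sqrt n}$.
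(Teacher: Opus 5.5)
Your proposal is correct and follows the paper's proof. You reduce the double supremum to $\frac{M}{2d}\max_i\norm{x_i}^2$, then apply the Laurent--Massart $\chi^2_d$ tail with a union bound over the $n$ samples. The only difference is the deviation parameter: you take $x=d/8$ while the paper takes $w=n^{2/3}$, and either choice gives $\max_i\norm{x_i}^2<2d$ with failure probability at most $\exp(-n^{1/2})$ once $d$ is large, using that $\delta$ is fixed.
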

\begin{proof}
    Note that the index sets $[n]$ and $B_{r(d)}$ are independent, so we can swap the two supremum operators that show up in the expression of lemma. Furthermore, we have
    \[
        \sup_{\theta\in B_{r(d)}} \, \inprod{x_i}{\theta}^2 = r(d)^2\norm{x_i}^2,
    \]
which yields
    \begin{equation}\label{eq:supsupequality}
        \sup_{\theta\in B_{r(d)}}\,\sup_{i\in[n]} \, \inprod{x_i}{\theta}^2 = r(d)^2\sup_{i\in[n]}\norm{x_i}^2.
    \end{equation}
    Since each $x_i\sim \cN(0,I_d)$, we have $\norm{x_i}^2\sim \chi^2_d$. Then, from the Laurent-Massart bound \citep[Lemma 1]{laurent2000adaptive}, it follows that
    \[
        \bbP\rbr{\norm{x_i}^2\geq d+2\sqrt{dw}+2w}\leq e^{-w}.
    \]
    Applying a union bound gives
    \[
        \bbP\rbr{\sup_{i\in[n]}\norm{x_i}^2\geq d+2\sqrt{dw}+2w}\leq ne^{-w}.
    \]
    By setting $w = n^{2/3}$, and for large enough $d$ and $n$, we get that
    \begin{equation}\label{eq:supbound}
        \sup_{i\in[n]}\norm{x_i}^2< 2d,
    \end{equation}
    with probability at least $1-\exp(-n^{1/2})$. Plugging this back into \eqref{eq:supsupequality} yields
    \[
        \sup_{\theta\in B_{r(d)}}\,\sup_{i\in[n]} \, \inprod{x_i}{\theta}^2 < r(d)^2\, 2d,
    \]
    with probability at least $1-\exp(-n^{1/2})$. Taking $r(d) = \sqrt{\frac{M}{2d}}$ finishes the proof.
\end{proof}
By Lemma \ref{lem:supsupboundM}, for $\theta\in B_{r(d)}$ it holds $\sigma(\inprod{x_i}{\theta}) = \inprod{x_i}{\theta}^2$, so the expression for the empirical gradient in \eqref{eq:emp-grad} simplifies. Namely, the empirical gradient at arbitrary $\theta\in \bbR^d$ can be expressed as  
\begin{equation}\label{eq:empgradsimp}
    \widehat{\cG}(\theta) = -A^\star \theta + \hat{R}(\theta),
\end{equation}
where
\[
    A^\star = \frac{2}{n} \sum_{i=1}^n y_i x_i x_i^\top = \frac{2}{n} \sum_{i=1}^n \sigma(\inprod{x_i}{\thetastar}) x_i x_i^\top,\qquad \hat{R}(\theta) = \frac{2}{n} \sum_{i=1}^n \inprod{x_i}{\theta}^3 x_i.
\]
Then, for all $\theta_t\in B_{r(d)}$, the gradient update from \eqref{eq:gditer} can be written as
\begin{equation}\label{eq:gdagain}
    \theta_{t+1} = \theta_t - \eta \nabla \widehat{\cG}(\theta_t) = (I + \eta A^\star) \theta_t - \eta \hat{R}(\theta_t).
\end{equation}
Let $\tilde{\theta}_t \coloneqq (I + \eta A^\star)^t \theta_0$ be the power method iterate, and let $\xi_t \coloneqq \theta_t - \tilde{\theta}_t$ be the error vector. Throughout the first phase of iterations, the error term $\xi_t$ will be bounded in terms of $\tilde{\theta_t}$. It will be convenient for the subsequent proof to explicitly identify the time until which this relationship holds. 
To this end, define
\[
t^*_{r(d)} = \max\cbr{\bar{t}\in \bbN\, :\, \forall t\leq \bar{t}, \quad \theta_{t+1}\in B_{r(d)} \text{ and } \norm{\xi_{t}} < \norm{\tilde{\theta}_{t}}},
\]
as the time step until which this simplification in \eqref{eq:empgradsimp} holds, and the error term $\xi_t$ is bounded in norm by $\tilde{\theta_t}$. We will relate later $t^*_{\measuredangle}$ from Proposition \ref{prop:phaseIpropangle} to $t^*_{r(d)}$. Working towards this, we first state one auxiliary lemma that controls how much impact can $\hat{R}(\theta_t)$ have on the evolution of $\theta_t$.

\begin{lemma}\label{eq:boundoncrhat}
    Uniformly for all $\theta\in \bbR^{d}$, with probability at least $1-\exp(-n^{1/2})$, it holds
    \[
        \norm{\hat{R}(\theta)} \leq 8\,d^2 \norm{\theta}^3
    \]
\end{lemma}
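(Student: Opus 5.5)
The bound is homogeneous of degree three in $\theta$, so it suffices to prove it uniformly over the unit sphere. Write $\theta = \norm{\theta}\,u$ with $u\in\cS^{d-1}$; then $\hat R(\theta)=\norm{\theta}^3\hat R(u)$. The claim reduces to showing that, with probability at least $1-\exp(-n^{1/2})$,
\[
\sup_{u\in\cS^{d-1}}\norm{\hat R(u)}\le 8d^2 .
\]
We do not need sharp concentration: $8d^2$ is a very crude target, since the typical size of $\hat R(u)$ is $O(1)$ in the proportional regime. So a deterministic bound on a high-probability event will suffice.

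The plan is to bound the norm through its dual pairing. For any unit $v$,
\[
\inprod{\hat R(u)}{v}=\frac{2}{n}\sum_{i=1}^n \inprod{x_i}{u}^3\inprod{x_i}{v}.
\]
Using $|\inprod{x_i}{u}|\le\norm{x_i}$ and $|\inprod{x_i}{v}|\le\norm{x_i}$, each summand is at most $\norm{x_i}^4$ in absolute value. Hence
\[
\sup_{u}\norm{\hat R(u)} \le \frac{2}{n}\sum_{i=1}^n\norm{x_i}^4\le 2\sup_{i\in[n]}\norm{x_i}^4 .
\]
Now invoke the bound \eqref{eq:supbound} established in the proof of Lemma \ref{lem:supsupboundM}: with probability at least $1-\exp(-n^{1/2})$, $\sup_{i}\norm{x_i}^2<2d$. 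This event comes from the Laurent--Massart $\chi^2_d$ tail bound, a union bound over $i\in[n]$, and the choice $w=n^{2/3}$. On it, $\sup_i\norm{x_i}^4<4d^2$, giving $\norm{\hat R(u)}\le 8d^2$ uniformly in $u$. Multiplying back by $\norm{\theta}^3$ yields the statement for all $\theta\in\bbR^d$ simultaneously.

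The only step needing care is confirming that the event in \eqref{eq:supbound} holds with the stated probability for $d,n$ large, with $n=\delta d$. This requires $d+2\sqrt{d}\,n^{1/3}+2n^{2/3}<2d$, which holds for large $d$ since $n^{2/3}=o(d)$, and $ne^{-n^{2/3}}\le e^{-n^{1/2}}$ for large $n$. Both were already used in Lemma \ref{lem:supsupboundM}, so I expect no real obstacle. Because the estimate is deterministic given that event, uniformity over $\theta$ comes for free, and no net or VC argument is required.
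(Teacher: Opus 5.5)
Your proposal is correct and is essentially the paper's proof. You bound each summand of $\hat R(\theta)$ by $\norm{\theta}^3\norm{x_i}^4$ (the paper via the triangle inequality and Cauchy--Schwarz, you via homogeneity and the dual pairing, which gives the same estimate), and then use the event $\sup_i\norm{x_i}^2<2d$ from \eqref{eq:supbound}. One small aside, harmless here: your remark that $\hat R(u)$ is of size $O(1)$ holds only for fixed $u$. Uniformly over the sphere, taking $u=x_1/\norm{x_1}$ already gives $\sup_u\norm{\hat R(u)}\gtrsim d/\delta$.
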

\begin{proof}
    Writing out the definition of $\hat{R}(\theta)$ and using the triangle inequality and Cauchy-Schwarz, we get
    \begin{equation}\label{eq:triangcaushi}
        \norm{\hat{R}(\theta)} \leq \frac{2}{n} \sum_{i=1}^n \norm{\inprod{x_i}{\theta}^3 x_i} \leq 2\norm{\theta}^3 \frac{1}{n}\sum_{i=1}^n\norm{x_i}^4.
    \end{equation}
From \eqref{eq:supbound}, we readily have that, with probability at least $1-\exp(-n^{1/2})$,
    \[
        \sup_{i\in[n]}\norm{x_i}^4< 4d^2,
    \]
    which implies that
    \[
        \frac{1}{n}\sum_{i=1}^n\norm{x_i}^4 \leq \sup_{i\in[n]}\norm{x_i}^4<4d^2.
    \]
Plugging this back into \eqref{eq:triangcaushi} concludes the proof.    
\end{proof}
Parts of the analysis, notably Lemma \ref{lem:xitbound}, Lemma \ref{lem:trdlowerb} and Lemma \ref{lem:signalnoisebound}, follow the arguments in \citep[Section 8]{stoger2021small}. The next lemma gives a bound on the approximation error term $\xi_t$, which will be useful to relate $\theta_t$ with $\tilde{\theta}_t$. Note that we will simplify further references to the leading eigenvalues and eigenvector of $A^\star$ by writing $\lambda_1 \coloneqq \lambda_1(A^\star)$, $\lambda_2 \coloneqq \lambda_2(A^\star)$ and $v_1\coloneqq v_1(A^\star)$, as these quantities will appear frequently in this part.
\begin{lemma}\label{lem:xitbound}
Assume that, for all $\theta$, $\norm{\hat{R}(\theta)} \leq C_{\hat{R}} \norm{\theta}^3$ for some constant $C_{\hat{R}}$, and let $\lambda_1$ be the largest eigenvalue of $A^\star$. Then, for all $t \le t^*_{r(d)}+1$, it holds that
\[
\norm{\xi_t} \le \frac{4 C_{\hat{R}} r_0^3}{\lambda_1} (1+\eta \lambda_1)^{3t}.
\]
\end{lemma}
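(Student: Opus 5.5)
The plan is to derive a recursion for $\xi_t$ and prove the bound by induction on $t$. For $t\le t^*_{r(d)}$ we have $\theta_t\in B_{r(d)}$, so the simplified update \eqref{eq:gdagain} holds. Subtracting $\tilde\theta_{t+1}=(I+\eta A^\star)\tilde\theta_t$ from it gives
\[
\xi_{t+1}=(I+\eta A^\star)\xi_t-\eta\hat R(\theta_t),\qquad \xi_0=0 .
\]
Unrolling this recursion yields
\[
\xi_t=-\eta\sum_{s=0}^{t-1}(I+\eta A^\star)^{t-1-s}\hat R(\theta_s).
\]
Since $A^\star$ is PSD (the $y_i$ are nonnegative), we have $\norm{I+\eta A^\star}=1+\eta\lambda_1$. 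Hence
\[
\norm{\xi_t}\le \eta\sum_{s=0}^{t-1}(1+\eta\lambda_1)^{t-1-s}C_{\hat R}\norm{\theta_s}^3 .
\]

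Next I control $\norm{\theta_s}$. For $s\le t^*_{r(d)}$, the definition of $t^*_{r(d)}$ gives $\norm{\xi_s}<\norm{\tilde\theta_s}$, so $\norm{\theta_s}\le 2\norm{\tilde\theta_s}\le 2(1+\eta\lambda_1)^s r_0$, using $\norm{\theta_0}=r_0$. This holds for every $s\le t-1$ whenever $t\le t^*_{r(d)}+1$, which is exactly the range in the statement. No induction on the $\xi$ bound is needed, because the definition of $t^*_{r(d)}$ already supplies the comparison with $\tilde\theta_s$. The update \eqref{eq:gdagain} is also valid at each $s\le t-1$: at $s=0$ we have $\norm{\theta_0}=r_0<r(d)$, and for $s\ge 1$ the definition of $t^*_{r(d)}$ gives $\theta_s\in B_{r(d)}$.

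Plugging in, and writing $a:=1+\eta\lambda_1$, I get
\[
\norm{\xi_t}\le 8C_{\hat R}r_0^3\,\eta\sum_{s=0}^{t-1}a^{t-1-s}a^{3s}
=8C_{\hat R}r_0^3\,\eta\,a^{t-1}\sum_{s=0}^{t-1}a^{2s}\le 8C_{\hat R}r_0^3\,\eta\,a^{t-1}\frac{a^{2t}}{a^2-1}.
\]
Since $a^2-1=\eta\lambda_1(2+\eta\lambda_1)\ge 2\eta\lambda_1$, this is at most $\frac{4C_{\hat R}r_0^3}{\lambda_1}a^{3t-1}\le \frac{4C_{\hat R}r_0^3}{\lambda_1}a^{3t}$, which is the claim.

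The main point needing care is the constant. The factor $2^3=8$ from $\norm{\theta_s}\le 2\norm{\tilde\theta_s}$ combines with $1/(a^2-1)\le 1/(2\eta\lambda_1)$ to give exactly $4/\lambda_1$; the leftover $a^{-1}$ is discarded. A secondary point is ensuring $\lambda_1>0$, which follows from Proposition~\ref{prop:matrixconc}, where $\lambda_1\approx 6$.
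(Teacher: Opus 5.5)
Your proof is correct and follows the paper's argument essentially step for step. You unroll the recursion $\xi_{s+1}=(I+\eta A^\star)\xi_s-\eta\hat R(\theta_s)$, use the definition of $t^*_{r(d)}$ to get $\norm{\theta_s}\le 2\norm{\tilde\theta_s}\le 2r_0(1+\eta\lambda_1)^s$ for all $s\le t-1$, and sum the geometric series using $(1+\eta\lambda_1)^2-1\ge 2\eta\lambda_1$; your explicit checks that the simplified update is valid at $s=0$ and that $\norm{I+\eta A^\star}=1+\eta\lambda_1$ because $A^\star\succeq 0$ fill in small points the paper leaves implicit.
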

\begin{proof}
Since $t \le t^*_{r(d)}+1$ implies that $\theta_t \in B_{r(d)}$, so identity $\xi_t = \theta_t - \tilde{\theta}_t$ holds. We derive a recursion for $\xi_t$
\begin{align*}
    \xi_{t} &= \theta_{t} - \tilde{\theta}_{t} \\
    &= \left[ (I+\eta A^\star) \theta_{t-1} - \eta \hat{R}(\theta_{t-1}) \right] - \left[ (I+\eta A^\star) \tilde{\theta}_{t-1} \right] \\
    &= (I+\eta A^\star) (\theta_{t-1} - \tilde{\theta}_{t-1}) - \eta \hat{R}(\theta_{t-1}) = (I+\eta A^\star) \xi_t - \eta \hat{R}(\theta_{t-1}).
\end{align*}
Unrolling this gives $\xi_t = -\sum_{i=1}^t (I+\eta A^\star)^{t-i} \eta \hat{R}(\theta_{i-1})$. Taking norms, we have
\[
\norm{\xi_t} \le \eta \sum_{i=1}^t \norm{(I+\eta A^\star)}^{t-i} \norm{\hat{R}(\theta_{i-1})} \le \eta \sum_{i=1}^t (1+\eta \lambda_1)^{t-i} C_{\hat{R}} \norm{\theta_{i-1}}^3
\]
For $i-1 < t^*_{r(d)}$, we have $$\norm{\theta_{i-1}} \le 2\norm{\tilde{\theta}_{i-1}} = 2 \norm{(I + \eta A^\star)^{i-1} \theta_0} \le 2r_0(1+\eta\lambda_1)^{i-1}.$$
Thus, we conclude that
\begin{align*}
    \norm{\xi_t} &\le 8 C_{\hat{R}} \eta\, r_0^3 \sum_{i=1}^t (1+\eta \lambda_1)^{t-i} (1+\eta\lambda_1)^{3(i-1)} \\
    &\le 8 C_{\hat{R}} \eta\, r_0^3(1+\eta\lambda_1)^{t-1} \sum_{j=0}^{t-1} (1+\eta\lambda_1)^{2j} \\
    &\le 8 C_{\hat{R}} \eta\, r_0^3 (1+\eta\lambda_1)^{t-1} \frac{(1+\eta\lambda_1)^{2t}}{(1+\eta\lambda_1)^2-1} \\
    &\le \frac{4 C_{\hat{R}} r_0^3}{\lambda_1} (1+\eta \lambda_1)^{3t}.
\end{align*}
\end{proof}

The next lemma gives a lower bound on $t^*_{r(d)} $, which will be useful when later on bounding $t^*_{\measuredangle}$ in Proposition \ref{prop:phaseIpropangle}.
\begin{lemma}\label{lem:trdlowerb}
Assume that, for all $\theta$, $\norm{\hat{R}(\theta)} \leq C_{\hat{R}} \norm{\theta}^3$ for some constant $C_{\hat{R}}$, and
let $v_1$ denote the top eigenvector of $A^\star$. Then, we have
\[
t^*_{r(d)} \geq  \min \cbr{\left\lfloor\frac{\log \rbr{ \frac{\lambda_1 \abs{\inprod{v_1}{\theta_0}}}{4 C_{\hat{R}} r_0^3} }}{2 \log(1+\eta\lambda_1)}\right\rfloor, \left\lfloor\frac{\log\rbr{ \frac{r(d)}{4r_0}}}{\log (1+\eta\lambda_1))}\right\rfloor-1, \left\lfloor\frac{\log \rbr{\frac{r(d)}{16\eta C_{\hat{R}}r_0^3}}}{3\log (1+\eta\lambda_1))}\right\rfloor}\eqqcolon t^*_{\rm lb}.
\]
\end{lemma}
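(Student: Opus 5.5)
The plan is an induction on $t$: show that every $t\le t^*_{\rm lb}$ satisfies the two defining conditions of $t^*_{r(d)}$, namely $\theta_{t+1}\in B_{r(d)}$ and $\norm{\xi_t}<\norm{\tilde\theta_t}$. Since $t^*_{r(d)}$ is the maximal $\bar t$ for which these hold at all $t\le\bar t$, this gives $t^*_{r(d)}\ge t^*_{\rm lb}$. The base case $t=0$ is immediate. We have $\xi_0=0<r_0=\norm{\tilde\theta_0}$. Also $\theta_1=(I+\eta A^\star)\theta_0-\eta\hat R(\theta_0)$, so $\norm{\theta_1}\le r_0(1+\eta\lambda_1)+\eta C_{\hat R}r_0^3<r(d)$, because $r_0=d^{-15}\ll r(d)$.

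For the inductive step, assume both conditions hold for all $s<t$, with $t\le t^*_{\rm lb}$. Then $t\le t^*_{r(d)}+1$ in the sense needed by Lemma \ref{lem:xitbound}: its proof only uses $\theta_s\in B_{r(d)}$ and $\norm{\xi_s}<\norm{\tilde\theta_s}$ for $s<t$. Hence
\[
\norm{\xi_t}\le \frac{4C_{\hat R}r_0^3}{\lambda_1}(1+\eta\lambda_1)^{3t}.
\]

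\emph{Error control.} The power iterate satisfies
\[
\norm{\tilde\theta_t}\ge |\langle v_1,\tilde\theta_t\rangle|=(1+\eta\lambda_1)^t|\langle v_1,\theta_0\rangle|,
\]
since $v_1$ is an eigenvector of $I+\eta A^\star$ with eigenvalue $1+\eta\lambda_1$. So $\norm{\xi_t}<\norm{\tilde\theta_t}$ follows once
\[
(1+\eta\lambda_1)^{2t}<\frac{\lambda_1|\langle v_1,\theta_0\rangle|}{4C_{\hat R}r_0^3},
\]
which is exactly the first term of the minimum defining $t^*_{\rm lb}$. At the boundary the floor makes the inequality non-strict; I will handle this by a harmless constant adjustment, for instance noting that the bound in Lemma \ref{lem:xitbound} is loose by a factor of $2$ in its geometric sum.

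\emph{Staying in the ball.} Write
\[
\theta_{t+1}=(I+\eta A^\star)\theta_t-\eta\hat R(\theta_t).
\]
Using $\norm{\theta_t}\le\norm{\tilde\theta_t}+\norm{\xi_t}<2\norm{\tilde\theta_t}\le 2r_0(1+\eta\lambda_1)^t$, we get
\[
\norm{\theta_{t+1}}\le 2r_0(1+\eta\lambda_1)^{t+1}+8\eta C_{\hat R}r_0^3(1+\eta\lambda_1)^{3t}.
\]
Each summand is at most $r(d)/2$ under the second and third terms of the minimum respectively. The second term gives $(1+\eta\lambda_1)^{t+1}\le r(d)/(4r_0)$; the $-1$ in its definition absorbs the shift from $t$ to $t+1$. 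The third term gives $(1+\eta\lambda_1)^{3t}\le r(d)/(16\eta C_{\hat R}r_0^3)$. Therefore $\theta_{t+1}\in B_{r(d)}$, which closes the induction.

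\emph{Main obstacle.} The argument is circular in appearance: Lemma \ref{lem:xitbound} is stated for $t\le t^*_{r(d)}+1$, but we are trying to lower bound $t^*_{r(d)}$ itself. I will resolve this by applying that lemma's proof inside the induction, which uses only the hypotheses at earlier times. Beyond this, the remaining care is bookkeeping with the floors and the off-by-one in the second term.
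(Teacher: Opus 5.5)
Your proposal is correct and follows the paper's proof essentially step for step. It uses the same induction over the times satisfying both defining conditions of $t^*_{r(d)}$, the same lower bound $\norm{\tilde\theta_t}\ge(1+\eta\lambda_1)^t\abs{\inprod{v_1}{\theta_0}}$ played against Lemma \ref{lem:xitbound}, and the same two-summand bound on $\norm{\theta_{t+1}}$ matched to the second and third terms of the minimum. Your treatment of the floor boundary is more careful than the paper's, which only concludes $\le$ where the definition asks for $<$; note, though, that the strict slack comes from the discarded factor $(1+\eta\lambda_1)^{-1}$ and the $-1$ in the geometric sum in the proof of Lemma \ref{lem:xitbound}, not from a factor of $2$, and the apparent circularity disappears because your inductive hypothesis already gives $t^*_{r(d)}\ge t-1$.
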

\begin{proof}
We will prove this by induction. Let us define the set
\[
   \mathcal  T \coloneqq \cbr{t\in \bbN \, :\, \theta_{t+1}\in B_{r(d)} \text{ and } \norm{\xi_{t}} < \norm{\tilde{\theta}_{t}}}.
\]
It is immediate to see that $0\in \mathcal T$. We will next prove that if, for arbitrary $t\in \bbN$, it holds that $[t]\subseteq \mathcal T$ and $t+1\leq t^*_{\rm lb}$, then it must be that $(t+1)\in \mathcal T$. By induction, this would directly imply that 
\[
   [t^*_{\rm lb}] \subset \mathcal T,
\]
from which the claim of the lemma follows.

Thus, let us take an arbitrary $t\in \bbN$ such that $[t]\subseteq \mathcal T$ and $t+1\leq t^*_{\rm lb}$. We can lower bound  $\norm{\tilde{\theta}_{t+1}}$ as
\[
\norm{\tilde{\theta}_{t+1}}_2 \ge \abs{v_1^\top \tilde{\theta}_{t+1}} = \abs{v_1^\top (I+\eta A^\star)^{t+1} \theta_0} = (1+\eta\lambda_1)^{t+1} \abs{\inprod{v_1}{\theta_0}}.
\]
As $[t]\in \mathcal T$ implies $t+1\leq t_{r(d)}^*+1$, we can use Lemma \ref{lem:xitbound} to get
\[
    \norm{\xi_{t+1}}\leq \frac{4 C_{\hat{R}} r_0^3}{\lambda_1} (1+\eta\lambda_1)^{3(t+1)}.
\]
Thus, it will hold that $\norm{\xi_{t+1}} \leq \norm{\tilde{\theta}_{t+1}}$ when
\[
\frac{4 C_{\hat{R}} r_0^3}{\lambda_1} (1+\eta\lambda_1)^{3(t+1)}\leq(1+\eta\lambda_1)^{t+1} \abs{\inprod{v_1}{\theta_0}}.
\]
Rearranging for $t$, we get $\norm{\xi_{t+1}} \leq \norm{\tilde{\theta}_{t+1}}$ when the following inequality holds
\[
(1+\eta\lambda_1)^{2(t+1)} \leq \frac{\lambda_1 \abs{\inprod{v_1}{\theta_0}}}{4 C_{\hat{R}} r_0^3}   \Longleftrightarrow  t+1 \leq \frac{\log \rbr{ \frac{\lambda_1 \abs{\inprod{v_1}{\theta_0}}}{4 C_{\hat{R}} r_0^3} }}{2 \log(1+\eta\lambda_1)} .
\]
As the RHS of the previous equation holds for $t+1$, we conclude that $\norm{\xi_{t+1}} \leq \norm{\tilde{\theta}_{t+1}}$. It is left to prove that $\theta_{t+2}\in B_{r(d)}$. By \eqref{eq:gdagain}, we have $\theta_{t+2} = (I + \eta A^\star) \theta_{t+1} - \eta \hat{R}(\theta_{t+1})$. By using Lemma \ref{eq:boundoncrhat} and the fact that $\norm{\xi_{t+1}} \leq \norm{\tilde{\theta}_{t+1}}$, we get
\begin{align*}
    \norm{\theta_{t+2}} &\leq (1+\eta\lambda_1)\norm{\theta_{t+1}} + \norm{\eta \hat{R}(\theta_{t+1})}\\
    &\leq (1+\eta\lambda_1)\norm{\theta_{t+1}}+\eta C_{\hat{R}}\norm{\theta_{t+1}}^3\\
    &\leq 2(1+\eta\lambda_1)\norm{\tilde{\theta}_{t+1}} + 8\eta C_{\hat{R}} \norm{\tilde{\theta}_{t+1}}^3\\
    &\leq 2r_0(1+\eta\lambda_1)^{t+2}+8\eta C_{\hat{R}}r_0^3(1+\eta\lambda_1)^{3(t+1)}\\
    &\leq r(d),
\end{align*}
where the last inequality follows from the fact that $t+1\leq t^*_{\rm lb}$.
\end{proof}
The next lemma will be useful to track the evolution of $\theta_t$ in the direction of $v_1$, as $v_1$ is correlated with $\thetastar$ by Proposition \ref{prop:topspecconv}.
\begin{lemma}\label{lem:signalnoisebound}
Let $V_1 = \mathrm{span}\{v_1\}$ and $V_1^\perp$ be its orthogonal complement. Let $\lambda_2$ denote the second largest eigenvalue of $A^\star$, or equivalently the largest eigenvalue of $A^\star$ on $V_1^\perp$. We define $\theta_t^{v_1} \coloneqq (v_1 v_1^\top) \theta_t$ and $\theta_t^{v_1^\perp} \coloneqq (I - v_1 v_1^\top) \theta_t$. Then, for $t \le t^*_{r(d)}$, it holds that
\[
\frac{\norm{\theta_t^{v_1^\perp}}}{\norm{\theta_t^{v_1}}} \le \frac{\sqrt{r_0^2 - \inprod{\theta_0}{v_1}^2}(1+\eta\lambda_2)^t + \norm{\xi_t}}{\abs{\inprod{\theta_0}{v_1}} (1+\eta\lambda_1)^t - \norm{\xi_t}}.
\]
\end{lemma}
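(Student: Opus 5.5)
We write $\theta_t=\tilde\theta_t+\xi_t$ and project separately onto $V_1$ and $V_1^\perp$, bounding the numerator from above and the denominator from below by the triangle inequality. Since $t\le t^*_{r(d)}$, the decomposition $\xi_t=\theta_t-\tilde\theta_t$ is valid, as the iterates stay in $B_{r(d)}$ and follow \eqref{eq:gdagain}.

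\emph{Power-method part.} Since $A^\star$ is symmetric, $(I+\eta A^\star)^t$ commutes with the projector $v_1v_1^\top$ and preserves $V_1^\perp$. Hence
\[
\tilde\theta_t^{v_1}=(1+\eta\lambda_1)^t\inprod{\theta_0}{v_1}v_1 .
\]
On $V_1^\perp$, the restricted operator has norm at most $\max\{1+\eta\lambda_2,\,|1+\eta\lambda_d|\}$. Since $A^\star\succeq 0$ (because $y_i\ge 0$), every eigenvalue of $I+\eta A^\star$ is at least $1$, so this norm equals $1+\eta\lambda_2$. Therefore
\[
\norm{\tilde\theta_t^{v_1^\perp}}\le(1+\eta\lambda_2)^t\norm{\theta_0^{v_1^\perp}}=(1+\eta\lambda_2)^t\sqrt{r_0^2-\inprod{\theta_0}{v_1}^2},
\]
using $\norm{\theta_0}=r_0$ and Pythagoras.

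\emph{Perturbation.} Projections are contractions, so $\norm{\theta_t^{v_1^\perp}}\le\norm{\tilde\theta_t^{v_1^\perp}}+\norm{\xi_t}$ and $\norm{\theta_t^{v_1}}\ge\norm{\tilde\theta_t^{v_1}}-\norm{\xi_t}=\abs{\inprod{\theta_0}{v_1}}(1+\eta\lambda_1)^t-\norm{\xi_t}$. Dividing these two bounds gives the claim. This requires the denominator to be positive. If it is not, the right-hand side should be read as $+\infty$ and the inequality is trivial; in later use, positivity will follow from Lemma \ref{lem:xitbound} together with $t\le t^*_{\rm lb}$, as in the proof of Lemma \ref{lem:trdlowerb}.

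There is no substantive obstacle. The only care points are that the spectral bound on $V_1^\perp$ uses positive semidefiniteness of $A^\star$ (so that negative eigenvalues cannot dominate), and the sign and positivity of the denominator.
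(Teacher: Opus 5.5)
Your proposal is correct and follows the paper's proof essentially step for step: you split $\theta_t=\tilde\theta_t+\xi_t$, use the exact $(1+\eta\lambda_1)^t$ growth along $v_1$ and the $(1+\eta\lambda_2)^t$ bound on $V_1^\perp$, and apply the triangle inequality to each projection before forming the ratio. Two points the paper leaves implicit are justified in your write-up: that $A^\star\succeq 0$ (since $y_i\ge 0$) is what makes the restricted operator norm equal to $(1+\eta\lambda_2)^t$, and that the denominator must be positive for the ratio bound to be meaningful.
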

\begin{proof}
As $\theta_t = \tilde{\theta}_t + \xi_t$, we have
\[
    \norm{\theta_t^{v_1}} = \norm{(v_1 v_1^\top) \tilde{\theta}_t + (v_1 v_1^\top) \xi_t}_2 \ge \norm{(v_1 v_1^\top) \tilde{\theta}_t} - \norm{\xi_t}\ge \norm{\theta_t^{v_1}}_2 = \abs{\inprod{v_1}{\theta_0}}(1+\eta\lambda_1)^t - \norm{\xi_t}_2,
\]
where the last passage uses that $(v_1 v_1^\top) \tilde{\theta}_t = (1+\eta\lambda_1)^t (v_1 v_1^\top \theta_0)$. Furthermore, in the directions orthogonal to $v_1$, it holds that
\[
    \norm{\theta_t^{v_1^\perp}} = \norm{(I-v_1 v_1^\top) \tilde{\theta}_t + (I-v_1 v_1^\top) \xi_t} \le \norm{(I-v_1 v_1^\top) \tilde{\theta}_t} + \norm{\xi_t}.
\]
The operator $(I+\eta A^\star)^t$ restricted to $V_1^\perp$ has norm $(1+\eta\lambda_2)^t$, so
\[
    \norm{(I-v_1 v_1^\top) \tilde{\theta}_t}_2 = \norm{(I+\eta A^\star)^t (I-v_1 v_1^\top) \theta_0} \le (1+\eta\lambda_2)^t \norm{(I-v_1 v_1^\top) \theta_0}.
\]
This gives
\begin{align*}
    \norm{\theta_t^{v_1^\perp}}_2 &\le \norm{(I-v_1v_1^\top)\theta_0} (1+\eta\lambda_2)^t + \norm{\xi_t}\\
    &= \sqrt{r_0^2 - \inprod{\theta_0}{v_1}^2}(1+\eta\lambda_2)^t + \norm{\xi_t}.
\end{align*}
Forming the ratio $\norm{\theta_t^{v_1^\perp}} / \norm{\theta_t^{v_1}}$ gives the desired bound.
\end{proof}
The next lemma gives an upper and lower bound on $\abs{\inprod{\theta_0}{v_1}}$.
\begin{lemma}\label{eq:boundoninprodt0v1}
    For large enough $n$ and $d$, it holds that
    \begin{equation}\label{eq:boundinginprodabs}
    \frac{r_0}{\sqrt[4]{d}}\geq\abs{\inprod{\theta_0}{v_1}} \geq \frac{r_0}{8d^{2.6}}\sqrt{\frac{\pi}{2}},
    \end{equation}
with probability at least $1-\frac{1}{d^{2.1}}$.
\end{lemma}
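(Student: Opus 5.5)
The plan is to reduce the statement to a one-dimensional Gaussian anti-concentration and concentration bound, using independence between the initialization and the data. Since $\theta_0\sim{\rm Unif}(r_0\cS^{d-1})$ is drawn independently of $\{x_i\}_{i=1}^n$, and hence of $v_1=v_1(A^\star)$, we may condition on $v_1$. By rotational invariance, $\inprod{\theta_0}{v_1}$ then has the law of $r_0 u_1$, where $u=(u_1,\ldots,u_d)\sim{\rm Unif}(\cS^{d-1})$. Writing $u=g/\norm{g}$ with $g\sim\cN(0,I_d)$, the claim becomes a statement about $\abs{g_1}/\norm{g}$, where $g_1\sim\cN(0,1)$ and $\norm{g}^2\sim\chi^2_d$. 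The bound then holds unconditionally after integrating over the data.

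For the upper bound, we intersect two events. First, by the Laurent--Massart bound already used in Lemma~\ref{lem:supsupboundM}, the lower-tail estimate gives $\norm{g}^2\ge d/2$ with probability at least $1-e^{-d/16}$. Second, the Gaussian tail gives $\abs{g_1}\le d^{1/4}/\sqrt2$ except with probability at most $2e^{-\sqrt d/4}$. On the intersection,
\[
\abs{g_1}/\norm{g}\le \frac{d^{1/4}/\sqrt2}{\sqrt{d/2}}=d^{-1/4},
\]
and both failure probabilities are far smaller than $d^{-2.1}/2$ for $d$ large.

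For the lower bound, we use anti-concentration of $g_1$. The Gaussian density is at most $1/\sqrt{2\pi}$, so $\bbP(\abs{g_1}\le s)\le s\sqrt{2/\pi}$. We take $s$ so that $s\sqrt{2/\pi}\le d^{-2.1}/2$, for instance $s=\tfrac{1}{2}\sqrt{\pi/2}\,d^{-2.1}$ up to constant bookkeeping. Combined with the upper-tail bound $\norm{g}^2\le 2d$ (again from Laurent--Massart), this yields
\[
\abs{g_1}/\norm{g}\ge s/\sqrt{2d}\ge \sqrt{\pi/2}\,d^{-2.6}/8
\]
for $d\ge 2$, since $d^{-2.1}/(2\sqrt{2d})\ge d^{-2.6}/8$. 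Multiplying by $r_0$ gives the stated lower bound. A union bound over the three or four events gives total failure probability at most $d^{-2.1}$ once $d$ is large.

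The only delicate point is matching the constants in the stated lower bound $\frac{r_0}{8d^{2.6}}\sqrt{\pi/2}$ against the target failure probability $d^{-2.1}$. The anti-concentration step consumes essentially all of this budget, so the $\chi^2$ tail events must contribute only exponentially small terms. If the constants do not close exactly, we can shrink $s$ by a constant factor, since the exponent $2.6$ leaves slack of $d^{-0.5}/\sqrt2$ against $1/8$. There is no real obstacle beyond correctly invoking independence of $\theta_0$ from $A^\star$.
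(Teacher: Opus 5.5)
Your proposal is correct, and the constants close as written. With $s=\tfrac12\sqrt{\pi/2}\,d^{-2.1}$ and $\norm{g}^2\le 2d$ you get $\abs{g_1}/\norm{g}\ge \sqrt{\pi/2}\,d^{-2.6}/(2\sqrt2)$, which already exceeds $\sqrt{\pi/2}\,d^{-2.6}/8$. So the fallback of shrinking $s$ is never needed; the slack is just the constant factor between $1/(2\sqrt2)$ and $1/8$.

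Your route differs from the paper's in the tools used after the common reduction to the first coordinate $u_1$ of a uniform point on $\cS^{d-1}$.

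\textbf{The paper} works directly on the sphere.
\begin{itemize}
\item For the lower bound it bounds the explicit marginal density $c_d(1-u^2)^{(d-3)/2}\le c_d$ and uses Gautschi's inequality $c_d\le\sqrt{d/(2\pi)}$. This gives $\bbP(\abs{u_1}\le\epsilon)\le 2\epsilon\sqrt{d/(2\pi)}=\tfrac18 d^{-2.1}$ in one step, with no event on a norm.
\item For the upper bound it invokes concentration of Lipschitz functions on the sphere, $\bbP(\abs{u_1}\ge t)\le 2e^{-c_1dt^2}$. The text's choice $t=d^{1/4}$ must be read as $t=d^{-1/4}$.
\end{itemize}

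\textbf{You} write $u=g/\norm{g}$ and use only one-dimensional Gaussian density and tail bounds, plus Laurent--Massart $\chi^2$ tails (already used in Lemma~\ref{lem:supsupboundM}). This costs an extra event controlling $\norm{g}^2$ in each direction, with exponentially small failure probability. In exchange it avoids the spherical marginal density and Gautschi's inequality. It also makes every failure probability explicit, whereas the paper's upper-bound step carries an unspecified constant $c_1$.
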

\begin{proof}

As $\theta_0$ is independently sampled from $v_1$, it holds, by symmetry, that $\inprod{\theta_0/\norm{\theta_0}_2}{v_1}$ has the density of the first coordinate of the random vector uniformly sampled from the unit sphere. This random variable has density in $[-1,1]$ \citep[Exercise 3.27]{vershynin2018hdp2edit} as
\[
    f(u) = c_d(1-t^2)^{\frac{d-3}{2}},\quad c_d\coloneqq\frac{\Gamma(d/2)}{\sqrt{\pi}\Gamma((d-1)/2)}. 
\]
As $f(u)\leq c_d$, for any $\epsilon \in [0,1]$, it holds that
\[
    \bbP \rbr{\abs{\inprod{\theta_0/\norm{\theta_0}_2}{v_1}}\leq \epsilon} = 2\int_0^\epsilon f(u)du\leq 2\epsilon c_d.
\]
By Gautchi's inequality \citep{gautschi1959some}, it holds that
\[
    c_d=\frac{\Gamma(d/2)}{\sqrt{\pi}\Gamma((d-1)/2)} \leq \sqrt{\frac{d}{2\pi}},
\]
which gives 
\[
    \bbP \rbr{\abs{\inprod{\theta_0/\norm{\theta_0}_2}{v_1}}\leq \epsilon}\leq 2 \epsilon \sqrt{\frac{d}{2\pi}}.
\]
Taking $\epsilon = \frac{1}{8d^{2.6}}\sqrt{\frac{\pi}{2}}$ gives the desired lower bound.

For the upper bound we again rely on the independence of $\theta_0$ and $v_1$. Note that $\theta_0/\norm{\theta_0}_2 \to \inprod{\theta_0/\norm{\theta_0}_2}{v_1}$ is a $1$-Lipschitz function on the sphere. Note that $\bbE \inprod{\theta_0/\norm{\theta_0}_2}{v_1}=0$ by symmetry, so we can use the concentration of Lipschitz functions on the sphere \citep[Theorem 5.1.4]{vershynin2018hdp} to get that
\[
    \bbP\rbr{\abs{\inprod{\theta_0/\norm{\theta_0}_2}{v_1}}\geq t}\leq 2\exp(-c_1dt^2),
\]
for some constant $c_1$. Taking $t=d^{1/4}$ gives the desired upper bound and a union bound between the two events giving upper and lower bound concludes the proof.
\end{proof}

We now have all the ingredients to prove Proposition \ref{prop:phaseIpropangle}.
\begin{proof}[Proof of Proposition \ref{prop:phaseIpropangle}]
Note that, as $\thetastar\in \cS^{d-1}$, it holds 
\[
    \frac{\abs{\inprod{\theta_t}{\thetastar}}}{\norm{\theta_t}\norm{\thetastar}} = \abs{\inprod{\theta_t/\norm{\theta_t}}{\thetastar}}.
\]
Recall that $v_1$ denotes the top eigenvector of the matrix $A^\star$, so that
\begin{align*}
    \abs{\inprod{\theta_t/\norm{\theta_t}}{\thetastar}} &= \abs{\inprod{\theta_t/\norm{\theta_t}-v_1}{\thetastar} + \inprod{v_1}{\thetastar}}\\
    &\geq \abs{\inprod{v_1}{\thetastar}} - \abs{\inprod{\theta_t/\norm{\theta_t}-v_1}{\thetastar}}\\
    &\geq 1-C(e^{-M/2}+M\delta^{-1/2}) - \abs{\inprod{\theta_t/\norm{\theta_t}-v_1}{\thetastar}},
\end{align*}
where the last inequality holds with probability $1-e^{-cd}$ from Proposition \ref{prop:matrixconc}, without assumption $\delta>CM^4$. Note that, for each fixed $\theta_t$, we can always choose the sign of $v_1$ such that $\inprod{\theta_t}{v_1}\ge 0$. 
Using that fact, we bound $\abs{\inprod{\theta_t/\norm{\theta_t}-v_1}{\thetastar}}$ as
\begin{equation}\label{eq:boundonthetav1}
    \begin{aligned}
    \abs{\inprod{\theta_t/\norm{\theta_t}-v_1}{\thetastar}}&\leq \norm{\theta_t/\norm{\theta_t}_2-v_1}\\
    &=\sqrt{2 - 2\abs{\inprod{\theta_t/\norm{\theta_t}}{v_1}}}\\
    &= \sqrt{2 - \frac{2}{\sqrt{1+\norm{\theta_t^{v_1^\perp}}^2 / \norm{\theta_t^{v_1}}^2}}}.
\end{aligned}
\end{equation}
Next, Lemma \ref{lem:xitbound} and Lemma \ref{lem:signalnoisebound} give that, for any $t\leq t^*_{r(d)}$,
\[
\frac{\norm{\theta_t^{v_1^\perp}}}{\norm{\theta_t^{v_1}}} \le \frac{\sqrt{r_0^2 - \inprod{\theta_0}{v_1}^2} (1+\eta\lambda_2)^t + \norm{\xi_t}}{\abs{\inprod{\theta_0}{v_1}} (1+\eta\lambda_1)^t - \norm{\xi_t}}\leq \frac{\sqrt{r_0^2 - \inprod{\theta_0}{v_1}^2} (1+\eta\lambda_2)^t + \frac{4 C_{\hat{R}} r_0^3}{\lambda_1} (1+\eta \lambda_1)^{3t}}{\abs{\inprod{\theta_0}{v_1}} (1+\eta\lambda_1)^t - \frac{4 C_{\hat{R}} r_0^3}{\lambda_1} (1+\eta \lambda_1)^{3t}}.
\]
Let $t^*_\xi$ be such that, for any $t\leq t^*_\xi$,
\[
    \frac{4 C_{\hat{R}} r_0^3}{\lambda_1} (1+\eta \lambda_1)^{3t}\leq \min \cbr{\frac{1}{2}\abs{\inprod{\theta_0}{v_1}} (1+\eta\lambda_1)^t,\ \frac{1}{2}\sqrt{r_0^2 - \inprod{\theta_0}{v_1}^2} (1+\eta\lambda_2)^t}.
\]
A direct calculation gives that
\[
    t^*_\xi= \min\cbr{\frac{\log\rbr{\frac{\lambda_1}{8C_{\hat{R}}r_0^3}\abs{\inprod{\theta_0}{v_1}}}}{2\log\rbr{1+\eta\lambda_1}},\ \frac{\log\rbr{\frac{\lambda_1}{8C_{\hat{R}}r_0^3}\sqrt{r_0^2 - \inprod{\theta_0}{v_1}^2}}}{\log \rbr{\frac{(1+\eta\lambda_1)^3}{1+\eta\lambda_2}}}}.
\]
Then, for any $t\leq t^*_\xi$, and large enough $d$, 
\[
    \frac{\norm{\theta_t^{v_1^\perp}}}{\norm{\theta_t^{v_1}}} \leq 3\sqrt{\frac{r_0^2}{\inprod{\theta_0}{v_1}^2}-1}\cdot \rbr{\frac{1+\eta\lambda_2}{1+\eta\lambda_1}}^t\le Cd^{5/2}\cdot \rbr{\frac{1+\eta\lambda_2}{1+\eta\lambda_1}}^t, 
\]
where the last passage follows from \eqref{eq:boundinginprodabs}.
We now want to find $t$ such that the RHS of the previous bound is smaller than $M\delta^{-1/2}$. To do so, we note that
\[
    Cd^{5/2}\cdot \rbr{\frac{1+\eta\lambda_2}{1+\eta\lambda_1}}^t \leq M^2\delta^{-1}
\]
is implied by taking 
$$
t \geq t^*_{\measuredangle}:=\frac{3\log d}{\log(1+1.99\eta)},
$$
as
\[
     t\geq \frac{3\log d}{\log(1+1.99\eta)}\geq\frac{3\log d}{\log(1+(\lambda_1-\lambda_2)\eta)}\geq \frac{3\log d}{\log\rbr{\frac{1+\eta\lambda_1}{1+\eta\lambda_2}}}  \implies Cd^{5/2}\cdot \rbr{\frac{1+\eta\lambda_2}{1+\eta\lambda_1}}^t \leq M^2\delta^{-1},
\]
due to the bounds on $\lambda_1$ and $\lambda_2$ from Proposition \ref{prop:matrixconc}, the concavity of $\log$ and the inequality $d^3\geq Cd^{5/2}M^{-2}\delta$ for large enough $d$.
Let us set 
$t^*_u\coloneqq\min\cbr{t^*_{r(d)},t^*_\xi}$. Then, for any $t\in \sbr{t^*_{\measuredangle},t^*_u}$ it holds
\[
    \frac{\norm{\theta_t^{v_1^\perp}}}{\norm{\theta_t^{v_1}}} \leq M^2\delta^{-1}.
\]
Plugging this back into \eqref{eq:boundonthetav1} yields
\[
    \abs{\inprod{\theta_t/\norm{\theta_t}_2-v_1}{\thetastar}}  \leq CM\delta^{-1/2},
\]
which implies that almost surely
\[
    \abs{\inprod{\theta_t/\norm{\theta_t}_2}{\thetastar}} \geq 1-C(e^{-M/2}+M\delta^{-1/2}).
\]
Doing a Taylor expansion of $\arccos(1-C(e^{-M/2}+M\delta^{-1/2}))$ gives \eqref{eq:phitbound}, for large enough $M$ and $\delta\geq CM^2$.

It is only left to prove that $\min\cbr{t^*_{r(d)},t^*_\xi}=t^*_u>t^*_{\measuredangle}$, which we do by proving separately that $t^*_{\measuredangle}<t^*_{r(d)}$ and $t^*_{\measuredangle}<t^*_\xi$.
\paragraph{Proof of the claim that $t^*_{\measuredangle}<t^*_{r(d)}$.}
By Lemma \ref{lem:trdlowerb}, we have that $t^*_{r(d)} \ge t_{\rm lb}^*$. Thus, it suffices to show that $t^*_{\measuredangle}$ is strictly smaller than each of the three terms in the minimum defining $t_{\rm lb}^*$. For the first term, we use the upper and lower bound on $\lambda_1$ from Proposition \ref{prop:matrixconc} (which hold with probability $1-e^{-d}$), the upper bound on $C_{\hat{R}}$ from Lemma \ref{eq:boundoncrhat} (which holds with probability $1-\exp(-n^{1/2})$) and the lower bound on $\abs{\inprod{v_1}{\theta_0}}$ from Lemma \ref{eq:boundoninprodt0v1} (which holds with probability $1-\frac{1}{d^{2.1}}$) to obtain for large enough $d$
\[
    \left\lfloor\frac{\log \rbr{ \frac{\lambda_1 \abs{\inprod{v_1}{\theta_0}}}{4 C_{\hat{R}} r_0^3} }}{2 \log(1+\eta\lambda_1)}\right\rfloor \geq \frac{\log (d^{-5}r_0^{-2})} {2 \log(1+\eta\lambda_1)}= \frac{\log  d^{25}}{2 \log(1+\eta\lambda_1)}\geq \frac{10 \log (d)}{\log(1+3.01\eta)}.
\]
Note, that by the union bound on probabilities, the previous inequality holds with probability at least $1-\frac{2}{d^{2.1}}$, for large enough $d$.
On the other hand, from upper and lower bound on $\lambda_1$ and $\lambda_2$ from Proposition \ref{prop:matrixconc} (which hold with probability $1-e^{-d}$) we get
\[
    t^*_{\measuredangle}=\frac{3\log d}{\log(1+1.99\eta)}\leq \frac{10 \log (d)}{\log(1+3.01\eta)},
\]
where since for $\eta<1/100$, inequality $\frac{10}{\log(1+3.01\eta)}  > \frac{3}{\log(1+1.99\eta)}$ holds. Similarly we can get that, with probability at least $1-\frac{1}{d^{2.1}}$,
\[
    \frac{\log \rbr{\frac{r(d)}{4r_0}}}{\log (1+\eta\lambda_1)}\geq \frac{14\log(d)}{\log(1+3.01\eta)},\ \text{ and } \quad \frac{\log \rbr{\frac{r(d)}{16\eta C_{\hat{R}}r_0^3}}}{3\log (1+\eta\lambda_1)}\geq \frac{10\log\rbr{d}}{\log(1+3.01\eta)},
\]
from which follows that $t^*_{\measuredangle}<t^*_{r(d)}$.
\paragraph{Proof of claim that $t^*_{\measuredangle}<t^*_\xi$.}
By same argument as before we can get that 
\[
    t^*_{\measuredangle}\le \frac{10\log d}{\log(1+3.01\eta)}<\frac{\log\rbr{\frac{\lambda_1}{8C_{\hat{R}}r_0^3}\abs{\inprod{\theta_0}{v_1}}}}{2\log\rbr{1+\eta\lambda_1}}, 
\]
with probability at least $1-\frac{1}{d^{2.1}}$. Moreover, by upper and lower bound on $\lambda_1$ from Proposition \ref{prop:topspecconv}, the upper bound on $C_{\hat{R}}$ from Lemma \ref{eq:boundoncrhat}, and the upper bound on $\abs{\inprod{v_1}{\theta_0}}$ from Lemma \ref{eq:boundoninprodt0v1}, it holds with probability at least $1-2/d^{2.1}$,
\[
    \frac{\log\rbr{\frac{\lambda_1}{8C_{\hat{R}}r_0^3}\rbr{r_0 - \abs{\inprod{\theta_0}{v_1}}}}}{3\log(1+\eta\lambda_1) - \log(1+\eta\lambda_2)} \geq \frac{13\log d}{{3\log(1+2.99\eta) - \log(1+1.01\eta)}}.
\]
As for $\eta<1/100$, inequality $\frac{13}{{3\log(1+2.99\eta) - \log(1+1.01\eta)}}  > \frac{3}{\log(1+1.99\eta)}$ holds, we have that $t^*_{\measuredangle}<t^*_{\xi}$.
\end{proof}

\subsection{Uniform concentration of the empirical gradient}\label{sec:ugrad}

Fix $R\ge 1$ and $r\in(0,R]$, and define the annulus
\begin{equation}    \label{eq:annulus}
\Theta:=\{\theta\in\R^d:\ r\le \norm{\theta}\le R\}.
\end{equation}

\begin{lemma}\label{thm:cnc} Let $\widehat{\mathcal G}(\theta)$ and $\mathcal G(\theta)$ be the empirical and population gradient, respectively, defined in \eqref{eq:emp-grad} and \eqref{eq:pop-grad}. Assume $M>1$ and that $\sqrt{M}/R$ is lower bounded by a sufficiently large constant. 
Fix $\epsilon\in(0,1/2)$. Then, there exist constants $c,C>0$, such that if
\begin{equation}\label{eq:n-main}
n\ge C \frac{M^2}{\epsilon^2}\left(d\log\left(\frac{M}{\epsilon}\right)+\log\left(\log\left(\frac{R}{r}\right)+2\right)\right),
\end{equation}
then with probability at least $1-6e^{-cd}$,
\[
\sup_{\theta\in\Theta}\frac{\norm{\widehat{\mathcal G}(\theta)-\mathcal G(\theta)}}{\norm{\theta}}\le \epsilon.
\]
\end{lemma}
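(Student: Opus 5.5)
The plan is a union bound over dyadic shells of the annulus, an $\epsilon$-net argument inside each shell, and a separate treatment of the discontinuity of $\sigma'$ at $|z|=\sqrt M$.

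\textbf{Peeling and normalization.} I first peel the annulus $\Theta$ into dyadic shells $\Theta_k=\{2^{k}r\le\norm{\theta}\le 2^{k+1}r\}$, $k=0,\dots,K$, where $K\le \log_2(R/r)+1$. A union bound over these $K+1$ shells produces the $\log(\log(R/r)+2)$ term in \eqref{eq:n-main}, so it suffices to control one shell with failure probability $e^{-cd}/(K+1)$.

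On a shell I write $\theta=su$ with $u\in\cS^{d-1}$ and $s\in[2^kr,2^{k+1}r]$. Using $\sigma'(z)=2z\,\bbone\{z^2\le M\}$, the normalized gradient is
\[
\frac{\widehat{\mathcal G}(\theta)}{\norm{\theta}}=\frac1n\sum_{i=1}^n g_\theta(x_i)\,x_i,
\qquad g_\theta(x):=2\big(\sigma(\inprod{x}{\theta})-\sigma(\inprod{x}{\thetastar})\big)\inprod{x}{u}\,\bbone\{\inprod{x}{\theta}^2\le M\}.
\]
The target is therefore $\sup_{\theta,v}\big|\frac1n\sum_i g_\theta(x_i)\inprod{x_i}{v}-\E[g_\theta(x)\inprod{x}{v}]\big|$, with $v\in\cS^{d-1}$. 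Since $0\le\sigma\le M$, we have $|g_\theta(x)|\le 2M|\inprod{x}{u}|$. Hence $g_\theta(x)\inprod{x}{v}$ is sub-exponential with $\psi_1$-norm $O(M)$, and Bernstein's inequality gives, for fixed $(\theta,v)$, a deviation larger than $\epsilon/2$ with probability at most $2\exp(-cn\epsilon^2/M^2)$ (the regime $\epsilon<1/2$ puts us in the sub-Gaussian branch).

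\textbf{Discretization.} Next I take $\epsilon/(CM)$-nets $\mathcal N_u,\mathcal N_v$ of the sphere and a multiplicative grid on $s$ of mesh $\epsilon/(CM)$. Together these have cardinality $(CM/\epsilon)^{2d+1}$, and the union bound is absorbed by the $d\log(M/\epsilon)$ term in \eqref{eq:n-main}.

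To pass from the net to all $(\theta,v)$, I split $g_\theta$ into a continuous part and the indicator. For the continuous part, I work on the event, of probability $1-e^{-cd}$, that $\norm{\frac1n\sum_i x_ix_i^\top}\le C$, together with the analogous bound on the higher empirical moments (using the truncation splitting already used for $q_i^8$ in the proof of Theorem \ref{thm:mainspecresult}). On this event, changes in $u$, $v$, $s$ of size $\epsilon/(CM)$ move the empirical and population quantities by at most $\epsilon/4$, because on $\{\inprod{x}{\theta}^2\le M\}$ the maps $\theta\mapsto\sigma(\inprod{x}{\theta})$ and $u\mapsto\inprod{x}{u}$ are Lipschitz with constants $O(\sqrt M|\inprod{x}{\cdot}|)$ and $|x|$ respectively.

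\textbf{Main obstacle: the indicator.} The difference $\bbone\{\inprod{x_i}{\theta}^2\le M\}-\bbone\{\inprod{x_i}{\theta'}^2\le M\}$ is not small pointwise. Its support lies in the slab $\{\sqrt M-\rho\le|\inprod{x_i}{\theta}|\le\sqrt M+\rho\}$, with $\rho$ of order $\epsilon R/M$ times the local norm of $x_i$ in the perturbed direction. I plan to bound its contribution by Cauchy–Schwarz as in \eqref{eq:cauchy-schwartz}:
\[
\Big|\frac1n\sum_i(\ldots)\,\bbone_{\rm slab}(x_i)\Big|\le CM\Big(\frac1n\sum_i\inprod{x_i}{u}^4\inprod{x_i}{v}^4\Big)^{1/2}\Big(\sup_{\theta}\frac1n\sum_i\bbone_{\rm slab}(x_i)\Big)^{1/2}.
\]
The supremum of the empirical slab mass is controlled uniformly by the VC argument of Lemma \ref{lem:indicatorbound}, since slabs are intersections of two pairs of halfspaces and so have VC dimension $O(d)$. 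This bounds it by its population mass plus $O(\sqrt{d\log(n/d)/n})$.

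The hypothesis that $\sqrt M/R$ exceeds a large constant enters exactly here. Since $\norm\theta\le R$, the slab sits at standardized level $|\inprod{x}{u}|\ge\sqrt M/R\gg1$, so its Gaussian mass is tiny. To keep the bound at order $\epsilon$ rather than $\sqrt\epsilon$, I would compare the truncation not at the single threshold $\sqrt M$ but on a second net of thresholds adapted to the band. If the square-root loss still persists, I would instead absorb it by running the net at scale $\epsilon^2/M$; this costs only a $\log$ factor, which is already present in \eqref{eq:n-main}.

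The population side of the slab contribution is handled directly by Gaussian tail bounds. Collecting the Bernstein bound on the net, the continuous-part perturbation, and the slab term gives the claim with probability at least $1-6e^{-cd}$.
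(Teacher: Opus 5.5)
You set up the peeling, the nets and the Bernstein bound on the net correctly, and this matches the paper's treatment of $T_1$ (the paper's multiplicative radial grid plays the role of your dyadic shells). The gap is in the indicator (kink) term, which is the heart of the lemma.

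\textbf{Why the VC bound is not enough.} On the symmetric difference of $E_\theta=\{|\langle x,\theta\rangle|\le\sqrt M\}$ and $E_{\theta'}$ you have $|\langle x,u\rangle|\approx\sqrt M/s$. So after normalizing by $\|\theta\|=s$, the kink contribution is of order $\frac{M^{3/2}}{s}\bigl(\frac1n\sum_i\mathbf{1}_{\rm slab}(x_i)\bigr)^{1/2}$. The lemma must hold down to $s=r$: it is applied later with $r=d^{-15}$, which is why the $\log\log(R/r)$ term appears. You therefore need an empirical slab-mass bound that both shrinks with the slab half-width $\tau$ and scales like $s^2$, i.e.\ of order $\tau s^2/M^{3/2}$.

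The VC bound of Lemma~\ref{lem:indicatorbound} gives neither property. It carries an additive floor of order $\sqrt{d/n}$ up to logarithms, about $\epsilon/M$ under \eqref{eq:n-main}, independent of both $\tau$ and $s$.
\begin{itemize}
\item Plugged into your Cauchy--Schwarz step, this floor already gives a contribution $\gtrsim\sqrt{M\epsilon}\gg\epsilon$ at $s\asymp1$.
\item The contribution blows up as $s$ decreases. Even a relative-deviation VC bound, with floor $d\log(n/d)/n$, gives order $\epsilon\sqrt d$ at $s\asymp\sqrt{M/d}$.
\end{itemize}
Refining the net to scale $\epsilon^2/M$, or adding a net of thresholds, does not touch this floor. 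Your remark that the slab sits at standardized level $\sqrt M/s$ and so has tiny Gaussian mass is the right intuition, but the proposal never transfers it to the empirical measure.

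\textbf{What the paper does instead (Lemma~\ref{lem:slab-emp-finite}).} For each fixed net point $t$ with $\|t\|=s$, it applies the relative-entropy form of Chernoff's bound, $\mathbb{P}(P_n\ge a)\le\exp(-na\log(a/(ep)))$. It takes $a=\tau s^2/M^{3/2}$ and uses $p\le C(\tau/s)e^{-M/(8s^2)}$. Since $\log(1/p)\gtrsim M/s^2$, the exponent is at least $n\tau/(16\sqrt M)$, independent of $s$, so a union bound over the whole annulus net is affordable. This is exactly where the hypothesis $\sqrt M/R\gg1$ enters.

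The data-dependent slab width is handled by
\[
\mathbf{1}\{E_\theta\,\Delta\,E_{\hat\theta}\}\le\mathbf{1}\bigl\{\bigl||\langle x,\hat\theta\rangle|-\sqrt M\bigr|\le\tau\bigr\}+\mathbf{1}\{|\langle x,\theta-\hat\theta\rangle|\ge\tau\},
\]
with the second indicator bounded by $\langle x,\theta-\hat\theta\rangle^2/\tau^2$. The choices $\tau=\sqrt M\varepsilon^{2/3}$ and $\varepsilon\asymp(\epsilon/M)^3$ then balance $M^{3/4}\sqrt\tau$ against $M^{3/2}\varepsilon/\tau$.

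\textbf{A secondary problem.} Uniform control of higher empirical moments such as $\frac1n\sum_i\langle x_i,u\rangle^4\langle x_i,v\rangle^4$ over a fine net of the sphere is false when $n\asymp d$. A net point near $x_1/\|x_1\|$ already gives order $d^4/n$. The $q_i^8$ splitting you cite works only for the single fixed direction $\theta^\star$.

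The paper never needs such moments. It uses the pointwise bound $|\langle x,\theta\rangle|\mathbf{1}\{E_\theta\}\le\sqrt M$, equivalently $|\langle x,u\rangle|\mathbf{1}\{E_\theta\}\le\sqrt M/s$. With this, both the continuous part and the kink term require only $\frac1n\sum_i x_ix_i^\top\preceq 2I$.
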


\subsubsection{Preliminary results}

\begin{lemma}[Lemma 2.7.7  in \citep{vershynin2018hdp}]\label{lem:psi-product}
Let $X,Y$ be real random variables (no independence assumed) with $\psitwo{X}\le K$ and $\psitwo{Y}\le L$.
Then, $\psione{XY}\le  KL$.
\end{lemma}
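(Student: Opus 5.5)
The plan is to work directly from the Orlicz-norm definitions, $\psitwo{X}=\inf\{t>0:\ \E\exp(X^2/t^2)\le 2\}$ and $\psione{Z}=\inf\{t>0:\ \E\exp(|Z|/t)\le 2\}$, and to reduce to the normalized case by homogeneity.

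\emph{Normalization.} If $K=0$ or $L=0$, then $X=0$ or $Y=0$ almost surely, so $XY=0$ and there is nothing to prove. Otherwise, fix arbitrary $K'>K$ and $L'>L$, and set $\bar X:=X/K'$ and $\bar Y:=Y/L'$. Since the $\psi_2$ condition is monotone in $t$, we get $\E\exp(\bar X^2)\le 2$ and $\E\exp(\bar Y^2)\le 2$. Using $K'>K$ rather than $K$ itself avoids having to check that the infimum in the definition is attained. Attainment does in fact hold by monotone convergence, but the slack makes it unnecessary.

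\emph{Key inequality.} The core step is a pointwise bound followed by an expectation. First, Young's inequality gives $|\bar X\bar Y|\le \tfrac12\bar X^2+\tfrac12\bar Y^2$. Exponentiating, we have $\exp(|\bar X\bar Y|)\le \exp(\bar X^2/2)\exp(\bar Y^2/2)$. Next, apply the elementary inequality $ab\le \tfrac12(a^2+b^2)$ with $a=e^{\bar X^2/2}$ and $b=e^{\bar Y^2/2}$. Taking expectations yields
\[
\E\exp(|\bar X\bar Y|)\le \tfrac12\E\exp(\bar X^2)+\tfrac12\E\exp(\bar Y^2)\le 2 .
\]
No independence between $X$ and $Y$ is used anywhere, since every step is pointwise before the expectation is taken. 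This is exactly why the lemma holds in the dependent case.

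\emph{Conclusion.} The last display says that $t=1$ is admissible in the $\psi_1$ definition for $\bar X\bar Y=XY/(K'L')$. Hence $\E\exp(|XY|/(K'L'))\le 2$, i.e.\ $\psione{XY}\le K'L'$. Letting $K'\downarrow K$ and $L'\downarrow L$ gives $\psione{XY}\le KL$.

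There is no genuine obstacle in this argument. The only point requiring care is the bookkeeping with the infimum in the norm definitions and the degenerate zero-norm case, both handled by the slack constants $K'$ and $L'$ above. The analytic heart is simply the pairing of Young's inequality with AM--GM, which converts a product into a sum of the two $\psi_2$ moment generating quantities.
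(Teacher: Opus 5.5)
Your proof is correct. It is essentially the standard argument behind Lemma 2.7.7 in Vershynin's book, which the paper cites without reproving: normalize by homogeneity, bound $|XY|$ pointwise via Young's inequality, then split $e^{X^2/2}e^{Y^2/2}$ with AM--GM so that each resulting expectation is at most $2$, with your slack constants $K'>K$, $L'>L$ handling the infimum bookkeeping (they even make the separate zero-norm case unnecessary).
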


\begin{lemma}[Theorem 2.8.1 in \citep{vershynin2018hdp}]\label{lem:bernstein-psi1}
Let $Y_1,\dots,Y_n$ be i.i.d.\ mean-zero with $\psione{Y_1}\le K$.
Then for all $t>0$,
\[
\Pp\!\left(\left|\frac1n\sum_{i=1}^n Y_i\right|>t\right)
\le 2\exp\!\left(-c\,n\,\min\Big\{\frac{t^2}{K^2},\frac{t}{K}\Big\}\right),
\]
for a universal constant $c>0$.
\end{lemma}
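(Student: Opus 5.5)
This is the standard Bernstein inequality for sub-exponential variables, and I would prove it by the Chernoff (exponential moment) method. The plan has three steps: a moment-generating-function bound for a single $Y_1$ valid in a neighbourhood of the origin of width of order $1/K$; tensorization over the i.i.d.\ sum; and optimization of the Chernoff parameter, with a case split that produces the $\min\{t^2/K^2,\,t/K\}$ structure. By homogeneity (replace $Y_i$ by $Y_i/K$ and $t$ by $t/K$) I would normalize to $K=1$ throughout.

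\emph{Step 1 (moments and MGF).} From $\psione{Y_1}\le 1$, i.e.\ $\E\exp(|Y_1|)\le 2$, I would first derive moment bounds. Using $x^p/p!\le e^{x}$ for $x\ge 0$ gives $\E|Y_1|^p\le 2\,p!\le 2p^p$ for every $p\ge 1$. Next, expand $\E\exp(\lambda Y_1)$ in its Taylor series and use $\E Y_1=0$ to cancel the linear term:
\[
\E\exp(\lambda Y_1)\le 1+\sum_{p\ge 2}\frac{|\lambda|^p\,\E|Y_1|^p}{p!}\le 1+2\sum_{p\ge 2}|\lambda|^p .
\]
For $|\lambda|\le 1/2$ the geometric tail is at most $4\lambda^2$, and hence $\E\exp(\lambda Y_1)\le\exp(C\lambda^2)$ whenever $|\lambda|\le c_0$, for absolute constants $C$ and $c_0$. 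This is the only step with real content, and the mean-zero hypothesis is used exactly here.

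\emph{Step 2 (Chernoff).} By independence, $\E\exp\big(\lambda\sum_i Y_i\big)\le \exp(Cn\lambda^2)$ for $|\lambda|\le c_0$. Markov's inequality then gives
\[
\Pp\Big(\tfrac1n\textstyle\sum_i Y_i>t\Big)\le \exp\big(-n(\lambda t-C\lambda^2)\big).
\]
I would choose $\lambda=\min\{t/(2C),\,c_0\}$ and split into two cases. If $t\le 2Cc_0$, then $\lambda=t/(2C)$ and the exponent equals $-nt^2/(4C)$. Otherwise $\lambda=c_0$, and since $c_0\le t/(2C)$ we have $C\lambda^2\le \lambda t/2$, so the exponent is at most $-nc_0t/2$. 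In both cases the exponent is bounded by $-c\,n\min\{t^2,t\}$.

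\emph{Step 3 (lower tail and rescaling).} Applying the same argument to $-Y_i$, which is also mean-zero with the same $\psi_1$ norm, and taking a union bound over the two tails produces the factor $2$. Undoing the normalization gives the stated form $\min\{t^2/K^2,\,t/K\}$.

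The main obstacle is Step 1: obtaining the factorial-type moment growth from the $\psi_1$ definition and keeping the constants uniform, so that the MGF bound holds on a fixed interval $|\lambda|\lesssim 1/K$. Once that is in place, Steps 2 and 3 are routine.
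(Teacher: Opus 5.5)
Your proof is correct, and it is essentially the argument behind the cited result. The paper gives no proof of its own; it simply invokes Vershynin's Theorem 2.8.1, which is proved the same way: moment bounds $\mathbb{E}|Y_1|^p\le 2\,p!\,K^p$ from the $\psi_1$ condition, then the local MGF bound $\mathbb{E}e^{\lambda Y_1}\le e^{CK^2\lambda^2}$ for $|\lambda|\le c_0/K$ (using $\mathbb{E}Y_1=0$), then tensorization and Chernoff with $\lambda=\min\{t/(2CK^2),\,c_0/K\}$. Your explicit constants ($C=4$, $c_0=1/2$) and the case split are right, and the symmetric argument for $-Y_i$ supplies the factor $2$.
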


\begin{lemma}[Annulus net with multiplicative radii]\label{lem:annulus-net}
Fix $\varepsilon\in(0,1/10)$. There exists a finite set $\mathcal N_\varepsilon\subset\Theta$ such that:
\begin{itemize}
\item For every $\theta\in\Theta$ there exists $\hat\theta\in\mathcal N_\varepsilon$ with
\begin{equation}\label{eq:net-prop}
\norm{\hat\theta}\le \norm{\theta}\le (1+\varepsilon)\norm{\hat\theta},
\qquad
\norm{\theta-\hat\theta}\le 2\varepsilon \norm{\theta}.
\end{equation}
\item The cardinality satisfies
\begin{equation}\label{eq:sizenet}    
|\mathcal N_\varepsilon|\le (J+1)\left(\frac{3}{\varepsilon}\right)^d,
\qquad
J:=\left\lceil \frac{\log(R/r)}{\log(1+\varepsilon)}\right\rceil.
\end{equation}
\end{itemize}
\end{lemma}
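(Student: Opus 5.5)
The construction has two stages. I first discretize the radial direction with multiplicative shells, and then place a standard $\varepsilon$-net on the unit sphere and scale it to each shell. Concretely, set $r_j:=r(1+\varepsilon)^j$ for $j=0,1,\dots,J$ with $J=\lceil \log(R/r)/\log(1+\varepsilon)\rceil$. Then $r_J\ge R$, and every $\theta\in\Theta$ has $\norm{\theta}\in[r_j,r_{j+1}]$ for some $j\le J-1$; for $j=J$ the interval may be truncated at $R$, which is harmless. Let $\mathcal M_\varepsilon\subset\cS^{d-1}$ be an $\varepsilon$-net of the unit sphere in Euclidean distance with $|\mathcal M_\varepsilon|\le(3/\varepsilon)^d$, which exists by the standard volumetric argument (e.g.\ Corollary 4.2.13 in \citep{vershynin2018hdp}). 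Define
\[
\mathcal N_\varepsilon:=\{r_j u:\ u\in\mathcal M_\varepsilon,\ 0\le j\le J,\ r_j\le R\}\subset\Theta.
\]
Its cardinality is at most $(J+1)(3/\varepsilon)^d$, which is \eqref{eq:sizenet}.

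To check \eqref{eq:net-prop}, take $\theta\in\Theta$ and write $\theta=\rho u_\theta$ with $\rho=\norm{\theta}$ and $u_\theta\in\cS^{d-1}$. Pick the largest $j$ with $r_j\le\rho$; this index exists because $\rho\ge r=r_0$. Then $r_j\le\rho<r_{j+1}=(1+\varepsilon)r_j$, or $\rho=R$ at the top, where the inequality $\rho\le(1+\varepsilon)r_j$ still holds by maximality of $j$. Choose $u\in\mathcal M_\varepsilon$ with $\norm{u-u_\theta}\le\varepsilon$, and set $\hat\theta=r_j u$. The first property, $\norm{\hat\theta}=r_j\le\norm{\theta}\le(1+\varepsilon)\norm{\hat\theta}$, holds by construction. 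For the distance, apply the triangle inequality:
\[
\norm{\theta-\hat\theta}\le \abs{\rho-r_j}\,\norm{u_\theta}+r_j\norm{u_\theta-u}\le \varepsilon r_j+\varepsilon r_j\le 2\varepsilon\rho,
\]
where we used $\rho-r_j\le\varepsilon r_j$ and $r_j\le\rho$.

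Nothing here is a genuine obstacle; the only care needed is in the bookkeeping. The last shell must stay inside $\Theta$, which is why only radii with $r_j\le R$ are kept, and the point $\hat\theta$ must lie in $\Theta$, which holds since $r\le r_j\le R$. The count $J+1$ covers all admissible indices, and the restriction $\varepsilon<1/10$ is not actually needed for this argument. It is carried only for later use, where the multiplicative radii control the ratio $\norm{\theta}/\norm{\hat\theta}$ in the union bound over the $\log(\log(R/r)+2)$ scales.
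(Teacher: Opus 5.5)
Your proof is correct and follows the paper's construction: geometric radii $r_j=r(1+\varepsilon)^j$, $0\le j\le J$, times an $\varepsilon$-net of $\mathcal{S}^{d-1}$ of size at most $(3/\varepsilon)^d$, followed by the same triangle-inequality estimate $\|\theta-\hat\theta\|\le|\rho-r_j|+r_j\|u_\theta-u\|\le 2\varepsilon\rho$. Keeping only the radii with $r_j\le R$ is a small improvement: it guarantees $\mathcal N_\varepsilon\subset\Theta$, whereas the paper's net keeps the top shell $r_J$ even when $r_J>R$.
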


\begin{proof}
Let $\mathcal U_\varepsilon$ be an $\varepsilon$-net of the unit sphere in $d$ dimensions, namely for any $u\in \cS^{d-1}$, there exists $\hat u\in \mathcal U_\varepsilon$ such that $\norm{u-\hat u}\le \varepsilon$. Then, by Corollary 4.2.13 of \citep{vershynin2018hdp}, we have that $$|\mathcal U_\varepsilon|\le \left(\frac{3}{\varepsilon}\right)^d.$$ 
Next, define radii $r_j=r(1+\varepsilon)^j$ for $j\in \{0, \ldots, J\}$, where $J$ is the smallest integer such that $r_J\ge R$ (so that $J:=\left\lceil \frac{\log(R/r)}{\log(1+\varepsilon)}\right\rceil$ is a valid choice). We now define the set $\mathcal N_\varepsilon:=\{r_j \hat \theta : \hat\theta\in \mathcal U_\varepsilon, j\in \{0, \ldots, J\}\}$. Then, \eqref{eq:sizenet} holds. It remains to show \eqref{eq:net-prop}. Let $\theta=\bar r u$, with $\bar r=\norm{\theta}$ and $u\in \cS^{d-1}$. Pick $j$ such that $r_j\le \bar r\le r_j(1+\varepsilon)$, $\hat u\in \mathcal U_\varepsilon$ such that $\norm{u-\hat u}\le \varepsilon$ and $\hat \theta=r_j\hat u$. Then, the inequality on $\norm{\theta}$ in \eqref{eq:net-prop} follows immediately and the other inequality is obtained below:
$$
\norm{\hat\theta-\theta}= \norm{\bar r u-r_j\hat u}\le |\bar r-r_j|\cdot \norm{u}+r_j\norm{u-\hat u}\le 2\varepsilon r_j\le 2\varepsilon\norm{\theta}.
$$
\end{proof}

For $s\in(0,R]$, $\tau>0$ and $g\sim N(0,1)$ define
\[
p_s(\tau):=\Pp\big(\big||sg|-\sqrt{M}\big|\le \tau\big).
\]

\begin{lemma}[Tail of slab]\label{lem:slab-no-r}
There is a universal $C>0$ such that for all $s\in(0,R]$ and $\tau\in(0,\sqrt{M}/2]$,
\begin{align}
p_s(\tau) &\le C\,\frac{\tau}{s}\exp\!\left(-\frac{M}{8s^2}\right), \label{eq:ps-bound}\\
\frac{\sqrt{p_s(\tau)}}{s} &\le C\,\frac{\sqrt{\tau}}{M^{3/4}}.\label{eq:sqrtps-over-s}
\end{align}
\end{lemma}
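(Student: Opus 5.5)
The event $\{||sg|-\sqrt{M}|\le\tau\}$ is the union of two symmetric intervals $|g|\in[(\sqrt{M}-\tau)/s,(\sqrt{M}+\tau)/s]$. Since $\tau\le\sqrt{M}/2$, the lower endpoint satisfies $(\sqrt{M}-\tau)/s\ge \sqrt{M}/(2s)$. The plan is to bound the probability by length times the maximal density on that set: the interval length is $2\tau/s$ and the Gaussian density there is at most $(2\pi)^{-1/2}\exp(-M/(8s^2))$. With the factor $2$ from the two signs, this gives \eqref{eq:ps-bound} with $C=4/\sqrt{2\pi}$. This step is immediate.

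For \eqref{eq:sqrtps-over-s}, I would take square roots in \eqref{eq:ps-bound} and divide by $s$, which gives
\[
\frac{\sqrt{p_s(\tau)}}{s}\le \sqrt{C}\,\sqrt{\tau}\; s^{-3/2}\exp\!\left(-\frac{M}{16s^2}\right).
\]
It then suffices to bound $h(s):=s^{-3/2}e^{-M/(16s^2)}$ uniformly over $s>0$ by $C'M^{-3/4}$. Substitute $u=M/(16s^2)$, so that $s^{-3/2}=(16u/M)^{3/4}$ and $h=16^{3/4}M^{-3/4}u^{3/4}e^{-u}$. Since $\sup_{u>0}u^{3/4}e^{-u}=(3/4)^{3/4}e^{-3/4}<\infty$, the bound follows with a universal constant.

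I do not expect a real obstacle. The only care needed is to use the restriction $\tau\le \sqrt{M}/2$ to keep the slab away from the origin, since this is what produces the Gaussian factor $e^{-M/(8s^2)}$. Note that the bound is uniform in $s\in(0,R]$ and does not use $R$ at all. Small $s$ is handled by the exponential killing the polynomial blow-up, and large $s$ by the optimisation over $u$.
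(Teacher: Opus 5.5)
Your proposal is correct and follows the paper's proof essentially step for step. Both bound the two symmetric slab pieces by their length times the Gaussian density at $\sqrt{M}/2$, then take square roots and maximize $s^{-3/2}e^{-M/(16s^2)}$ over $s$; your substitution $u=M/(16s^2)$ simply reparametrizes the paper's calculus step and gives the same constant $12^{3/4}e^{-3/4}M^{-3/4}$.
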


\begin{proof}
Let $Y:=sg$. Then $Y\sim\mathcal N(0,s^2)$ and
\[
p_s(\tau)=\mathbb P\bigl(\bigl||Y|-\sqrt{M}\bigr|\le\tau\bigr)
=\mathbb P\bigl(\sqrt{M}-\tau\le |Y|\le \sqrt{M}+\tau\bigr).
\]
Since $\tau<\sqrt{M}$, we have $\sqrt{M}-\tau>0$, and therefore
\[
\{\sqrt{M}-\tau\le |Y|\le \sqrt{M}+\tau\}
=
\bigl\{Y\in [\sqrt{M}-\tau,\sqrt{M}+\tau]\bigr\}\;\cup\;\bigl\{Y\in [-(\sqrt{M}+\tau),-(\sqrt{M}-\tau)]\bigr\}.
\]
Let $\varphi_s$ be the probability density function of $Y$. 
By symmetry of $\varphi_s$ (it is even), the two probabilities are equal, hence for $\tau\le \sqrt{M}/2$ using the monotonicity of $\varphi_s$, we have
\begin{align*}
p_s(\tau)
&=2\,\mathbb P\bigl(Y\in [\sqrt{M}-\tau,\sqrt{M}+\tau]\bigr)
=2\int_{\sqrt{M}-\tau}^{\sqrt{M}+\tau}\varphi_s(y)\,dy\le 4\tau\varphi_s(\sqrt{M}-\tau)\le 4\tau\varphi_s(\sqrt{M}/2)\\
&\le \frac{4}{\sqrt{2\pi}}\frac{\tau}{s}\exp\!\left(-\frac{M}{8s^2}\right),
\end{align*}
which gives \eqref{eq:ps-bound}.

To prove \eqref{eq:sqrtps-over-s}, we note that by \eqref{eq:ps-bound} we have
\[
\frac{\sqrt{p_s(\tau)}}{s}
\le C\sqrt{\tau}\cdot s^{-3/2}\exp\!\left(-\frac{M}{16s^2}\right).
\]
The function $g(s):=s^{-3/2}\exp(-\frac{M}{16s^2})$ on $(0,\infty)$ is maximized at $s=\frac{\sqrt{M}}{\sqrt{12}}$,
and $g\left(\frac{\sqrt{M}}{\sqrt{12}}\right)=12^{3/4}e^{-3/4} M^{-3/4}$, which yields \eqref{eq:sqrtps-over-s}.
\end{proof}

\begin{lemma}[Uniform empirical slab control]\label{lem:slab-emp-finite}
Let $x_1,\dots,x_n \overset{\mathrm{i.i.d.}}{\sim} \mathcal N(0,I_d)$. Fix $\tau\in(0,M/2]$ and any finite set $\mathcal T\subset\R^d$ with $\norm{t}\le R$ for all $t\in\mathcal T$. Also assume $R\le C\sqrt{M}$ with $C$ a sufficiently large constant. Then there exist universal constants $c,C>0$ such that, with probability at least
\[
1-|\mathcal T|\exp\!\big(-c n\,\tau/\sqrt{M}\big),
\]
we have 
\[
\frac{1}{n}\sum_{i=1}^n \bbone\{||\ip{x_i}{t}|-\sqrt{M}|\le \tau\}\le \frac{\tau \norm{t}^2}{M^{3/2}}.
\]
\end{lemma}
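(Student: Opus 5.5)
The plan is a pointwise large-deviation bound for each fixed $t\in\mathcal T$, followed by a union bound over $\mathcal T$. Fix $t$ and write $s:=\norm{t}\le R$. The indicators $\bbone\{||\ip{x_i}{t}|-\sqrt M|\le\tau\}$ are i.i.d.\ Bernoulli with mean $p:=p_s(\tau)$, since $\ip{x_i}{t}\overset{d}{=} sg$ with $g\sim\mathcal N(0,1)$. Set the target level $\mu:=\tau s^2/M^{3/2}$. I will use the multiplicative Chernoff bound in its sharp (Bennett/KL) form:
\[
\Pp\Big(\tfrac1n\textstyle\sum_i \bbone_i\ge \mu\Big)\le \Big(\frac{e\,p}{\mu}\Big)^{n\mu},\qquad \text{valid whenever } \mu> p.
\]
The crude form $e^{-cn\mu}$ would only give the exponent $n\tau s^2/M^{3/2}$, which is too weak. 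The point is that the logarithmic factor $\log(\mu/(ep))$ recovers the missing factor $M/s^2$.

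\textbf{Step 1 (bounding $p/\mu$).} For $\tau\le\sqrt M/2$, Lemma~\ref{lem:slab-no-r}, \eqref{eq:ps-bound}, gives $p\le C(\tau/s)e^{-M/(8s^2)}$. Writing $u:=M/s^2$, this yields
\[
\frac{\mu}{e\,p}\ \ge\ c\,\frac{s^3}{M^{3/2}}\,e^{M/(8s^2)}=c\,u^{-3/2}e^{u/8}.
\]
Since $s\le R$ and $\sqrt M/R$ is large, $u$ exceeds a large constant. Hence $c\,u^{-3/2}e^{u/8}\ge e^{u/16}$, so in particular $\mu>p$ and $\log(\mu/(ep))\ge u/16$.

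\textbf{Step 2 (exponent).} Plugging Step 1 into the Chernoff bound gives
\[
\Pp\Big(\tfrac1n\textstyle\sum_i\bbone_i>\mu\Big)\le \exp\!\Big(-n\mu\,\frac{u}{16}\Big)=\exp\!\Big(-n\,\frac{\tau s^2}{M^{3/2}}\cdot\frac{M}{16s^2}\Big)=\exp\!\big(-n\tau/(16\sqrt M)\big).
\]
This is exactly the claimed exponent, with $c=1/16$. A union bound over $t\in\mathcal T$ then gives the stated probability $1-|\mathcal T|\exp(-cn\tau/\sqrt M)$.

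\textbf{Step 3 (the range $\sqrt M/2<\tau\le M/2$), which I expect to be the main obstacle.} Here \eqref{eq:ps-bound} is not available, because the slab $\{||y|-\sqrt M|\le\tau\}$ reaches down to the origin once $\tau\ge\sqrt M$. I would first split the slab into its outer part $\{|y|\ge\sqrt M/2\}$ and its inner part. The outer part has Gaussian mass at most $C(\tau/s)e^{-M/(8s^2)}$ by the same density argument as in Lemma~\ref{lem:slab-no-r}: the density of $sg$ is bounded by its value at $\sqrt M/2$ and the interval has length at most $2\tau$. Steps 1--2 then apply verbatim to this part.

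The inner part $\{|y|<\sqrt M/2\}$ is where the difficulty lies. Its mass is of order one when $s\gtrsim\sqrt M$, but here $s\ll\sqrt M$, so $\{|sg|<\sqrt M/2\}$ has probability close to $1$. The stated level $\tau s^2/M^{3/2}$ can then fall below this mass unless the hypotheses force $\tau\lesssim\sqrt M$. So I would first check how the lemma is invoked in Lemma~\ref{thm:cnc}. I expect it is only applied with $\tau\le\sqrt M/2$, typically $\tau\asymp\epsilon\sqrt M$ coming from the net radius. If so, I would state the proof for that range and note that the hypothesis $\tau\le M/2$ should read $\tau\le\sqrt M/2$. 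Only if the larger range is genuinely needed would I try to salvage it, by redefining the slab relative to $\sqrt M-\min(\tau,\sqrt M/2)$.
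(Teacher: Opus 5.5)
Your Steps 1--2 are the paper's proof: the sharp Chernoff bound $\exp\bigl(-n\mu\log(\mu/(ep))\bigr)$ at level $\mu=\tau s^2/M^{3/2}$, the slab estimate \eqref{eq:ps-bound}, absorbing $\log\bigl(s^3/(CeM^{3/2})\bigr)$ into half of $M/(8s^2)$ because $\sqrt M/R$ is large, and a union bound over $\mathcal T$. Your Step 3 caveat is also correct: the paper's argument likewise only covers $\tau\le\sqrt M/2$, where \eqref{eq:ps-bound} holds, and this is all Lemma~\ref{thm:cnc} needs since it uses $\tau=\sqrt M\varepsilon^{2/3}$; for $\tau\ge\sqrt M$ the slab contains all of $\{|\langle x_i,t\rangle|\le\sqrt M+\tau\}$ and the stated bound fails, so the hypothesis should read $\tau\le\sqrt M/2$ (and, as you assumed, $R\le c\sqrt M$ with $c$ small rather than $C$ large).
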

\begin{proof}
Fix $t\neq 0$ and set $s:=\|t\|$. Since $x_i\sim\mathcal N(0,I_d)$, we have
$\langle x_i,t\rangle \sim \mathcal N(0,s^2)$, hence $\langle x_i,t\rangle \stackrel{d}{=} s g_i$
with $\{g_i\}_{i=1}^n$ i.i.d.\ standard Gaussians. Therefore,
\[
p:=\mathbb P\Big(\big||\langle x_i,t\rangle|-\sqrt{M}\big|\le \tau\Big)
=
\mathbb P\Big(\big||s g|-\sqrt{M}\big|\le \tau\Big)
=
p_s(\tau).
\]
Applying the slab-tail bound in \eqref{eq:ps-bound} yields
\begin{equation}\label{eq:p_upper}
p \le C \frac{\tau}{s}\exp\!\Big(-\frac{M}{8s^2}\Big).
\end{equation}

Let $a := \tau s^2/M^{3/2}$. Since $\bbone\{\abs{|\ip{x_i}{t}|-\sqrt{M}}\le \tau\}\in\{0,1\}$ are i.i.d.\ with mean $p$, an application of Chernoff's bound (Theorem 2.3.1 in \citep{vershynin2018hdp}) gives that  
\begin{equation}\label{eq:chernoff_form}
\mathbb P\Big\{ \frac{1}{n}\sum_{i=1}^n \bbone\{||\ip{x_i}{t}|-\sqrt{M}|\le \tau\}\ge a\Big\}\le \exp\!\Big(-n a \log(a/(ep))\Big).
\end{equation}

Now plug $a=\tau s^2/M^{3/2}$ and the upper bound \eqref{eq:p_upper} into \eqref{eq:chernoff_form}.
We have
\[
\log\frac{a}{ep}
=
\log\!\Big(\frac{\tau s^2}{eM^{3/2}}\Big)
-\log p
\ge
\log\!\Big(\frac{\tau s^2}{eM^{3/2}}\Big)
-\log\!\Big(C\frac{\tau}{s}\Big)
+\frac{M}{8s^2}
=
\frac{M}{8s^2}+\log\!\Big(\frac{s^3}{CeM^{3/2}}\Big).
\]
Therefore,
\[
\mathbb P\Big\{ \frac{1}{n}\sum_{i=1}^n \bbone\{||\ip{x_i}{t}|-\sqrt{M}|\le \tau\}\ge \frac{\tau s^2}{M^{3/2}}\Big\}
\le
\exp\!\Bigg(
-n\,\frac{\tau s^2}{M^{3/2}}\,
\Bigg[
\frac{M}{8s^2}
+\log\!\Big(\frac{s^3}{CeM^{3/2}}\Big)
\Bigg]
\Bigg).
\]
Finally, under the condition $R\le C\sqrt{M}$ for some other sufficiently large constant $C$, we have
$\log(\frac{s^3}{CeM^{3/2}})\ge -\frac{M}{16s^2}$, hence the bracket above
is at least $\frac{M}{16s^2}$. Plugging this lower bound gives
\[
\mathbb P\Big\{ \frac{1}{n}\sum_{i=1}^n \bbone\{||\ip{x_i}{t}|-\sqrt{M}|\le \tau\}\ge \frac{\tau s^2}{M^{3/2}}\Big\}
\le
\exp\!\left(
-n\,\frac{\tau s^2}{M^{3/2}}\cdot \frac{M}{16s^2}
\right)
=
\exp\!\left(-\frac{n\tau}{16\sqrt{M}}\right),
\]
and a union bound over $t\in\mathcal T$ gives the claim.
\end{proof}

\subsubsection{Proof of Lemma~\ref{thm:cnc}}
For any $\theta$,
\[
\norm{\widehat{\mathcal G}(\theta)-\mathcal G(\theta)}
=
\sup_{u\in\Sph^{d-1}}\ip{u}{\widehat{\mathcal G}(\theta)-\mathcal G(\theta)}.
\]
For $\theta\neq 0$ and $u\in\Sph^{d-1}$ define
\[
Z_{\theta,u}(x):=\Big(\sigma(\ip{x}{\theta})-\sigma(\ip{x}{\theta^\star})\Big)\,\sigma'(\ip{x}{\theta})\,\ip{u}{x},
\qquad
W_{\theta,u}(x):=\frac{Z_{\theta,u}(x)}{\norm{\theta}}.
\]
Then $\ip{u}{\widehat{\mathcal G}(\theta)}=\frac{1}{n}\sum_{i=1}^n Z_{\theta,u}(x_i)$ and therefore
\[
\frac{\norm{\widehat{\mathcal G}(\theta)-\mathcal G(\theta)}}{\norm{\theta}}
=
\sup_{u\in\Sph^{d-1}}\abs{\frac{1}{n}\sum_{i=1}^n \left(W_{\theta,u}(x_i)-\E[W_{\theta,u}(x)]\right)}.
\]
Thus it suffices to prove that, with probability at least $1-6e^{-cd}$,
\begin{equation}\label{eq:goal}
\sup_{\theta\in\Theta}\sup_{u\in\Sph^{d-1}}\abs{\frac{1}{n}\sum_{i=1}^n \left(W_{\theta,u}(x_i)-\E[W_{\theta,u}(x)]\right)}\le \epsilon.
\end{equation}

\medskip
Fix $\varepsilon,\eta\in(0,1/10)$.
Let $\mathcal N_\varepsilon$ be the annulus net from Lemma~\ref{lem:annulus-net},
and let $V_\eta$ be an $\eta$-net of $\Sph^{d-1}$ with $|V_\eta|\le (3/\eta)^d$.
For each $(\theta,u)$, choose $(\hat\theta,\hat u)\in\mathcal N_\varepsilon\times V_\eta$ such that 
\eqref{eq:net-prop} holds and $\norm{u-\hat u}\le \eta$.
For any $(\theta,u)$ and their approximant $(\hat\theta,\hat u)$,
\begin{align}
\abs{\frac{1}{n}\sum_{i=1}^n \left(W_{\theta,u}(x_i)-\E[W_{\theta,u}(x)]\right)}
&\le
\underbrace{\abs{\frac{1}{n}\sum_{i=1}^n \left(W_{\hat\theta,\hat u}(x_i)-\E[W_{\hat\theta,\hat u}(x)]\right)}}_{T_1(\hat\theta,\hat u)}+
\underbrace{\frac{1}{n}\sum_{i=1}^n\abs{W_{\theta,u}(x_i)-W_{\hat\theta,\hat u}(x_i)}}_{T_2(\theta,u)}\nonumber\\
&\quad+\underbrace{\E\abs{W_{\theta,u}(x)-W_{\hat\theta,\hat u}(x)}}_{T_3(\theta,u)}.
\label{eq:decomp}
\end{align}
Taking $\sup_{\theta,u}$ yields
\begin{align*}
\sup_{\theta\in\Theta}\sup_{u\in\Sph^{d-1}}\abs{\frac{1}{n}\sum_{i=1}^n \left(W_{\theta,u}(x_i)-\E[W_{\theta,u}(x)]\right)}\le& \underbrace{\sup_{\hat\theta\in\mathcal N_\varepsilon,\hat u\in V_\eta} \abs{\frac{1}{n}\sum_{i=1}^n \left(W_{\hat\theta,\hat u}(x_i)-\E[W_{\hat\theta,\hat u}(x)]\right)}}_{T_1}\\
&\quad+\underbrace{\sup_{\theta\in\Theta}\sup_{u\in\Sph^{d-1}}\frac{1}{n}\sum_{i=1}^n\abs{W_{\theta,u}(x_i)-W_{\hat\theta,\hat u}(x_i)}}_{T_2}\\
&\quad+\underbrace{\sup_{\theta\in\Theta}\sup_{u\in\Sph^{d-1}}\E\abs{W_{\theta,u}(x)-W_{\hat\theta,\hat u}(x)}}_{T_3}.
\end{align*}
\medskip
\textbf{Bounding $T_1$.}
We use the sharper bound $\abs{\sigma'(z)}=2\abs{z}\bbone\{\abs{t}\le \sqrt{M}\}\le 2\abs{z}$.
Also $0\le \sigma(z)\le M$ for all $z$, hence $\abs{\sigma(\ip{x}{\theta})-\sigma(\ip{x}{\theta^\star})}\le M$.
Therefore, for any $\theta\neq 0$,
\begin{equation}\label{eq:W-envelope}
\abs{W_{\theta,u}(x)}
=
\frac{\abs{Z_{\theta,u}(x)}}{\norm{\theta}}
\le
\frac{M\cdot 2\abs{\ip{x}{\theta}}\cdot \abs{\ip{u}{x}}}{\norm{\theta}}
=
2M\,\abs{\ip{x}{\theta/\norm{\theta}}}\,\abs{\ip{u}{x}}.
\end{equation}
Since for Gaussian $x$, each linear form $\ip{x}{a}$ with $\norm{a}=1$ is sub-Gaussian with $\psi_2$ norm $\le C$,
Lemma~\ref{lem:psi-product} and \eqref{eq:W-envelope} imply
\[
\psione{W_{\theta,u}(x)}\le C M \qquad \text{for all }\theta\neq 0,\ u\in\Sph^{d-1}.
\]
Hence for each fixed $(\hat\theta,\hat u)$, the mean-zero variable
$W_{\hat\theta,\hat u}(x)-\E W_{\hat\theta,\hat u}(x)$ has $\psi_1$ norm $\le C M$.
Applying Lemma~\ref{lem:bernstein-psi1} and union-bounding over $\mathcal N_\varepsilon\times V_\eta$ gives
\[
\Pp\Big\{T_1>\epsilon/3\Big\}
\le
2|\mathcal N_\varepsilon||V_\eta|
\exp\!\left(-c n \min\Big\{\frac{\epsilon^2}{M^2},\frac{\epsilon}{M}\Big\}\right).
\]
Using $|\mathcal N_\varepsilon|\le (J+1)(3/\varepsilon)^d$ and $|V_\eta|\le (3/\eta)^d$, we get that
$T_1\le \epsilon/3$ with probability at least $1-2e^{-cd}$ provided
\begin{equation}\label{eq:n-T1}
n\;\ge\; C\max\left(\frac{M^2}{\epsilon^2},\frac{M}{\epsilon}\right)\Big(d\log\frac{3}{\varepsilon}+d\log\frac{3}{\eta}+\log(J+1)\Big).
\end{equation}

\medskip
\noindent\textbf{Bounding $T_2$ and $T_3$.} To bound this expression we utilize the simple identity
\begin{align*}
    W_{\theta,u}(x)-W_{\hat \theta,\hat u}(x)=W_{\theta,u}(x)-W_{\theta,\hat u}(x)+W_{\theta,\hat u}(x)-W_{\hat\theta,\hat u}(x)
\end{align*}
We proceed by bounding each of these terms. First, let us look at variations in $u$ with $\theta$ fixed. To this aim, we write
\[
W_{\theta,u}(x)-W_{\theta,\hat u}(x)
=
\frac{(\sigma(\ip{x}{\theta})-\sigma(\ip{x}{\theta^\star}))\sigma'(\ip{x}{\theta})}{\norm{\theta}}
\cdot \ip{u-\hat u}{x}.
\]
Using $\abs{\sigma(z_1)-\sigma(z_2)}\le M$ for all $z_1, z_2$ and $\abs{\sigma'(\ip{x}{\theta})}\le 2\abs{\ip{x}{\theta}}$,
\[
\abs{W_{\theta,u}(x)-W_{\theta,\hat u}(x)}
\le 2M \frac{\abs{\ip{x}{\theta}}}{\norm{\theta}}\,\abs{\ip{u-\hat u}{x}}
=
2M\,\abs{\ip{x}{\theta/\norm{\theta}}}\,\abs{\ip{u-\hat u}{x}}.
\]
By Corollary 7.3.3 in \citep{vershynin2018hdp}, we have that $\frac{1}{n}\sum_{i=1}^n x_ix_i^T\preceq 2I$ with probability at least $1-e^{-cd}$. Thus, by Cauchy-Schwarz we have
\[
\frac{1}{n}\sum_{i=1}^n\abs{\ip{x}{\theta/\norm{\theta}}}\,\abs{\ip{u-\hat u}{x}}
\le
\sqrt{\frac{1}{n}\sum_{i=1}^n\ip{x}{\theta/\norm{\theta}}^2}\sqrt{\frac{1}{n}\sum_{i=1}^n\ip{u-\hat u}{x}^2}
\le \sqrt{2}\cdot \sqrt{2}\,\norm{u-\hat u}
\le 2\eta.
\]
Thus on this event,
\begin{equation}\label{eq:u-emp}
\sup_{\theta,u}\frac{1}{n}\sum_{i=1}^n\abs{W_{\theta,u}-W_{\theta,\hat u}}\le 4M\,\eta.
\end{equation}
Similarly, using $\E\ip{x}{a}^2=\norm{a}^2$,
\begin{equation}\label{eq:u-pop}
\sup_{\theta,u}\E\abs{W_{\theta,u}-W_{\theta,\hat u}}\le 2 M\,\eta.
\end{equation}
Next, we turn to variations in $\theta$ with $u$ fixed (and the corresponding $\hat u$ fixed as well).
We bound
\[
W_{\theta,\hat u}(x)-W_{\hat\theta,\hat u}(x)
=
\frac{Z_{\theta,\hat u}(x)}{\norm{\theta}}-\frac{Z_{\hat\theta, \hat u}(x)}{\norm{\hat\theta}}
=
\underbrace{\frac{Z_{\theta,\hat u}(x)-Z_{\hat\theta, \hat u}(x)}{\norm{\theta}}}_{\mathrm{(A)}}
+
\underbrace{Z_{\hat\theta,\hat u}(x)\Big(\frac{1}{\norm{\theta}}-\frac{1}{\norm{\hat\theta}}\Big)}_{\mathrm{(B)}}.
\]

\smallskip
\noindent\underline{Term (B).}
By \eqref{eq:net-prop}, $\norm{\theta}\in[\norm{\hat\theta},(1+\varepsilon)\norm{\hat\theta}]$, hence
$\big|\frac{1}{\norm{\theta}}-\frac{1}{\norm{\hat\theta}}\big|\le \frac{\varepsilon}{\norm{\hat\theta}}$.
Using \eqref{eq:W-envelope} with $\theta=\hat\theta$ (so $|Z_{\hat\theta,\hat u}|/\norm{\hat\theta} = |W_{\hat\theta,\hat u}|$),
we get
\[
\abs{Z_{\hat\theta,\hat u}(x)\Big(\frac{1}{\norm{\theta}}-\frac{1}{\norm{\hat\theta}}\Big)}
\le
\varepsilon\,\abs{W_{\hat\theta,\hat u}(x)}
\le
C\,\varepsilon\,M\,\abs{\ip{x}{\hat\theta/\norm{\hat\theta}}}\abs{\ip{\hat u}{x}}.
\]
Thus, again using Cauchy-Schwarz with the identity $\frac{1}{n}\sum_{i=1}^n x_ix_i^T\preceq 2I$ which holds with high probability we have
\begin{align*}
\frac{1}{n}\sum_{i=1}^n \abs{Z_{\hat\theta,\hat u}(x_i)\Big(\frac{1}{\norm{\theta}}-\frac{1}{\norm{\hat\theta}}\Big)}\le& C\varepsilon M \frac{1}{n}\sum_{i=1}^n \abs{\ip{x_i}{\hat\theta/\norm{\hat\theta}}}\abs{\ip{\hat u}{x_i}}\\
\le&C\varepsilon M \sqrt{\frac{1}{n}\sum_{i=1}^n \abs{\ip{x_i}{\hat\theta/\norm{\hat\theta}}}^2}\sqrt{\frac{1}{n}\sum_{i=1}^n\abs{\ip{\hat u}{x_i}}^2}\\
\le & 2C\varepsilon M.
\end{align*}
Similarly,
\begin{align*}
    \E\Bigg[\abs{Z_{\hat\theta,\hat u}(x)\Big(\frac{1}{\norm{\theta}}-\frac{1}{\norm{\hat\theta}}\Big)}\Bigg]\le C\varepsilon M.
\end{align*}

\smallskip
\noindent\underline{Term (A).}
Let us write $E_\theta:=\{\abs{\inprod{x}{\theta}}\le \sqrt{M}\}$ and $E_{\hat\theta}:=\{\abs{\inprod{x}{\hat\theta}}\le \sqrt{M}\}$. Then, we have 
\begin{align}\label{eq:Z-diff}
\frac{\abs{Z_{\theta,\hat u}(x)-Z_{\hat\theta,\hat u}(x)}}{\norm{\theta}}
=&2\frac{\abs{\inprod{x}{\hat u}}}{\norm{\theta}}\left|\left(\sigma(\inprod{x}{\theta})-\sigma(\inprod{x}{\thetastar})\right)\inprod{x}{\theta}\bbone\{E_\theta\}-\left(\sigma(\langle x,\hat\theta\rangle)-\sigma(\inprod{x}{\thetastar})\right)\langle x,\hat\theta\rangle\bbone\{E_{\hat\theta}\}\right|  \nonumber\\
\overset{(a)}{\le}&2\frac{\abs{\langle x, \hat u\rangle}}{\norm{\theta}}\abs{\sigma(\langle x, \theta\rangle)-\sigma(\langle x, \hat\theta\rangle)}\abs{\langle x, \theta\rangle\bbone\{E_\theta\}}\nonumber\\ 
&\quad+2\frac{\abs{\langle x, \hat u\rangle}}{\norm{\theta}}\abs{\sigma(\langle x, \hat\theta\rangle)-\sigma(\langle x, \thetastar\rangle)}\abs{\langle x, \theta\rangle\bbone\{E_\theta\}-\langle x, \hat\theta\rangle \bbone\{A_{\hat\theta}\}}
\nonumber\\
\overset{(b)}{\le}& 4M\abs{\langle x, \hat u\rangle}\abs{\ip{x}{(\theta-\hat\theta)/\norm{\theta}}}
+2M\frac{\abs{\langle x, \hat u\rangle}}{\norm{\theta}}\abs{\langle x, \theta\rangle\bbone\{E_\theta\}-\langle x, \hat\theta\rangle\bbone\{E_{\hat\theta}\}}
\nonumber\\
\overset{(c)}{\le}& 6 M\,\abs{\ip{x}{\hat u}}\cdot \abs{\ip{x}{(\theta-\hat\theta)/\norm{\theta}}}
+
2 M^{3/2}\,\abs{\ip{x}{\hat u}}\cdot \frac{\bbone\{E_\theta\Delta E_{\hat\theta}\}}{\norm{\theta}}.
\end{align}
Here, (a) follows from using $\abs{ab-a'b'}\le \abs{a-a'}\abs{b}+\abs{a'}\abs{b-b'}$ with $a=\sigma(\langle x, \theta\rangle)-\sigma(\langle x, \thetastar\rangle)$ and $b=\langle x, \theta\rangle \bbone\{E_\theta\}$, (b) from the fact that $\sigma$ is $2\sqrt{M}$ Lipschitz, $\abs{\langle x, \theta\rangle\bbone\{E_\theta}\}\le \sqrt{M}$ and $\abs{\sigma(\langle x,\hat\theta\rangle)-\sigma(\langle x,\thetastar\rangle)}\le M$, and (c) follows from $\bigl| z\,\bbone\{|z|\le \sqrt{M}\} - z'\,\bbone\{|z'|\le \sqrt{M}\} \bigr|
\;\le\;
|z-z'|
\;+\;
\sqrt{M}\,\bbone\{\,\{|z|\le \sqrt{M}\}\,\Delta\,\{|z'|\le \sqrt{M}\}\}$ with $z=\langle x,\theta\rangle$, $z'=\langle x,\theta\rangle$ and $E_1 \Delta E_2$ denoting the symmetric difference of the sets $E_1, E_2$.

To control the first term in above inequality we again use Cauchy-Schwarz with identity $\frac{1}{n}\sum_{i=1}^n x_ix_i^T\preceq 2I$, which holds with high probability, to conclude that 
\[
\frac{1}{n}\sum_{i=1}^n\abs{\ip{\hat u}{x_i}}\abs{\ip{x_i}{(\theta-\hat\theta)/\norm{\theta}}}
\le
\sqrt{\frac{1}{n}\sum_{i=1}^n\ip{u}{x_i}^2}\sqrt{\frac{1}{n}\sum_{i=1}^n\ip{x_i}{(\theta-\hat\theta)/\norm{\theta}}^2}
\le
2\,\frac{\norm{\theta-\hat\theta}}{\norm{\theta}}
\le 4\varepsilon,
\]
using \eqref{eq:net-prop}. Furthermore, using Cauchy-Schwarz readily gives
\begin{equation*}
\E\left(\abs{\ip{\hat u}{x}}\abs{\ip{x}{(\theta-\hat\theta)/\norm{\theta}}}\right)\le 2\varepsilon.
\end{equation*}
Hence, the first term contributes $\le C M\varepsilon$ to both $T_2$ and $T_3$.

It remains to bound the kink term involving $\bbone\{E_\theta\Delta E_{\hat\theta}\}/\norm{\theta}$. If $E_\theta\Delta E_{\hat\theta}$ occurs, then necessarily
$\big|\abs{\hat z}-\sqrt{M}\big|\le \abs{z-\hat z}=\abs{\ip{x}{\theta-\hat\theta}}$.
Thus for any $\tau\in(0,\sqrt{M}]$,
\[
\bbone\{E_\theta\Delta E_{\hat\theta}\}
\le
\bbone\{\big|\abs{\hat z}-\sqrt{M}\big|\le \tau\} + \bbone\{\abs{\ip{x}{\theta-\hat\theta}}\ge \tau\}.
\]
Therefore, by using Cauchy-Schwarz with the identity $\frac{1}{n}\sum_{i=1}^n x_ix_i^T\preceq 2I$ which holds with high probability we have
\begin{align*}
\frac{1}{n}&\sum_{i=1}^n \abs{\ip{\hat u}{x_i}}\frac{\bbone\{E_\theta(x_i)\Delta E_{\hat\theta}(x_i)\}}{\norm{\theta}}\\
&\le
\frac{\sqrt{\frac{1}{n}\sum_{i=1}^n\ip{\hat u}{x_i}^2}}{\norm{\theta}}
\left(
\sqrt{\frac{1}{n}\sum_{i=1}^n\bbone\{\big|\abs{\ip{x_i}{\hat\theta}}-\sqrt{M}\big|\le \tau}\}
+
\sqrt{\frac{1}{n}\sum_{i=1}^n\bbone\{\abs{\ip{x_i}{\theta-\hat\theta}}\ge \tau}\}
\right)\\
&\le
\frac{\sqrt{2}}{\norm{\theta}}
\left(
\sqrt{\frac{1}{n}\sum_{i=1}^n\bbone\{\big|\abs{\ip{x_i}{\hat\theta}}-\sqrt{M}\big|\le \tau}\}
+
\sqrt{\frac{1}{n}\sum_{i=1}^n\frac{\ip{x_i}{\theta-\hat\theta}^2}{\tau^2}}
\right)\\
&\le
\frac{\sqrt{2}}{\norm{\theta}}
\left(
\sqrt{\frac{1}{n}\sum_{i=1}^n\bbone\{\big|\abs{\ip{x_i}{\hat\theta}}-\sqrt{M}\big|\le \tau}\}
+
\frac{\sqrt{2}\norm{\theta-\hat\theta}}{\tau}
\right)\\
&\le
\frac{\sqrt{2}}{\norm{\theta}}
\sqrt{\frac{1}{n}\sum_{i=1}^n\bbone\{\big|\abs{\ip{x_i}{\hat\theta}}-\sqrt{M}\big|\le \tau}\}
+
4\frac{\varepsilon}{\tau},
\end{align*}
using $\norm{\theta-\hat\theta}\le 2\varepsilon\norm{\theta}$.

Now we control the slab probability uniformly over $\hat\theta\in\mathcal N_\varepsilon$.
By Lemma~\ref{lem:slab-emp-finite} with $\mathcal T=\mathcal N_\varepsilon$, with probability at least
$1-2|\mathcal N_\varepsilon|\exp(-c n\tau/\sqrt{M})$ we have
\[
\sup_{\hat\theta\in\mathcal N_\varepsilon}\frac{1}{n}\sum_{i=1}^n \bbone\{\{\abs{|\ip{x_i}{\hat \theta}|-\sqrt{M}}\le \tau\}\}\le \frac{\tau \norm{\hat\theta}^2}{M^{3/2}}.
\]
On this event,
\[
\sup_{\theta,u}\frac{1}{\norm{\theta}}\sqrt{\frac{1}{n}\sum_{i=1}^n\bbone\{\big|\abs{\ip{x_i}{\hat\theta}}-\sqrt{M}\big|\le \tau}\}
\le
\sup_{\hat\theta\in\mathcal N_\varepsilon}\frac{1}{\norm{\hat\theta}}\sqrt{\frac{1}{n}\sum_{i=1}^n\bbone\{\big|\abs{\ip{x_i}{\hat\theta}}-\sqrt{M}\big|\le \tau}\}
\le
C\frac{\sqrt{\tau}}{M^{3/4}},
\]
which implies that
\begin{equation}\label{eq:kink-emp}
\sup_{\theta,u}\frac{1}{n}\sum_{i=1}^n \abs{\ip{\hat u}{x_i}}\frac{\bbone\{E_\theta(x_i)\Delta E_{\hat\theta}(x_i)\}}{\norm{\theta}}
\le
C\frac{\sqrt{\tau}}{M^{3/4}} + C\frac{\varepsilon}{\tau}.
\end{equation}
The population bounds are simpler: following the same passages and applying Lemma~\ref{lem:slab-no-r} gives
\begin{equation}\label{eq:kink-pop}
\sup_{\theta,u}\E\Big(\abs{\ip{\hat u}{x}}\frac{\bbone\{E_\theta\Delta E_{\hat\theta}\}}{\norm{\theta}}\Big)
\le
C\frac{\sqrt{\tau}}{M^{3/4}} + C\frac{\varepsilon}{\tau}.
\end{equation}
Combining \eqref{eq:u-emp}, \eqref{eq:u-pop}, \eqref{eq:Z-diff}, \eqref{eq:kink-emp}, \eqref{eq:kink-pop} and adding term (B), we have that on the good events
\[
T_2+T_3
\le
C M\eta
+
C M\varepsilon
+
C M^{3/4}\sqrt{\tau}
+
C M^{3/2}\frac{\varepsilon}{\tau}.
\]
Let us now pick
\[
\eta := c_1\frac{\epsilon}{M},
\qquad
\tau := \sqrt{M}\,\varepsilon^{2/3},
\qquad
\varepsilon := c_2\left(\frac{\epsilon}{M}\right)^3,
\]
with small enough numerical constants $c_1,c_2$ so that $T_2+T_3\le \epsilon/3$.
Here, the choice of $\tau=\sqrt{M}\varepsilon^{2/3}$ balances the last two terms
$M^{3/4}\sqrt{\tau}=M\varepsilon^{1/3}$ and $M^{3/2}\varepsilon/\tau=M\varepsilon^{1/3}$.
Let us gather all the high probability events:
\begin{itemize}
\item The failure probability of the event $\frac{1}{n}\sum_{i=1}^n x_ix_i^T\preceq 2I$ is $\le 2e^{-cd}$ if $n\ge C d$ for a sufficiently large $C$.
\item $T_1\le \epsilon/3$ from \eqref{eq:n-T1} with failure probability $2e^{-cd}$ as long as 
\begin{align*}
n\;\ge\; C\max\left(\frac{M^2}{\epsilon^2},\frac{M}{\epsilon}\right)\Big(d\log\frac{3}{\varepsilon}+d\log\frac{3}{\eta}+\log(J+1)\Big),
\end{align*}
which is equivalent to 
\begin{align*}
n\;\ge\;C\max\left(\frac{M^2}{\epsilon^2},\frac{M}{\epsilon}\right) \left(d\log\left(\frac{M}{\epsilon}\right)+\log(J+1)\right).
\end{align*}
\item The slab-empirical event from Lemma~\ref{lem:slab-emp-finite} with $\mathcal T=\mathcal N_\varepsilon$, which holds with probability at least
$$ 1-2|\mathcal N_\varepsilon|\exp(-c n\tau/\sqrt{M})\ge 1-2e^{-cd}.$$
Since $\tau/\sqrt{M}=\varepsilon^{2/3}=c_2^{2/3} \epsilon^2/M^2$, this requires
\[
n\;\ge\; C\frac{M^2}{\epsilon^2}\Big(d\log\frac{3}{\varepsilon}+\log(J+1)\Big).
\]
\end{itemize}
These requirements are all of the form 
\begin{align*}
    n\ge C \frac{M^2}{\epsilon^2}\left(d\log\left(\frac{M}{\epsilon}\right)+\log(J+1)\right),
\end{align*}
where we recall that $J$ is given by \eqref{eq:sizenet}.
A union bound gives failure probability $\le 6e^{-cd}$, and then $T_1+T_2+T_3\le \epsilon$, proving \eqref{eq:goal} and concluding the argument. 
\begin{flushright}
$\qed$
\end{flushright}

\subsection{Norm growth during the first phase}\label{sec:norm}

\begin{proposition}\label{thm:norm} Let $M$ be a large enough constant (independent of $n, d$), and $C, c>0$ be constants (independent of $n, d, M$). Consider the truncated quadratic activation in \eqref{eq:sigmadef-noc} and let $\theta_t$ be obtained from the gradient descent iteration in \eqref{eq:gditer}, with learning rate $\eta\le 1/(40\sqrt{2}M^{3/2})$ and initialization norm $\norm{\theta_0}\in [r, 1/4)$.  Assume that
\begin{equation}\label{eq:n-main-new}
n\ge C M^2\left(d\log M+\log\left(\log\left(\frac{10}{r}\right)+2\right)\right).
\end{equation}
Let $t^*$ be the smallest $t$ such that $\norm{\theta_t}\ge 1/4$. Then, with probability at least $1-Ce^{-cd}$, the following results hold.
\begin{enumerate}
    \item \textbf{Upper bound on $t^*$.} We have that
\begin{equation}\label{eq:tstar}
t^*\le \left\lceil\frac{2\log \left(\frac{1}{4\norm{\theta_0}}\right)}{\log\left(1+\frac{19}{25}\eta\right)}\right\rceil.
\end{equation}
\item \textbf{Monotonic norm until $t^*$.} We have that $\norm{\theta_t}>\norm{\theta_{t-1}}$ for $t\le t^*$.
\item \textbf{Norm never below $t^*$ again.} We have that $\norm{\theta_t}\ge 1/4$ for all $t\ge t^*$.
\item \textbf{Upper bound on the norm.} We have that $\norm{\theta_t}\le 10$ for all $t\ge 0$.
\end{enumerate}
\end{proposition}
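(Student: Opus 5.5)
We reduce the proposition to deterministic statements about the population gradient $\mathcal G$, transferred to the empirical iterates by Lemma~\ref{thm:cnc}. On the annulus $\Theta$ with inner radius $r$ and outer radius $R=10$ (so $\sqrt M/R$ is large once $M$ is large), and with accuracy $\epsilon=c_0$ a small absolute constant, condition \eqref{eq:n-main-new} is exactly \eqref{eq:n-main}. Hence, with probability $1-6e^{-cd}$, we have $\|\widehat{\mathcal G}(\theta)-\mathcal G(\theta)\|\le c_0\|\theta\|$ uniformly on $\Theta$. All subsequent steps are deterministic on this event. We also use that the iterates never leave $\Theta$: points below $r$ are excluded by the monotonicity in item 2 and by item 3, and points above $10$ by item 4, so a simultaneous induction over $t$ suffices.

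\textbf{Population radial drift.} By rotational invariance, write $\theta=\rho(\cos\phi\,\thetastar+\sin\phi\,w)$ and compute $\langle \mathcal G(\theta),\theta\rangle=\E[(\sigma(z)-\sigma(z^\star))\sigma'(z)z]$ with $z=\langle x,\theta\rangle$ and $z^\star=\langle x,\thetastar\rangle$. Since $\rho\le 10\ll\sqrt M$, the truncation errors are $O(e^{-cM/\rho^2})$. Using Gaussian moments,
\[
\langle\mathcal G(\theta),\theta\rangle\approx 2\E[z^4]-2\E[z^2 (z^\star)^2]=6\rho^4-2\rho^2(1+2\cos^2\phi).
\]
Two consequences follow. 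First, for $\rho\le 1/4$ the drift satisfies $-\langle\mathcal G(\theta),\theta\rangle\ge 2\rho^2-6\rho^4-O(e^{-cM})\ge \tfrac{19}{20}\rho^2$ up to the constants one actually obtains; this yields the outward push. Second, for $\rho$ near $10$ the drift is strongly inward, since $6\rho^4$ dominates. For the second consequence the truncation must be controlled more carefully, because $\sigma'$ vanishes when $|z|>\sqrt M$. Since $\rho\le 10$ and $M$ is large, however, the event $|z|>\sqrt M$ has probability $e^{-M/200}$, which keeps the bound intact. We also need the crude bound $\|\mathcal G(\theta)\|\le C\sqrt M\,M\,\rho$. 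Better, by \eqref{eq:W-envelope}, $\|\mathcal G(\theta)\|\le 2M\rho$ on $\Theta$, and the same bound plus $c_0\rho$ holds for $\widehat{\mathcal G}$.

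\textbf{Norm recursion.} We expand
\[
\|\theta_{t+1}\|^2=\|\theta_t\|^2-2\eta\langle\widehat{\mathcal G}(\theta_t),\theta_t\rangle+\eta^2\|\widehat{\mathcal G}(\theta_t)\|^2 .
\]
While $\rho_t:=\|\theta_t\|<1/4$, this gives $\rho_{t+1}^2\ge\rho_t^2\big(1+2\eta(\tfrac{19}{20}-c_0)\big)\ge \rho_t^2(1+\tfrac{19}{25}\eta)$, where we discard the positive $\eta^2$ term. This proves items 1 and 2 by iteration, producing the factor $2$ in \eqref{eq:tstar} from the square.

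\textbf{Items 3 and 4.} For item 3, take $\rho_t\ge1/4$. If $\rho_t\in[1/4,\rho_0']$ with $\rho_0'$ a small constant such as $1/3$, the drift is still outward and $\rho_{t+1}\ge\rho_t$. If $\rho_t$ is larger, a single step moves the norm by at most $\eta\|\widehat{\mathcal G}\|\le\eta(2M+c_0)\rho_t$, and the hypothesis $\eta\le 1/(40\sqrt2 M^{3/2})$ makes this a tiny relative change. Hence the iterate cannot jump from above $\rho_0'$ to below $1/4$. For item 4, we show symmetrically that $\rho\mapsto\rho_{t+1}$ cannot cross $10$. Above roughly $\sqrt{1/3+\ldots}\approx 1$ the radial drift is inward, with magnitude at least $c\rho^4$. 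The $\eta^2\|\widehat{\mathcal G}\|^2\le \eta^2 4M^2\rho^2$ term is dominated because $\eta M^{3/2}$ is small. Near $\rho=10$, the step-size bound again prevents overshoot.

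\textbf{Main obstacle.} The hardest step is item 4 together with the large-$\rho$ part of item 3. There the truncation kills $\sigma'$ and the quartic restoring force degrades, so the sign of the radial drift must be verified with honest truncated Gaussian integrals. At the same time, the $\eta^2$ term must be balanced against the drift using exactly the stated dependence $\eta\lesssim M^{-3/2}$. We would first try the crude envelope $|\sigma'(z)|\le 2\sqrt M$ to bound $\|\widehat{\mathcal G}\|\le C M^{3/2}$, which explains the $M^{3/2}$ in the step size.
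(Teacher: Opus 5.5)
Your handling of items 1--3 follows the paper. You use Lemma~\ref{thm:cnc} on the annulus $[r,10]$ with constant accuracy, compute the population radial drift, and expand $\|\theta_{t+1}\|^2$ with the $\eta^2$ term discarded. For item 3 you use the relative one-step bound $\eta\|\widehat{\mathcal G}(\theta)\|\le \eta(2M+c_0)\|\theta\|=O(M^{-1/2})\|\theta\|$ instead of the paper's additive bound from \eqref{eq:unifbdg}, which works just as well. One small repair is needed in the small-norm drift. The truncation errors must be written as $\rho^2\,O(e^{-cM})$, not as an additive $O(e^{-cM})$, because $\rho$ starts at $d^{-15}$. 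They are in fact of that form: each is $\rho^2$ times a Gaussian tail, and for the label truncation the relative factor is $e^{-cM}$, independent of $\rho$.

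The genuine gap is in item 4, exactly where it matters, namely $\rho_t\in(9,10]$; for $\rho_t\le 9$ your relative step bound already gives $\rho_{t+1}<10$.
\begin{itemize}
\item In that range you need $\eta^2\|\widehat{\mathcal G}(\theta_t)\|^2\le 2\eta\langle\widehat{\mathcal G}(\theta_t),\theta_t\rangle$, and your justification does not give it. With $\|\widehat{\mathcal G}\|\le(2M+c_0)\rho$, dominating $4\eta^2M^2\rho^2$ by $2c\,\eta\rho^4$ requires $\eta M^2\lesssim\rho^2$.
\item The hypothesis $\eta\le 1/(40\sqrt2 M^{3/2})$ only yields $4\eta M^2\le M^{1/2}/(10\sqrt2)$, which is unbounded in $M$. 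The relevant quantity is $\eta M^2$, not $\eta M^{3/2}$, so the argument breaks down once $M$ exceeds an absolute constant.
\item Your fallback $\|\widehat{\mathcal G}\|\le CM^{3/2}$ is worse still, since it needs $\eta M^3\lesssim 1$.
\end{itemize}

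The missing ingredient is an $M$-independent gradient bound on the bounded annulus.
\begin{itemize}
\item Since $0\le\sigma(w)\le w^2$ and $|\sigma'(w)|\le 2|w|$, you have $|\sigma(z)-\sigma(z^\star)|\le z^2+(z^\star)^2$.
\item Cauchy--Schwarz and Gaussian moments then give $\|\mathcal G(\theta)\|\le C(\rho^3+\rho)$ with $C$ absolute, and Lemma~\ref{thm:cnc} transfers this to $\widehat{\mathcal G}$ on $[r,10]$.
\item For $\rho_t\in(9,10]$, the term $\eta^2\|\widehat{\mathcal G}(\theta_t)\|^2\le C\eta^2\rho_t^6$ is then dominated by $2\eta\langle\widehat{\mathcal G}(\theta_t),\theta_t\rangle\ge c\,\eta\rho_t^4$ once $\eta$ is below an absolute constant, which the step-size hypothesis ensures.
\end{itemize}

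The paper's recursion \eqref{eq:upcon} does not provide this step either. It absorbs $\eta^2\|\widehat{\mathcal G}\|^2\le 1/100$, taken from the crude bound \eqref{eq:unifbdg}, only because its second line drops the factor $2\eta$ multiplying the bound \eqref{eq:claimip2}. With that factor restored, the same refined bound on $\|\widehat{\mathcal G}\|$ is needed there too.
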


\begin{proof} We start by showing that, uniformly over all $\theta$ such that  $\norm{\theta}\in [r, 1/2]$,
\begin{equation}\label{eq:claimip}
    -\langle \widehat{\mathcal G}(\theta), \theta\rangle\ge \frac{19}{50}\norm{\theta}^2.
\end{equation}
To that aim, we write
\begin{equation}\label{eq:gfemp2}
    -\langle \widehat{\mathcal G}(\theta), \theta\rangle\ge -\langle \mathcal G(\theta), \theta\rangle -\norm{\widehat{\mathcal G}(\theta)-\mathcal G(\theta)}\norm{\theta}\ge -\langle \mathcal G(\theta), \theta\rangle -\frac{1}{50} \norm{\theta}^2,
\end{equation}
where the second inequality holds with probability at least $1-6e^{-cd}$ uniformly over all $\theta$ such that $\norm{\theta}\in [r, 1/2]$ by Lemma \ref{thm:cnc}. To lower bound $\langle \mathcal G(\theta), \theta\rangle$, we proceed as follows:
\begin{equation}\label{eq:gfpop}
\begin{split}    
-\langle \mathcal G(\theta), \theta\rangle&=  -  \mathbb E \left[\left(\sigma(\inprod{x}{\theta})-\sigma(\inprod{x}{\theta^\star})\right)\sigma'(\inprod{x}{\theta})\inprod{x}{\theta}\right]\\
&=-\mathbb E \left[\left(\sigma(z)-\sigma(z^\star)\right)\sigma'(z)z\right]\\    
    &=2\left(\mathbb E \left[\sigma(z^\star)z^2\mathbf{1}(|z|\le \sqrt{M})\right]-\mathbb E \left[z^4\mathbf{1}(|z|\le \sqrt{M})\right]\right),
    \end{split}
\end{equation}
where we have defined $z=\inprod{x}{\theta}$ and $z^\star=\inprod{x}{\theta^\star}$.
Let $\bar r=\norm{\theta}$ and $\rho=\inprod{\theta}{\theta^\star}/\norm{\theta}$. Then, as $x$ is a standard Gaussian vector, we have
\begin{equation}
    z^\star=V, \qquad z=\bar r(\rho V+\sqrt{1-\rho^2}W),
\end{equation}
with $V, W$ i.i.d.\ standard Gaussian. 
We lower bound the first term in the RHS of \eqref{eq:gfpop} as
\begin{equation}\label{eq:lb1}
    \begin{split}
        \mathbb E &\left[\sigma(z^\star)z^2\mathbf{1}(|z|\le \sqrt{M})\right]= \mathbb E \left[\sigma(z^\star)z^2\right]-\mathbb E \left[\sigma(z^\star)z^2\mathbf{1}(|z|> \sqrt{M})\right]\\
        &\ge \mathbb E \left[\sigma(z^\star)z^2\right]-M\mathbb E \left[z^2\mathbf{1}(|z|> \sqrt{M})\right]\\
        &\ge \mathbb E \left[(z^\star)^2z^2\right]-\mathbb E \left[(z^\star)^2z^2\mathbf{1}(|z^\star|>\sqrt{M})\right]-M\mathbb E \left[z^2\mathbf{1}(|z|> \sqrt{M})\right].
    \end{split}
\end{equation}
Next, we compute the first term in the RHS of \eqref{eq:lb1} explicitly:
\begin{equation}
\begin{split}    
    \mathbb E \left[(z^\star)^2z^2\right]&=\mathbb E \left[V^2\bar r^2(\rho V+\sqrt{1-\rho^2}W)^2\right]=\bar r^2(1+2\rho^2). 
\end{split}
\end{equation}
For the second term in the RHS of \eqref{eq:lb1}, we have
\begin{equation}
\begin{split}    
    \mathbb E \left[(z^\star)^2z^2\mathbf{1}(|z^\star|>\sqrt{M})\right]&=\bar r^2\left(\rho^2\mathbb E \left[V^4 \mathbf{1}(|V|>\sqrt{M})\right]+(1-\rho^2)\mathbb E \left[V^2 \mathbf{1}(|V|>\sqrt{M})\right]\right)\\
    &\le \bar r^2\left(\frac{1}{250}\rho^2+\frac{1}{5000}(1-\rho^2)\right), 
\end{split}
\end{equation}
where we have used that $E \left[V^m \mathbf{1}(|V|>\sqrt{M})\right]$ ($m\in \{2, 4\}$) is decreasing in $M$ and $M\ge 20$. For the third term in the RHS of \eqref{eq:lb1}, note that $V':=\rho V+\sqrt{1-\rho^2}W$ is standard Gaussian. Thus, we have
\begin{equation}
\begin{split}    
M\mathbb E \left[z^2\mathbf{1}(|z|> \sqrt{M})\right]&=\bar r^2 M \mathbb E \left[(V')^2\mathbf{1}(|V'|> \sqrt{M}/\bar r)\right]\\
&\le \bar r^2 M \mathbb E \left[(V')^2\mathbf{1}(|V'|> 2\sqrt{M})\right]\\
&\le 10^{-6} \bar r^2.
\end{split}
\end{equation}
Here, in the second line we used that $E \left[(V')^2 \mathbf{1}(|V'|>a)\right]$ is decreasing in $a$ and that $\bar r=\norm{\theta}\le 1/2$; in the third line, we used that the expression is decreasing in $M$ and $M$ is large enough.
Finally, we upper bound the second term in the RHS of \eqref{eq:gfpop} as 
\begin{equation}
    \mathbb E \left[z^4\mathbf{1}(|z|\le \sqrt{M})\right]\le \mathbb E \left[z^4\right]=3\bar r^4\le \frac{3}{4}\bar r^2,
\end{equation}
where in the last step we use that $\bar r=\norm{\theta}\le 1/2$.
By plugging these bounds into \eqref{eq:gfemp2}, we obtain
\begin{equation}
    -\langle \widehat{\mathcal G}(\theta), \theta\rangle\ge \frac{2}{5}\norm{\theta}^2-\frac{1}{50}\norm{\theta}^2, 
\end{equation}
    which gives the claim in \eqref{eq:claimip}.

    Now, if $r\le \norm{\theta_t}\le 1/4\le 1/2$, then
\begin{equation}\label{eq:lbrnorm}
\begin{split}
    \norm{\theta_{t+1}}^2=\langle \theta_t-\eta\widehat{\mathcal G}(\theta_t),\theta_t-\eta\widehat{\mathcal G}(\theta_t)\rangle&=\norm{\theta_t}^2-2\eta\langle\theta_t, \widehat{\mathcal G}(\theta_t)\rangle+\eta^2\norm{\widehat{\mathcal G}(\theta_t)}^2\\
    &\ge\norm{\theta_t}^2-2\eta\langle\theta_t, \widehat{\mathcal G}(\theta_t)\rangle\\
    &\ge\left(1+\frac{19}{25}\eta\right)\norm{\theta_t}^2.
\end{split}
\end{equation}
Thus, as $r\le \norm{\theta_0}\le 1/4$, $\norm{\theta_t}$ monotonically increases with $t$ (in a strict way), until it becomes $\ge 1/4$ for some $t^*$ satisfying \eqref{eq:tstar}. This proves the first two claims.

Next, we show that $\norm{\theta_t}\ge 1/4$ for all $t\ge t^*$. Suppose, by contradiction, that this is not the case and let $\bar t$ be the smallest integer larger than $t^*$ such that $\norm{\theta_{\bar t}}< 1/4$. Note that 
\begin{equation}\label{eq:unifbdg}
\begin{split}    
\sup_{\theta\in\mathbb R^d}\norm{\widehat{\mathcal G}(\theta)}&=\sup_{\theta\in\mathbb R^d}\sup_{u\in \cS^{d-1}}\langle u, \widehat{\mathcal G}(\theta)\rangle\\
&=\sup_{\theta\in\mathbb R^d}\sup_{u\in \cS^{d-1}}  \frac1n\sum_{i=1}^n(\sigma(\inprod{x_i}{\theta})-\sigma(\inprod{x_i}{\theta^\star}))\,\sigma'(\inprod{x_i}{\theta})\inprod{x_i}{u}\\
&=  \sqrt{\sup_{\theta\in\mathbb R^d}\frac1n\sum_{i=1}^n(\sigma(\inprod{x_i}{\theta})-\sigma(\inprod{x_i}{\theta^\star}))^2( \sigma'(\inprod{x_i}{\theta}))^2}\sqrt{\sup_{u\in \cS^{d-1}}\frac1n\sum_{i=1}^n\inprod{x_i}{u}^2}\\&\le 2\sqrt{2}M^{3/2},
\end{split}
\end{equation}
where the third line follows from Cauchy-Schwarz and the last inequality uses that $(\sigma(\inprod{x_i}{\theta})-\sigma(\inprod{x_i}{\theta^\star}))^2\le M^2$, $( \sigma'(\inprod{x_i}{\theta}))^2\le 4M$ for all $\theta\in\mathbb R^d$ and that  $\frac{1}{n}\sum_{i=1}^n x_ix_i^\top\preceq 2I$ with probability at least $1- 2e^{-cd}$ when $n$ is lower bounded as in \eqref{eq:n-main-new}. Let us express $\norm{\theta_{\bar t}}$ as 
\begin{equation}
\begin{split}    
     \norm{\theta_{\bar t}}^2&=\norm{\theta_{\bar t-1}}^2-2\eta\langle\theta_{\bar t-1}, \widehat{\mathcal G}(\theta_{\bar t-1})\rangle+\eta^2\norm{\widehat{\mathcal G}(\theta_{\bar t-1})}^2\\
     &\ge \norm{\theta_{\bar t-1}}^2-2\eta\norm{\theta_{\bar t-1}}\norm{\widehat{\mathcal G}(\theta_{\bar t-1})}\\
     &\ge \norm{\theta_{\bar t-1}}^2-\frac{1}{10}\norm{\theta_{\bar t-1}},
\end{split}
\end{equation}
where the third line uses \eqref{eq:unifbdg} and that $\eta\le 1/(40\sqrt{2}M^{3/2})$. As $\norm{\theta_{\bar t}}<1/4$, we conclude that 
$$
\norm{\theta_{\bar t-1}}^2-\frac{1}{10}\norm{\theta_{\bar t-1}}-\frac{1}{16}<0,
$$
which implies that 
$$
\norm{\theta_{\bar t-1}}<\frac{\frac{1}{10}+\sqrt{\frac{1}{100}+\frac{1}{4}}}{2}\le \frac{1}{2}.
$$
However, as $\norm{\theta_{\bar t-1}}\le 1/2$, either $\norm{\theta_{\bar t-1}}$ is in fact smaller than $r$ or \eqref{eq:lbrnorm} holds with $t=\bar t-1$. In both cases, $\norm{\theta_{\bar t}}\ge\norm{\theta_{\bar t-1}}$. Thus, $\norm{\theta_{\bar t-1}}<1/4$ contradicting the minimality of $\bar t$. This proves the third claim.

It remains to show that $\norm{\theta_t}\le 10$ for all $t$. To that aim, we start by showing that, uniformly over all $\theta$ such that  $\norm{\theta}\in [r, 10]$,
\begin{equation}\label{eq:claimip2}
    -\langle \widehat{\mathcal G}(\theta), \theta\rangle\le \left(6+\frac{1}{50}\right)\norm{\theta}^2-\frac{9}{5} \norm{\theta}^4.
\end{equation}
Note that
\begin{equation}\label{eq:gfemp3}
    -\langle \widehat{\mathcal G}(\theta), \theta\rangle\le -\langle \mathcal G(\theta), \theta\rangle +\norm{\widehat{\mathcal G}(\theta)-\mathcal G(\theta)}\norm{\theta}\le -\langle \mathcal G(\theta), \theta\rangle + \frac{1}{50}\norm{\theta}^2,
\end{equation}
where the second inequality holds with probability at least $1-6e^{-cd}$ uniformly over all $\theta$ such that $\norm{\theta}\in [r, 10]$ by Lemma \ref{thm:cnc}.
For the term $-\langle \mathcal G(\theta), \theta\rangle$, we use \eqref{eq:gfpop} and then obtain the following chain of inequalities:
\begin{equation}\label{eq:ubp1}
\begin{split}    
    -\langle \mathcal G(\theta), \theta\rangle&=2\left(\mathbb E \left[\sigma(z^\star)z^2\mathbf{1}(|z|\le \sqrt{M})\right]-\mathbb E \left[z^4\mathbf{1}(|z|\le \sqrt{M})\right]\right)\\
&\le 2\left(\mathbb E \left[(z^\star)^2 z^2\right]-\mathbb E \left[z^4\right]+\mathbb E \left[z^4\mathbf{1}(|z|> \sqrt{M})\right]\right)\\
&= 2\left(\bar r^2(3\rho^2+(1-\rho^2))-\bar r^4+\mathbb E \left[z^4\mathbf{1}(|z|> \sqrt{M})\right]\right)\\
&\le  2\left(3\bar r^2-\bar r^4+\mathbb E \left[z^4\mathbf{1}(|z|> \sqrt{M})\right]\right).
    \end{split}
\end{equation}
Furthermore, we have
\begin{equation}\label{eq:ubp2}
\begin{split}
    \mathbb E \left[z^4\mathbf{1}(|z|> \sqrt{M})\right]&=\bar r^4\mathbb E \left[(V')^4\mathbf{1}(|V'|> \sqrt{M}/\bar r)\right]\\
    &=\bar r^4\mathbb E \left[(V')^4\mathbf{1}(|V'|> \sqrt{M}/10\right]\\
    &\le \frac{1}{10}\bar r^4.
\end{split}
\end{equation}
Here, in the second line we used that $E \left[(V')^4 \mathbf{1}(|V'|>a)\right]$ is decreasing in $a$ and that $\bar r=\norm{\theta}\le 10$; in the third line, we used that the expression is decreasing in $M$ and $M$ is large enough. Combining \eqref{eq:gfemp3}, \eqref{eq:ubp1} and \eqref{eq:ubp2} gives \eqref{eq:claimip2}.

Armed with this uniform bound, we now show that $\norm{\theta_t}\le 10$ for all $t$. Suppose, by contradiction, that this is not the case and let $\bar t$ be the smallest integer $\ge 1$ such that $\norm{\theta_{\bar t}}>10$. The minimality of $\bar t$ readily implies that $\norm{\theta_{\bar t-1}}\le 10$. Thus, we can apply the uniform bound we just proved to $\theta=\theta_{\bar t-1}$ and obtain
\begin{equation}\label{eq:upcon}
\begin{split}    
     \norm{\theta_{\bar t}}^2&=\norm{\theta_{\bar t-1}}^2-2\eta\langle\theta_{\bar t-1}, \widehat{\mathcal G}(\theta_{\bar t-1})\rangle+\eta^2\norm{\widehat{\mathcal G}(\theta_{\bar t-1})}^2\\
     &\le \norm{\theta_{\bar t-1}}^2+\left(6+\frac{1}{50}\right)\norm{\theta_{\bar t-1}}^2-\frac{9}{5} \norm{\theta_{\bar t-1}}^4+\eta^2\norm{\widehat{\mathcal G}(\theta_{\bar t-1})}^2\\
     &\le \left(7+\frac{1}{50}\right)\norm{\theta_{\bar t-1}}^2-\frac{9}{5} \norm{\theta_{\bar t-1}}^4+\frac{1}{100},
\end{split}
\end{equation}
where in the last step we use \eqref{eq:unifbdg} and that $\eta\le 1/(40\sqrt{2}M^{3/2})$. Now, the RHS of \eqref{eq:upcon} is $<100$ for any $\norm{\theta_{\bar t-1}}$, which gives the desired contradiction and concludes the proof.
\end{proof}

\subsection{Angle does not decrease}\label{sec:angle}

\begin{proposition}\label{thm:angle} Fix any $\varphi\in (0, \pi/4)$. Let $M$ be a large enough constant (independent of $n, d$), and $C, c>0$ be constants (independent of $n, d, M$, but possibly dependent on $\varphi$). Consider the truncated quadratic activation in \eqref{eq:sigmadef-noc}. Let $\theta_t$ be obtained from the gradient descent iteration in \eqref{eq:gditer}, with learning rate $\eta\le 1/(40\sqrt{2}M^{3/2})$ and initialization norm $\norm{\theta_0}\in [r, 1/4)$.  Assume that \eqref{eq:n-main-new} holds. 
Let $t^*$ be the smallest $t$ such that $\angle(\theta_t,\theta^\star)\le \varphi$. Then, with probability at least $1-Ce^{-cd}$, $\angle(\theta_t,\theta^\star)\le \varphi$ for all $t\ge t^*$.
\end{proposition}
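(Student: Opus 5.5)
The plan is to prove an invariance statement: if $\angle(\theta_t,\theta^\star)\le\varphi$, then $\angle(\theta_{t+1},\theta^\star)\le\varphi$. Induction from $t^*$ then gives the claim. We assume without loss of generality that $\langle\theta_t,\theta^\star\rangle\ge0$. Decompose $\theta=a\theta^\star+b w$ with $w\perp\theta^\star$ a unit vector and $a=\|\theta\|\rho$, $b=\|\theta\|\sqrt{1-\rho^2}$. The angle stays below $\varphi$ as long as the ratio $b_{t+1}/a_{t+1}\le\tan\varphi$. By Proposition \ref{thm:norm}, on the good event every iterate with $t\ge1$ has $\|\theta_t\|\le10$. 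So we only need to control the update on the set $\{\theta: r\le\|\theta\|\le10\}$. On this set, Lemma \ref{thm:cnc} with a small $\epsilon$ (depending on $\varphi$) gives $\|\widehat{\mathcal G}(\theta)-\mathcal G(\theta)\|\le\epsilon\|\theta\|$ uniformly. This holds under \eqref{eq:n-main-new}, which is why the constants may depend on $\varphi$.

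\emph{Step 1 (population gradient in the $(\theta^\star,w)$ plane).} By rotational invariance, $\mathcal G(\theta)$ lies in $\mathrm{span}\{\theta^\star,w\}$. Write $z^\star=V$ and $z=aV+bW$. Then
\[-\mathcal G(\theta)=2\,\mathbb E\big[(\sigma(V)-\sigma(z))\,z\mathbf 1(|z|\le\sqrt M)\,x\big].\]
Up to $e^{-cM}$ truncation errors (controlled as in the proof of Proposition \ref{thm:norm}, using $\|\theta\|\le10$ and $M$ large), these components equal their untruncated counterparts, namely the gradient of $\frac12\mathbb E[(z^2-V^2)^2]$. A Gaussian moment computation gives
\[-\mathcal G_\star=2a\big(3-3a^2-b^2\big)+O(e^{-cM})\|\theta\|,\qquad -\mathcal G_w=2b\big(1-a^2-3b^2\big)+O(e^{-cM})\|\theta\|.\]
The ratio then evolves as
\[\frac{b_{t+1}}{a_{t+1}}=\frac{b\,(1+2\eta(1-a^2-3b^2))+\eta e_w}{a\,(1+2\eta(3-3a^2-b^2))+\eta e_\star},\]
where the errors satisfy $|e_\star|,|e_w|\le(\epsilon+Ce^{-cM})\|\theta\|$.

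\emph{Step 2 (contraction of the ratio).} The population multiplier on $b/a$ is
\[\frac{1+2\eta(1-a^2-3b^2)}{1+2\eta(3-3a^2-b^2)}=1-\frac{4\eta(1-a^2+b^2)}{1+2\eta(3-3a^2-b^2)}.\]
This is strictly less than $1$ whenever $a^2<1+b^2$, i.e.\ away from the region $a\gtrsim1$. There is a difficulty near the global optimum $a\approx1,b\approx0$: there the multiplier is approximately $1-4\eta b^2$, so the contraction is weak. But in that region $b/a$ is already tiny compared with $\tan\varphi$. The fix is to argue at the boundary only. It suffices to show that if $b/a\in[\tfrac12\tan\varphi,\tan\varphi]$ then $b_{t+1}/a_{t+1}\le b/a$, and that if $b/a<\tfrac12\tan\varphi$ then $b_{t+1}/a_{t+1}\le\tan\varphi$.

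The second case follows because one step changes the ratio by at most $O(\eta M^{3/2})$ relative. This uses the bound \eqref{eq:unifbdg} on $\|\widehat{\mathcal G}\|$ together with $a\gtrsim\|\theta\|\cos\varphi$, since $\eta\le1/(40\sqrt2M^{3/2})$ is small.

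For the first case, $b\ge\|\theta\|\sin(\varphi)/2$. Since $a^2+b^2\le100$, we get $1-a^2+b^2$ bounded below unless $a\ge1$. Here I would need to check more carefully that $1-a^2+b^2>0$ at the boundary. If instead $a^2\ge1+b^2$, the population drift on $a$ is negative and $b$ shrinks even faster (because $1-a^2-3b^2\le-4b^2$ while $3-3a^2-b^2\ge$ something larger). So I would redo the comparison directly: the difference of the two multipliers equals $4\eta(1-a^2+b^2)$, and I need $1-a^2+b^2\ge c_\varphi>0$. This fails when $a>1$.

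\emph{Main obstacle.} The main obstacle is this regime near and beyond the optimum norm, $a\ge1$, with the angle equal to $\varphi$. There the population drift shrinks $a$, and the comparison must use the exact ratio rather than the linearization. I would handle it by noting that $\tan\angle$ is nonincreasing under the exact population map whenever $\frac{1-a^2-3b^2}{3-3a^2-b^2}\le1$ with positive denominators. In the remaining region where $3-3a^2-b^2<0$ and the iterate is shrinking, I would redo the same algebra with signs tracked. The gap $2(1-a^2+b^2)$ can be negative only when $a^2>1+b^2$, and then both coordinates shrink with $b$ shrinking proportionally more, because $3b^2\ge b^2$. The slack absorbs the $\epsilon+e^{-cM}$ perturbations once they are chosen small relative to $\eta\sin^2\varphi\,\|\theta\|$ times constants. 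A union bound over the events of Lemma \ref{thm:cnc} and Proposition \ref{thm:norm} gives probability $1-Ce^{-cd}$.
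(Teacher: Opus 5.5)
\textbf{There is a genuine gap: the moment computation in Step 1 is wrong, and the ``main obstacle'' it creates is never resolved.} For $z=aV+bW$ one has $\mathbb{E}[z^4]=3(a^2+b^2)^2$, whose cross term is $6a^2b^2$. Your formulas correspond to a cross term $2a^2b^2$. Redoing the computation with $\mathbb{E}[z^3V]=3a\|\theta\|^2$, $\mathbb{E}[z^3W]=3b\|\theta\|^2$, $\mathbb{E}[V^3z]=3a$ and $\mathbb{E}[V^2Wz]=b$ gives, up to truncation errors,
\[
-\mathcal{G}_\star=6a\bigl(1-\|\theta\|^2\bigr),\qquad -\mathcal{G}_w=2b\bigl(1-3\|\theta\|^2\bigr).
\]
As a consistency check, $a(-\mathcal G_\star)+b(-\mathcal G_w)$ must equal $2\mathbb{E}[V^2z^2]-2\mathbb{E}[z^4]=2(3a^2+b^2)-6(a^2+b^2)^2$. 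Your expressions miss this by $8a^2b^2$.

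The error is what produces your sign-changing gap $4\eta(1-a^2+b^2)$. Your proposed treatment of the region $a^2>1+b^2$ is also contradicted by your own formulas. There, $(3-3a^2-b^2)-(1-a^2-3b^2)=2(1-a^2+b^2)<0$, so $a$ would shrink \emph{faster} than $b$ and $b/a$ would grow. As written, the invariance step is therefore not established near and beyond the optimum norm, which is exactly where the second-phase iterates live.

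With the correct moments the obstacle disappears. The multipliers are $m_a=1+6\eta(1-\|\theta\|^2)$ and $m_b=m_a-4\eta$, both positive on $\|\theta\|\le 10$ for small $\eta$. Hence $b/a$ contracts by the factor $1-4\eta/m_a$ everywhere. Perturbations of size $\epsilon'\|\theta\|$, coming from Lemma~\ref{thm:cnc} and the $e^{-cM}$ truncation error, are then absorbed once $\epsilon'(1+\tan\varphi)\le 4\sin\varphi$, using $a\ge\|\theta\|\cos\varphi$. No case split is needed.

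This is the structure the paper uses, but without computing moments or approximating the truncation. Stein's lemma gives $\mathcal{G}(\theta)=A\theta-B\theta^\star$ with $B=\mathbb{E}[\sigma'(\langle x,\theta\rangle)\sigma'(\langle x,\theta^\star\rangle)]\ge 4\|\theta\|(\cos\phi-o_M(1))$ and $|A|\le 4M+4$. The population step $(1-\eta A)\theta+\eta B\theta^\star$ is thus a nonnegative combination of $\theta$ and $\theta^\star$, and the empirical error is absorbed into the slack $\eta B\tan\varphi$ (Lemma~\ref{thm:cap-invariance}). In your coordinates this reads $b^+=(1-\eta A)b$ and $a^+=(1-\eta A)a+\eta B$.

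Separately, your ``second case'' invokes \eqref{eq:unifbdg}. That is an absolute bound, $\|\widehat{\mathcal G}\|\le 2\sqrt2M^{3/2}$, and gives no relative control when $\|\theta\|$ is as small as $r$. You would need the relative bound $\|\widehat{\mathcal G}(\theta)\|\le(2M+\epsilon)\|\theta\|$ instead, though with the corrected Step 1 that case is unnecessary.
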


We start by noting that there exist scalars $A(\theta, \theta^\star),B(\theta, \theta^\star)$ such that
\begin{equation}\label{eq:AB-decomp}
\mathcal{G}(\theta)=A(\theta, \theta^\star)\,\theta - B(\theta, \theta^\star)\,\theta^\star.
\end{equation}
\begin{lemma}\label{prop:B-identity}
For every $\theta\in\R^d$,
\begin{equation}\label{eq:B-stein}
B(\theta, \theta^\star) = \E\big[\sigma'(\inprod{x}{\theta})\sigma'(\inprod{x}{\theta^\star})\big].
\end{equation}
In particular, for $\sigma$ given by \eqref{eq:sigmadef-noc},
\begin{equation}\label{eq:B-explicit}
B(\theta, \theta^\star) = 4\,\E\Big[\inprod{x}{\theta}\,\inprod{x}{\theta^\star}\,\bbone\{|\inprod{x}{\theta}|< \sqrt{M},\ |\inprod{x}{\theta^\star}|<\sqrt{M}\}\Big].
\end{equation}
\end{lemma}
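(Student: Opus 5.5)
The plan is to identify $B$ by projecting $\mathcal G(\theta)$ onto a direction orthogonal to $\theta$ within the plane $\mathrm{span}\{\theta,\theta^\star\}$, and then to compute that projection with Gaussian integration by parts (Stein's lemma). I will first justify the decomposition \eqref{eq:AB-decomp} itself. By rotational invariance of $x\sim\mathcal N(0,I_d)$, split $x=P x+P^\perp x$, where $P$ is the orthogonal projection onto $\mathrm{span}\{\theta,\theta^\star\}$. The integrand in \eqref{eq:pop-grad} depends on $x$ only through $Px$, apart from the factor $x$. The component $\E[(\cdots)P^\perp x]$ vanishes because $P^\perp x$ is independent of $Px$ and has mean zero. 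Hence $\mathcal G(\theta)$ lies in the plane, which gives \eqref{eq:AB-decomp}. The degenerate case $\theta\parallel\theta^\star$ is handled by continuity, or by a convention that makes the identity hold.

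To compute $B$, I would take $w:=\theta^\star-\frac{\inprod{\theta}{\theta^\star}}{\norm{\theta}^2}\theta$, which satisfies $\inprod{w}{\theta}=0$ and $\inprod{w}{\theta^\star}=\norm{w}^2$. Then \eqref{eq:AB-decomp} gives
\[
\inprod{\mathcal G(\theta)}{w}=-B\norm{w}^2.
\]
On the other side, Stein's lemma gives $\E[f(x)\inprod{x}{w}]=\E[\inprod{\nabla f(x)}{w}]$ with
\[
f(x)=(\sigma(\inprod{x}{\theta})-\sigma(\inprod{x}{\theta^\star}))\sigma'(\inprod{x}{\theta}).
\]
Every term of $\nabla f$ that carries the factor $\theta$ is killed by $\inprod{\theta}{w}=0$. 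The only surviving term is
\[
-\sigma'(\inprod{x}{\theta^\star})\sigma'(\inprod{x}{\theta})\inprod{\theta^\star}{w}=-\sigma'(\inprod{x}{\theta^\star})\sigma'(\inprod{x}{\theta})\norm{w}^2.
\]
Dividing by $-\norm{w}^2$ yields \eqref{eq:B-stein}.

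The main obstacle is that the truncated $\sigma$ in \eqref{eq:sigmadef-noc} is only Lipschitz, and $\sigma'$ jumps at $\pm\sqrt M$. Differentiating $\sigma'(\inprod{x}{\theta})$ would therefore produce delta terms. I would avoid this by never differentiating that factor, since its derivative is multiplied by $\theta$ and vanishes against $w$ anyway. Concretely, I condition on $z=\inprod{x}{\theta}$ and write
\[
\inprod{x}{w}\overset{d}{=}\text{an independent Gaussian } g\ \text{ plus no $z$-component}.
\]
Gaussian integration by parts is then applied only in the $g$ direction, with respect to the factor $\sigma(\inprod{x}{\theta^\star})$. That factor is absolutely continuous (Lipschitz), so the one-dimensional Stein identity holds with its a.e.\ derivative. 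Alternatively, one can mollify $\sigma$ and pass to the limit by dominated convergence.

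Finally, \eqref{eq:B-explicit} follows by plugging in
\[
\sigma'(z)=2z\,\bbone\{|z|<\sqrt M\},
\]
which holds a.e.; the boundary set has measure zero under the Gaussian law.
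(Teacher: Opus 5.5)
Your proposal is correct and follows the same route as the paper, which applies Stein's lemma to $\mathcal G(\theta)=\E[x f(x)]$ with the same $f$ and reads off the coefficient of $\theta^\star$ in $\E[\nabla_x f(x)]$. Your extra steps make rigorous what the paper's one-line argument glosses over: the projection argument establishing \eqref{eq:AB-decomp} handles the linear independence of $\theta,\theta^\star$ needed to extract coefficients, and testing against $w\perp\theta$ (so that only a one-dimensional Stein identity for the Lipschitz $\sigma$ is needed) avoids the singular terms along $\theta$ that arise from differentiating the discontinuous $\sigma'$ at $\pm\sqrt{M}$.
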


\begin{proof}
Define $f:\R^d\to\R$ by
\[
f(x):=(\sigma(\inprod{x}{\theta})-\sigma(\inprod{x}{\theta^\star}))\,\sigma'(\inprod{x}{\theta}).
\]
Then by \eqref{eq:pop-grad}, $\mathcal{G}(\theta)=\E[x\,f(x)]$. Thus, by Stein's lemma
\begin{align*}
    \mathcal{G}(\theta)=\E[\nabla_x f(x)]=A(\theta, \theta^\star)\theta-\E\big[\sigma'(\inprod{x}{\theta})\sigma'(\inprod{x}{\theta^\star})\big]\theta^\star.
\end{align*}
Extracting the coefficient of $\theta^\star$ gives $B$ and comparing with \eqref{eq:AB-decomp} yields \eqref{eq:B-stein}. Substituting $\sigma'(z)=2z\bbone\{|z|<\sqrt{M}\}$ yields \eqref{eq:B-explicit}.
\end{proof}

\begin{lemma}[Bounds on $A(\theta, \thetastar)$ and $B(\theta, \thetastar)$]\label{prop:AB-bounds}
Let $M\ge 1$ and $\phi:=\angle(\theta,\theta^\star)\in[0,\pi]$ so that $\inprod{\theta}{\theta^\star} = \norm{\theta}\cos\phi$. Then, the following results hold.

\begin{enumerate}
\item \textbf{Uniform upper bound on $A(\theta, \theta^\star)$.} For every $\theta\neq 0$,
\begin{equation}\label{eq:A-bound}
|A(\theta, \theta^\star)| \le 4M + 4.
\end{equation}

\item \textbf{Lower bound on $B(\theta, \theta^\star)$.} For every $\theta$,
\begin{equation}\label{eq:B-lower}
B(\theta, \theta^\star)\ \ge\
4\,\norm{\theta}\left(
\cos\phi
-\sqrt{3}\,\sqrt{\,2\frac{\norm{\theta}}{\sqrt{M}}e^{-\frac{M}{2\norm{\theta}^2}}+2e^{-\frac{M}{2}}\,}
\right).
\end{equation}
\end{enumerate}
\end{lemma}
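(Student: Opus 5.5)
The plan is to prove the two bounds separately. Both rest on the Stein representation from Lemma~\ref{prop:B-identity}, together with the reduction to two correlated standard Gaussians. Write $s:=\norm{\theta}$, $z=\inprod{x}{\theta}$ and $z^\star=\inprod{x}{\theta^\star}$. Then $z^\star=V$ and $z=s(\cos\phi\, V+\sin\phi\, W)$ with $V,W$ i.i.d.\ $\cN(0,1)$, so $\E[zz^\star]=s\cos\phi$, $z\sim\cN(0,s^2)$ and $z^\star\sim\cN(0,1)$.

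\textbf{Lower bound on $B$.} This is the easier part and I would do it first. Starting from \eqref{eq:B-explicit}, let $E:=\{|z|<\sqrt M,\ |z^\star|<\sqrt M\}$ and split
\[
B(\theta,\theta^\star)=4\E[zz^\star]-4\E[zz^\star\bbone\{E^c\}]=4s\cos\phi-4\E[zz^\star\bbone\{E^c\}].
\]
The remainder is bounded by Cauchy--Schwarz:
\[
\bigl|\E[zz^\star\bbone\{E^c\}]\bigr|\le\sqrt{\E[z^2(z^\star)^2]}\,\sqrt{\Pp(E^c)},\qquad \E[z^2(z^\star)^2]=s^2(1+2\cos^2\phi)\le 3s^2 .
\]
Next, a union bound gives $\Pp(E^c)\le\Pp(|z|\ge\sqrt M)+\Pp(|z^\star|\ge\sqrt M)$. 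The Mills-ratio bound $\Pp(|sg|\ge a)\le \frac{2s}{a\sqrt{2\pi}}e^{-a^2/(2s^2)}$ handles the first term and yields $2\frac{s}{\sqrt M}e^{-M/(2s^2)}$. For the second term, $\Pp(|V|\ge\sqrt M)\le 2e^{-M/2}$. Substituting these gives exactly \eqref{eq:B-lower}.

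\textbf{Upper bound on $|A|$.} Here I would read $A$ off from the coefficient of $\theta$ in $\E[\nabla_x f(x)]$. Since $\sigma'$ is discontinuous, I would apply Stein's lemma in the distributional sense. On the open set $|z|<\sqrt M$ we have $\sigma''=2$, and $\sigma'$ jumps from $\pm2\sqrt M$ to $0$ at $z=\pm\sqrt M$. This gives
\[
A(\theta,\theta^\star)=\E[\sigma'(z)^2]+2\E\bigl[(\sigma(z)-\sigma(z^\star))\bbone\{|z|<\sqrt M\}\bigr]-2\sqrt M\sum_{\pm}\E\bigl[(M-\sigma(z^\star))\,\big|\, z=\pm\sqrt M\bigr]\,p_z(\pm\sqrt M),
\]
where $p_z$ is the $\cN(0,s^2)$ density; at the kink $\sigma(z)=M$. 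If the boundary terms look delicate, a backup is to regularize by the mollification $\sigma_\epsilon$, apply the classical Stein lemma, and let $\epsilon\to0$ by dominated convergence. The three terms are then bounded separately:
\begin{itemize}
\item $0\le\E[\sigma'(z)^2]=4\E[z^2\bbone\{|z|<\sqrt M\}]\le 4\min(s^2,M)$;
\item $|\sigma(z)-\sigma(z^\star)|\le M$, so the second term has magnitude at most $2M$;
\item $0\le M-\sigma(z^\star)\le M$, so the kink term has magnitude at most $4M^{3/2}p_z(\sqrt M)=\frac{4M^{3/2}}{s\sqrt{2\pi}}e^{-M/(2s^2)}$.
\end{itemize}

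\textbf{Main obstacle.} The difficulty is to combine these three bounds into the stated constant $4M+4$, uniformly over $s$. For small $s$ everything is $O(s^2)+O(\text{tiny})$. For large $s$ the kink term is $O(M^{3/2}/s)$ while the first term saturates at $4M$. The crude sum therefore overshoots, and a sign cancellation is needed. The first term is nonnegative and the kink term is nonpositive. I would also pair the $+2\E[\sigma(z)\bbone\{\cdot\}]$ part of the second term against $\E[\sigma'(z)^2]$, since both are controlled by $\E[z^2\bbone\{|z|<\sqrt M\}]\le\min(s^2,M)$. The aim is to show $A\le 4M+4$ and $A\ge -(4M+4)$ separately. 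The additive $+4$ is meant to absorb $\sup_s M^{3/2}s^{-1}e^{-M/(2s^2)}$-type remainders only when $s\lesssim 1$; for $s\gtrsim\sqrt M$ the kink term is used with its favourable sign. If the sharp constant does not come out, the downstream argument only needs $|A|=O(M)$.
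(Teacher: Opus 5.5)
Your lower bound on $B$ follows the paper's argument. The only difference is that you compute $\mathbb{E}[z^2(z^\star)^2]=s^2(1+2\cos^2\phi)$ exactly, where the paper applies Cauchy--Schwarz twice; that part is fine. The gap is in the upper bound on $A$. With the three estimates you list, the best you get is $A\le 6\,\mathbb{E}[z^2\mathbf{1}\{|z|<\sqrt M\}]\le 6\min(s^2,M)$, which is $6M$ once $s^2\gtrsim M$, and that exceeds $4M+4$. Your proposed rescue via the ``favourable sign'' of the kink term is not an argument. You only have an \emph{upper} bound on that term, and making it help the upper bound on $A$ would require a lower bound on $\mathbb{E}[M-\sigma(z^\star)\mid z=\pm\sqrt M]$, which you never establish. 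The fallback ``downstream only needs $|A|=O(M)$'' does not prove the lemma as stated.

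The missing idea is that you never need to compute $A$. Take the inner product of $\mathcal{G}(\theta)=A\theta-B\theta^\star$ with $\theta$ to get $A\|\theta\|^2=\langle\mathcal{G}(\theta),\theta\rangle+B\langle\theta,\theta^\star\rangle$. Then bound the two pieces:
\begin{itemize}
\item $|\langle\mathcal{G}(\theta),\theta\rangle|=|\mathbb{E}[(\sigma(z)-\sigma(z^\star))\sigma'(z)z]|\le 2M\,\mathbb{E}[z^2]=2Ms^2$;
\item $|B|\le 4\,\mathbb{E}|zz^\star|\le 4s$, by Cauchy--Schwarz.
\end{itemize}
Together these give $|A|\le 2M+4$, with no distributional derivatives or kink terms at all. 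This is the paper's proof; it uses the looser $4M$ for the first piece.

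If you prefer to keep your Stein route, the repair is a sharper uniform-in-$s$ bound. Since the Gaussian density is at most $1/\sqrt{2\pi}$,
$\mathbb{E}[z^2\mathbf{1}\{|z|<\sqrt M\}]\le\min\{s^2,\,2M^{3/2}/(3\sqrt{2\pi}\,s)\}\le 0.42M$,
which gives $A<3M$. For the lower side, use $\sup_s 4M^{3/2}(s\sqrt{2\pi})^{-1}e^{-M/(2s^2)}=4M/\sqrt{2\pi e}<M$ together with $\mathbb{E}[\sigma(z^\star)]\le\mathbb{E}[(z^\star)^2]=1$; this gives $A\ge -2-M$. Your worry that the first term ``saturates at $4M$'' is an artifact of the crude bound $4\min(s^2,M)$. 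In fact it is at most $4\cdot 0.42M$ and decays to $0$ as $s\to\infty$.
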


\begin{proof}
We use the identity
\begin{equation}\label{eq:A-identity-start}
\inprod{\mathcal{G}(\theta)}{\theta} = A(\theta, \theta^\star)\norm{\theta}^2 - B(\theta, \theta^\star)\inprod{\theta}{\theta^\star},
\end{equation}
which follows by taking the inner product of \eqref{eq:AB-decomp} with $\theta$. Rearranging gives, for $\theta\neq 0$,
\begin{equation}\label{eq:A-from-inner}
A(\theta, \theta^\star)=\frac{\inprod{\mathcal{G}(\theta)}{\theta} + B(\theta, \theta^\star)\inprod{\theta}{\theta^\star}}{\norm{\theta}^2}.
\end{equation}

\paragraph{Upper bound on $\inprod{\mathcal{G}(\theta)}{\theta}$.}
By \eqref{eq:pop-grad},
\begin{align*}
\inprod{\mathcal{G}(\theta)}{\theta}
&=\E\Big[(\sigma(\inprod{x}{\theta})-\sigma(\inprod{x}{\theta^\star}))\,\sigma'(\inprod{x}{\theta})\,\inprod{x}{\theta}\Big].
\end{align*}
As $0\le \sigma(z)\le M$ for all $z$, we have
\[
\big|\sigma(\inprod{x}{\theta})-\sigma(\inprod{x}{\theta^\star})\big|\le M.
\]
Furthermore, $\sigma'(\inprod{x}{\theta})=2\inprod{x}{\theta}\,\bbone\{|\inprod{x}{\theta}|<\sqrt{M}\}$, hence
\[
\big|\sigma'(\inprod{x}{\theta})\,\inprod{x}{\theta}\big|
=2\inprod{x}{\theta}^2\,\bbone\{|\inprod{x}{\theta}|<\sqrt{M}\}
\le 2\inprod{x}{\theta}^2,
\]
which implies
\[
\big|(\sigma(\inprod{x}{\theta})-\sigma(\inprod{x}{\theta^\star}))\,\sigma'(\inprod{x}{\theta})\,\inprod{x}{\theta}\big|
\le 2M\cdot (2\inprod{x}{\theta}^2)
=4M\,\inprod{x}{\theta}^2.
\]
Taking expectations and using $\E[\inprod{x}{\theta}^2]=\norm{\theta}^2$ yields
\begin{equation}\label{eq:gtheta-inner-bound}
\big|\inprod{\mathcal{G}(\theta)}{\theta}\big|\le 4M\,\norm{\theta}^2.
\end{equation}

\paragraph{Upper bound on $|B(\theta, \theta^\star)|$.}
From \eqref{eq:B-explicit},
\[
|B(\theta, \theta^\star)|
=
\left|4\,\E\Big[\inprod{x}{\theta}\,\inprod{x}{\theta^\star}\,\bbone\{|\inprod{x}{\theta}|<\sqrt{M},\ |\inprod{x}{\theta^\star}|<\sqrt{M}\}\Big]\right|
\le 4\,\E\big|\inprod{x}{\theta}\,\inprod{x}{\theta^\star}\big|.
\]
By Cauchy--Schwarz,
\[
\E\big|\inprod{x}{\theta}\,\inprod{x}{\theta^\star}\big|
\le \sqrt{\E\big[\inprod{x}{\theta}^2\big]\E\big[\inprod{x}{\theta^\star}^2\big]}.
\]
Now $\E[\inprod{x}{\theta}^2]=\norm{\theta}^2$, $\E[\inprod{x}{\theta^\star}^2]=\norm{\theta^\star}^2=1$. Therefore,
\begin{equation}\label{eq:B-abs-bound}
|B(\theta, \theta^\star)|\le 4\,\norm{\theta}.
\end{equation}

\paragraph{Upper bound on $A(\theta, \theta^\star)$.}
Using \eqref{eq:A-from-inner}, \eqref{eq:gtheta-inner-bound}, $|\inprod{\theta}{\theta^\star}|\le \norm{\theta}$, and \eqref{eq:B-abs-bound},
\[
|A(\theta, \theta^\star)|
\le
\frac{4M\norm{\theta}^2}{\norm{\theta}^2}
+
\frac{|B(\theta, \theta^\star)|\,|\inprod{\theta}{\theta^\star}|}{\norm{\theta}^2}
\le
4M + \frac{(4\norm{\theta})\cdot\norm{\theta}}{\norm{\theta}^2}
=4M+4,
\]
proving \eqref{eq:A-bound}.

\paragraph{Lower bound on $B(\theta)$.}
Let
\[
E:=\{|\inprod{x}{\theta}|<\sqrt{M},\ |\inprod{x}{\theta^\star}|<\sqrt{M}\}.
\]
Then by \eqref{eq:B-explicit},
\[
B(\theta, \theta^\star)=4\,\E\Big[\inprod{x}{\theta}\,\inprod{x}{\theta^\star}\,\bbone\{E\}\Big]
=4\left(\E[\inprod{x}{\theta}\,\inprod{x}{\theta^\star}] - \E\Big[\inprod{x}{\theta}\,\inprod{x}{\theta^\star}\,\bbone\{E^c\}\Big]\right).
\]
Hence, using $z\ge -|z|$,
\[
B(\theta)\ge 4\left(\E[\inprod{x}{\theta}\,\inprod{x}{\theta^\star}] - \E\Big[\big|\inprod{x}{\theta}\,\inprod{x}{\theta^\star}\big|\,\bbone\{E^c\}\Big]\right).
\]
We have $\E[\inprod{x}{\theta}\,\inprod{x}{\theta^\star}]=\inprod{\theta}{\theta^\star}=\norm{\theta}\cos\phi$.
Also by applying Cauchy-Schwarz twice,
\begin{equation}
\begin{split}
\E\Big[\big|\inprod{x}{\theta}\,\inprod{x}{\theta^\star}\big|\,\bbone\{E^c\}\Big]
&\le
\sqrt{\E\big[\inprod{x}{\theta}^2\,\inprod{x}{\theta^\star}^2\big]}\,\sqrt{\mathbb P(E^c)}
\\
&\le
\sqrt[4]{\E\big[\inprod{x}{\theta}^4\big]\E\big[\inprod{x}{\theta^\star}^4\big]}\,\sqrt{\mathbb P(E^c)}
\\
&\le \sqrt{3}\,\norm{\theta}\,\sqrt{\mathbb P(E^c)}.
\end{split}
\end{equation}
Therefore
\begin{equation}\label{eq:B-lower-pre}
B(\theta, \theta^\star)\ge 4\norm{\theta}\Big(\cos\phi - \sqrt{3}\sqrt{\mathbb P(E^c)}\Big).
\end{equation}
Finally, by the union bound,
\[
\mathbb P(E^c)\le \mathbb P(|\inprod{x}{\theta}|\ge \sqrt{M}) + \mathbb P(|\inprod{x}{\theta^\star}|\ge \sqrt{M}).
\]
Since $\inprod{x}{\theta}\sim\mathcal{N}(0,\norm{\theta}^2)$ and $\inprod{x}{\theta^\star}\sim\mathcal{N}(0,1)$, the standard Gaussian tail bound gives
\[
\mathbb P(|\inprod{x}{\theta}|\ge \sqrt{M})\le 2\frac{\norm{\theta}}{\sqrt{M}}\exp\!\left(-\frac{M}{2\norm{\theta}^2}\right),
\qquad
\mathbb P(|\inprod{x}{\theta^\star}|\ge \sqrt{M})\le 2\exp\!\left(-\frac{M}{2}\right).
\]
Substituting into \eqref{eq:B-lower-pre} yields \eqref{eq:B-lower}.
\end{proof}


\begin{lemma}\label{thm:cap-invariance}
Fix $\varphi\in(0,\pi/2)$. Let $\theta\neq 0$ satisfy $\angle(\theta,\theta^\star)\le \varphi$, where $\norm{\theta^\star}=1$. Let the population gradient $\mathcal{G}(\theta)$ admit the decomposition
\[
\mathcal{G}(\theta)=A(\theta, \theta^\star)\,\theta - B(\theta, \theta^\star)\,\theta^\star
\quad\text{with}\quad B(\theta, \theta^\star)\ge 0,
\]
and let $\widetilde{\mathcal{G}}(\theta)$ be any perturbed gradient satisfying
\begin{equation}\label{eq:Delta-condition}
\norm{\widetilde{\mathcal{G}}(\theta)-\mathcal{G}(\theta)}\le \Delta:=B(\theta, \theta^\star)\,\frac{\tan\varphi}{1+\tan\varphi}.
\end{equation}
Consider one step $\theta^+=\theta-\eta \widetilde{\mathcal{G}}(\theta)$, where the step size $\eta>0$ satisfies
\begin{equation}\label{eq:alpha-nonneg}
1-\eta A(\theta, \theta^\star)\ge 0.
\end{equation}
Then, $\angle(\theta^+,\theta^\star)\le \varphi$.
\end{lemma}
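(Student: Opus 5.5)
Write the perturbed step as
\[
\theta^+=(1-\eta A)\theta+\eta B\theta^\star-\eta e,
\]
with $A=A(\theta,\theta^\star)$, $B=B(\theta,\theta^\star)$ and $e:=\widetilde{\mathcal G}(\theta)-\mathcal G(\theta)$, so that $\norm{e}\le\Delta$. The plan is to decompose $\theta^+$ into its component along $\theta^\star$ and its component orthogonal to it, and to show that $\norm{\theta^+_\perp}\le \tan\varphi\,\langle\theta^+,\theta^\star\rangle$ with $\langle\theta^+,\theta^\star\rangle\ge0$. This is equivalent to $\angle(\theta^+,\theta^\star)\le\varphi$, since the paper's angle uses the absolute value of the inner product and a nonnegative inner product is all we need.

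Write $a:=\langle\theta,\theta^\star\rangle\ge0$ and $b:=\norm{(I-\theta^\star\theta^{\star\top})\theta}$. The hypothesis $\angle(\theta,\theta^\star)\le\varphi$ gives $b\le a\tan\varphi$ (after flipping the sign of $\theta^\star$ if needed, which leaves the model unchanged). Set $\alpha:=1-\eta A\ge0$ by \eqref{eq:alpha-nonneg}. Then
\[
\langle\theta^+,\theta^\star\rangle=\alpha a+\eta B-\eta\langle e,\theta^\star\rangle\ge \alpha a+\eta B-\eta\norm{e_\parallel},
\qquad
\norm{\theta^+_\perp}\le \alpha b+\eta\norm{e_\perp},
\]
where $e_\parallel$ and $e_\perp$ are the components of $e$ along and orthogonal to $\theta^\star$. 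The key point is that the term $\theta^\star$ contributes only to the parallel direction, and $\alpha\ge0$ guarantees that the $\theta$-part scales both components by the same nonnegative factor. This is where \eqref{eq:alpha-nonneg} is used.

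It therefore suffices to show
\[
\alpha b+\eta\norm{e_\perp}\le\tan\varphi\,\bigl(\alpha a+\eta B-\eta\norm{e_\parallel}\bigr).
\]
Since $\alpha b\le \alpha a\tan\varphi$, this reduces to $\norm{e_\perp}+\tan\varphi\,\norm{e_\parallel}\le B\tan\varphi$. Both $\norm{e_\perp}$ and $\norm{e_\parallel}$ are at most $\norm{e}\le\Delta$, so the left side is at most $\Delta(1+\tan\varphi)=B\tan\varphi$ by the definition of $\Delta$ in \eqref{eq:Delta-condition}. The same bound gives positivity of the parallel component: $\eta B-\eta\norm{e_\parallel}\ge \eta B\bigl(1-\tfrac{\tan\varphi}{1+\tan\varphi}\bigr)\ge0$, so $\langle\theta^+,\theta^\star\rangle\ge0$.

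The main obstacle is the degenerate case, and I would handle it separately. If $\langle\theta^+,\theta^\star\rangle=0$, the inequalities force $\alpha a=0$ and $B=0$, hence $\Delta=0$ and $e=0$. Then $\theta^+=\alpha\theta$, and since $\theta^+\neq0$ we have $\alpha>0$, so $\theta^+$ lies in the same direction as $\theta$ and the angle is preserved. The case $\theta^+=0$ falls under the convention that the angle is undefined or trivially bounded; I would note this briefly rather than treat it in detail.
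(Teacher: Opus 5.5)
Your proof is correct and is essentially the paper's argument: you use the same split of $\theta^+=(1-\eta A)\theta+\eta B\theta^\star-\eta e$ into components along and orthogonal to $\theta^\star$, the same use of $1-\eta A\ge 0$, and the same sufficient condition $\Delta(1+\tan\varphi)\le B\tan\varphi$; your treatment of $\langle\theta^+,\theta^\star\rangle=0$ is slightly more careful than the paper's claim $\Delta<B$, which fails when $B=0$. One caveat: you cannot reduce to $\langle\theta,\theta^\star\rangle\ge 0$ by flipping $\theta^\star$, because the flip replaces $B$ by $-B$ and destroys the hypothesis $B\ge 0$, and with the unsigned angle the statement is in fact false when $\langle\theta,\theta^\star\rangle<0$ (take $\theta=-\theta^\star$, $A=0$, $B=1$, $\varphi=\pi/4$, $\eta=1$, $e\perp\theta^\star$ with $\|e\|=1/2$, which gives $\theta^+=-e\perp\theta^\star$), so $\langle\theta,\theta^\star\rangle\ge 0$ must be taken as part of the hypothesis, as the paper implicitly does.
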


\begin{proof}
Let $e:=\widetilde{\mathcal{G}}(\theta)-\mathcal{G}(\theta)$, so 
$\norm{e}\le \Delta$.
Consider the \emph{population step} vector
\[
\bar{\theta}^+:=\theta-\eta \mathcal{G}(\theta)=\theta-\eta\big(A(\theta, \theta^\star)\theta-B(\theta, \theta^\star)\theta^\star\big)
=\alpha\,\theta+\eta B(\theta, \theta^\star)\theta^\star,
\]
where $\alpha:=1-\eta A(\theta, \theta^\star)\ge 0$ by \eqref{eq:alpha-nonneg}. Then, we have
\[
\theta^+ = \theta-\eta \widetilde{\mathcal{G}}(\theta)=\theta-\eta(\mathcal{G}(\theta)+e)=\bar{\theta}^+-\eta e.
\]
Let $\Projperp:=I-\theta^\star(\theta^\star)^\top$ denote the projection onto $(\theta^\star)^\perp$. For any nonzero vector $z$ with $\inprod{z}{\theta^\star}>0$,
\[
\tan(\angle(z,\theta^\star))=\frac{\norm{\Projperp z}}{\inprod{z}{\theta^\star}}.
\]

\paragraph{Lower bound on the new parallel component.} Set $\phi:=\angle(\theta,\theta^\star)\le \varphi$. By Cauchy-Schwarz,
\[
\inprod{\theta^+}{\theta^\star}
=\inprod{\bar{\theta}^+-\eta e}{\theta^\star}
=\inprod{\bar{\theta}^+}{\theta^\star}-\eta\inprod{e}{\theta^\star}
\ge \inprod{\bar{\theta}^+}{\theta^\star}-\eta\norm{e}
\ge \inprod{\bar{\theta}^+}{\theta^\star}-\eta\Delta.
\]
Moreover,
\[
\inprod{\bar{\theta}^+}{\theta^\star}
=\inprod{\alpha\theta+\eta B(\theta, \theta^\star)\theta^\star}{\theta^\star}
=\alpha\,\inprod{\theta}{\theta^\star}+\eta B(\theta, \theta^\star)
\ge \alpha \norm{\theta}\cos \phi+\eta B(\theta, \theta^\star),
\]
since $\alpha\ge 0$ and $\inprod{\theta}{\theta^\star}\ge 0$ whenever $\angle(\theta,\theta^\star)\le \varphi<\pi/2$. Hence, we have
\begin{equation}\label{eq:parallel-lower}
\inprod{\theta^+}{\theta^\star}\ge \alpha \norm{\theta}\cos \phi+\eta(B(\theta, \theta^\star)-\Delta).
\end{equation}
As $\frac{\tan\varphi}{1+\tan\varphi}<1$, $\Delta<B(\theta, \theta^\star)$, hence $\inprod{\theta^+}{\theta^\star}>0$ and the tangent formula applies to $\theta^+$.

\paragraph{Upper bound on the new orthogonal component.}
Using the triangle inequality and $\norm{\Projperp e}\le \norm{e}$,
\[
\norm{\Projperp\theta^+}
=\norm{\Projperp(\bar{\theta}^+-\eta e)}
\le \norm{\Projperp \bar{\theta}^+}+\eta\norm{\Projperp e}
\le \norm{\Projperp \bar{\theta}^+}+\eta\Delta.
\]
Since $\Projperp\theta^\star=0$,
\[
\norm{\Projperp \bar{\theta}^+}
=\norm{\Projperp(\alpha\theta+\eta B(\theta, \theta^\star)\theta^\star)}
=\norm{\alpha\,\Projperp\theta}
=\alpha\,\norm{\Projperp\theta}.
\]
We have $\norm{\Projperp\theta}=\norm{\theta}\sin\phi$. Therefore,
\begin{equation}\label{eq:perp-upper}
\norm{\Projperp\theta^+}\le \alpha\,\norm{\theta}\sin\phi+\eta\Delta.
\end{equation}

\paragraph{Bound on the new angle.}
Combining \eqref{eq:parallel-lower} and \eqref{eq:perp-upper},
\[
\tan(\angle(\theta^+,\theta^\star))
=\frac{\norm{\Projperp\theta^+}}{\inprod{\theta^+}{\theta^\star}}
\le
\frac{\alpha\norm{\theta}\sin\phi+\eta\Delta}{\alpha \norm{\theta}\cos \phi+\eta(B(\theta, \theta^\star)-\Delta)}.
\]
To ensure $\angle(\theta^+,\theta^\star)\le \varphi$, it suffices that the right-hand side is at most $\tan\varphi$:
\[
\frac{\alpha\norm{\theta}\sin\phi+\eta\Delta}{\alpha \norm{\theta}\cos \phi+\eta(B(\theta, \theta^\star)-\Delta)}\le \tan\varphi.
\]
Multiplying by the positive denominator and rearranging yields
\[
\alpha\norm{\theta}\sin\phi+\eta\Delta
\le \alpha \norm{\theta}\cos \phi \tan \varphi+\eta\tan\varphi\,B(\theta, \theta^\star)-\eta\tan\varphi\,\Delta.
\]
Rearranging the above the latter is equivalent to
\begin{align*}
\eta \Delta (1+\tan\varphi)\le& \alpha \norm{\theta}(\cos \phi \tan \varphi-\sin\phi)+\eta\tan\varphi\,B(\theta, \theta^\star)    \\
=&\alpha \norm{\theta}\frac{\sin (\varphi-\phi)}{\cos\varphi}+\eta\tan\varphi\,B(\theta, \theta^\star) 
\end{align*}
Since $\phi\le \varphi$ the first term is positive and a valid sufficient condition is
\[
\eta\Delta(1+\tan\varphi)\le \eta B(\theta, \theta^\star)\tan\varphi,
\]
which is satisfied by our choice of $\Delta$ in  \eqref{eq:Delta-condition}. This proves $\angle(\theta^+,\theta^\star)\le \varphi$ and concludes the argument.
\end{proof}

\begin{lemma}\label{thm:angle-aux} Let $M\ge 1$ and $\theta\neq 0$ satisfy $\angle(\theta,\theta^\star)\le \varphi<\pi/2$.
Let $\theta^+=\theta-\eta \widetilde{\mathcal{G}}(\theta)$, where $\eta\le 1/(4M+4)$ and $\widetilde{\mathcal{G}}(\theta)$ is any perturbation of the population gradient $\mathcal{G}(\theta)$ in \eqref{eq:pop-grad} satisfying
\begin{equation}\label{eq:Delta-explicit}
\norm{\widetilde{\mathcal{G}}(\theta)-\mathcal{G}(\theta)}\le \left[
4\,\left(
\cos\varphi
-\sqrt{3}\,\sqrt{\,2\frac{R}{\sqrt{M}}e^{-\frac{M}{2R^2}}+2e^{-\frac{M}{2}}\,}
\right)
\right]\cdot \frac{\tan\varphi}{1+\tan\varphi}\norm{\theta}\quad\text{for all }\theta\text{ s.t.}\quad r\le \norm{\theta}\le R.
\end{equation}
Then, one has $\angle(\theta^+,\theta^\star)\le \varphi$.
\end{lemma}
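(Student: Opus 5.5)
The plan is to reduce the statement directly to Lemma~\ref{thm:cap-invariance} by checking its two hypotheses: the perturbation condition \eqref{eq:Delta-condition} and the step-size condition \eqref{eq:alpha-nonneg}. Both will follow from the bounds of Lemma~\ref{prop:AB-bounds}, applied with the decomposition \eqref{eq:AB-decomp} and the Stein identity of Lemma~\ref{prop:B-identity}.

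\textbf{Step-size condition.} By \eqref{eq:A-bound} we have $|A(\theta,\theta^\star)|\le 4M+4$. Since $\eta\le 1/(4M+4)$, this gives $\eta A(\theta,\theta^\star)\le 1$, i.e.\ $\alpha=1-\eta A(\theta,\theta^\star)\ge 0$. This is \eqref{eq:alpha-nonneg}.

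\textbf{Perturbation condition.} Set $\phi=\angle(\theta,\theta^\star)\le\varphi$, so that $\cos\phi\ge\cos\varphi$.
\begin{itemize}
\item By \eqref{eq:B-lower},
\[
B(\theta,\theta^\star)\ge 4\norm{\theta}\Bigl(\cos\varphi-\sqrt{3}\sqrt{2\tfrac{\norm{\theta}}{\sqrt M}e^{-M/(2\norm{\theta}^2)}+2e^{-M/2}}\Bigr).
\]
\item The map $s\mapsto \frac{s}{\sqrt M}e^{-M/(2s^2)}$ is increasing in $s>0$, since both factors are increasing. Hence for $r\le\norm{\theta}\le R$ the bracket is at least the same expression with $\norm{\theta}$ replaced by $R$.
\item Multiplying by $\tan\varphi/(1+\tan\varphi)>0$ shows that the right-hand side of \eqref{eq:Delta-explicit} is at most $\Delta=B(\theta,\theta^\star)\tan\varphi/(1+\tan\varphi)$.
\end{itemize}
The hypothesis \eqref{eq:Delta-explicit} therefore implies \eqref{eq:Delta-condition}.

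\textbf{Sign condition on $B$.} Lemma~\ref{thm:cap-invariance} also requires $B(\theta,\theta^\star)\ge 0$. If the bracket in \eqref{eq:Delta-explicit} is nonnegative, the chain above gives $B\ge 0$ directly. If the bracket is negative, the hypothesis would force $\norm{\widetilde{\mathcal G}(\theta)-\mathcal G(\theta)}$ to be bounded by a negative number, which is impossible, so the statement holds vacuously.

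This case is the only subtle point: it must be handled either by this vacuity remark or by noting that the lemma will only be used for $M$ large, where the bracket is positive. With all hypotheses verified, Lemma~\ref{thm:cap-invariance} yields $\angle(\theta^+,\theta^\star)\le\varphi$. No step is a real obstacle; the only care needed is the monotonicity in $\norm{\theta}$ and the sign bookkeeping.

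Note that the statement implicitly requires $r\le\norm{\theta}\le R$ for $\theta$ itself, since \eqref{eq:Delta-explicit} is assumed only on that annulus. I would read the lemma as applying to such $\theta$.
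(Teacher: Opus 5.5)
Your proposal is correct and follows the paper's proof. The paper likewise uses \eqref{eq:A-bound} for the step-size condition, and bounds the right-hand side of \eqref{eq:Delta-explicit} by $B(\theta,\theta^\star)\tan\varphi/(1+\tan\varphi)$ using $\angle(\theta,\theta^\star)\le\varphi$, $\norm{\theta}\le R$ and \eqref{eq:B-lower}, before invoking Lemma~\ref{thm:cap-invariance}; your justification of the monotonicity in $\norm{\theta}$ and your check that $B(\theta,\theta^\star)\ge 0$, both left implicit in the paper, are useful additions.
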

\begin{proof}
By Lemma~\ref{prop:AB-bounds}, $A(\theta, \theta^\star)\le 4M+4$, so the assumption on the step size $\eta\le \frac{1}{4M+4}$ implies $1-\eta A(\theta, \theta^\star)\ge 0$.
By assumption, \eqref{eq:Delta-explicit} holds and note that 
\begin{equation}    
\begin{split}
&\left[
4\,\left(
\cos\varphi
-\sqrt{3}\,\sqrt{\,2\frac{R}{\sqrt{M}}e^{-\frac{M}{2R^2}}+2e^{-\frac{M}{2}}\,}
\right)
\right]\cdot \frac{\tan\varphi}{1+\tan\varphi}\norm{\theta}\\
&\qquad \le \left[
4\,\norm{\theta}\left(
\cos(\angle(\theta,\theta^\star))
-\sqrt{3}\,\sqrt{\,2\frac{\norm{\theta}}{\sqrt{M}}e^{-\frac{M}{2\norm{\theta}^2}}+2e^{-\frac{M}{2}}\,}
\right)
\right]\cdot \frac{\tan\varphi}{1+\tan\varphi}\\
&\qquad \le B(\theta, \theta^\star)\cdot\frac{\tan\varphi}{1+\tan\varphi}:=\Delta,
\end{split}
\end{equation}
where in the first inequality we use that $\angle(\theta,\theta^\star)\le \varphi$ and $\norm{\theta}\le R$, and in the second inequality we use the lower bound \eqref{eq:B-lower} on $B(\theta, \theta^\star)$ from Lemma~\ref{prop:AB-bounds}. Thus, condition \eqref{eq:Delta-condition} holds and the result follows from Lemma~\ref{thm:cap-invariance}. 
\end{proof}

At this point, the desired result showing that the angle does not decrease is a consequence of Lemmas \ref{thm:angle-aux} and \ref{thm:cnc}.  

\begin{proof}[Proof of Proposition \ref{thm:angle}] We verify that the hypotheses of Lemma \ref{thm:angle-aux} hold when taking $\theta=\theta_t$ and $\widetilde{\mathcal{G}}(\theta)=\widehat{\mathcal{G}}(\theta_t)$, where $\widehat{\mathcal{G}}(\theta_t)$ is the empirical gradient in \eqref{eq:emp-grad}. The upper bound $\eta\le 1/(4M+4)$ required by Lemma \ref{thm:angle-aux} is implied by the upper bound $\eta\le 1/(40\sqrt{2}M^{3/2})$ for large enough $M$. By Proposition \ref{thm:norm}, we have that $\norm{\theta_t}\in [r, 10]$, with probability at least $1-Ce^{-cd}$ uniformly for all $t\ge 0$. Hence, we can take $R=10$. For large enough $M$,
$$
\cos\varphi
-\sqrt{3}\,\sqrt{\,2\frac{R}{\sqrt{M}}e^{-\frac{M}{2R^2}}+2e^{-\frac{M}{2}}\,}\ge \cos\varphi-\frac{1}{2\sqrt{2}}\ge \frac{1}{2}\cos\varphi,
$$
where the last inequality holds for all $\varphi\in [0, \pi/4]$.
Then, an application of  Lemma \ref{thm:cnc} with $\epsilon:=2\cos\varphi\frac{\tan\varphi}{1+\tan\varphi}$ gives that \eqref{eq:Delta-explicit} holds, with probability at least $1-Ce^{-cd}$. In fact, the lower bound on $n$ in \eqref{eq:n-main} is implied by that in \eqref{eq:n-main-new} (for a suitable choice of $C$ depending on $\varphi$). Then, the conclusion of Lemma \ref{thm:angle-aux} holds: if $\angle(\theta_t,\theta^\star)\le \varphi$, then $\angle(\theta_{t+1},\theta^\star)\le \varphi$, with probability at least $1-Ce^{-cd}$ uniformly for all $t\ge 0$. This is equivalent to the desired claim and the proof is complete.
\end{proof}

\subsection{Uniform concentration of the Gram matrix of the Jacobian}\label{sec:cnch}

For $\theta, \widetilde{\theta}\in\R^d$, let us define the random matrix
\begin{equation}\label{eq:defH}
\widehat H(\theta,\widetilde{\theta}):=\frac1n\sum_{i=1}^n \sigma'(\langle x_i, \theta\rangle)\,\sigma'(\langle x_i, \widetilde{\theta}\rangle)\,x_i x_i^\top,
\qquad
H:=\E[\widehat H(\theta,\widetilde{\theta})].
\end{equation}

\begin{lemma}
\label{thm:cnch}
Let $\widehat H(\theta,\widetilde{\theta})$ be defined in \eqref{eq:defH} and the annulus $\Theta$ be defined in \eqref{eq:annulus}. Fix $\epsilon\in (0, 1/2)$ and let $M$ be sufficiently large. There exist constants $c, C>0$ such that if
\begin{equation}\label{eq:samplelast}
n\ \ge\ C\, d\ \frac{M^2\log^2\left(\frac{eM}{\epsilon}\right)}{\epsilon^2},
\end{equation}
then with probability at least $1-3e^{-c d}$,
\begin{equation}\label{eq:sampleres}
\sup_{\theta,\widetilde{\theta}\in\Theta}\ \norm{\widehat H(\theta,\widetilde{\theta})-H(\theta,\widetilde{\theta})}\ \le\ \epsilon.
\end{equation}
\end{lemma}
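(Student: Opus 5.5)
We will follow the same net-plus-Lipschitz-control template used for Lemma~\ref{thm:cnc}, now for a matrix-valued process indexed by the pair $(\theta,\widetilde\theta)$. We first reduce the operator norm to a quadratic form,
\[
\norm{\widehat H(\theta,\widetilde\theta)-H(\theta,\widetilde\theta)}=\sup_{u\in\cS^{d-1}}\abs{\frac1n\sum_{i=1}^n\Big(Y_{\theta,\widetilde\theta,u}(x_i)-\E Y_{\theta,\widetilde\theta,u}(x)\Big)},\qquad Y_{\theta,\widetilde\theta,u}(x):=\sigma'(\ip{x}{\theta})\sigma'(\ip{x}{\widetilde\theta})\ip{u}{x}^2 .
\]
The key point is that $\abs{\sigma'(z)}=2\abs{z}\bbone\{\abs{z}\le\sqrt M\}\le 2\sqrt M$. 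Therefore $\abs{Y}\le 4M\ip{u}{x}^2$ and $\psione{Y}\le CM$ uniformly in $(\theta,\widetilde\theta,u)$. Unlike Lemma~\ref{thm:cnc}, no normalisation by $\norm{\theta}$ is needed, so the bound is uniform over all of $\R^d$ and not only over $\Theta$.

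\textbf{Step 1 (net).} We take $\mathcal N_\varepsilon\times\mathcal N_\varepsilon$ from Lemma~\ref{lem:annulus-net} together with an $\eta$-net $V_\eta$ of $\cS^{d-1}$. Bernstein's inequality (Lemma~\ref{lem:bernstein-psi1}) and a union bound then control the deviation at net points by $\epsilon/3$. This requires $n\gtrsim (M^2/\epsilon^2)\,(d\log(1/\varepsilon)+d\log(1/\eta)+\log(J+1))$.

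\textbf{Step 2 (moving $u$).} The difference $Y_{\theta,\widetilde\theta,u}-Y_{\theta,\widetilde\theta,\hat u}$ is bounded by $4M\abs{\ip{u-\hat u}{x}}\abs{\ip{u+\hat u}{x}}$. On the event $\frac1n\sum_i x_ix_i^\top\preceq 2I$, which holds with probability at least $1-e^{-cd}$, this contributes $CM\eta$, and the same bound holds in expectation.

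\textbf{Step 3 (moving $\theta$ and $\widetilde\theta$).} We move one argument at a time and write $\sigma'(\ip{x}{\theta})-\sigma'(\ip{x}{\hat\theta})$. Here we use the inequality from step (c) in \eqref{eq:Z-diff}:
\[
\abs{z\bbone\{\abs z\le\sqrt M\}-z'\bbone\{\abs{z'}\le\sqrt M\}}\le\abs{z-z'}+\sqrt M\,\bbone\{E_\theta\Delta E_{\hat\theta}\}.
\]
Multiplying by $2\sqrt M\ip{u}{x}^2$ produces two terms.

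The Lipschitz term is $M\abs{\ip{x}{\theta-\hat\theta}}\ip{u}{x}^2$. This is the part I expect to be the main obstacle, for two reasons. First, it is cubic in $x$, so the simple Cauchy--Schwarz argument with $\frac1n\sum x_ix_i^\top\preceq 2I$ no longer closes. We will handle it with $\abs{\ip{x}{\theta-\hat\theta}}\le 2\varepsilon R\max_i\norm{x_i}$ and $\max_i\norm{x_i}\le\sqrt{2d}$ (see \eqref{eq:supbound}), which gives a contribution $CM\varepsilon R\sqrt d$. Second, since the target bound is $\epsilon$ rather than $\epsilon\norm{\theta}$, the net on the inner radii must be additive rather than multiplicative. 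We will therefore use an additive $\varepsilon$-net of the ball of radius $R$, of size $(3R/\varepsilon)^d$, with $\varepsilon\simeq \epsilon/(MR\sqrt d)$. This only costs $d\log(MRd/\epsilon)$ in the exponent. If the resulting $\log d$ cannot be absorbed, the fallback is the truncation of Lemma~\ref{lem:supsupboundM}, or a fourth-moment bound $\frac1n\sum_i\ip{u}{x_i}^4\le C$ on the relevant directions.

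The kink term is $M\ip{u}{x}^2\bbone\{E_\theta\Delta E_{\hat\theta}\}$. We split it through the slab $\{\abs{\abs{\ip{x}{\hat\theta}}-\sqrt M}\le\tau\}$ plus the event $\{\abs{\ip{x}{\theta-\hat\theta}}\ge\tau\}$. The slab fraction is controlled uniformly over the net by Lemma~\ref{lem:slab-emp-finite}. For the product with $\ip{u}{x}^2$, we bound $\ip{u}{x}^2\le K\log(eM/\epsilon)$ except on a Gaussian tail event of mass at most $\epsilon/M$. This produces the $\log^2(eM/\epsilon)$ factor in \eqref{eq:samplelast}. The population versions of both terms follow from Lemma~\ref{lem:slab-no-r} and Gaussian moment bounds.

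\textbf{Step 4 (parameters and union bound).} We choose $\tau\simeq(\epsilon/M)^2/\log(eM/\epsilon)$, $\eta\simeq\epsilon/M$, and $\varepsilon$ polynomially small in $\epsilon/M$. A union bound over the three good events (the net deviation, the covariance bound, and the slab control) then gives failure probability at most $3e^{-cd}$ under \eqref{eq:samplelast}.
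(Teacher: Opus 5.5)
Your Step 3 has a genuine gap in the empirical Lipschitz term $\sqrt{M}\,\frac1n\sum_i|\langle x_i,\theta-\hat\theta\rangle|\,\langle \hat u,x_i\rangle^2$, where the square is left untruncated.

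\begin{itemize}
\item Bounding this term through $\max_i\norm{x_i}\le\sqrt{2d}$ forces $\varepsilon\simeq\epsilon/(MR\sqrt d)$. The net over $(\theta,\widetilde\theta)$ then has log-cardinality of order $d\log(MR^2 d/\epsilon)$.
\item The Bernstein union bound of Step 1 therefore requires $n\gtrsim (M^2/\epsilon^2)\,d\log d$. This cannot be absorbed, because \eqref{eq:samplelast} is linear in $d$ for fixed $M,\epsilon$. Removing exactly this $\log d$ is the point of the paper.
\item The $\sqrt d$ is not an artifact of a crude estimate. Take $\hat u\in V_\eta$ within $\eta$ of $x_1/\norm{x_1}$ and $v=\theta-\hat\theta$ parallel to $x_1$. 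The $i=1$ summand alone gives $\frac1n\sum_i|\langle x_i,v\rangle|\langle\hat u,x_i\rangle^2\ge(1-\eta)^2\norm{v}\norm{x_1}^3/n\approx\norm{v}\sqrt d/\delta$. So any argument that bounds the increment by this cubic quantity must lose $\sqrt d$.
\item Your fallbacks fail for the same reason. For that $\hat u$, $\frac1n\sum_i\langle\hat u,x_i\rangle^4\ge(1-\eta)^4\norm{x_1}^4/n\approx d/\delta$, so a fourth-moment bound breaks at $n\asymp d$ even restricted to the net.
\item Lemma~\ref{lem:supsupboundM} does not help either: it concerns the ball of radius $\sqrt{M/(2d)}$ and itself rests on $\norm{x_i}\le\sqrt{2d}$.
\end{itemize}

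The missing idea is the truncation you already invoke for the kink term, applied to the whole increment; this is what the paper does.

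\begin{itemize}
\item Write $\langle u,x\rangle^2=\min\{\langle u,x\rangle^2,\kappa\}+(\langle u,x\rangle^2-\kappa)_+$ with $\kappa\simeq\log(eM/\epsilon)$.
\item Since $|\sigma'(\langle x,\theta\rangle)\sigma'(\langle x,\widetilde\theta\rangle)|\le 4M$, the remainder is dominated by $4M(\langle u,x\rangle^2-\kappa)_+$ uniformly in $(\theta,\widetilde\theta)$. It is then controlled by sub-exponential Bernstein plus a union bound over the $u$-net only, as in \eqref{eq:claimuc}.
\item On the truncated part, the Lipschitz term is at most $C\sqrt M\kappa\,|\langle x,\theta-\hat\theta\rangle|$. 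This closes by Cauchy--Schwarz and $\frac1n\sum_i x_ix_i^\top\preceq 2I$. So $\varepsilon$ need only be polynomial in $\epsilon/M$, and the $\log d$ disappears.
\end{itemize}

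With this repair your net scheme should go through, after one more adjustment. Once the kink term is linear in the slab fraction, your slab width $(\epsilon/M)^2/\log(eM/\epsilon)$ is far smaller than needed. It makes the event of Lemma~\ref{lem:slab-emp-finite} cost $n\gtrsim d\,M^{5/2}\log^2(eM/\epsilon)/\epsilon^2$, which exceeds \eqref{eq:samplelast}. A width of order $\epsilon\sqrt M/(R^2\log(eM/\epsilon))$ suffices.

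For comparison, after truncating, the paper uses no net in $(\theta,\widetilde\theta)$ at all. It bounds the VC-subgraph dimension of $\{g_{\theta,\widetilde\theta}\}$ by $Cd$. It then uses symmetrization and Dudley's integral for the expected supremum, and Bousquet's inequality for the deviation. This gives a bound uniform over all $\theta,\widetilde\theta\in\mathbb{R}^d$, with no dependence on $r,R$. Your additive net, by contrast, picks up $\log R$ and, through the slab lemma, needs $R\lesssim\sqrt M$.
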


\subsubsection{Preliminary results}

We start by providing some auxiliary probabilistic results.
Recall that a real-valued random variable $X$ is said to be sub-exponential with parameters
$(\sigma^2,b)$ if, for all $|\lambda|\le \frac{1}{b}$,
\begin{equation}\label{eq:subexp-mgf}
\mathbb{E}\exp\!\big(\lambda (X-\mathbb{E}X)\big)
\;\le\;
\exp\!\left(\frac{\lambda^2\sigma^2}{2}\right).
\end{equation}

For such subexponential random variables, we have the following standard refined Bernstein-type inequality, see Eq.\ (2.18) in \citep{Wainwright_book}.

\begin{thm}[Bernstein's-type inequality for sub-exponential sums]\label{thm:bernstein-subexp}
Let $X_1,\dots,X_n$ be independent mean-zero random variables, where each $X_i$ is
sub-exponential with parameters $(\sigma^2,b)$ in the sense of
\eqref{eq:subexp-mgf}. Then, for every $s\ge 0$,
\begin{equation}\label{eq:bernstein-subexp}
\mathbb{P}\!\left(\frac{1}{n}\sum_{i=1}^n X_i \ge s\right)
\;\le\;
2\exp\!\left(
-\frac{1}{2}n\cdot\min\left\{
\frac{s^2}{\sigma^2},\;
\frac{s}{b}
\right\}
\right).
\end{equation}
\end{thm}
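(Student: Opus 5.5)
The bound follows from the Chernoff (exponential Markov) method applied to $S_n:=\sum_{i=1}^n X_i$, using the moment generating function condition \eqref{eq:subexp-mgf} and then optimizing the free parameter $\lambda$ over the admissible range $[0,1/b]$. Since each $X_i$ is mean-zero, \eqref{eq:subexp-mgf} reads $\mathbb{E}e^{\lambda X_i}\le e^{\lambda^2\sigma^2/2}$ for $|\lambda|\le 1/b$.

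\textbf{Step 1: Chernoff bound.} For any $\lambda\in[0,1/b]$, Markov's inequality gives
\[
\mathbb{P}\Big(\tfrac1n S_n\ge s\Big)\le e^{-\lambda n s}\,\mathbb{E}e^{\lambda S_n}.
\]
Independence then gives $\mathbb{E}e^{\lambda S_n}=\prod_i \mathbb{E}e^{\lambda X_i}\le e^{n\lambda^2\sigma^2/2}$. Combining the two,
\[
\mathbb{P}\Big(\tfrac1n S_n\ge s\Big)\le \exp\big(-n(\lambda s-\lambda^2\sigma^2/2)\big).
\]

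\textbf{Step 2: optimizing $\lambda$.} The unconstrained maximizer of $\lambda s-\lambda^2\sigma^2/2$ is $\lambda=s/\sigma^2$, which is admissible only if it is at most $1/b$. We therefore split into two cases.
\begin{itemize}
\item If $s\le \sigma^2/b$, take $\lambda=s/\sigma^2$. The exponent becomes $-ns^2/(2\sigma^2)$.
\item If $s>\sigma^2/b$, take the boundary value $\lambda=1/b$. The exponent becomes $-n(s/b-\sigma^2/(2b^2))$. Since $\sigma^2/b<s$, we have $\sigma^2/(2b^2)<s/(2b)$, so the exponent is at most $-ns/(2b)$.
\end{itemize}

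\textbf{Step 3: combining the cases.} In both cases the exponent is bounded by $-\tfrac n2\min\{s^2/\sigma^2,\,s/b\}$, because each case's exponent is $-\tfrac n2$ times one of the two quantities inside the minimum. This gives \eqref{eq:bernstein-subexp}, with the prefactor $2$ being harmless slack; the same argument applied to $-X_i$ would give the two-sided version.

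There is no real obstacle here. The only point needing care is the case split in Step 2: the admissibility constraint $\lambda\le 1/b$ is exactly what produces the linear (sub-exponential) regime $s/b$ alongside the Gaussian regime $s^2/\sigma^2$.
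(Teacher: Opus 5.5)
Your proof is correct and takes essentially the same route as the paper's source. The paper gives no proof of its own and simply cites Wainwright, Eq.~(2.18), whose proof is exactly your Chernoff bound with the case split at $s=\sigma^2/b$ (interior choice $\lambda=s/\sigma^2$ versus boundary choice $\lambda=1/b$); as you note, the argument even gives the one-sided bound without the factor $2$.
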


\begin{lemma}
\label{lem:tailmoment}
Let $Z\sim\mathcal{N}(0,1)$ and $Y=(Z^2-\tau)_+$, where $[\cdot]_+=\max(\cdot, 0)$. Then, $Y$ is sub-exponential with parameters $(8, 128e^{-\tau/2})$. 
\end{lemma}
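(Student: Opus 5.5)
The plan is to control the centered moment generating function of $Y=(Z^2-\tau)_+$ through its moments, using the explicit Gaussian tail of $Z^2$. Write $\mu:=\E Y$ and $p_\tau:=\Pp(Z^2>\tau)\le 2e^{-\tau/2}$. Fix $|\lambda|\le 1/b$ with $b=128e^{-\tau/2}$. The centered MGF has the expansion
\[
\E e^{\lambda(Y-\mu)}=1+\sum_{k\ge 2}\frac{\lambda^k\E[(Y-\mu)^k]}{k!},
\]
since the linear term vanishes. It then suffices to show that $\sum_{k\ge2}|\lambda|^k\E|Y-\mu|^k/k!\le 4\lambda^2$, because $1+4\lambda^2\le e^{4\lambda^2}=e^{\lambda^2\sigma^2/2}$ with $\sigma^2=8$.

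\textbf{Step 1 (tail moments).} On $\{Z^2>\tau\}$, the overshoot $Z^2-\tau$ is stochastically dominated by $2E$ with $E\sim\mathrm{Exp}(1)$. This holds because the conditional density of $Z^2$ beyond $\tau$ decays at least like $e^{-u/2}$, by the log-concavity of the $\chi^2_1$ tail for $u>\tau$. Hence
\[
\E[Y^k]\le p_\tau\, 2^k k!\le 2e^{-\tau/2}\,2^k k!,
\]
and in particular $\mu\le 4e^{-\tau/2}$. Using $|Y-\mu|^k\le 2^{k-1}(Y^k+\mu^k)$, this gives
\[
\E|Y-\mu|^k\le C\,e^{-\tau/2}\,4^k k!\qquad\text{for all }k\ge 2 .
\]

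\textbf{Step 2 (summation).} Insert the bound from Step 1 into the series to get
\[
\sum_{k\ge 2}\frac{|\lambda|^k\E|Y-\mu|^k}{k!}\le C e^{-\tau/2}\sum_{k\ge2}(4|\lambda|)^k=\frac{C e^{-\tau/2}\,16\lambda^2}{1-4|\lambda|}.
\]
When $4|\lambda|\le 1/2$ this is at most $32C e^{-\tau/2}\lambda^2\le 4\lambda^2$, after tracking the numerical constants. This is where the generous constants $8$ and $128$ are spent. The hypothesis $|\lambda|\le 1/b=e^{\tau/2}/128$ supplies $4|\lambda|\le 1/2$ exactly in the regime $e^{\tau/2}\le 16$. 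In that regime the prefactor $e^{-\tau/2}$ is what allows the variance proxy to stay at $8$ uniformly in $\tau$.

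\textbf{Main obstacle.} The delicate point is the range of $\lambda$ when $\tau$ is large. There $1/b$ exceeds the radius $1/2$ beyond which $\E e^{\lambda Y}$ diverges, because $Y$ has an exponential tail with rate $1/2$. I would therefore first check in which range of $\tau$ the lemma is invoked later, namely where it feeds Theorem~\ref{thm:bernstein-subexp} in the proof of Lemma~\ref{thm:cnch}. I expect the argument above to cover that regime, since it only needs $|\lambda|\le\min\{1/b,1/8\}$. If the application needs larger $\tau$, I would state and use the MGF bound on $|\lambda|\le\min\{1/b,1/8\}$. Theorem~\ref{thm:bernstein-subexp} remains valid with this effective $b':=\max\{b,8\}$. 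I would then verify that the resulting tail bound still gives \eqref{eq:sampleres} under \eqref{eq:samplelast}.
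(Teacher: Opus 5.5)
Your method is the paper's (Gaussian tail $\Rightarrow$ $\mathbb{E}|Y-\mu|^k\le Ce^{-\tau/2}4^kk!$ $\Rightarrow$ sum the MGF series), but you aimed it at the wrong assignment of the two parameters.

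You are right that, read literally against \eqref{eq:subexp-mgf}, the statement says $\sigma^2=8$ and $b=128e^{-\tau/2}$. You are also right that this is false once $1/b=e^{\tau/2}/128\ge 1/2$, since $\mathbb{E}e^{Y/2}=\infty$. But the paper's proof concludes $b=8$ and $\sigma^2=128e^{-\tau/2}$. The lemma's only use is in the proof of \eqref{eq:claimuc}, ultimately with the large value $\tau=C_0\log(eM/\epsilon)$, and there it is read as $\sigma^2\le Ce^{-c\tau}$, $b\le C$. So the pair in the statement is simply listed in the order $(b,\sigma^2)$.

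Your Step 2 already proves this intended version. For $|\lambda|\le 1/8$ it gives $\mathbb{E}e^{\lambda(Y-\mu)}\le 1+32Ce^{-\tau/2}\lambda^2\le\exp\bigl(\tfrac{\lambda^2}{2}\cdot 64Ce^{-\tau/2}\bigr)$. Step 1 yields $C=2$, because $2^{k-1}(2\cdot 2^kk!+4^k)\le 2\cdot 4^kk!$. This is exactly $(\sigma^2,b)=(128e^{-\tau/2},8)$. The paper runs the same computation through the Bernstein moment condition $\mathbb{E}|X|^k\le\tfrac{k!}{2}v^2c^{k-2}$ with $c=4$, $v^2=64e^{-\tau/2}$, which gives $b=2c$ and $\sigma^2=2v^2$.

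Insisting on the variance proxy $8$ is what breaks your write-up. Your inequality $32Ce^{-\tau/2}\lambda^2\le 4\lambda^2$ needs $e^{-\tau/2}\le 1/(8C)$, i.e.\ \emph{large} $\tau$, yet you invoke it where $e^{\tau/2}\le 16$. With the $C=2$ from Step 1 it fails for every $\tau$ with $e^{\tau/2}<16$. So even the literal small-$\tau$ case is not established as written. It can be rescued by bounding the $k=2$ term by $\tfrac{\lambda^2}{2}\mathrm{Var}(Y)\le\tfrac32\lambda^2$ and using $|\lambda|e^{-\tau/2}\le 1/128$ on the terms with $k\ge 3$.

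More importantly, your proposed repair ($\sigma^2=8$, $b'=\max\{b,8\}$) discards the factor $e^{-\tau/2}$ in the variance proxy. That factor is the content of the lemma and the source of the $\sqrt{e^{-c\tau}d/n}$ term in \eqref{eq:claimuc}. The final bound \eqref{eq:finalBound} would happen to survive, since $CM\sqrt{d/n}\le CM\tau\sqrt{d/n}$, but \eqref{eq:claimuc} as stated would not.

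Finally, the $\chi^2_1$ density is log-convex, not log-concave. The domination of the overshoot by $2E$ still holds, because $\mathbb{P}(Z^2>\tau+w)\le e^{-w/2}\,\mathbb{P}(Z^2>\tau)$: shift the integration variable and use $(v+w)^{-1/2}\le v^{-1/2}$. The paper avoids this step by integrating $\mathbb{P}(Y\ge t)\le 2e^{-(t+\tau)/2}$ directly.
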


\begin{proof}
Note that, for $t\ge 0$,
$$
\Pbb(Z^2>t)\le 2e^{-t/2},
$$
which implies that 
$$
\Pbb(Y\ge t)\le 2e^{-t/2-\tau/2}.
$$
Thus, we obtain that, for any integer $k\ge 1$,
\begin{equation}\label{eq:explm}
\begin{split}    
\E[Y^k]&=\int_0^\infty k t^{k-1}\Pbb(Y\ge t)dt\\
&\le 2e^{-\tau/2}k\int_0^\infty t^{k-1}e^{-t/2}dt\\
&\le k! 2^{k+1} e^{-\tau/2}.
\end{split}
\end{equation}
Let $X:=Y-\E[Y]$. Then,
\begin{equation}\label{eq:bdmoments}
\E[|X|^k]\le 2^{k-1}\left(\E[Y^k]+(\E[Y])^k\right)\le 2^k \E[Y^k]\le k! 2^{2k+1} e^{-\tau/2},
\end{equation}
where the second passage uses Jensen inequality.

We now claim that, for a random variable $X$ with zero mean, if there exists constants $v^2>0$ and $c>0$ such that, for every integer $k\ge 2$,
\begin{equation}\label{eq:claimhp}
\E[|X|^k]\le \frac{k!}{2}v^2 c^{k-2},
\end{equation}
then for every $\lambda<1/c$,
\begin{equation}\label{eq:claimsubexp}
    \log \E[e^{\lambda X}]\le \frac{\lambda^2 v^2}{2(1-c|\lambda|)}.
\end{equation}
In particular, for $|\lambda|\le 1/(2c)$, we have
$$
     \E[e^{\lambda X}]\le e^{\lambda^2 v^2},
$$
which means that $X$ is sub-exponential with parameters $b=2c$ and $\sigma^2=2v^2$. Thus, from the bound on the moments in \eqref{eq:bdmoments}, we deduce that \eqref{eq:claimhp} holds with $c=4$ and $v^2=64e^{-\tau/2}$ and, therefore, the desired result holds with $b=8$ and $\sigma^2=128e^{-\tau/2}$.

It remains to prove the claim in \eqref{eq:claimsubexp}. To that aim,
note that
$$
\E[e^{\lambda X}]=1+\lambda \E[X]+\sum_{k=2}^\infty \E[X^k]=1+\sum_{k=2}^\infty \E[X^k].
$$
Upper bounding $\E[X^k]$ by $\E[|X^k|]$ and using \eqref{eq:claimhp} gives
$$
\E[e^{\lambda X}]\le 1+\frac{v^2\lambda^2}{2}\sum_{k=2}^\infty (c|\lambda|)^{k-2}=1+\frac{v^2\lambda^2}{2(1-c|\lambda|)},
$$
where the last passage holds as $|\lambda|<1/c$. Taking the $\log$ on both sides and using that $\log(1+u)\le u$ for all $u\ge 0$ concludes the argument.
\end{proof}

We will use the following result on  covering numbers for VC-subgraph classes.

\begin{thm}[Theorem 2.6.7 in \citep{vanderVaart1996}]
\label{lem:VCentropy}
Let $\mathcal H$ be a VC-subgraph class of real-valued functions on $\R^d$ with VC-subgraph dimension at most $V$ and envelope bound $|h|\le B$ pointwise.
Then there exist absolute constants $A,C>0$ such that for every probability measure $Q$ and every $0<\eta\le B$,
\[
\log N(\eta,\mathcal H,L_2(Q))\ \le\ C V \log\!\Bigl(\frac{AB}{\eta}\Bigr).
\]
\end{thm}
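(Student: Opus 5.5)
Since this is the classical entropy bound for VC-subgraph classes, I would reproduce the standard argument. It reduces $L_2(Q)$ covering to $L_1$ packing of sets, and bounds the latter with a probabilistic-method argument combined with the Sauer--Shelah lemma (already used in Lemma~\ref{lem:indicatorbound}).

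\emph{Reduction to sets.} By rescaling $h\mapsto h/B$ we may assume $B=1$, so that $\eta\in(0,1]$. For $h\in\mathcal H$ let $C_h:=\{(x,t)\in\R^d\times[-1,1]: t<h(x)\}$ be its subgraph. The family $\{C_h\}$ has VC dimension at most $V$ by assumption. Let $P:=Q\otimes \mathrm{Unif}[-1,1]$. Since $|h|,|g|\le 1$, Fubini gives
\[
\|h-g\|_{L_1(Q)}=2P(C_h\Delta C_g),
\]
so an $\varepsilon$-packing of $\mathcal H$ in $L_1(Q)$ is an $\varepsilon/2$-packing of the sets in $L_1(P)$. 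Moreover, $\|h-g\|_{L_2(Q)}^2\le 2\|h-g\|_{L_1(Q)}$, again because $|h-g|\le 2$. Hence
\[
N(\eta,\mathcal H,L_2(Q))\le D(\eta^2/2,\mathcal H,L_1(Q)),
\]
where $D$ denotes the packing number. It therefore suffices to show $\log D(\varepsilon,\{C_h\},L_1(P))\le CV\log(A/\varepsilon)$.

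\emph{Packing bound.} Let $C_1,\dots,C_m$ be a maximal family with $P(C_i\Delta C_j)>\varepsilon$ for all $i\neq j$. Draw $k$ i.i.d.\ points $z_1,\dots,z_k\sim P$. A fixed pair $(i,j)$ fails to be separated, meaning no $z_\ell$ lies in $C_i\Delta C_j$, with probability at most $(1-\varepsilon)^k\le e^{-\varepsilon k}$. Choose $k=\lceil 2\log m/\varepsilon\rceil$ and take a union bound over the $\binom m2<m^2/2$ pairs; the total failure probability is then below $1$. So some sample separates all pairs, and the traces $C_i\cap\{z_1,\dots,z_k\}$ are pairwise distinct. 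Sauer--Shelah then yields
\[
m\le \Bigl(\frac{ek}{V}\Bigr)^V\le\Bigl(\frac{C\log m}{\varepsilon V}\Bigr)^V .
\]

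\emph{Main obstacle and conclusion.} The remaining step is solving this self-referential inequality, which I expect to be the only delicate point. Set $u:=(\log m)/V$. The inequality becomes $e^u\le Cu/\varepsilon$. Using $u\le 2\log u+$const for large $u$, or directly $\log u\le u/2$, gives $u\le 2\log(C'/\varepsilon)$. Hence $\log m\le 2V\log(C'/\varepsilon)$. Substituting $\varepsilon=\eta^2/4$, to account for the factor $2$ in the set reduction, gives
\[
\log N(\eta,\mathcal H,L_2(Q))\le 4V\log\bigl(2\sqrt{C'}/\eta\bigr).
\]
Undoing the rescaling by $B$ yields the claim with absolute constants $A,C$.
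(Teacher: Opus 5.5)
Your argument is correct. The paper gives no proof of its own here (it simply imports the result from van der Vaart--Wellner), and what you wrote is the standard argument behind that citation: after normalizing $B=1$, the Fubini identity $\|h-g\|_{L_1(Q)}=2P(C_h\Delta C_g)$ for subgraphs under $Q\otimes\mathrm{Unif}[-1,1]$, the envelope comparison $\|h-g\|_{L_2(Q)}^2\le 2\|h-g\|_{L_1(Q)}$, and the probabilistic-method plus Sauer--Shelah bound on separated families of a VC class of sets, with a cruder exponent than the sharp Haussler-type bound that is harmless for the stated $CV\log(AB/\eta)$ form.

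The only loose ends are cosmetic. First, apply the packing bound to an arbitrary finite $\varepsilon$-separated family; this is what shows the packing number is finite, so maximal families exist. Second, when $k<V$ the estimate $(ek/V)^V$ is unavailable, but $m\le 2^k<2^V$ already suffices.
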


We will also use the following Dudley-type bound for Rademacher averages. 

\begin{thm}[Equation (5.48) in \citep{Wainwright_book}]
\label{lem:dudley}
There exists an absolute constant $C>0$ such that, for any function class $\mathcal F$,
conditionally on $x_1,\dots,x_n$,
\[
\E_\varepsilon\Big[\sup_{f\in\mathcal F} \Big|\frac1n\sum_{i=1}^n \varepsilon_i f(x_i)\Big|\Big]
\le
\frac{C}{\sqrt n}\int_{0}^{2r}\sqrt{\log N(\eta,\mathcal F, L_2(P_n))}\,d\eta,
\]
where $P_n$ is the empirical measure of $x_1, \ldots, x_n$ and $r:=\sup_{f\in\mathcal F}\|f\|_{L_2(P_n)}$.
\end{thm}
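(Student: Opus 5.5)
This is the classical Dudley entropy-integral bound. Since it is quoted from \citep{Wainwright_book}, I will only sketch the standard chaining argument, carried out conditionally on $x_1,\dots,x_n$.

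\textbf{Sub-Gaussian increments.} Write $Z_f:=\frac{1}{\sqrt n}\sum_{i=1}^n\varepsilon_i f(x_i)$. By Hoeffding's inequality for Rademacher sums, $Z_f-Z_g$ is sub-Gaussian with variance proxy
\[
\frac1n\sum_{i=1}^n (f(x_i)-g(x_i))^2=\|f-g\|_{L_2(P_n)}^2 .
\]
So $(Z_f)_{f\in\mathcal F}$ is a sub-Gaussian process for the metric $\rho=L_2(P_n)$. I will also use the finite maximal inequality: if $W_1,\dots,W_N$ are sub-Gaussian with proxy $s^2$, then $\E\max_j|W_j|\le C s\sqrt{\log (2N)}$.

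\textbf{Chaining.} Set scales $\eta_k:=2r\,2^{-k}$ for $k\ge0$. Let $\mathcal N_k$ be a minimal $\eta_k$-cover of $\mathcal F$ in $L_2(P_n)$, and let $\pi_k(f)\in\mathcal N_k$ be a nearest point to $f$. Because every $f$ has $\|f\|_{L_2(P_n)}\le r$, the diameter is at most $2r$, so $\mathcal N_0$ can be taken to be a single function $f_0$. For each fixed $f$, sample continuity on the finite-dimensional vector $(f(x_i))_i$ gives the telescoping identity
\[
Z_f=Z_{f_0}+\sum_{k\ge1}\big(Z_{\pi_k(f)}-Z_{\pi_{k-1}(f)}\big).
\]
Each link satisfies $\|\pi_k(f)-\pi_{k-1}(f)\|\le \eta_k+\eta_{k-1}=3\eta_k$, and there are at most $N(\eta_k)^2$ distinct links at level $k$. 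The maximal inequality therefore gives
\[
\E\sup_f|Z_f-Z_{f_0}|\le C\sum_{k\ge1}\eta_k\sqrt{\log N(\eta_k)} .
\]
Since $N(\cdot)$ is nonincreasing and $\eta_k=2(\eta_k-\eta_{k+1})$, this sum is at most $C'\int_0^{2r}\sqrt{\log N(\eta,\mathcal F,L_2(P_n))}\,d\eta$. Dividing by $\sqrt n$ gives the integral on the right-hand side of the claim.

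\textbf{Base term (main obstacle).} It remains to control $\E|Z_{f_0}|\le\|f_0\|_{L_2(P_n)}\le r$. This is not automatically dominated by the entropy integral: if $\mathcal F$ is a singleton, the integral vanishes while $\E|Z_{f_0}|$ need not.

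\emph{Nondegenerate case.} If $\mathcal F$ contains two functions at $L_2(P_n)$-distance greater than $r$, then $N(\eta)\ge2$ for all $\eta<r/2$. Hence the integral is at least $\tfrac r2\sqrt{\log2}$, and the base term is absorbed into the constant $C$.

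\emph{Degenerate case.} Otherwise, $\mathcal F$ has diameter at most $r$, and I would replace the starting point of the chain by $0$. Equivalently, one reads \eqref{eq:bernstein-subexp}-style statements with the usual convention that $0\in\mathcal F$ or that $\mathcal F$ is symmetric. Adding $0$ enlarges covering numbers only by $N\mapsto N+1$, so the integral changes by at most a constant factor whenever $N\ge2$.

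In the application to $\widehat H(\theta,\widetilde\theta)$, the relevant VC-subgraph classes are rich: by Theorem~\ref{lem:VCentropy} their covering numbers are of order $(AB/\eta)^{CV}$ with $V\gtrsim d$. The nondegenerate case then holds and the base term is harmless. Making this bookkeeping explicit is the only delicate point; the chaining estimate itself is routine.
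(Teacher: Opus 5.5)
Your chaining argument is the standard proof behind the cited bound; the paper gives no proof, only the reference. The increment estimate $\mathbb{E}_\varepsilon\sup_f|Z_f-Z_{f_0}|\le C\int_0^{2r}\sqrt{\log N(\eta,\mathcal F,L_2(P_n))}\,d\eta$ is correct, with one small repair. At the initial scales where $N(\eta_k)=1$, your maximal inequality gives $\sqrt{\log 2}$, not $0$, and summing those terms naively costs order $r$. Instead, note that these links telescope to the single variable $Z_{g_K}-Z_{f_0}$ with $\|g_K-f_0\|_{L_2(P_n)}\le\eta_K$, which is absorbed by the first scale with $N\ge 2$.

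More importantly, state plainly what your singleton remark shows: the inequality as written is false, so no argument can close your degenerate case. Besides $\mathcal F=\{f\}$, take $\mathcal F=\{f,g\}$ with $\|f\|_{L_2(P_n)}=r$ and $\|f-g\|_{L_2(P_n)}=\epsilon$. The right side is $O(\epsilon/\sqrt n)$, while by Khintchine the left side is of order $r/\sqrt n$. So the bound fails quantitatively whenever the $L_2(P_n)$-diameter of $\mathcal F$ is much smaller than $r$.

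Starting the chain at $0$ does not fix this. Adding $0$ to $\mathcal F$ can raise the integral from nearly $0$ to order $r$ exactly when $N=1$ on most of $[0,2r]$, so your "constant factor whenever $N\ge 2$" remark does not apply there. What your argument actually proves is a corrected statement, in any of these forms:
\begin{itemize}
\item with $\sqrt{\log(1+N)}$ in place of $\sqrt{\log N}$;
\item with an extra additive $r/\sqrt n$;
\item under the hypothesis $0\in\mathcal F$;
\item under the hypothesis $\mathcal F=-\mathcal F$.
\end{itemize}
In the last two cases $\sup_f|Z_f|\le\sup_{f,g}|Z_f-Z_g|$, and for symmetric $\mathcal F$ even $\sup_f|Z_f|=\tfrac12\sup_{f,g}(Z_f-Z_g)$. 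The increment form of Dudley's bound then gives the claim with no base term.

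Your justification for the paper's application is also not valid as written. Theorem~\ref{lem:VCentropy} is only an upper bound on covering numbers, so it cannot certify that $N(\eta)\ge 2$ for $\eta<r/2$, i.e.\ that you are in your nondegenerate case. The correct reason the paper's use is safe is symmetry. Since $\sigma'$ is odd and the annulus $\Theta$ is invariant under $\widetilde\theta\mapsto-\widetilde\theta$, one has $f^{(\tau)}_{u,\theta,-\widetilde\theta}=-f^{(\tau)}_{u,\theta,\widetilde\theta}$, so $\mathcal F_{u,\tau}=-\mathcal F_{u,\tau}$. Hence the bound holds for that class in exactly the stated form. Moreover, when $r>0$, any $f$ with $\|f\|_{L_2(P_n)}>r/2$ together with $-f$ puts you in your nondegenerate case. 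Alternatively, the base term contributes at most $r/\sqrt n\le 4M\tau/\sqrt n$, which is already dominated by the bound $C(4M\tau)\sqrt{d/n}$ in \eqref{eq:expSupF}. So the downstream argument in Lemma~\ref{thm:cnch} is unaffected either way.
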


Finally, we will use Bousquet's concentration inequality for suprema of bounded empirical processes. 

\begin{thm}[Theorem 2.3 in \citep{Bousquet2002}]
\label{lem:bousquet}
Let $\mathcal{F}$ be a class of measurable functions with $0\le f\le b$.
Let
\[
Z:=\sup_{f\in\mathcal{F}} \sum_{i=1}^n\big(f(X_i)-\E f(X)\big),
\qquad
\sigma^2 := \sup_{f\in\mathcal{F}} \mathrm{Var}(f(X)).
\]
Then for all $t\ge 0$, with probability at least $1-e^{-t}$,
\begin{equation}\label{eq:Bous1}
Z \le \E Z + \sqrt{2t(n\sigma^2+2b\E Z)} + \frac{b\,t}{3}.
\end{equation}
\end{thm}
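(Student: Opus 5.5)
The plan is the entropy method (Herbst argument) applied to a modified logarithmic Sobolev inequality. By homogeneity we replace $\mathcal F$ by $\{f/b\}$, so we may assume $b=1$. Write $\bar f:=f-\E f(X)$, so that $\bar f\le 1$ and $\mathrm{Var}(\bar f(X))\le\sigma^2$. We also assume $\mathcal F$ is countable, or reduce to finite subclasses by monotone convergence, so that the suprema are attained.

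\textbf{Leave-one-out quantities.} For each $k\in[n]$ define
\[
Z_k:=\sup_{f\in\mathcal F}\sum_{i\neq k}\bar f(X_i),
\]
which is a measurable function of $(X_i)_{i\neq k}$. Let $\hat f$ be a maximizer for $Z$ and $f_k$ a maximizer for $Z_k$. These give two-sided bounds
\[
\bar f_k(X_k)\ \le\ Z-Z_k\ \le\ \bar{\hat f}(X_k)\ \le\ 1 .
\]
Summing the upper bound over $k$ gives $\sum_k (Z-Z_k)\le \sum_k \bar{\hat f}(X_k)=Z$.

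\textbf{Modified log-Sobolev and the key pointwise bound.} Set $F(\lambda):=\E e^{\lambda Z}$ and $\phi(x):=e^x-x-1$. Tensorization of entropy gives, for $\lambda\ge0$,
\[
\lambda F'(\lambda)-F(\lambda)\log F(\lambda)\ \le\ \sum_{k=1}^n \E\big[e^{\lambda Z}\,\phi(-\lambda(Z-Z_k))\big].
\]
Next we need the elementary inequality
\[
\phi(-\lambda u)\le \tfrac{u^2}{2}\cdot c(\lambda)\quad\text{for } u\le 1,
\]
or more precisely Bousquet's sharper form. That form splits $u=Z-Z_k$ into the part controlled linearly by $u$, which is summable to $Z$, and the part controlled by $\bar f_k(X_k)^2$. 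The latter has expectation at most $\sigma^2$ after conditioning on $(X_i)_{i\neq k}$, using the independence of $f_k$ from $X_k$.

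\textbf{Differential inequality and conclusion.} This step should produce
\[
\sum_k \E\big[e^{\lambda Z}\phi(-\lambda(Z-Z_k))\big]\ \le\ \phi(-\lambda)\,\E\big[e^{\lambda Z}(n\sigma^2+2Z)\big]
\]
up to the precise constant. We then pass to $G(\lambda):=\log\E e^{\lambda(Z-\E Z)}$ and obtain a linear differential inequality for $G$. Integrating it, with $G(0)=G'(0)=0$, yields the Bennett-type bound
\[
G(\lambda)\le v\,\phi(\lambda),\qquad v:=n\sigma^2+2\E Z .
\]
Chernoff's bound then gives the Bennett tail, and the standard inversion $h^{-1}(t/v)$ with $h(u)=(1+u)\log(1+u)-u$ produces the deviation $\sqrt{2tv}+t/3$. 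Undoing the scaling by $b$ gives \eqref{eq:Bous1}.

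\textbf{Main obstacle.} The hard step is the sharp pointwise inequality for $\phi(-\lambda(Z-Z_k))$. It must use simultaneously that $Z-Z_k\le 1$ and that $\sum_k(Z-Z_k)\le Z$, in a way that lets the variance proxy $\sigma^2$ appear in place of a crude $\E[(Z-Z_k)^2]$. The factor $2\E Z$ and the exact constants in Bousquet's bound come out of this step. I would first try the convexity argument of Bousquet: treat $\phi(-\lambda u)/u^2$ as monotone in $u$ on $(-\infty,1]$, and combine it with the lower bound $Z-Z_k\ge\bar f_k(X_k)$.
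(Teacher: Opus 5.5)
\paragraph{There is a genuine gap: the step that produces the constants is missing.} The paper contains no proof of this statement. It is quoted from Bousquet (2002), and only the relaxed form \eqref{eq:Bous2} is used later, so your proposal has to stand on its own. Your skeleton is the right one: the modified log-Sobolev inequality, the sandwich $\bar f_k(X_k)\le Z-Z_k\le \bar{\hat f}(X_k)\le 1$, $\sum_k(Z-Z_k)\le Z$, the centring of $\bar f_k(X_k)$ given $X^{(k)}:=(X_i)_{i\neq k}$, Herbst, Bennett inversion and rescaling by $b$. But the step you leave open is the whole content of the theorem, and the route you sketch for it does not work.

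\emph{The monotonicity runs the wrong way.} The map $u\mapsto \phi(-\lambda u)/u^2$ is decreasing, so on $(-\infty,1]$ it only gives $\phi(-\lambda u)\ge u^2\phi(-\lambda)$. No bound $\phi(-\lambda u)\le c(\lambda)u^2$ can hold for all $u\le 1$, since the left side grows like $e^{\lambda|u|}$. Here $0\le f\le b$ does give $Z-Z_k\ge -1$, hence $\phi(-\lambda u)\le u^2\phi(\lambda)$, but that does not rescue the argument, for the next reason.

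\emph{The conditioning step is invalid.} The variance piece enters as $\mathbb{E}[e^{\lambda Z}\bar f_k(X_k)^2]$, and the weight $e^{\lambda Z}$ depends on $X_k$, so integrating out $X_k$ does not produce $\sigma^2$. The missing idea is to move the weight: $e^{\lambda Z}\phi(-\lambda(Z-Z_k))=e^{\lambda Z_k}\psi(\lambda(Z-Z_k))$ with $\psi(s):=e^{s}\phi(-s)=1+(s-1)e^{s}$. Then:
the weight $e^{\lambda Z_k}$ is $X^{(k)}$-measurable;
$s\mapsto\psi(s)/s^2$ is increasing, so $\psi(\lambda u)\le u^2\psi(\lambda)$ for every $u\le 1$, with no lower bound on $u$ needed;
$(Z-Z_k)^2\le \bar f_k(X_k)^2+2\bigl((Z-Z_k)-\bar f_k(X_k)\bigr)$ isolates the variance;
and $e^{\lambda Z_k}\le \mathbb{E}[e^{\lambda Z}\mid X^{(k)}]$ by Jensen converts the weight back.

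\paragraph{What remains is how the two weights are reconciled.} The linear piece is summable via $\sum_k(Z-Z_k)\le Z$ only under the common weight $e^{\lambda Z}$, while the variance piece needs $e^{\lambda Z_k}$. The constants are decided by how these two are combined.

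Your intermediate bound $\phi(-\lambda)\,\mathbb{E}[e^{\lambda Z}(n\sigma^2+2Z)]$ would not give the theorem even if proved. With $G(\lambda)=\log\mathbb{E}e^{\lambda(Z-\mathbb{E}Z)}$ it yields $(\lambda-2\phi(-\lambda))G'-G\le v\phi(-\lambda)$. Its leading coefficient vanishes at $\lambda\approx 1.59$, and its extremal solution is $\frac{v}{2}\lambda^2+\frac{5v}{12}\lambda^3+\cdots$, strictly above $v\phi(\lambda)$ near $0$. Since the statement is exactly about the constants, ``up to the precise constant'' is not enough.

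The naive assembly of the ingredients above is worse still. Using $e^{\lambda Z_k}(Z-Z_k)\le e^{\lambda Z}(Z-Z_k)$ and then $\sum_k(Z-Z_k)\le Z$ puts $2\psi(\lambda)\ge 2\phi(-\lambda)$ in front of $\mathbb{E}[Ze^{\lambda Z}]$.

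What is required is a genuinely two-weight pointwise inequality. For $Y'\le Y\le 1$, with $Y=Z-Z_k$ and $Y'=\bar f_k(X_k)$, you need a bound $e^{\lambda Z_k}\psi(\lambda Y)\le \alpha(\lambda)e^{\lambda Z}Y+\beta(\lambda)e^{\lambda Z_k}(Y'^2-2Y')$ with $\alpha,\beta\ge 0$. This gives $\mathrm{Ent}(e^{\lambda Z})\le \alpha\,\mathbb{E}[Ze^{\lambda Z}]+\beta\, n\sigma^2\,\mathbb{E}e^{\lambda Z}$. The choice $\alpha=2\beta=2\psi(\lambda)/(2e^{\lambda}-1)$ is exactly the one for which $\lambda\mathbb{E}Z+v\phi(\lambda)$ solves $(\lambda-\alpha)L'-L=\beta n\sigma^2$ for $L=\log\mathbb{E}e^{\lambda Z}$, with the correct initial data. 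You would still need to prove a pointwise inequality with that or equivalent coefficients. Bousquet's factor $2\mathbb{E}Z$ and the Bennett form come from an elementary inequality of this type, and your plan does not supply it.
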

In particular, we will further upper bound the RHS of \eqref{eq:Bous1} as 
\begin{equation}\label{eq:Bous2}
\E Z + \sqrt{2t(n\sigma^2+2b\E Z)} + \frac{b\,t}{3}\le 2\E Z + \sqrt{2tn\sigma^2} + \frac{4b\,t}{3},
\end{equation}
where we have used that $\sqrt{a+b}\le \sqrt{a}+\sqrt{b}$.

\subsubsection{Proof of Lemma~\ref{thm:cnch}}
The argument of this lemma is partly inspired by and generalizes the results of \citep{MSbothlayers} which develops spectral norm concentration guarantees for $\widehat{H}$ for the ReLU activation (i.e., $\sigma={\rm ReLU}$).
Let $\mathcal U_{1/4}$ be a $1/4$-net of the unit sphere. Then, by Corollary 4.2.13 of \citep{vershynin2018hdp}, we have that $|\mathcal U_{1/4}|\le 9^d$ and by Lemma 4.4.1 in \citep{vershynin2018hdp}, we have 
\begin{equation}\label{eq:Mbound1-bis}
\begin{split}
\opnorm{\widehat H(\theta,\widetilde{\theta})-H(\theta,\widetilde{\theta})}&=\sup_{u\in\Sph^{d-1}}\left|u^\top (\widehat H(\theta,\widetilde{\theta})-H(\theta,\widetilde{\theta}))u\right| \\
&\le\ \frac{1}{(1-1/4)^2}\ \max_{u\in\mathcal{U}_{1/4}} \left|u^\top (\widehat H(\theta,\widetilde{\theta})-H(\theta,\widetilde{\theta})) u\right|\\
&\le 2 \max_{u\in\mathcal{U}_{1/4}} \left|u^\top (\widehat H(\theta,\widetilde{\theta})-H(\theta,\widetilde{\theta})) u\right|.
\end{split}
\end{equation}
For fixed $u\in \mathcal U_{1/4}$ and $\theta, \widetilde{\theta}\in\Theta$, define the scalar function
\[
f_{u,\theta,\widetilde{\theta}}(x):=\sigma'(\langle x, \theta\rangle)\sigma'(\langle x, \widetilde{\theta}\rangle)\,\langle u, x\rangle^2.
\]
Let $P_n$ be the empirical measure and $P$ the law of $x\sim\mathcal N(0,I_d)$.
Then, we have
\[
u^\top(\widehat H(\theta,\widetilde{\theta})-H(\theta,\widetilde{\theta}))u=(P_n-P)f_{u,\theta,\widetilde{\theta}}
:=\frac1n\sum_{i=1}^n \left(f_{u,\theta,\widetilde{\theta}}(x_i)-\E[f_{u,\theta,\widetilde{\theta}}(x)]\right).
\]
Thus, it suffices to bound
\[
\max_{u\in \mathcal{U}_{1/4}}\ \sup_{\theta,\widetilde{\theta}\in\Theta}\ |(P_n-P)f_{u,\theta,\widetilde{\theta}}|.
\]
To do so, we truncate the term $\langle u, x\rangle^2$. Fix $\tau\ge 1$.
For $t\ge 0$, define
\[
T_\tau(t):=\min\{t,\tau\},
\qquad
R_\tau(t):=(t-\tau)_+.
\]
For $u\in \mathcal U_{1/4}$, define
\[
W_u(x):=\langle u, x\rangle^2,\qquad 
W_u^{(\tau)}(x):=T_\tau(W_u(x)),\qquad 
G_u^{(\tau)}(x):=R_\tau(W_u(x)).
\]
Then, $W_u=W_u^{(\tau)}+G_u^{(\tau)}$, and therefore
\[
f_{u,\theta,\widetilde{\theta}}(x)=f^{(\tau)}_{u,\theta,\widetilde{\theta}}(x)+r^{(\tau)}_{u,\theta,\widetilde{\theta}}(x),
\]
where
\[
f^{(\tau)}_{u,\theta,\widetilde{\theta}}(x):=\sigma'(\langle x, \theta\rangle)\sigma'(\langle x, \widetilde{\theta}\rangle)\,W_u^{(\tau)}(x),
\qquad
r^{(\tau)}_{u,\theta,\widetilde{\theta}}(x):=\sigma'(\langle x, \theta\rangle)\sigma'(\langle x, \widetilde{\theta}\rangle)\,G_u^{(\tau)}(x).
\]
Thus, we have
\begin{equation}
\label{eq:split}
\sup_{\theta,\widetilde{\theta}\in\Theta}|(P_n-P)f_{u,\theta,\widetilde{\theta}}|
\le
\sup_{\theta,\widetilde{\theta}\in\Theta}|(P_n-P)f^{(\tau)}_{u,\theta,\widetilde{\theta}}|
+
\sup_{\theta,\widetilde{\theta}\in\Theta}|(P_n-P)r^{(\tau)}_{u,\theta,\widetilde{\theta}}|.
\end{equation}

We start by bounding the second term in the RHS of \eqref{eq:split}. To that aim, first note that, for all $x$, $\theta$, and $\widetilde{\theta}$,
\begin{equation}
\label{eq:coefEnvelope}
\left|\sigma'(\langle x, \theta\rangle)\sigma'(\langle x, \widetilde{\theta}\rangle)\right|\le (2\sqrt{M})(2\sqrt{M})=4M,
\end{equation}
which implies that
\[
|r^{(\tau)}_{u,\theta,\widetilde{\theta}}(x)|\le 4M\,G_u^{(\tau)}(x),
\]
namely this term is automatically uniform in $\theta, \widetilde{\theta}$.
Using $|(P_n-P)h|\le P_n|h|+P|h|$ for any $h$, we have
\[
\sup_{\theta,\widetilde{\theta}\in\Theta}|(P_n-P)r^{(\tau)}_{u,\theta,\widetilde{\theta}}|
\le
4M\left(P_nG_u^{(\tau)}+PG_u^{(\tau)}\right).
\]

We now claim that, for any $\tau\ge 1$, there exist absolute constants $c,C>0$ such that with probability at least $1-2e^{-cd}$,
\begin{equation}\label{eq:claimuc}
\max_{u\in\mathcal{U}_{1/4}} \frac1n\sum_{i=1}^n G_u^{(\tau)}(x_i)
\ \le\ C e^{-c\tau}\ +\ C\sqrt{\frac{e^{-c\tau}\,d}{n}}\ +\ C\frac{d}{n}.
\end{equation}

\paragraph{Proof of the claim in \eqref{eq:claimuc}.}
Fix $u\in\mathcal{U}_{1/4}$. Since $\ip{x}{u}\sim\mathcal{N}(0,1)$, we have $G_u^{(\tau)}(x)\stackrel{d}{=}(Z^2-\tau)_+$, with $Z\sim\mathcal{N}(0,1)$.
By Lemma~\ref{lem:tailmoment}, $G_u^{(\tau)}(x)$
is sub-exponential with parameters $(\sigma^2,b)$, with $\sigma^2\le Ce^{-c\tau}$ and $b\le C$. Thus, an application of Theorem \ref{thm:bernstein-subexp} with $s=C\sqrt{\frac{e^{-c\tau}t}{n}}+C\frac{t}{n}$ yields for all $t\ge 1$,
\[
\Pbb\!\left(\frac1n\sum_{i=1}^n G_u^{(\tau)}(x_i) - \E G_u^{(\tau)}(x) \ge
C\sqrt{\frac{e^{-c\tau}t}{n}}+C\frac{t}{n}\right)\le 2e^{-t}.
\]
Set $t=c_1 d$ and do a union bound over $|\mathcal{U}_{1/4}|\le 9^d$; choosing $c_1$ large enough makes the union-bound failure probability $\le 2e^{-cd}$ and concludes the proof. \hfill\qed

\vspace{.5em}

\noindent From \eqref{eq:explm} with $k=1$, we also have that
\begin{align*}
PG_u^{(\tau)}\le Ce^{-c\tau},
\end{align*}
which combined with \eqref{eq:claimuc} gives that 
\begin{align}
\sup_{u\in\mathcal{U}}\sup_{\theta,\widetilde{\theta}\in\Theta}|(P_n-P)r^{(\tau)}_{u,\theta,\widetilde{\theta}}|
\le CMe^{-c\tau}+CM\sqrt{\frac{e^{-c\tau}\,d}{n}}\ +\ CM\frac{d}{n}.\label{eq:tailTermBound}
\end{align}

We now bound the first term in the RHS of \eqref{eq:split}. To that aim, fix $u\in \mathcal{U}_{1/4}$ and $\tau\ge 1$.
Define the class
\[
\mathcal F_{u,\tau}:=\left\{ f^{(\tau)}_{u,\theta,\widetilde{\theta}}(\cdot):\theta,\widetilde{\theta}\in\Theta\right\}.
\]
As $0\le W_u^{(\tau)}\le \tau$ and \eqref{eq:coefEnvelope} holds, we have the uniform bound
\begin{equation}
\label{eq:envb}
\bigl|f^{(\tau)}_{u,\theta,\widetilde{\theta}}(x)\bigr|\le 4M\tau,\qquad\text{for all $x,\theta,\widetilde{\theta}$.}
\end{equation}
We further decompose
\[
f^{(\tau)}_{u,\theta,\widetilde{\theta}}(x)=m_{u,\tau}(x)\,g_{\theta,\widetilde{\theta}}(x),
\]
where $g_{\theta,\widetilde{\theta}}(x):=\langle x, \theta\rangle \langle x, \widetilde{\theta}\rangle\,\bbone\{|x^\top\theta|<\sqrt{M}\}\bbone\{|x^\top\widetilde{\theta}|<\sqrt{M}\}$  is the $(\theta, \widetilde{\theta})$-dependent factor and $m_{u,\tau}(x):=4W_u^{(\tau)}(x)$ is fixed (for this choice of $u,\tau$). Note that $|m_{u,\tau}(x)|\le4\,\tau$ and $|g_\theta(x)|\le M$.
Hence, for any probability measure $Q$ on $\R^d$ and any $\theta,\theta', \widetilde{\theta}, \widetilde{\theta}'$,
\begin{equation}
\label{eq:metricReduce}
\|f^{(\tau)}_{u,\theta,\widetilde{\theta}}-f^{(\tau)}_{u,\theta',\widetilde{\theta}'}\|_{L_2(Q)}
\le 4\tau\ \|g_{\theta,\widetilde{\theta}}-g_{\theta',\widetilde{\theta}'}\|_{L_2(Q)}.
\end{equation}
Consequently, for every $\eta>0$,
\begin{equation}
\label{eq:coverReduce}
N(\eta,\mathcal F_{u,\tau},L_2(Q))
\le
N\!\left(\frac{\eta}{4\tau},\mathcal G, L_2(Q)\right),
\qquad
\mathcal G:=\{g_{\theta,\widetilde{\theta}}:\theta,\widetilde{\theta}\in\R^d\}.
\end{equation}
Note that the class $\mathcal G:=\{g_{\theta,\widetilde{\theta}}:\theta,\widetilde{\theta}\in\R^d\}$ is VC-subgraph with VC-subgraph dimension $V\le Cd$ for an absolute constant $C$. To prove this claim, we write out the set $G$ of subgraphs of the class $\cG$, which is by definition 
\[
    G\coloneqq \cbr{\mathrm{subgraph}(g_{\theta,\tilde{\theta}}) \subseteq \bbR^{d}\times \bbR \,:\, g_{\theta,\tilde{\theta}} \in \cG},\text{ where } \ \mathrm{subgraph}(g_{\theta,\tilde{\theta}}) = \cbr{(x,\xi)\in \bbR^{d}\times \bbR \,:\, \exists  \xi\leq g_{\theta,\tilde{\theta}}(x)}.
\]
A membership test for an arbitrary $\mathrm{subgraph}(g_{\theta,\tilde{\theta}})\in G$ can be expressed as intersection of $5$ polynomial inequalities of degree at most $2$ in $x$ and $\xi$, i.e., 
\[
    (x,\xi)\in\mathrm{subgraph}(g_{\theta,\tilde{\theta}}) \Longleftrightarrow -\sqrt{M}\leq x^\top \theta \leq \sqrt{M}\ \wedge\ -\sqrt{M}\leq x^\top \theta \leq \sqrt{M}\ \wedge\ \xi\leq (x^\top \theta)(x^\top \tilde{\theta}).
\]
The result of \citep[Theorem 2.2]{goldberg1993bounding} then implies that $\VC(G)\leq Cd$, for $C$ an absolute constant, proving that $\cG$ is VC-subgraph with VC-subgraph dimension $V\le Cd$.

Thus, applying Theorem \ref{lem:VCentropy} to $\mathcal G$ with $B=M$ and $V\le C d$, we obtain that, for all $Q$ and $0<\eta\le M$,
\[
\log N(\eta,\mathcal G,L_2(Q))
\le
C d\log\!\Bigl(\frac{A M}{\eta}\Bigr).
\]
Using \eqref{eq:coverReduce}, we deduce that, for all $Q$ and all $0<\eta\le 4M\tau$,
\begin{equation}
\label{eq:entropyF}
\log N(\eta,\mathcal F_{u,\tau},L_2(Q))
\le
C d\log\!\Bigl(\frac{A' M\tau}{\eta}\Bigr),
\end{equation}
for an absolute constant $A'>0$.

Let $\varepsilon_1,\dots,\varepsilon_n$ be i.i.d.\ Rademacher signs independent of the data. By symmetrization (see e.g.\ Lemma 2.3.1 in \citep{vanderVaart1996} with $\Phi(t)=t$), we have 
\begin{equation}
\label{eq:sym}
\E\Big[\sup_{f\in\mathcal F_{u,\tau}} |(P_n-P)f|\Big]
\le
2\,\E\Big[\sup_{f\in\mathcal F_{u,\tau}} \Big|\frac1n\sum_{i=1}^n \varepsilon_i f(x_i)\Big|\Big].
\end{equation}
Applying Theorem \ref{lem:dudley} with $\mathcal F=\mathcal F_{u,\tau}$,  $\sup_{f\in\mathcal F_{u,\tau}}\|f\|_\infty\le b:=4M\tau$ from \eqref{eq:envb} and the entropy bound \eqref{eq:entropyF} with $Q=P_n$, we obtain (conditionally on the samples $x_1, \ldots, x_n$)
\[
\E_\varepsilon\Big[\sup_{f\in\mathcal F_{u,\tau}} \Big|\frac1n\sum_{i=1}^n \varepsilon_i f(x_i)\Big|\Big]
\le
\frac{C}{\sqrt n}\int_0^{2b}\sqrt{d\log\!\Bigl(\frac{A' M\tau}{\eta}\Bigr)}\,d\eta.
\]
Using 
the bound
\[
\int_0^{2b}\sqrt{\log\Big(\frac{A'M\tau}{\eta}\Big)}\,d\eta
\ \le\
C\, b\,\sqrt{\log\Big(\frac{eA'M\tau}{b}\Big)}\le C'b,
\]
we conclude that 
\[
\E_\varepsilon\Big[\sup_{f\in\mathcal F_{u,\tau}} \Big|\frac1n\sum_{i=1}^n \varepsilon_i f(x_i)\Big|\Big]
\le C\, b\sqrt{\frac{d}{n}}.
\]
Combining with \eqref{eq:sym} yields
\begin{equation}
\label{eq:expSupF}
\E\Big[\sup_{\theta,\widetilde{\theta}\in\Theta}|(P_n-P)f^{(\tau)}_{u,\theta,\widetilde{\theta}}|\Big]
\le C\, (4M\tau)\sqrt{\frac{d}{n}}.
\end{equation}

We now apply Theorem \ref{lem:bousquet} to two shifted classes to handle two-sided deviations.
Define $\mathcal F^+:=\{(f+b)/2:\ f\in\mathcal F_{u,\tau}\}$ and $\mathcal F^-:=\{(-f+b)/2:\ f\in\mathcal F_{u,\tau}\}$.
Note that $0\le (\pm f+b)/2\le b$ and
\[
\sup_{f\in\mathcal F_{u,\tau}}\left|\sum_{i=1}^n (f(x_i)-\E f)\right|
\le
2\max\{Z^+,Z^-\},
\]
where $Z^\pm$ are the suprema for $\mathcal F^\pm$.
Combining Theorem \ref{lem:bousquet} (and more precisely the upper bound in \eqref{eq:Bous2}) with $\sigma^2\le \sup_{g\in\mathcal F^\pm}\E[g^2]\le b^2$, dividing by $n$ and using $\E Z^\pm \le (n/2)\E[\sup_{f\in\mathcal F_{u,\tau}}|(P_n-P)f|]$,
together with \eqref{eq:expSupF}, we obtain: 
\begin{equation}
\label{eq:hpFixedu}
\sup_{\theta,\widetilde{\theta}\in\Theta}|(P_n-P)f^{(\tau)}_{u,\theta,\widetilde{\theta}}|
\le
C\,b\sqrt{\frac{d}{n}} + C\,b\sqrt{\frac{t}{n}} + C\,b\frac{t}{n},
\qquad b=4M\tau,
\end{equation}
for each fixed $u$ and all $t\ge 1$, with probability at least $1-2e^{-t}$.
Now choose $t=c_0 d$ with $c_0$ large enough and take a union bound over $u\in \mathcal{U}_{1/4}$ recalling that $|\mathcal{U}_{1/4}|\le 9^d$.
Thus, 
\begin{equation}
\label{eq:hpAllu}
\max_{u\in \mathcal{U}_{1/4}}\ \sup_{\theta,\widetilde{\theta}\in\Theta}|(P_n-P)f^{(\tau)}_{u,\theta,\widetilde{\theta}}|
\le
C\,b\sqrt{\frac{d}{n}} + C\,b\frac{d}{n},
\qquad b=4M\tau,
\end{equation}
 with probability at least $1-e^{-cd}$.
Combining \eqref{eq:split}, \eqref{eq:tailTermBound} and \eqref{eq:hpAllu}, we conclude that
\begin{align*}
\max_{u\in \mathcal{U}_{1/4}}\ \sup_{\theta,\widetilde{\theta}\in\Theta}|(P_n-P)f_{u,\theta,\widetilde{\theta}}|
\le
C(4M\tau)\sqrt{\frac{d}{n}} + C(4M\tau)\frac{d}{n}
+CM\left(e^{-c\tau}+\sqrt{\frac{e^{-c\tau}\,d}{n}}\ +\ \frac{d}{n}\right),
\end{align*}
with probability at least $1-3e^{-cd}$.
Using \eqref{eq:Mbound1-bis}, we obtain
\begin{align}
\sup_{\theta,\widetilde{\theta}\in\Theta}\norm{\widehat H(\theta,\widetilde{\theta})-H(\theta,\widetilde{\theta})}
&\le
2\max_{u\in \mathcal{U}_{1/4}}\ \sup_{\theta,\widetilde{\theta}\in\Theta}|(P_n-P)f_{u,\theta,\widetilde{\theta}}|\nonumber\\
&\le
C( M\tau)\sqrt{\frac{d}{n}} + C( M\tau)\frac{d}{n}
+
CM\sqrt{\frac{e^{-c\tau}d}{n}}+CM\frac{d}{n}+CM e^{-c\tau}.
\label{eq:finalBound}
\end{align}
Setting $\tau:=C_0\log\!\Bigl(\frac{eM}{\epsilon}\Bigr)$ with $C_0$ large enough so that $CM e^{-c\tau}\le \epsilon/10$ and using \eqref{eq:samplelast} concludes the proof.
\hfill\qed

\subsection{Exponential convergence during the second phase}\label{sec:last}

\begin{proposition}
\label{thm:lastphase}
Let $M$ be a large enough constant, $r\le 1/4$ and fix $\varphi\le cr^2/M$ for a small enough constant $c$ (independent of $n, d, M, r, R$). Consider the parameter set
\begin{equation}\label{eq:paramsetlast}
\Theta_\varphi:=\Bigl\{\theta\in\R^d\setminus\{0\}:\ \angle(\theta,\theta^\star)\le \varphi,\ r\le \norm{\theta}\le R\Bigr\}.
\end{equation}
Assume that
\begin{equation}
\label{eq:n-samples}
n \ \ge\ Cd\frac{M^2}{r^4}\log^2\left(\frac{M}{r^2}\right),
\end{equation}
for a large enough constant $C$ (independent of $n, d, M, r, R$). Consider the truncated quadratic activation in \eqref{eq:sigmadef-noc}, and let $\theta_t$ be obtained from the gradient descent iteration in \eqref{eq:gditer}, with learning rate $\eta\le c r^2/M^2$. Assume further that there exists $\bar t$ such that, for all $t\ge\bar t$, $\theta_t\in\Theta_{\varphi}$. Then, with probability at least $1-7e^{-cd}$, we have
\begin{align*}
\norm{\theta_t-\theta^\star}^2\le {\left(1-\eta\alpha\right)}^{t-\bar t}\norm{\theta_\tau-\theta^\star}^2 ,
\end{align*}
with $\alpha= c r^2$.
\end{proposition}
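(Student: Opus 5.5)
The plan is to prove the one-point strong convexity \eqref{eq:PLfinal-body}, $\inprod{\widehat{\mathcal G}(\theta)}{\theta-\thetastar}\ge \alpha\norm{\theta-\thetastar}^2$, uniformly over $\theta\in\Theta_\varphi$, together with a gradient bound $\norm{\widehat{\mathcal G}(\theta)}\le L\norm{\theta-\thetastar}$ with $L\lesssim M$. Then the standard expansion
\[
\norm{\theta_{t+1}-\thetastar}^2=\norm{\theta_t-\thetastar}^2-2\eta\inprod{\widehat{\mathcal G}(\theta_t)}{\theta_t-\thetastar}+\eta^2\norm{\widehat{\mathcal G}(\theta_t)}^2\le (1-2\eta\alpha+\eta^2L^2)\norm{\theta_t-\thetastar}^2
\]
gives the claimed contraction, since $\eta\le cr^2/M^2$ makes $\eta L^2\le \alpha$. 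Iterating from $\bar t$ uses the hypothesis $\theta_t\in\Theta_\varphi$ for all $t\ge\bar t$.

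To get both bounds, I would use the mean value theorem along the segment between $\thetastar$ and $\theta$, written as a linearization through the Jacobian Gram matrix $\widehat H$ in \eqref{eq:defH}. Since $\sigma(z)=z^2$ on $\{|z|<\sqrt M\}$, whenever both $|\inprod{x_i}{\theta}|$ and $|\inprod{x_i}{\thetastar}|$ are below $\sqrt M$ we have exactly
\[
\sigma(\inprod{x_i}{\theta})-\sigma(\inprod{x_i}{\thetastar})=\tfrac12\bigl(\sigma'(\inprod{x_i}{\theta})+\sigma'(\inprod{x_i}{\thetastar})\bigr)\inprod{x_i}{\theta-\thetastar}.
\]
Hence
\[
\widehat{\mathcal G}(\theta)=\tfrac12\bigl(\widehat H(\theta,\theta)+\widehat H(\theta,\thetastar)\bigr)(\theta-\thetastar)+\widehat E(\theta),
\]
where $\widehat E$ collects the samples for which $|\inprod{x_i}{\thetastar}|\ge\sqrt M$ while $|\inprod{x_i}{\theta}|<\sqrt M$. (Samples with $|\inprod{x_i}{\theta}|\ge\sqrt M$ contribute nothing, since $\sigma'=0$ there.) I would bound $\widehat E$ by a uniform-in-$\theta$ argument in the style of Lemma~\ref{lem:indicatorbound} or the slab estimate. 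On these samples $|\inprod{x_i}{\theta}|$ and $|\inprod{x_i}{\thetastar}|$ straddle $\sqrt M$ even though $\theta$ is almost parallel to $\thetastar$. The resulting contribution should be $O(\sqrt{M}e^{-cM}+\sqrt{d/n})\norm{\theta-\thetastar}$, which is negligible against $\alpha=cr^2$ for $M$ large and $n$ as in \eqref{eq:n-samples}.

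Next, Lemma~\ref{thm:cnch} with $\epsilon\asymp r^2$ replaces $\widehat H$ by $H$ up to operator-norm error $\epsilon$. The sample requirement \eqref{eq:samplelast} then reads exactly as \eqref{eq:n-samples}, and the annulus is $[\min(r,1),R]$, which contains $\thetastar$. The upper bound $\norm{H}\le 4M\cdot C$ follows from \eqref{eq:coefEnvelope}. The remaining task is a lower bound on the quadratic form $v^\top\tfrac12(H(\theta,\theta)+H(\theta,\thetastar))v\ge 2\alpha\norm v^2$ for $v=\theta-\thetastar$. Write $z=\inprod{x}{\theta}$ and $z^\star=\inprod{x}{\thetastar}$. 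The two terms are $4\E[z^2\bbone\{|z|<\sqrt M\}\inprod{x}{v}^2]$ and $4\E[zz^\star\bbone\{\cdot\}\inprod{x}{v}^2]$. The first is $\gtrsim\norm\theta^2\norm v^2\ge r^2\norm v^2$ by Gaussian computations like those in Proposition~\ref{thm:norm}, since $\E[z^2\inprod{x}{v}^2]=\norm\theta^2\norm v^2+2\inprod{\theta}{v}^2$ and the truncation costs only $e^{-cM}$.

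The main obstacle is the cross term $zz^\star$, which is not pointwise nonnegative. Here the angle condition $\varphi\le cr^2/M$ is used: I would decompose $\theta=\rho\,\thetastar+w$ with $\norm w\le\norm\theta\sin\varphi$. This gives $zz^\star=\rho(z^\star)^2+\inprod{x}{w}z^\star$, with $\rho\ge0$. The first piece is nonnegative. The second is bounded in absolute value by $\sqrt M\,|\inprod{x}{w}|\,\inprod{x}{v}^2$ on the truncation event, contributing at most $CM\varphi\norm v^2\le Ccr^2\norm v^2$. This is absorbed into the $r^2$ lower bound once $c$ is small. Assembling the pieces gives the one-point strong convexity with $\alpha=cr^2$ and the gradient bound with $L\lesssim M$. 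The union of the failure events (Lemma~\ref{thm:cnch}, the $\widehat E$ bound, and $\frac1n\sum x_ix_i^\top\preceq 2I$) accounts for the $7e^{-cd}$ probability.
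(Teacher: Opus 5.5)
Your proposal is correct and follows a genuinely different route from the paper. The skeleton is the same as the paper's: one-point strong convexity through $\widehat H$, Lemma~\ref{thm:cnch} with $\epsilon\asymp r^2$, a gradient bound with $L\lesssim M$, and the contraction expansion. The two central steps, however, are carried out differently.

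\textbf{How the paper does it.} It linearizes exactly with the integral mean value theorem,
$\langle\widehat{\mathcal G}(\theta),v\rangle=v^\top\big(\int_0^1\widehat H(\theta^\star+tv,\theta)\,dt\big)v$ with $v=\theta-\theta^\star$, so no remainder appears. Its population lower bound then splits on the sign-agreement event $\mathcal S$. On $\mathcal S$ the integrand is pointwise nonnegative, and restricting to a small box $\mathcal B$ extracts $c\,r^2\|v\|^2$, with a very small constant. The wedge $\mathcal S^c$ costs $\frac{8M}{\pi}\varphi\|v\|^2$ by Lemma~\ref{lem:moments}.

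\textbf{How you do it.} You use the quadratic identity inside the window. This isolates the PSD term $\frac12\widehat H(\theta,\theta)$, worth about $2\|\theta\|^2\|v\|^2$ by the Gaussian fourth-moment formula. You then control the indefinite cross term via $\theta=\rho\theta^\star+w$ at a cost $O(M\varphi)\|v\|^2$, which plays the role of the paper's wedge term.

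\textbf{Trade-off.} Your route gives a much better constant in $\alpha$. It also only needs concentration at the pairs $(\theta,\theta)$ and $(\theta,\theta^\star)$, rather than along segments. The price is the remainder $\widehat E$, which the paper never has to handle.

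\textbf{The remainder $\widehat E$.} Write $a_i=\langle x_i,\theta\rangle$ and $b_i=\langle x_i,\theta^\star\rangle$. Then $\langle\widehat E(\theta),v\rangle=\frac1n\sum_{i:\,|a_i|<\sqrt M\le|b_i|}2a_i(a_i-b_i)(a_ib_i-M)$. On these samples $M-a_i^2\le 2\sqrt M\,|a_i-b_i|$, so each summand is at most $6M\langle x_i,v\rangle^2$ in absolute value. The event $\{|b_i|\ge\sqrt M\}$ does not depend on $\theta$, so no uniform-in-$\theta$ VC or slab argument is needed. This gives
$|\langle\widehat E(\theta),v\rangle|\le 6M\|v\|^2\,\lambda_{\max}\big(\frac1n\sum_i\mathbf 1\{|b_i|\ge\sqrt M\}x_ix_i^\top\big)\lesssim M\big(\sqrt M e^{-M/2}+\sqrt{d/n}\big)\|v\|^2$
with probability $1-e^{-cd}$. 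A Cauchy--Schwarz route through Lemma~\ref{lem:indicatorbound} would not work here. It would require $\sup_v\frac1n\sum_i\langle x_i,v\rangle^4$, which is not $O(1)$ when $n\asymp d$.

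\textbf{Missing factor of $M$.} This bound carries an extra factor $M$ relative to your estimate. The $M\sqrt{d/n}$ part is still absorbed under \eqref{eq:n-samples}. But the population part $M^{3/2}e^{-M/2}$ has to beat $\alpha=cr^2$, so your route implicitly requires $M\gtrsim\log(1/r)$. This is harmless for the application $r=1/4$, whereas the paper's exact linearization needs no such condition.
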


We start with a preliminary geometric lemma. 
For $a,b\in\Sph^{d-1}$, define the sign-disagreement event
\[
D(a,b) \;=\; \{x:\ \operatorname{sign}(\ip{x}{a}) \neq \operatorname{sign}(\ip{x}{b})\}.
\]
Let us also define $\mathcal S(\theta):=\{x: \ip{x}{\theta}\,\ip{x}{\thetastar} < 0\}$ and note that $\mathcal S(\theta)=D(\theta,\thetastar)$.



\begin{lemma}\label{lem:moments}
Let $g\sim\mathcal N(0,I_d)$, and let $a,b,u\in \cS^{d-1}$. Set $\alpha=\angle(a,b)\in[0,\pi]$.
Then
\[
\mathbb E\!\left[\langle g,u\rangle^2\,\bbone\{D(a,b)\}\right]\le \frac{2}{\pi}\,\alpha.
\]
\end{lemma}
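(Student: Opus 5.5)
We reduce to a two-dimensional Gaussian computation. The event $D(a,b)$ depends only on the projection of $g$ onto $\mathrm{span}\{a,b\}$, so we decompose $u = u_\parallel + u_\perp$ with $u_\parallel$ in the span of $a,b$ and $u_\perp$ orthogonal to it. Write $g_\parallel, g_\perp$ for the corresponding projections of $g$; these are independent. Then
\[
\langle g,u\rangle^2 \le \langle g_\parallel,u_\parallel\rangle^2+\langle g_\perp,u_\perp\rangle^2+2\langle g_\parallel,u_\parallel\rangle\langle g_\perp,u_\perp\rangle .
\]
The cross term has zero expectation on $D(a,b)$, because $g_\perp$ is independent of $(g_\parallel,\bbone\{D\})$ and has mean zero. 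The $\perp$ term contributes $\norm{u_\perp}^2\,\mathbb P(D(a,b))$.

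Next we use the classical fact $\mathbb P(D(a,b))=\alpha/\pi$. Writing $g_\parallel$ in polar coordinates $(\rho,\psi)$ with $\psi$ uniform, the disagreement event is exactly $\psi$ lying in a union of two arcs of total angular length $2\alpha$. This gives probability $2\alpha/(2\pi)=\alpha/\pi$.

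For the $\parallel$ term, $\rho$ is independent of $\psi$ with $\E\rho^2=2$. Writing $u_\parallel=\norm{u_\parallel}(\cos\beta,\sin\beta)$, we get
\[
\E\big[\langle g_\parallel,u_\parallel\rangle^2\bbone\{D\}\big]=\norm{u_\parallel}^2\cdot 2\cdot\frac{1}{2\pi}\int_{\text{arcs}}\cos^2(\psi-\beta)\,d\psi .
\]
Bounding $\cos^2\le 1$, the integral is at most $2\alpha$, so this term is at most $\norm{u_\parallel}^2\cdot 2\alpha/\pi$.

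Summing the two pieces gives the bound
\[
\frac{\alpha}{\pi}\left(2\norm{u_\parallel}^2+\norm{u_\perp}^2\right)\le \frac{2\alpha}{\pi},
\]
using $\norm{u_\parallel}^2+\norm{u_\perp}^2=1$. The main point requiring care is the arc description of $D(a,b)$. In the plane, $\mathrm{sign}\langle g,a\rangle\ne\mathrm{sign}\langle g,b\rangle$ exactly when the direction of $g$ lies between the two lines orthogonal to $a$ and to $b$. These lines meet at angle $\alpha$, producing two antipodal wedges each of angle $\alpha$. The degenerate case $a=\pm b$ is immediate: for $a=b$ the event $D$ is null, and for $a=-b$ we have $\alpha=\pi$ and the bound $2$ holds trivially since $\E\langle g,u\rangle^2=1$.
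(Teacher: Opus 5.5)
Your proposal is correct and follows the paper's proof essentially step for step. The shared steps are the orthogonal decomposition onto $\mathrm{span}\{a,b\}$, the vanishing cross term by independence, the two-arc polar computation $\mathbb{P}(D(a,b))=\alpha/\pi$, and the bound $\cos^2\le 1$ with $\mathbb{E}\rho^2=2$, which together give $(2\|u_\parallel\|^2+\|u_\perp\|^2)\alpha/\pi\le 2\alpha/\pi$, with the degenerate cases $a=\pm b$ handled the same way. One minor point: your first display is in fact an identity, not just an inequality.
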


\begin{proof}
Let $S=\mathrm{span}\{a,b\}$ have dimension $\dim S$, and write the orthogonal decompositions
\[
u=u_S+u_\perp,\qquad g=g_S+g_\perp,
\]
with $u_S,g_S\in S$ and $u_\perp,g_\perp\in S^\perp$, where $S^\perp$ denotes the subspace orthogonal to $S$. Since $g\sim\mathcal N(0,I_d)$, we have
that $g_S\sim\mathcal N(0,I_{\dim S})$ is independent of $g_\perp\sim\mathcal N(0,I_{d-\dim S})$.
Note that $D(a,b)$ depends only on $\langle g,a\rangle,\langle g,b\rangle$ and hence only on $g_S$.

If $\dim S\le 1$, then $a=\pm b$, which implies that $\alpha\in\{0,\pi\}$. For $\alpha=0$, $\bbone\{D(a, b)\}=0$, hence
both expectations are $0$ and the bounds hold. For $\alpha=\pi$, $\bbone\{D(a, b)\}=1$, $\mathbb E\!\left[\langle g,u\rangle^2\right]=1$, $\mathbb E\!\left[\langle g,u\rangle^4\right]=3$, hence the bounds hold. Thus, we assume below that $\dim S=2$ (equivalently
$\alpha\in(0,\pi)$). Define
\[
X:=\langle g_S,u_S\rangle,\qquad Y:=\langle g_\perp,u_\perp\rangle,
\]
so that $\langle g,u\rangle=X+Y$. Note that $Y$ is independent of $(X,\bbone\{D(a,b)\})$ and satisfies
$\mathbb E[Y]=0$, $\mathbb E[Y^2]=\|u_\perp\|^2$, and $\mathbb E[Y^4]=3\|u_\perp\|^4$.

By rotational invariance in the plane $S$, choose an orthonormal basis of $S$ such that
\[
a=e_1,\qquad b=\cos\alpha\,e_1+\sin\alpha\,e_2.
\]
Let us write $g_S\in\mathbb R^2$ in polar form
\[
g_S=r(\cos\theta,\sin\theta),
\]
where $\theta\sim\mathrm{Unif}[0,2\pi)$ is independent of $r\ge 0$, and $r^2\sim\chi^2_2$. In particular,
\[
\mathbb E[r^2]=2,\qquad \mathbb E[r^4]=8.
\]
Furthermore, we have
\[
\langle g_S,a\rangle=r\cos\theta,\qquad \langle g_S,b\rangle=r\cos(\theta-\alpha),
\]
which implies that
\[
D(a,b)=\{\cos\theta\cdot \cos(\theta-\alpha)<0\}.
\]
Note that $D(a,b)$ depends on $\theta$ and not on $r$. 
On the circle this set consists of two disjoint angular intervals, each of length $\alpha$, hence
\begin{equation}\label{eq:prob_wedge}
\mathbb P(g\in D(a,b))=\frac{2\alpha}{2\pi}=\frac{\alpha}{\pi}.
\end{equation}

Using that $Y$ is independent of $(X,\bbone\{D(a,b)\})$ and
$\mathbb E[Y]=0$, we obtain 
\begin{equation*}
\begin{split}
\mathbb E[(X+Y)^2\bbone\{D(a,b)\}]
&=\mathbb E[X^2\bbone\{D(a,b)\}]+2\mathbb E[XY\bbone\{D(a,b)\}]+\mathbb E[Y^2\bbone\{D(a,b)\}]\\
&=\mathbb E[X^2\bbone\{D(a,b)\}]+\mathbb E[Y^2]\mathbb P(g\in \bbone\{D(a,b)\}).
\end{split}
\end{equation*}
Next, we write $u_S=\|u_S\|_2(\cos\beta,\sin\beta)$ for some $\beta$. Then, we have
\[
X=\langle g_S,u_S\rangle=r\|u_S\|\cos(\theta-\beta),
\]
and therefore
\[
X^2\bbone\{D(a, b)\}=r^2\|u_S\|^2\cos^2(\theta-\beta)\,\bbone\{D(a, b)\}.
\]
Using the independence of $r$ and $\theta$ and the fact that $\cos^2\gamma\le 1$ for all $\gamma$, we have
\[
\mathbb E[X^2\bbone\{D(a, b)\}]
=\mathbb E[r^2]\|u_S\|^2\,\mathbb E\!\left[\cos^2(\theta-\alpha)\bbone\{D(a, b)\}\right]
\le \mathbb E[r^2]\|u_S\|^2\,\mathbb P(g\in \bbone\{D(a, b)\})
=2\|u_S\|^2\frac{\alpha}{\pi},
\]
where we used \eqref{eq:prob_wedge} in the last step. Hence,
\[
\mathbb E[\langle g,u\rangle^2\bbone\{D(a, b)\}]
\le \left(2\|u_S\|^2+\|u_\perp\|^2\right)\frac{\alpha}{\pi}
\le \frac{2}{\pi}\,\alpha,
\]
since $\|u_S\|^2+\|u_\perp\|^2=\|u\|^2=1$.
%
\end{proof}

We are now ready to prove Proposition \ref{thm:lastphase}. 

\begin{proof}[Proof of Proposition \ref{thm:lastphase}]

First we show that
\begin{align*}
    \norm{\nabla\mathcal{L}(\theta)}\le L\norm{\theta-\theta^\star},
\end{align*}
with $L:=4M$. To do this, note that using Cauchy-Schwarz and the fact that $\frac{1}{n}\sum_{i=1}^n x_ix_i^\top\preceq 2I$ with probability at least $1-2e^{-cd}$ we have
\begin{align*}
    \norm{\nabla \mathcal{L}(\theta)} =& \sup_{u\in\cS^{d-1}}\frac{1}{n} \sum_{i=1}^n \left(\sigma(\langle x_i, \theta\rangle) - \sigma(\langle x_i, \thetastar\rangle)\right) \sigma'(\langle x_i, \theta\rangle) (\langle x_i, u\rangle)\\
    \le& \sqrt{M}\sqrt{\frac{1}{n}\sum_{i=1}^n \left(\sigma(\langle x_i, \theta\rangle) - \sigma(\langle x_i, \thetastar\rangle)\right)^2}\sqrt{\frac{1}{n}\sum_{i=1}^n(\langle x_i, u\rangle)^2}\\
    \le&\sqrt{2M}\sqrt{\frac{1}{n}\sum_{i=1}^n \left(\sigma(\langle x_i, \theta\rangle) - \sigma(\langle x_i, \thetastar\rangle)\right)^2}\\
    \le&\sqrt{8}M\sqrt{\frac{1}{n}\sum_{i=1}^n \left(\langle x_i, \theta-\theta^\star\rangle\right)^2}\\
    \le& 4M\norm{\theta-\theta^\star},
\end{align*}
where in the penultimate step we used the fact that $\sigma$ is $2\sqrt{M}$-Lipschitz. The key part of the argument is to show the one-point strong convexity 
\begin{align}\label{eq:PLfinal}
    \inprod{\widehat{\mathcal{G}}(\theta)}{\theta-\thetastar}\ge \alpha \|\theta-\thetastar\|^2,
\end{align}
with $\alpha=cr^2$ for some numerical constant $c>0$ independent of $n, d, M, r, R$.
Having established \eqref{eq:PLfinal}, if we let $\theta$ and $\theta^+=\theta-\eta\nabla\mathcal{L}(\theta)$ be two subsequent gradient descent iterations such that $\theta,\theta^+\in\Theta_{\varphi}$, we have
\begin{equation}\label{eq:PLcons}
\begin{split}
    \norm{\theta^{+}-\theta^\star}^2=&\norm{\theta-\theta^\star-\eta\nabla\mathcal{L}(\theta)}^2\\
    =&\norm{\theta-\theta^\star}^2-2\eta\langle \mathcal{L}(\theta),\theta-\theta^\star \rangle+\eta^2\norm{\nabla\mathcal{L}(\theta)}^2\\
    \le&\norm{\theta-\theta^\star}^2-2\alpha\eta\norm{\theta-\theta^\star}^2+\eta^2L^2 \norm{\theta-\theta^\star}^2\\
    \le& \left(1-\alpha\eta\right)\norm{\theta-\theta^\star}^2,
\end{split}
\end{equation}
where in the last line we used the fact $\eta\le\frac{\alpha}{L^2}$. Hence, the desired result follows by iterating \eqref{eq:PLcons}.

The rest of the argument consists in proving \eqref{eq:PLfinal}. Let us write the inner product
\begin{align}\label{eq:ipexp}
    \inprod{\widehat{\mathcal{G}}(\theta)}{\theta-\thetastar} &= \frac{1}{n} \sum_{i=1}^n \left( \sigma\left(\langle x_i, \theta\rangle\right) - \sigma(\langle x_i, \thetastar\rangle) \right) \sigma'(\langle x_i, \theta\rangle) \langle x_i, \theta -\thetastar\rangle.
\end{align}
Now we apply the integral form of the mean value theorem
\begin{align*}
    \sigma(\langle x_i, \theta\rangle)-\sigma(\langle x_i, \theta^\star\rangle)=\left(\int_0^1 \sigma'\left(\langle x_i, \theta^\star+t(\theta-\theta^\star)\rangle dt\right)\right)\langle x_i, \theta-\theta^\star\rangle.
\end{align*}
Thus,
\begin{align*}
    \langle \widehat{\mathcal{G}}(\theta), \theta-\theta^\star \rangle=&(\theta-\theta^\star)^\top\left(\int_0^1\left(\frac{1}{n}\sum_{i=1}^n\sigma'\left(\langle x_i, \theta^\star+t(\theta-\theta^\star)\rangle dt\right)\sigma'(\langle x_i, \theta\rangle)x_ix_i^\top\right)dt\right)(\theta-\theta^\star)\\
    =&(\theta-\theta^\star)^\top\left(\int_0^1 \widehat{H}(\theta^\star+t(\theta-\theta^\star),\theta) dt\right)(\theta-\theta^\star),
\end{align*}
where $\hat H$ is the Gram matrix of the Jacobian defined in \eqref{eq:defH}. 
Therefore,
\begin{align*}
    \langle \widehat{\mathcal{G}}(\theta)&, \theta-\theta^\star \rangle-\mathbb{E}\Big[\langle \widehat{\mathcal{G}}(\theta), \theta-\theta^\star \rangle\Big]\\
    &=(\theta-\theta^\star)^\top\left(\int_0^1 \left(\widehat{H}(\theta^\star+t(\theta-\theta^\star),\theta) -\mathbb{E}\big[\widehat{H}(\theta^\star+t(\theta-\theta^\star),\theta)\big]\right)dt\right)(\theta-\theta^\star).
    \end{align*}
We lower bound the quantity $\mathbb{E}\Big[\langle \widehat{\mathcal{G}}(\theta), \theta-\theta^\star \rangle\Big]$.
To that aim, let us define the event 
\begin{align*}
    \mathcal{S}= \{ \sgn(\langle x, \theta\rangle)=\sgn(\langle x, \thetastar\rangle)\},
\end{align*}
and decompose \eqref{eq:ipexp} based on the alignment of the signs into two terms
\begin{equation}\label{eq:decsign}
\begin{split}
    \mathbb{E}\Big[\langle \widehat{\mathcal{G}}(\theta), \theta-\theta^\star \rangle\Big]=& \E\left[\left( \sigma\left(\langle x, \theta\rangle\right) - \sigma(\langle x, \thetastar\rangle) \right) \sigma'(\langle x, \theta\rangle) \langle x, \theta -\thetastar\rangle\bbone\{\mathcal{S}^c\}\right]\\
    &+\E\left[ \left( \sigma\left(\langle x, \theta\rangle\right) - \sigma(\langle x, \thetastar\rangle) \right) \sigma'(\langle x, \theta\rangle) \langle x, \theta -\thetastar\rangle\bbone\{\mathcal{S}\}\right].
    \end{split}
    \end{equation}
Towards bounding the first term in the RHS of \eqref{eq:decsign}, we write
%
\begin{align*}
\big|\left( \sigma\left(\langle x, \theta\rangle\right) - \sigma(\langle x, \thetastar\rangle) \right) \sigma'(\langle x, \theta\rangle) \langle x, \theta -\thetastar\rangle\big|\le 4M \langle x, \theta-\thetastar\rangle^2,
\end{align*}
where we have used that $\sigma$ is $2\sqrt{M}$-Lipschitz. Thus, we obtain the lower bound
\begin{align}\label{eq:poplb0}
\E\left[ \left( \sigma\left(\langle x, \theta\rangle\right) - \sigma(\langle x, \thetastar\rangle) \right) \sigma'(\langle x, \theta\rangle) \langle x, \theta -\thetastar\rangle\bbone\{\mathcal{S}^c\}\right]&\ge -4M\E\left[ \langle x, \theta-\thetastar\rangle^2    \bbone\{\mathcal{S}^c\}\right]\ge -\frac{8M}{\pi}\varphi\norm{\theta-\theta^\star}^2,
\end{align}
where the last passage follows from Lemma \ref{lem:moments}. 

Next, we claim the following lower bound on the second term of the RHS in \eqref{eq:decsign}:
\begin{equation}\label{eq:poplb}
\mathbb E\Big[(\sigma(\langle x,\theta\rangle)-\sigma(\langle x,\theta^\star\rangle))\,\sigma'(\langle x,\theta\rangle)\,
(\langle x,\theta\rangle-\langle x,\theta^\star\rangle)\,\bbone\{\mathcal{S}\}\Big]
\ \ge\ \frac{f(1/4)f(1/8)}{2^{19}}\,
r^2(\cos\varphi-\sin\varphi)^2\|\theta-\theta^\star\|_2^2,
\end{equation}
where $f(u)=(2\pi)^{-1/2}e^{-u^2/2}$ is the standard normal density.

\paragraph{Proof of the claim in \eqref{eq:poplb}.}
Let us write
\[
a=\langle x,\theta\rangle,\qquad b=\langle x,\theta^\star\rangle,\qquad u=\theta-\theta^\star,
\]
and note that the LHS of \eqref{eq:poplb} can be expressed as
\[
Q:=\mathbb E\Big[(\sigma(a)-\sigma(b))\,\sigma'(a)\,(a-b)\,\bbone\{\mathcal{S}\}\Big].
\]
As $a$ and $b$ have the same sign on $\mathcal{S}$, we have $(\sigma(a)-\sigma(b))\,\sigma'(a)\,(a-b)\ge 0$. Thus, adding an extra indicator only reduces $Q$ and we obtain the lower bound
\begin{align*}
Q\ge \mathbb E\Big[(\sigma(a)-\sigma(b))\,\sigma'(a)\,(a-b)\,\bbone\{\mathcal{S}\}\bbone\{\abs{a}\le \sqrt{M}, \abs{b}\le \sqrt{M}\}\Big].
\end{align*}
On the event $\{|a|<\sqrt{M},\ |b|<\sqrt{M}\}$ we have $\sigma(a)=a^2$, $\sigma(b)=b^2$, and $\sigma'(a)=2a$, hence
\[
(\sigma(a)-\sigma(b))\,\sigma'(a)\,(a-b)
=(a^2-b^2)\cdot 2a\cdot(a-b)
=2a(a+b)(a-b)^2.
\]
On $\mathcal{S}=\{ab\ge 0\}$, we further obtain that
\[
a(a+b)=a^2+ab\ge a^2.
\]
Consequently, on $\{|a|<\sqrt{M},\ |b|<\sqrt{M}\}\cap \mathcal{S}$,
\[
(\sigma(a)-\sigma(b))\,\sigma'(a)\,(a-b)\ \ge\ 2a^2(a-b)^2.
\]
Thus
\[
Q\ \ge\ 2\,\mathbb E\Big[a^2(a-b)^2\,\bbone\{\{|a|<\sqrt{M},\ |b|<\sqrt{M}\}\cap \mathcal{S}\}\Big].
\]
Let $\phi:=\angle(\theta,\theta^\star)\le\varphi$ and $r\le \norm{\theta}\le R$.
Focusing on the $2$-dimensional subspace spanned by $\theta^\star$ and $\theta$, pick an orthonormal basis $(e_1,e_2)$ for this plane with $e_1=\theta^\star$. Then, we can 
write
\[
x=g_1 e_1+g_2 e_2 + x_\perp,
\]
where $g_1,g_2\stackrel{\mathrm{iid}}{\sim}\mathcal N(0,1)$ and $x_\perp$ is independent of $(g_1,g_2)$.
Thus,
\[
b=g_1,\qquad a=\norm{\theta}(\cos\phi\,g_1+\sin\phi\,g_2).
\]
Define the event
\[
\mathcal{B}:=\left\{g_1\in\left[\frac{1}{8},\frac{1}{4}\right],\ \ |g_2|\le \frac{1}{8}\right\}.
\]
On $\mathcal B$, we have
\[
\cos\phi\,g_1+\sin\phi\,g_2
\ \ge\ \frac{1}{8}(\cos\phi-\sin\phi)\ >\ 0
\qquad(\text{since }\varphi<\pi/4),
\]
hence $a>0$ and also $b>0$. Therefore $\mathcal{B}\subset \mathcal{S}$.
Moreover, on $\mathcal{B}$, using the assumption on $M\ge 16R^2$ and $R\ge 1$, we have
\[
|b|=|g_1|\le \frac{1}{4}\le \sqrt{M},
\]
and
\[
|a|\le \norm{\theta}(|g_1|+|g_2|)\le R\cdot \frac{3}{8}\le \sqrt{M},
\]
so $\mathcal{B}\subset\{|a|<\sqrt{M},\ |b|<\sqrt{M}\}$. Finally, still on $\mathcal{B}$,
\[
a\ \ge \frac{\norm{\theta}}{8}(\cos\phi-\sin\phi)\ \ge\ r\cdot \frac{1}{8}(\cos\varphi-\sin\varphi),
\]
and therefore
\[
a^2\ \ge\ \frac{r^2}{64}(\cos\varphi-\sin\varphi)^2.
\]
Thus, using $\mathcal{B}\subset \{|a|<\sqrt{M},|b|<\sqrt{M}\}\cap \mathcal{S}$ yields
\[
Q\ \ge\ 2\,\mathbb E\Big[a^2(a-b)^2\, \bbone\{\mathcal{B}\}\Big]
\ \ge\ \frac{r^2}{32}(\cos\varphi-\sin\varphi)^2\,
\mathbb E\Big[(a-b)^2\,\bbone\{\mathcal{B}\}\Big].
\]
Note that $a-b=\langle x,u\rangle$. Let us decompose $u=u_\parallel+u_\perp$ where
$u_\parallel\in \mathrm{span}\{e_1,e_2\}$ and $u_\perp\perp \mathrm{span}\{e_1,e_2\}$. We also write $u_\parallel=u_1 e_1+u_2 e_2$, which gives that
\[
\langle x,u\rangle = u_1 g_1+u_2 g_2 + \langle x_\perp,u_\perp\rangle.
\]
Note that $\langle x_\perp,u_\perp\rangle\sim \mathcal N(0,\|u_\perp\|_2^2)$ is independent of $(g_1,g_2)$ and hence independent of $\mathcal{B}$.
Therefore, we have
\[
\mathbb E[(a-b)^2\bbone\{\mathcal{B}\}]
=\mathbb E[(\langle x,u\rangle)^2\bbone\{\mathcal{B}\}]
\ge
\mathbb E[(u_1 g_1+u_2 g_2)^2\bbone\{\mathcal{B}\}].
\]
Since $\mathcal B$ is symmetric in $g_2$, $\mathbb E[g_1 g_2\bbone\{\mathcal{B}\}]=0$, and hence
\[
\mathbb E[(u_1 g_1+u_2 g_2)^2\bbone\{\mathcal{B}\}]
=
u_1^2\,\mathbb E[g_1^2\bbone\{\mathcal{B}\}] + u_2^2\,\mathbb E[g_2^2\bbone\{\mathcal{B}\}].
\]
We now lower bound the two expectations. Because the standard normal density $f$ is decreasing on $[0,\infty)$,
\[
\mathbb P(g_1\in[1/8,1/4])=\int_{1/8}^{1/4} f(s)\,ds \ \ge\ \frac{1}{8}\,f(1/4),
\qquad
\mathbb P(|g_2|\le 1/8)=\int_{-1/8}^{1/8}f(s)\,ds \ \ge\ \frac{1}{4}\,f(1/8).
\]
Hence
\[
\mathbb P(\mathcal{B})\ \ge\ \frac{1}{32}\,f(1/4)f(1/8).
\]
On $\mathcal{B}$ we have $g_1^2\ge \frac{1}{64}$, so
\[
\mathbb E[g_1^2\bbone\{\mathcal{B}\}]\ \ge\ \frac{1}{64}\mathbb P(\mathcal{B})
\ \ge\ \frac{1}{2048}\,f(1/4)f(1/8).
\]
For $g_2$, define the subset
\[
\mathcal{B}':=\{g_1\in[1/8,1/4],\ |g_2|\in[1/16,1/8]\}\subset \mathcal{B}.
\]
On $\mathcal{B}'$ we have $g_2^2\ge \frac{1}{16^2}$. Also,
\[
\mathbb P(|g_2|\in[1/16,1/8]) = 2\int_{1/16}^{1/8}f(s)\,ds \ \ge\ 2\cdot \frac{1}{16} \,f(1/8)=\frac{1}{8}\,f(1/8),
\]
so
\[
\mathbb P(\mathcal{B}')\ge \mathbb P(g_1\in[1/8,1/4])\cdot \mathbb P(|g_2|\in[1/16,1/8])
\ge \frac{1}{8}\phi(1/4)\cdot \frac{1}{8}f(1/8)
= \frac{1}{64}f(1/4)f(1/8).
\]
Therefore
\[
\mathbb E[g_2^2\bbone\{\mathcal{B}\}]\ \ge\ \mathbb E[g_2^2\bbone\{\mathcal{B}'\}]
\ \ge\ \frac{1}{16^2}\mathbb P(\mathcal{B}')
\ \ge\ \frac{1}{16384}\,f(1/4)f(1/8).
\]
Combining the two bounds gives
\[
\mathbb E[(a-b)^2\bbone\{\mathcal{B}\}]
\ge
\frac{1}{16384}\,f(1/4)f(1/8)\,(u_1^2+u_2^2)
\ge
\frac{1}{16384}\,f(1/4)f(1/8)\,\|u\|^2.
\]
This allows us to conclude that
\[
Q
\ge
\frac{r^2}{32}(\cos\varphi-\sin\varphi)^2
\cdot
\frac{1}{16384}f(1/4)f(1/8)\,\|u\|_2^2
=
\frac{f(1/4)f(1/8)}{524288}\,
r^2(\cos\varphi-\sin\varphi)^2\,\|\theta-\theta^\star\|_2^2,
\]
which is the bound in \eqref{eq:poplb}. \hfill \qed

\vspace{.5em}

We are now ready to put everything together and conclude the proof. By combining \eqref{eq:decsign}, \eqref{eq:poplb0} and \eqref{eq:poplb}, we have
\begin{equation}
    \mathbb{E}\Big[\langle \widehat{\mathcal{G}}(\theta), \theta-\theta^\star \rangle\Big]\ge \left(c_1 r^2-\frac{8M}{\pi}\varphi\right)\|\theta-\theta^\star\|_2^2,
\end{equation}
where we have used that, as $\varphi$ is small, $\cos\varphi-\sin\varphi$ is lower bounded by a strictly positive numerical constant $c_0$ and $c_1:=c_0 f(1/4)f(1/8)/524288$ is also a strictly positive numerical constant (independent of $n, d, M, r, R$). Next, we apply Proposition \ref{thm:cnch} with $\epsilon=c_1r^2/2$ (which is possible due to the lower bound on $n$ in \eqref{eq:n-samples}) and therefore we conclude that, with probability at least $1-3e^{-cd}$,
\begin{align}
    \langle \widehat{\mathcal{G}}(\theta), \theta-\theta^\star \rangle\ge \left(\frac{c_1}{2} r^2-\frac{8M}{\pi}\varphi\right)\|\theta-\theta^\star\|_2^2\ge c_2 r^2  \|\theta-\theta^\star\|_2^2,
    \end{align}
    where in the last step we use that $\varphi\le c r^2/M$ for a small enough numerical constant $c$. This proves the one-point strong convexity in \eqref{eq:PLfinal} and concludes the argument.
\end{proof}

\subsection{Patching the two phases and concluding the argument}

\begin{proof}[Proof of Theorem \ref{thm:strong}]

As $\eta\le c/M^2$ and $\delta\ge CM^4$, we can apply Proposition \ref{prop:phaseIpropangle}, which gives that,  with probability at least $1-\frac{1}{d^2}$, 
\begin{equation}\label{eq:up1ph}
    \angle(\theta_{t^*_{\measuredangle}},\theta^\star)\le C_1(e^{-M/2}+M\delta^{-1/2}),\qquad \mbox{with }t^*_{\measuredangle}=  \frac{3\log d}{ \log(1+1.99\eta)},
\end{equation}
for numerical constants $c_1, C_1>0$ (independent of $n, d, M, \delta$).  Furthermore, the constraints $\eta\le c/M^2$ and $\delta\ge CM^4$ also allow to apply Proposition \ref{thm:norm} with $r=d^{-15}$, which gives that,  with probability at least $1-C e^{-cd}$,
\begin{equation}
    \norm{\theta_t}\ge 1/4,\qquad \mbox{for all }t\ge t^*_{\rm norm}:=\left\lceil\frac{2\log \left(\frac{1}{4\norm{\theta_0}}\right)}{\log\left(1+\frac{19}{25}\eta\right)}\right\rceil,
\end{equation}
    and that 
    \begin{equation}\label{eq:upnormph1}
        \norm{\theta_t}\le 10, \qquad \mbox{for all }t\ge 0.
    \end{equation}
Next, we apply Proposition \ref{thm:angle} with $r=d^{-15}$, $\varphi=C_1(e^{-M/2}+M\delta^{-1/2})$ and $t^*=t^*_{\measuredangle}$, which gives that,  with probability at least $1-C e^{-cd}$, 
\begin{equation}\label{eq:up1ph2}
    \angle(\theta_t,\theta^\star)\le C_1(e^{-M/2}+M\delta^{-1/2}),\qquad \mbox{for all  }t\ge t^*_{\measuredangle}.
\end{equation}
Note that, as $\delta\ge CM^4$ for a large enough constant $C$, the upper bound on $\angle(\theta_t,\theta^\star)$ in \eqref{eq:up1ph2} can be made $c_2/M$ for any small constant $c_2$.
Hence, we can apply Proposition \ref{thm:lastphase} with $r=1/4$, $R=10$, $\varphi=c_2/M$ and $\bar t=\max(t^*_{\measuredangle}, t^*_{\rm norm})$, obtaining that, for all $t\ge\bar t$, with probability at least $1-7 e^{-cd}$,
\begin{align*}
\norm{\theta_t-\theta^\star}^2\le {\left(1-\eta\alpha\right)}^{t-\bar t}\norm{\theta_{\bar t}-\theta^\star}^2,
\end{align*}
with $\alpha>0$ a numerical constant independent of $n, d, M, \delta$.  Note that $\norm{\theta_{\bar t}-\theta^\star}^2\le 202$ by using \eqref{eq:upnormph1}. Furthermore, we have that $\max(t^*_{\measuredangle}, t^*_{\rm norm})\le C\log d/\eta$. Thus, a union bound on all these high-probability events gives the desired claim with probability at least $1-2/d^2$, thus concluding the argument. 
\end{proof}

\end{document}